\renewcommand{\p@subfigure}{}
\newcolumntype{d}[1]{D{.}{.}{#1}}
\begin{document}

\begin{frontmatter}              
\title{Contribution Functions for Quantitative Bipolar Argumentation Graphs: A Principle-based Analysis} %

\author{Timotheus Kampik}
\ead{tkampik@cs.umu.se}
\address{Umeå University, Sweden}
\address{SAP Signavio, Germany}

\author{Nico Potyka}
\ead{PotykaN@cardiff.ac.uk}
\address{Cardiff University, UK}

\author{Xiang Yin}
\ead{x.yin20@imperial.ac.uk}
\address{Imperial College London, UK}

\author{Kristijonas \v{C}yras}
\ead{kristijonas.cyras@ericsson.com}
\address{Ericsson, USA}

\author{Francesca Toni}
\ead{f.toni@imperial.ac.uk}
\address{Imperial College London, UK}

\begin{abstract}
We present a principle-based analysis of \emph{contribution functions} for quantitative bipolar argumentation graphs that quantify the contribution of one argument to another.
The introduced principles formalise the intuitions underlying different contribution functions as well as expectations one would have regarding the behaviour of contribution functions in general.
As none of the covered contribution functions satisfies all principles, our analysis can serve as a tool that enables the selection of the most suitable function based on the requirements of a given use case.
\end{abstract}

\begin{keyword}
Quantitative Argumentation \sep Explainable AI \sep Automated Reasoning
\end{keyword}
\end{frontmatter}
\section{Introduction}
\label{sec:intro}
Formal argumentation has emerged as a vibrant research area within the field of artificial intelligence.
In particular, the dialectical nature of formal argumentation is considered a promising facilitator of joint human-machine reasoning and decision-making, as well as a potential bridge between subsymbolic and symbolic AI~\cite{10.1145/3284432.3284470}.
In formal argumentation, arguments and their relations are represented as directed graphs, in which nodes are arguments and edges are argument relationships (typically: attack or support).
From these \emph{argumentation graphs}, inferences about the acceptability statuses or strengths of arguments are drawn.
One formal argumentation approach that is gaining increased research attention is Quantitative Bipolar Argumentation (QBA).
In QBA, (typically numerical) weights---so-called \emph{initial strengths}---are assigned to arguments, and arguments are connected by a support and an attack relation. Hence, arguments directly connected to a node through the node's incoming edges can be referred to as \emph{attackers} and \emph{supporters} (depending on the relation).
Given a Quantitative Bipolar Argumentation Graph (QBAG), an \emph{argumentation semantics} then infers the arguments' \emph{final strengths}; intuitively, an argument's attackers tend to decrease its final strength, whereas supporters tend to increase it.
Nascent applications of formal argumentation in general and QBA in particular are often related to \emph{explainability}~\cite{Cyras.et.al:2021-IJCAI,vassiliades_bassiliades_patkos_2021}, e.g., in the context of explainable recommender systems~\cite{10.5555/3304889.3304929}, review aggregations~\cite{cocarascu2019extracting} or machine learning models
like random forests~\cite{Potyka0T23}
or neural networks~\cite{AyoobiPT23}.
In order to utilise QBA as a facilitator of explainability, it is crucial to develop a rigorous understanding of the influence of one argument on the final strength of another one.
This follows the above-mentioned intuition of argument influence that (direct or indirect) attackers or supporters exercise and reflects the ideas of feature attributions in machine learning (see ~\cite{ribeiro2016should,lundberg2017unified,bloniarz2016supervised,plumb2018model,potyka2022towards} as examples), as well as of contributions of agents towards a joint objective in cooperative game theory.
Initial steps towards this objective have been undertaken by nascent/preliminary research~\cite{Cyras:Kampik:Weng:2022,DBLP:conf/ecai/0007PT23} that introduced so-called \emph{contribution functions} quantifying the influence of one argument in a QBAG on another one. 

Yet, a comprehensive theoretical study of contribution functions is missing.
Our paper aims to close this gap in the literature by introducing a set of principles that a contribution function should satisfy, and by conducting a principle-based analysis for \emph{acyclic} QBAGs.
We claim that this limitation is well-motivated: in many applications of QBAGs there is a clear hierarchical or temporal structure to the arguments~\cite{kotonya2019gradual,cocarascu2019extracting,lidecision,chi2021optimized}, which helps avoid cycles or allows for removing them on the meta-level.
The introduced principles reflect the ideas that underlie the different contribution functions, as well as some intuitive, common-sense assumptions (that are, however, not always satisfied).

Let us introduce a simple example that provides an intuitive understanding of the paper's core contribution.
\begin{example}\label{ex:intro-modular}
Consider the QBAG $\graph$ in Figure~\ref{fig:graph-intro} (the interpretation of the figure is explained in the caption).
The initial strengths of $\graph$'s arguments are determined by DFQuAD semantics~\cite{rago2016discontinuity}, according to the following intuition:
\begin{itemize}
    \item Final strengths are propagated through the graph following a topological ordering of the nodes.
    \item Given an argument $\arga$ and its direct attackers $A$ and supporters $S$, the final strengths of $A$ and $S$ are aggregated as follows:
    $$f(A, S) := \Pi_{\argb \in A} (1 - (\sigma(\argb))- \Pi_{\argc \in S} (1 - (\sigma(\argc)),$$ where $\sigma(\argb)$ and $\sigma(\argc)$ denote the final strengths of $\argb$ and $\argc$, respectively.
    \item Given the aggregated score $f(A, S)$, $\arga$ is then updated as follows, given its initial strength $\tau(\arga)$:
    $$g(\tau(a), f(A, S)) :=  \tau(a) - \tau(a) \times max\{0, -f(A, S)\} + (1 - \tau(a)) \times max\{0, f(A, S)\}.$$
    
\end{itemize}
We are interested in explaining how different arguments (``contributors'') contribute to the final strength of $\arga$.
As we are primarily interested in $\arga$ and its final strength, we refer to $\arga$ as the \emph{topic argument}.
Obviously, the contribution of supporter $\argb$ should be positive (or possibly zero) and the contributions of attackers $\argc$ and $\argd$ should be negative (or zero): in these cases, the direction (with respect to $0$) of the contribution is clear.
In contrast, it is not at all clear how $\arge$ contributes to $\arga$.
Considering this more intricate case, we may specify our expectation regarding the behaviour of the function that determines the contribution (in this case: of $\arge$ to the final strength of $\arga$) as a \emph{contribution function principle}.
For example, one may stipulate that:
\begin{enumerate}
    \item The contribution should be negative if (and only if) removing $\arge$ from $\graph$ increases the final strength of $\arga$ and positive if (and only if) the removal decreases it; we call this desideratum \emph{counterfactuality}.
    \item If the contribution is negative/positive, then marginally increasing the initial strength of $\arge$ should decrease/increase the strength of $\arga$; we call this desideratum \emph{faithfulness}.
\end{enumerate}
The QBAG in Figure~\ref{fig:plot-intro} illustrates that
these two ideas can be in conflict. To explain this,
we plotted the final strength of $\arga$ (y-axis)
against the initial strength of $\arge$ (x-axis)
in the function graph on the right. 
Since $\arge$'s initial strength is $0.5$ in the original QBAG, $\arga$'s
final strength is about $0.37$.
When removing $\arge$, the final strength of $\argb, \argc$ and $\argd$
will just be their initial strength, which is $0$. Since arguments
with strength $0$ cannot affect other arguments (given DFQuAD semantics), %
$\arga$'s final
strength will also be its initial strength, which is $0.5$.
Hence, since $\arga$'s strength increases from $0.37$ to $0.5$
when removing $\arge$, $\arge$ 
should have a negative contribution
according to \emph{counterfactuality}. Now, if $\arge$'s 
contribution is negative, then
\emph{faithfulness} demands that marginally increasing
the strength of $\arge$ in the original example
must decrease the strength of $\arga$. However, as we can see in
the strength plot in Figure~\ref{fig:plot-intro}, 
increasing the initial strength of $\arge$ will actually increase
the final strength of $\arga$.
Hence, it is impossible to satisfy both \emph{counterfactuality}
and \emph{faithfulness} simultaneously in this example.

\begin{figure}[ht]
\centering
\subfloat[$\graph$.]{
\label{fig:graph-intro}
\begin{tikzpicture}[scale=0.8]
    \node[unanode]    (a)    at(2,0)  {\argnode{\arga}{0.5}{0.375}};
    \node[unanode]    (b)    at(0,2)  {\argnode{\argb}{0.0}{0.5}};
    \node[unanode]  (c)    at(2,2)  {\argnode{\argc}{0.0}{0.5}};
    \node[unanode]    (d)    at(4,2)  {\argnode{\argd}{0.0}{0.5}};
    \node[unanode]    (e)    at(2,4)  {\argnode{\arge}{0.5}{0.5}};
    
    \path [->, line width=0.2mm]  (b) edge node[left] {+} (a);
    \path [->, line width=0.2mm]  (c) edge node[left] {-} (a);
    \path [->, line width=0.2mm]  (d) edge node[left] {-} (a);
    \path [->, line width=0.2mm]  (e) edge node[left] {+} (b);
    \path [->, line width=0.2mm]  (e) edge node[left] {+} (c);
    \path [->, line width=0.2mm]  (e) edge node[left] {+} (d);
\end{tikzpicture}
}
\subfloat[Final strength of $\arga$, given initial strength of $\arge$.]{
\label{fig:plot-intro}
\includegraphics[width=0.6\columnwidth]{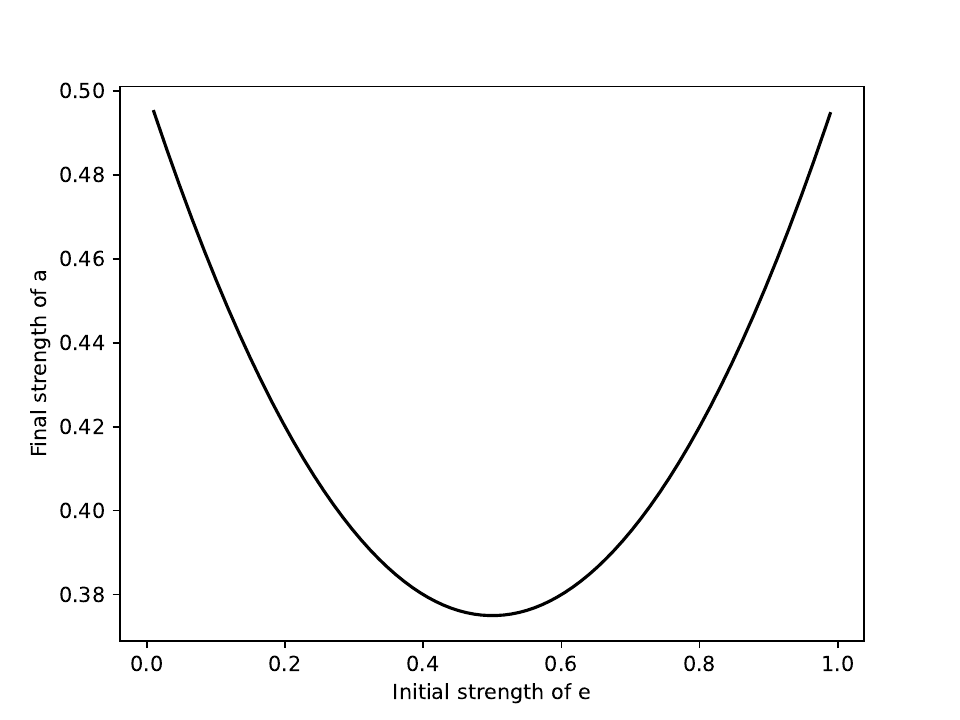}}
\caption{QBAG $\graph$ (Example~\ref{ex:intro-modular}). A node labelled {\scriptsize $\argnode{\argx}{i}{f}$} represents argument $\argx$ with initial strength $\is(\argx) = i$ and final strength $\fs(\argx) = \mathbf{f}$. Edges labelled $+$ and $-$, respectively, represent supports and attacks.}
\label{fig:intro}
\end{figure}  
\end{example}
The previous example highlights that a principle-based analysis is required to shed light on contribution functions in a rigorous manner.
After introducing some preliminaries in Section~\ref{sec:Preliminaries}, we present four contribution functions for QBA (Section~\ref{sec:Contributions})\footnote{We first introduced these contribution functions in a workshop paper~\cite{Cyras:Kampik:Weng:2022}; here, we fix some technicalities for the Shapley value-based contribution function. The first two contribution functions have been introduced first to abstract gradual argumentation in a roughly analogous manner in~\cite{Delobelle:Villata:2019}.}:
\begin{description}
    \item[Removal.] The contribution of one argument to another is determined by the effect that the removal of the former argument has on the final strength of the latter.
    \item[Intrinsic removal.] The contribution of one argument to another is determined by the effect that the removal of the former argument, \emph{from a QBAG from which all attack and support relationships that target the contributing argument have already been removed}, has on the final strength of the latter.
    \item[Shapley values.] The contribution of one argument to another is determined by the contribution of the former argument to the final strength of the latter when interpreting the problem as a coalitional game and applying Shapely values~\cite{shapley1951notes} to it.
    \item[Gradient.] The contribution of one argument to another is determined by the effect that a marginal change to the initial strength of the former argument has on the final strength of the latter.
\end{description}
We then introduce a set of principles (Section~\ref{sec:principles}), following intuitions underlying some of the contribution functions, as well as common sense-based desiderata: 
\begin{description}
    \item[Directionality.] 
    Many quantitative argumentation
    semantics satisfy a \emph{directionality} principle \cite{amgoud2018evaluation}: argument $\arga$ can influence argument $\argb$ only if there is
    a directed path from $\arga$ to $\argb$. Consequently, under these
    semantics, if there is no path from $\arga$ to $\argb$, then $\arga$'s contribution to $\argb$ should be $0$.
    \item[(Quantitative) contribution existence.] If our topic argument's initial strength does not equal its final strength then there exists another argument whose contribution to the topic argument is not $0$. In the quantitative case, the sum of the contributions of all other arguments (to the topic argument) should account for the delta between the topic's final and initial strength (where negative contributions give rise to a decrease in the strength of the topic argument).
    \item[(Quantitative) local faithfulness.] If an argument's contribution is negative/positive, then marginally increasing the initial strength of the contributor should increase/decrease the strength of the topic argument. In the quantitative case, the contributor's contribution to the topic argument should be proportional to the change in strength.
    \item[(Quantitative) counterfactuality.] The contribution should be negative if (and only if) removing the contributor from the QBAG increases the final strength of the topic argument and positive if (and only if) the removal decreases it. In the quantitative case, the delta between the topic argument's final strength in the initial QBAG and the topic's argument final strength with the contributor removed should be equal to the contribution of the contributor to the topic argument.
\end{description}
Subsequently, we analyse principle satisfaction of the four contribution functions with respect to five argumentation semantics (Section~\ref{sec:analysis}). The results are summarised in Table~\ref{table:principle-overview}.
\begin{table}
\centering
\caption{Satisfaction and violation of contribution principles (defined in Section~\ref{sec:analysis}), given a semantics $\fs$ (defined in Section \ref{sec:Preliminaries}) and a contribution function $\ctrbempty$ (defined in Section \ref{sec:Contributions}).}
\label{table:principle-overview}
\begin{tabular}{ |c|c|c|c|c| c| }
\hline
\textbf{$\ctrbempty$ / $\sigma$} & \textbf{QE}  & \textbf{DFQuAD} & \textbf{SD-DFQuAD} & \textbf{EB}  & \textbf{EBT} \\
\hline
\multicolumn{6}{|c|}{\textbf{Contribution Existence}} \\
\hline
$\ctrbrempty$ & \cmark & \xmark & \xmark & \cmark & \xmark \\ \hline
$\ctrbriempty$ & \cmark & \xmark & \xmark & \cmark & \xmark \\ \hline
$\ctrbsempty$ & \cmark & \cmark & \cmark & \cmark & \cmark \\ \hline
$\ctrbgempty$ & \cmark & \xmark & \xmark & \cmark & \xmark \\ \hline
\hline
\multicolumn{6}{|c|}{\textbf{Quantitative Contribution Existence}} \\
\hline
$\ctrbrempty$ & \xmark & \xmark & \xmark & \xmark & \xmark \\ \hline
$\ctrbriempty$ & \xmark & \xmark & \xmark & \xmark & \xmark \\ \hline
$\ctrbsempty$ & \cmark & \cmark & \cmark & \cmark & \cmark \\ \hline
$\ctrbgempty$ & \xmark & \xmark & \xmark & \xmark & \xmark \\ \hline
\hline
\multicolumn{6}{|c|}{\textbf{Directionality}} \\
\hline
$\ctrbrempty$ & \cmark & \cmark & \cmark & \cmark & \cmark \\ \hline
$\ctrbriempty$ & \cmark & \cmark & \cmark & \cmark & \cmark \\ \hline
$\ctrbsempty$ & \cmark & \cmark & \cmark & \cmark & \cmark \\ \hline
$\ctrbgempty$ & \cmark & \cmark & \cmark & \cmark & \cmark \\ \hline
\hline
\multicolumn{6}{|c|}{\textbf{(Quantiative) Local Faithfulness}} \\
\hline
$\ctrbrempty$ & \xmark & \xmark & \xmark & \xmark & \xmark \\ \hline
$\ctrbriempty$ & \xmark & \xmark & \xmark & \xmark & \xmark \\ \hline
$\ctrbsempty$ & \xmark & \xmark & \xmark & \xmark & \xmark \\ \hline
$\ctrbgempty$ & \cmark & \cmark & \cmark & \cmark & \cmark \\ \hline
\hline
\multicolumn{6}{|c|}{\textbf{(Quantitative) Counterfactuality}} \\
\hline
$\ctrbrempty$ & \cmark & \cmark & \cmark & \cmark & \cmark \\ \hline    
$\ctrbriempty$ & \xmark & \xmark & \xmark & \xmark & \xmark \\ \hline
$\ctrbsempty$ & \xmark & \xmark & \xmark & \xmark & \xmark \\ \hline
$\ctrbgempty$ & \xmark & \xmark & \xmark & \xmark & \xmark \\ \hline
\end{tabular}
\end{table}
These results are restricted to acyclic QBAGs (the focus of our study, as pointed out earlier). Note that
this special case is worth
studying because in many nascent applications of QBAGs there is a clear hierarchical or temporal structure to the arguments
that prevents the existence of cycles~\cite{kotonya2019gradual,cocarascu2019extracting,lidecision,chi2021optimized,Potyka0T23,AyoobiPT23}.
Note also that 
many ideas that we consider here can be extended 
to general QBAGs with
cycles, but the existence of cycles causes some additional 
technical difficulties (given that %
strength values may not converge in
particular situations) whose discussion would obscure our main points.
Hence, we leave the exploration of contribution functions for general QBAGs to future work.

We also provide some minor results and conjectures that we produced as a side-effect of our work (Section~\ref{sec:rest}).
To highlight practical perspectives, we position our results in the context of an application example (Section~\ref{sec:case-study})
Finally, we conclude the paper by discussing our results in the light of related research (Section~\ref{sec:discussion}).

\section{Preliminaries}
\label{sec:Preliminaries}

This section introduces the formal preliminaries of our work.
Let $\mathbb{I}$ be a
real interval of \emph{strength values}.
Here, we will focus on the unit interval $\mathbb{I} = [0, 1]$.
A \emph{quantitative bipolar argumentation graph} contains a set of arguments related by binary \emph{attack} and \emph{support} relations, and assigns an \emph{initial strength} in $\mathbb{I}$ to the arguments.
The (initial) strength can be thought of as initial credence in, or importance of, arguments. 
Typically, the greater the strength in  $\interval$, 
the more credible or important the argument is.
\begin{definition}[Quantitative Bipolar Argumentation Graph (QBAG)~\cite{Potyka:2019,Baroni:Rago:Toni:2019}]
A \emph{Quantitative Bipolar Argumentation Graph (QBAG)} is a quadruple
$\QBAG$ consisting of  a set of arguments 
$\Args$, an \emph{attack} relation 
$\Att \subseteq \Args \times \Args$, a \emph{support} relation 
$\Supp \subseteq \Args \times \Args$
such that $\Att \cap \Supp = \emptyset$,
and a total function 
$\is : \Args \to \interval$ 
that assigns an \emph{initial strength} $\is(\argx)$ to every $\argx \in \Args$.
\end{definition}

Given a QBAG $\QBAG$ and $\arga, \argb \in \Args$, a path from $\arga$ to $\argb$ is a path from $\arga$ to $\argb$ in the directed graph $(\Args, \Att \cup \Supp)$.
We say that a QBAG $\QBAG$ is acyclic iff there exists no $\arga \in \Args$ for which there exists a path from $\arga$ to $\arga$.
Henceforth, we assume as given a fixed acyclic QBAG $\graph = \QBAG$
with a finite number of arguments.

Given $\arga, \argx \in \Args$, the set $\Att_{\graph}(\argx) := \{ \argz \in \Args \mid (\argz, \argx) \in \Att \}$ is the set of attackers of $\argx$ and
each $\argz \in \Att_{\graph}(\argx)$ is a (direct) \emph{attacker} of $\argx$ (\emph{$\argz$ attacks $\argx$}); 
the set $\Supp_{\graph}(\argx) := \{ \argy \in \Args \mid (\argy, \argx) \in  \Supp \}$ is the set of supporters of $\argx$ and 
each $\argy \in \Supp_{\graph}(\argx)$ is a (direct) \emph{supporter} of $\argx$ (\emph{$\argy$ supports $\argx$}).
We may drop the subscript ${\graph}$ when the context is clear.
$\argx$ is an indirect attacker of $\arga$ iff $\argx$ is a supporter of some direct or indirect attacker of $\arga$ or $\argx$ attacks some direct or indirect supporter of $\arga$. $\argx$ is an indirect supporter of $\arga$ iff $\argx$ attacks some direct or indirect attacker of $\arga$ or $\argx$ supports some direct or indirect supporter of $\arga$.
Given $\graph = \QBAG$ and $A \subseteq \Args$, we define the QBAG \emph{restriction} of $\graph$ to $A$, denoted by $\graph\downarrow_{A}$, as follows: $\graph\downarrow_{A} := \left(A, \tau \cap (A \times \interval), \Att \cap (A \times A), \Supp \cap (A \times A) \right)$.

Reasoning in QBAGs amounts to updating the initial strengths of arguments to their final strengths, taking into account the strengths of attackers and supporters. 
Specifically, given a QBAG, a strength function assigns final strengths to arguments in the QBAG. 
Different ways of defining a strength function are called gradual semantics.
\begin{definition}[Gradual Semantics and Strength Functions~\cite{Baroni:Rago:Toni:2019,Potyka:2019}]\label{semantics-strength-functions}
A \emph{gradual semantics} $\fs$ defines for $\graph = \QBAG$ a (possibly partial) \emph{strength function} $\fs_{\graph} : \Args \to \interval \cup \{ \bot \}$ that assigns the \emph{final strength} $\fs_{\graph}(\argx)$ to each $\argx \in \Args$, 
where $\bot$ is a reserved symbol meaning `undefined'. 
\end{definition}
If $\graph$ is clear from the context, 
we will drop the subscript $\phantom{}_{\graph}$ and denote the 
strength function by $\fs$.

Many QBAG semantics from the literature including \cite{baroni2015automatic,rago2016discontinuity,amgoud2018evaluation,potyka2018continuous,Potyka21}
can be seen as \emph{modular semantics}
\cite{mossakowski2018modular}. Modular semantics
define the final strength values of arguments by an iterative
procedure that works as follows:
\begin{description}
    \item[Initialization:] Let $(\argx_1, \dots, \argx_n)$ be an arbitrary but fixed ordering of the arguments in $\Args$. Define a vector
    $s^{(0)} \in \mathbb{R}^n$ of initial strength values of the arguments
    by letting $s^{(0)}_i = \is(\argx_i)$.
    \item[Update:] Given the vector $s^{(i)}$ from the previous
    iteration, apply an update function $u_{\graph}: \mathbb{R}^n \rightarrow \mathbb{R}^n$ to obtain 
    $s^{(i+1)} = u_{\graph}(s^{(i)})$.
\end{description}
$u_{\graph}(s^{(i)})$ will adapt the strength of every
argument $\argx_i$ in two steps.
\begin{description}
    \item[Aggregation.] First an aggregation function $\alpha$ combines the current strength values of $\argx_i$'s supporters and attackers with respect to  $s^{(i)}$ to a single value that reflects how $\argx_i$ should be adapted. For representing attack, support, and lack of both attack and support, we may utilise a vector $v \in \{-1, 0, 1\}^n$ in which each attacker of $\argx_i$ is represented by $-1$, each supporter by $1$, and each argument neither attacking nor supporting $\argx_i$ by $0$.
    \item[Influence.] Then an influence function $\iota$ will determine a new strength value for $\argx_i$ based on the aggregate and $\argx_i$'s initial strength $w = \is(\argx_i)$.
\end{description}
Table~\ref{table:semantics} gives a list of common influence and aggregation functions.
Table~\ref{table:semanticsExamples} shows some examples of gradual semantics.
With abuse of notation, we may denote the evaluation of the aggregation function of an argument $\arga$ given a QBAG $\graph = \QBAG$ (with $\arga \in \Args$) by $\alpha_{\graph}(\arga)$, i.e.,  $\alpha_{\graph}(\arga) := \alpha(\arga_0, \dots, \arga_n)$, where $\arga_0, \dots, \arga_n$ are the supporters and attackers of $\arga$ in $\graph$.
Analogously, we may denote the evaluation of $\arga$'s influence function $\iota$ given $\graph$ by $\iota_\graph(\arga)$.

\begin{table}[ht]
\footnotesize{
\begin{tabular}{lll}
\hline
\multicolumn{3}{c}{\textbf{Aggregation Functions}} \\ \hline
Sum & $\alpha^{\Sigma}_{v}: [0,1]^n \rightarrow \mathbb{R}$ & $\alpha^{\Sigma}_{v}(s) = \sum_{i = 1}^n v_i \times s_i $  \\
Product & $\alpha^{\Pi}_{v}: [0,1]^n \rightarrow [-1, 1]$ & $\alpha^{\Pi}_{v}(s) = \prod_{i:v_i=-1} (1 - s_i) - \prod_{i:v_i=1} (1 - s_i)$  \\
Top & $\alpha^{max}_{v}: [0,1]^n \rightarrow [-1, 1]$ & $a_v^{max}(s) = M_v(s) - M_{-v}(s),$ \\
 &  & where $M_v(s) = max\{0,v_1 \times s_1, \dots, v_n \times s_n\}$  \\
\hline
\multicolumn{3}{c}{\textbf{Influence Functions}} \\ \hline
Linear($k$) & $\iota^{l}_{w}: [-k, k] \rightarrow [0, 1]$ & $\iota^{l}_{w}(s) = w - \frac{w}{k} \times max\{0,-s\} + \frac{1-w}{k} \times max\{0, s\}$ \\
Euler-based & $\iota^{e}_{w}: \mathbb{R} \rightarrow [w^2, 1]$ & $\iota^{e}_{w}(s) = 1 - \frac{1-w^2}{1 + w \times e^s}$ \\
p-Max($k$) & $\iota^{p}_{w}: \mathbb{R} \rightarrow [0, 1]$ & $\iota^{p}_{w}(s) = w - w \times h(- \frac{s}{k}) + (1-w) \times h(\frac{s}{k}),$  \\
for $p \in \mathbb{N}$ &  & where $h(x) = \frac{max\{0,x\}^p}{1 + max\{0,x\}^p}$  \\
\end{tabular}}
\caption{Common aggregation $\alpha$ and influence $\iota$ functions~\cite[pp.~1724 Table 1; with a fixed typo for p-Max($k$)]{Potyka:2019}. Intuitively, $s \in [0,1]^n$ is a strength vector 
(associating each argument with its current strength),
$v \in \{-1,0,1\}^n$ is a relationship vector indicating
which arguments attack ($-1$), support ($1$) or are in no 
relationship to ($0$) the argument of interest,
and $w$ is an initial strength.
}
\label{table:semantics}
\end{table}

\begin{table}[ht]
\centering
\begin{tabular}{lll}
\hline
\textbf{Semantics}           & \textbf{Aggregation} & \textbf{Influence}  \\ \hline
QuadraticEnergyModel (QE)        & Sum         & 2-Max(1)  \\
DFQuADModel (DFQuAD)       & Product     & Linear(1) \\
SquaredDFQuADModel (SD-DFQuAD) & Product     & 1-Max(1)  \\
EulerBasedModel (EB)   & Sum         & EulerBased  \\
EulerBasedTopModel (EBT) & Top         & EulerBased 
\end{tabular}
\caption{Examples of gradual semantics.}
\label{table:semanticsExamples}
\end{table}

Formally, the final strength $\fs_{\graph}(\argx_i)$
of the i-th argument is the $i$-th component $s_i$ of the limit vector
$s = \lim_{i \rightarrow \infty} s^{(i)}$. As demonstrated
in \cite{mossakowski2018modular}, the limit may not exist in
cyclic graphs. However, as we focus on acyclic graphs here,
the limit is guaranteed to exist and can, in fact, be computed
in linear time by a simple forward propagation procedure \cite{Potyka:2019}.
Roughly speaking, it works by first computing
a topological ordering of the arguments. The
strength values are then computed following this order.
The final strength of the arguments
without parents (i.e.\ either attackers or supporters) is just their initial strength. When the next argument in the topological order
is selected,
all strength values of its parents are
already fixed, so its final strength
can be computed by a single application of
the update function.

As we can easily see, the semantics in Table~\ref{table:semanticsExamples} are modular semantics.
For example, using the Sum aggregation function and the 2-Max(1) influence function yields the
Quadratic Energy (QE) semantics.
For a step-by-step walk-through of final strength computation using a modular semantics, we refer to Example~\ref{ex:intro-modular}.

To facilitate the analysis and comparison of gradual semantics, formal principles have been defined.
In the context of this paper, the \emph{directionality} and \emph{stability} principles~\cite{amgoud2018evaluation} are of particular relevance.
Directionality stipulates that removing arguments that cannot reach an argument of interest from the QBAG must not affect the latter argument's final strength.
When defining the principles we assume an arbitrary (acyclic and finite) QBAG $\graph = \QBAG$ and $\arga \in \Args$.
Directionality states, intuitively, that an argument $\arga$ can only
be influenced by another argument $\argx$ if there is a directed path from $\argx$ to $\arga$.
\begin{principle}[Directionality (Gradual Semantics)]\label{sprinciple:directionality}
    A gradual semantics $\sigma$ satisfies the \emph{directionality principle} iff
 for all 
$\graph = (\Args, \is, \att, \supp)$ and
$\graph' = (\Args, \is, \att', \supp')$
such that $\att \cup \supp = \att' \cup \supp' \cup (\argx, \argy)$ for some $\argx \in \Args$, $\argy \in \Args \setminus \{\arga\}$ such that there is no directed path from $\argy$ to 
$\arga$, we have $\fs_{\graph}(\arga) = \fs_{\graph'}(\arga)$.
\end{principle}
Stability stipulates that an argument's final strength equals its initial strength in the absence of supporters and attackers.
\begin{principle}[Stability]\label{sprinciple:stability}
    A QBAG semantics $\sigma$ satisfies the \emph{stability principle} iff $\Att_\graph(\arga) = \Supp_\graph(\arga) = \emptyset$ implies that $\fs(\arga) = \is(\arga)$.
\end{principle}
All semantics in Table~\ref{table:semanticsExamples} satisfy directionality and stability \cite{amgoud2018evaluation,mossakowski2018modular,Potyka:2018}.

\section{Argument Contributions}
\label{sec:Contributions}

To determine how much an argument $\argx \in \Args$ \emph{contributes} to the final strength $\fs_\graph(\arga)$ of a \emph{topic} argument $\arga \in \Args$ in a given QBAG $\graph$ (given a semantics $\sigma$), we make use of contribution functions $\ctrbempty_{\fs,\graph,\arga}: \Args \rightarrow \mathbb{R} \cup \{\bot\}$. We drop the subscripts $\fs$ and $\graph$ where the context is clear.
Intuitively, $\ctrbempty_\arga(\argx)$ measures the influence
of $\argx$ on the topic argument $\arga$. The exact meaning of
$\ctrbempty_\arga(\argx)$ depends on the contribution function.
We will investigate the following contribution functions from~\cite{Cyras:Kampik:Weng:2022}.

\begin{description}[leftmargin=*]
    \item[Removal-based contribution:] Intuitively, we ask: how  does the removal of the contributor affect the topic argument's final strength? For this, we determine the difference between the final strength of the topic argument in $\graph$ and its final strength in $\graph$ with the contributor removed, i.e.\ we compute $\fs_\graph(\arga) - \fs_{\graph'}(\arga)$, where $\graph'$ is obtained from $\graph$ by removing $\argx$:
    \begin{align}\label{eq:delta}
       \ctrbr{\argx}{\arga} = \fs_\graph(\arga) - \fs_{\graph\downarrow_{\Args \setminus \{\argx\}}}(\arga) 
    \end{align}
    \item[Removal-based contribution without indirection:] Continuing with the intuition above, we may require that the contribution of one argument to another one should be based on the former's \emph{intrinsic} (initial) strength and not on boosts received by (direct or indirect) attackers and supporters~\cite{Delobelle:Villata:2019}. To reflect this criticism of the removal-based contribution function, we may first remove all attacks and supports that have the contributor as a target before determining the difference (assuming that our semantics is \emph{directional}, i.e.\ that strength updates are propagated only in the direction represented by the support and attack relations); i.e., we compute $\fs_{\graph^-}(\arga) - \fs_{\graph'}(\arga)$ where $\graph^-$ is obtained from $\graph$ by removing direct relations to $\argx$ and $\graph'$ is obtained from $\graph$ by removing $\argx$ entirely:
    \begin{align}\label{eq:delta2}
    \ctrbri{\argx}{\arga} = \fs_{(\Args, \tau, \Att \setminus \{(\argy, \argx) | (\argy, \argx) \in \Att \}), \Supp \setminus \{(\argy, \argx) | (\argy, \argx) \in \Supp \})}(\arga) - \fs_{\graph\downarrow_{\Args \setminus \{\argx\}}}(\arga)
    \end{align}
    
    \item[Shapley-based contribution:]
    Yet more elaborately, we can see the difference between a topic argument's final and initial strengths as a value achieved by a coalition of all other arguments in our QBAG, which we can then consider \emph{players} in the game theoretical sense.
    Then, we can use Shapley values \cite{shapley1953value} as a well-established approach to quantify the argument contributions\footnote{Note that we fixed some technicalities in the Shapley value-based contribution function initially presented in~\cite{Cyras:Kampik:Weng:2022}.}.
    Intuitively, we determine a weighted average of the topic argument's final strength difference that is achieved by adding the contributor to all possible subgraphs of our QBAG that already contain the topic argument:
    \begin{align}\label{eq:shap}
    &{} \ctrbs{\argx}{\arga}=  \nonumber \\
    &{} \quad \sum_{X \subseteq \Args \setminus \{ \argx, \arga \}} \frac{|X|! \cdot (|\Args \setminus \{\arga\}|-|X|-1)!}{|\Args \setminus \{\arga\}|!}\left(\operatorname{\fs}_{\graph\downarrow_{\Args \setminus X}}(\arga) - \operatorname{\fs}_{\graph\downarrow_{\Args \setminus (X \cup \{ \argx \})}}(\arga)\right)
    \end{align}
    
    \item[Gradient-based contribution:] 
    The last contribution function is based on the \emph{gradient} 
    from real analysis~\cite{Davies.et.al:2021}: 
    intuitively, the gradient is a vector 
    (whose components are partial derivatives)
    that measures how a small change in an input parameter of a function
    affects the output. In our case, the function of interest 
    maps the initial strength values of arguments to the final
    strength of the topic argument. Formally, we consider a 
    function $f_{\arga}: \mathbb{R}^n \rightarrow \mathbb{R}$ 
    that maps a vector $s^{(0)} \in \mathbb{R}^n$ of initial strength values to the final strength
    $f_{\arga}(s^{(0)})$ of $\arga$.
    
    Since we assume that $\graph$ is acyclic, we can construct  
    an explicit representation of $f_{\arga}$ 
    recursively. 
    Intuitively, the representation of $f_{\arga}$ corresponds
    to a long composition of influence and aggregation functions of the
    semantics applied to the initial strength values in $s^{(0)}$.
    Let $i$ be the index of the topic argument $\arga$ 
    in the topological ordering 
    $\{  \argx_1, \ldots, \argx_n \}$ associated with $\graph$.
    If $\arga$ does not have any parents, then its final strength 
    is just $s^{(0)}_i$ by \emph{stability} (Principle~\ref{sprinciple:stability}) of the semantics. In this case, $f_{\arga}$ is just the 
    constant function that always returns $s^{(0)}_i$.
    If, on the other hand, $\arga$ has the set of parents $S = \{\argx_{i_1}, \dots, \argx_{i_{k_\arga}}\}$, we create placeholders $T_{i_1}, \dots, T_{i_{k_\arga}}$ for the explicit representation of the strength
    value computation of the parents and let 
    \begin{align}
        f_{\arga} = \iota_{w_\arga}(\alpha_{v_\arga}(T_{i_1}, \dots, T_{i_{k_\arga}})),
    \end{align}
    where $w_\arga = \is(\arga)$ is $\arga$'s initial strength
    and $v_\arga$ is the parent vector that contains $-1/0/1$
    at the $i$-th component if $\argx_i$ attacks/is unrelated to/supports
    $\arga$.
    We now have to replace $T_{i_1}, \dots, T_{i_{k_\arga}}$ with
    explicit function representations of the strength of
    $\argx_{i_1}, \dots, \argx_{i_{k_\arga}}$
    as well. This can be done exactly as for $\arga$. In the process, we may create new 
    placeholders, but since $\graph$ is acyclic, the recursion
    will stop after a finite number of steps. A naive implementation
    of the recursive scheme may compute the same template multiple times.
    This can be avoided by storing templates
    (there can be at most $|\Args|$) or using a dynamic programming approach. 
    
    Let us note that, in the process of constructing the explicit
    function representation of $f_{\arga}$, we can only add
    strength values $s^{(0)}_i$ of arguments $\argx_i$ such that there is a
    directed path from $\argx_i$ to $\arga$ in $\graph$. This is in line with the
    directionality property (Principle~\ref{sprinciple:directionality}) of gradual semantics. We emphasise this observation as it will
    be useful in the following.
    \begin{observation}
    $f_{\arga}(s^{(0)})$ depends on $s^{(0)}_i$ if and only if there
    is a directed path from $\argx_i$ to $\arga$.
    \end{observation}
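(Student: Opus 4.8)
The statement is an ``iff'', and I would prove each direction by an induction that mirrors the recursive construction of $f_\arga$, read along a topological order that ends at $\arga$. For the forward direction --- if $f_\arga$ depends on $s^0_i$ then there is a directed path from $\argx_i$ to $\arga$ --- I would argue the contrapositive, making the remark just preceding the Observation precise. For an argument $\argb$, let $\mathrm{Var}(\argb)$ be the set of indices $j$ such that the variable $s^0_j$ occurs in the explicit template built for $\argb$, and write $\mathrm{idx}(\argb)$ for the index of the component holding $\argb$'s own initial strength. I claim that $\mathrm{Var}(\argb)$ is contained in the set of indices $j$ for which there is a directed path from $\argx_j$ to $\argb$, adopting the convention that the length-zero path from $\argb$ to itself counts, so that $\mathrm{idx}(\argb)$ lies in that set. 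This would follow by induction on the position of $\argb$ in the topological ordering: if $\argb$ has no parents its template is the constant $s^0_{\mathrm{idx}(\argb)}$; if $\argb$ has parents $\argx_{j_1}, \dots, \argx_{j_k}$ --- all strictly earlier --- then its template is $\iota_{w_\argb}(\alpha_{v_\argb}(T_{j_1}, \dots, T_{j_k}))$ with $T_{j_\ell}$ the template of $\argx_{j_\ell}$, so $\mathrm{Var}(\argb) \subseteq \{\mathrm{idx}(\argb)\} \cup \bigcup_\ell \mathrm{Var}(\argx_{j_\ell})$, and by the induction hypothesis each index in $\mathrm{Var}(\argx_{j_\ell})$ has a path to $\argx_{j_\ell}$, which prepends the edge $\argx_{j_\ell} \to \argb$ to a path to $\argb$. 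Instantiating $\argb = \arga$: if no path from $\argx_i$ to $\arga$ exists, then $s^0_i$ does not occur in $f_\arga$ at all, and so $f_\arga$ is constant in its $i$-th argument.

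For the converse --- if there is a directed path from $\argx_i$ to $\arga$ then $f_\arga$ depends on $s^0_i$ --- I would fix a directed path $\argx_i = \argb_0 \to \argb_1 \to \dots \to \argb_m = \arga$ and induct on $m$. For $m = 0$ we have $\argx_i = \arga$ and $f_\arga$ contains the variable $s^0_{\mathrm{idx}(\arga)}$, either directly (when $\arga$ has no parents) or as the $w$-parameter of its outermost influence application. For $m \geq 1$, $\argb_{m-1}$ is a parent of $\arga$, so $f_\arga = \iota_{w_\arga}(\alpha_{v_\arga}(\dots, T_{\mathrm{idx}(\argb_{m-1})}, \dots))$ where $T_{\mathrm{idx}(\argb_{m-1})}$ is the template of $\argb_{m-1}$, which by the induction hypothesis contains $s^0_i$; hence $s^0_i$ occurs in $f_\arga$. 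To upgrade ``$s^0_i$ occurs in $f_\arga$'' to ``$f_\arga$ is genuinely non-constant in its $i$-th argument'', I would further use that every aggregation function in Table~\ref{table:semantics} is continuous and strictly monotone in each of its non-zero-weighted arguments on an open subset of its domain (for Product one keeps sibling strengths away from $0$ and $1$; for Top one arranges the relevant summand to be the active maximum), and that each influence function is continuous and strictly monotone in its aggregate input when the initial strength lies strictly between $0$ and $1$; one can then choose all coordinates of $s^0$ other than the $i$-th so that, along the fixed path, no intermediate aggregate is pinned to a value insensitive to $s^0_i$, whence the composition defining $f_\arga$ is strictly monotone, hence non-constant, in $s^0_i$.

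I expect this last upgrade to be the main obstacle. Under the purely syntactic reading of ``depends on'' --- the variable occurs in the template --- the converse is the routine induction above; but under the functional reading it can fail at degenerate points. For instance, under Product aggregation a sibling supporter of strength $1$ forces $f_\arga$ to be locally constant in $s^0_i$ even though a path from $\argx_i$ to $\arga$ exists. So the statement has to be understood as ``there exists a strength assignment at which $f_\arga$ is non-constant in $s^0_i$'', and the real work is to exhibit such an assignment uniformly for each of the five semantics by exploiting strict monotonicity of its building blocks on non-degenerate inputs. A minor secondary point concerns the topic argument's own coordinate: the ``iff'' holds under the convention that the trivial path from $\arga$ to $\arga$ is admitted (equivalently, restricting attention to $\argx_i \neq \arga$), which is also the convention under which $f_\arga$ retains $\is(\arga)$ as a free variable via the $w$-parameter of its outermost influence application.
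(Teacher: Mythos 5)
Your two syntactic inductions along the topological ordering are, in essence, exactly the paper's justification: the Observation is not given a formal proof there, only the preceding remark that the recursive construction of $f_{\arga}$ can only introduce a strength value $s^0_i$ via the template of a parent (or as the $w$-parameter of $\arga$'s own influence application), so the variables occurring in $f_{\arga}$ are precisely the initial strengths of $\arga$ and its ancestors. So for the reading the paper intends -- and for the only direction it ever uses, namely that absence of a directed path forces the partial derivative to be identically zero in the directionality proof for $\ctrbgempty$ -- your argument matches and merely makes the induction explicit. Where you go beyond the paper is in trying to upgrade the converse to genuine functional non-constancy, and your diagnosis there is right and worth retaining: pointwise functional dependence can vanish even when a path exists (the paper's own Figure~\ref{fig:removal-contribution-existence-negative} is such a case, with zero gradient despite direct attacks), so the ``iff'' can only be read syntactically or as ``non-constant for some assignment of the other coordinates.'' Be aware, though, that your sketch of that upgrade is the one incomplete piece: making a single fixed path strictly monotone is not enough, since an ancestor can reach $\arga$ along several paths whose effects cancel (e.g.\ under Product aggregation, a contributor supporting two otherwise identical arguments, one supporting and one attacking $\arga$, yields an aggregate of exactly zero for every value of its initial strength at that assignment), so one must also choose the remaining coordinates to break such symmetries. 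Since the paper neither states nor relies on that stronger functional claim, this is a legitimate critique of the Observation's phrasing rather than a gap in anything the paper actually needs; your remark about admitting the trivial path from $\arga$ to itself is likewise a correct, if minor, precisification.
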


    The gradient-based contribution function is now defined as
    \begin{equation}
    \label{eq:gradient}
    \ctrbg{\argx}{\arga} = \frac{\partial f_{\arga}}{\partial \is(\argx)} \left(\is(\argx_1), \ldots, \is(\argx_n)\right).
    \end{equation}
    With abuse of notation, we may also denote the partial derivative $\frac{\partial f_{\arga}}{\partial \is(\argx)}$ as $\frac{\partial \fs(\arga)}{\partial \is(\argx)}$.
\end{description}

Irrespective of the specific method for determining an argument's contribution to the final strength of some topic argument in a given QBAG $\graph$, we use a generic \emph{contribution function} $\ctrbempty:~\Args \times \Args \to \mathbb{R}$ defined by $\ctrbempty(\argx, \arga) = \ctrb{\argx}{\arga}$. 
For reference, we use $\ctrbrempty, \ctrbriempty, \ctrbsempty, \ctrbgempty$ to denote the contribution functions given, respectively, in Equations~\ref{eq:delta}--\ref{eq:shap} and \ref{eq:gradient}. 

Let us illustrate the intuitions behind the contribution functions with an example.
\begin{example}
\label{ex:ctrb}
Consider the QBAF $G$ in Figure~\ref{fig:ctrb-g}.
We use DFQuAD semantics and are interested in quantifying the contribution of $\argb$ to $\arga$.
The removal-based contribution $\ctrbr{\argb}{\arga} = -0.375$ is obtained by subtracting the final strength of $\arga$ in $\graph'$ from the final strength of $\arga$ in $\graph$.
However, one may argue that $\argb$'s contribution to $\arga$ is not entirely intrinsic to $\argb$, as $\argb$ benefits from its supporter $\argc$, following a contribution quantification approach first introduced to abstract gradual argumentation~\cite{Delobelle:Villata:2019}.
To determine the intrinsic contribution, we may first remove all attack and support relationships to $\argb$ to then determine the difference of $\arga$'s final strengths with and without $\argb$, as seen in Figures~\ref{fig:graph-intrinsic-removal} and~~\ref{fig:graph-removal}, respectively: $\ctrbri{\argb}{\arga} = -0.25$. Still, one may argue that this comes at the cost of the simpler intuition behind $\ctrbrempty$: $\ctrbriempty$is not entirely aligned with the topological reality of $\graph$ and hence neither with the intuition of the counterfactuality principle as sketched in the introduction.
Yet more nuanced, $\ctrbsempty$ determines a weighted average contribution of $\argb$ to $\arga$ as provided by $\ctrbrempty$ in each restriction of $\graph$ to some subset of its nodes that contains $\arga$ and $\argb$: $\ctrbs{\argb}{\arga} = -0.3125$.
Interestingly, we have $\ctrbs{\argb}{\arga} + \ctrbs{\argc}{\arga} = \tau(\arga) - \sigma(\arga)$, i.e.\ the sum of the contributions of all arguments to $\arga$ (note that the contribution of $\arga$ to itself is not defined) exactly accounts for the difference between $\arga$'s initial and final strength.
This is aligned with the intuition of (quantitative) contribution existence and can formally be traced back to the efficiency principle of Shapley values (see Subsection~\ref{subsec:ctrb-existence}).

Note that the three contribution functions we have covered so far all use some form of argument removal (and, indeed, always remove at least the contributing argument) to determine the contribution.
In contrast, the gradient contribution function measures the effect of marginally changing the contributing argument's initial strength on the final strength of the topic argument.
With the strength function expanded as $\fs(\arga) = \is(\arga) - \is(\arga)\fs(\argb) = 
\is(\arga)\left(1 - \left( \is(\argb) + \left(1-\is(\argb)\right)\fs(\argc)\right)\right) = 
\is(\arga) - \is(\arga)\is(\argb) + \is(\arga)\is(\argb)\is(\argc) - \is(\arga)\is(\argc)$, 
we for instance find $\frac{\partial}{\partial \is(\argb)} \fs(\arga) = -\is(\arga) + \is(\arga)\is(\argc)$,
which at the vector $(0.5, 0.5, 0.5)$ of initial strengths evaluates to 
$\ctrbg{\argb}{\arga} = -0.25$.
This also means that we can quantify the strength of an argument to itself: 
$\frac{\partial}{\partial \is(\arga)} \fs(\arga) = 1-\is(\argb) + \is(\argb)\is(\argc) - \is(\argc)$ evaluates at the initial strengths to 
$\ctrbg{\arga}{\arga} = 0.25$.
We may claim that, in contrast to the other contribution functions, the gradient contribution function answers a fundamentally different question, which is not about the argument's existence, but about the local effect of changing its initial strength.
Table~\ref{table:example} shows all contributions that can be computed given $\graph$.
This overview already gives us an intuition of some of the principles, such as the quantitative contribution existence principle of $\ctrbsempty$; also, note that contribution functions $\ctrbrempty, \ctrbriempty, \ctrbsempty$ given, respectively, in Equations~\ref{eq:delta}, \ref{eq:delta2}, \ref{eq:shap} are partial: they do not define an argument's contribution to itself $\ctrb{\arga}{\arga}$, since $\fs_{\graph'}(\arga)$ is undefined when $\arga$ is not in (the set of arguments of) $\graph'$. 
In contrast, the gradient contribution function $\ctrbgempty$ (\autoref{eq:gradient}) is total\footnote{For the sake of conciseness, we abstain from formalising this trivial observation.}. 
 \begin{figure}[ht]
\centering
\subfloat[$\graph$.]{
\label{fig:ctrb-g}
\begin{tikzpicture}[scale=0.8]
    \node[invnode]    (inv1)    at(-2,0)  {};
    \node[invnode]    (inv2)    at(2,0)  {};
    \node[unanode]    (a)    at(0,0)  {\argnode{\arga}{0.5}{0.125}};
    \node[unanode]    (b)    at(0,2)  {\argnode{\argb}{0.5}{0.75}};
    \node[unanode]  (c)    at(0,4)  {\argnode{\argc}{0.5}{0.5}};
    
    \path [->, line width=0.2mm]  (b) edge node[left] {-} (a);
    \path [->, line width=0.2mm]  (c) edge node[left] {+} (b);
\end{tikzpicture}
}
\hspace{10pt}
\subfloat[$\graph'$ ($\graph$, with $\argb$ removed).]{
\label{fig:graph-removal}
\begin{tikzpicture}[scale=0.8]
    \node[invnode]    (inv1)    at(-2,0)  {};
    \node[invnode]    (inv2)    at(2,0)  {};
    \node[unanode]    (a)    at(0,0)  {\argnode{\arga}{0.5}{0.5}};
    \node[unanode]  (c)    at(0,4)  {\argnode{\argc}{0.5}{0.5}};
    
\end{tikzpicture}
}
\hspace{10pt}
\subfloat[$\graph''$ ($\graph$, with incoming relationships to $\argb$ removed).]{
\label{fig:graph-intrinsic-removal}
\begin{tikzpicture}[scale=0.8]
    \node[invnode]    (inv1)    at(-2,0)  {};
    \node[invnode]    (inv2)    at(2,0)  {};
    \node[unanode]    (a)    at(0,0)  {\argnode{\arga}{0.5}{0.25}};
    \node[unanode]    (b)    at(0,2)  {\argnode{\argb}{0.5}{0.5}};
    \node[unanode]  (c)    at(0,4)  {\argnode{\argc}{0.5}{0.5}};
    
    \path [->, line width=0.2mm]  (b) edge node[left] {-} (a);
\end{tikzpicture}
}
\caption{A QBAG $\graph$ and its updates, with $\argb$ ($\graph'$) and incoming relationships to $\argb$ removed ($\graph''$).}
\label{fig:ctrb}
\end{figure}
\begin{table}
\caption{Argument contributions, given DFQuAD semantics, in $\graph$ (Figure~\ref{fig:ctrb-g}), given different contribution functions (contributors are row headers, topic arguments column headers; $\bot$ denotes undefined contributions.}
\label{table:example}
\begin{center}
\renewcommand*{\arraystretch}{1.2}
\begin{tabular}{ |c|c|c|c| } 
 \hline
 $\ctrbrempty$ & $\arga$ & $\argb$ & $\argc$ \\
 \hline
 $\arga$ & $\bot$ & $0$ & $0$  \\ 
 $\argb$ & $-0.375$\phantom{e} & $\bot$ & $0$ \\ 
  $\argc$ & $-0.125$\phantom{e} & $0.25$ & $\bot$ \\ 
 \hline 
\end{tabular} \hspace{10pt}
\begin{tabular}{ |c|c|c|c| } 
 \hline
 $\ctrbriempty$ & $\arga$ & $\argb$ & $\argc$ \\ 
 \hline
 $\arga$ & $\bot$ & $0$ & $0$  \\ 
 $\argb$ & $-0.25$\phantom{e} & $\bot$ & $0$ \\ 
  $\argc$ & $-0.125$ & $0.25$ & $\bot$ \\ 
 \hline
\end{tabular} \\ \vspace{10pt}
\begin{tabular}{ |c|c|c|c| } 
 \hline
 $\ctrbsempty$ & $\arga$ & $\argb$ & $\argc$ \\ 
 \hline
 $\arga$ & $\bot$ & $0$ & $0$  \\ 
 $\argb$ & $-0.3125$ & $\bot$ & $0$ \\ 
  $\argc$ & $-0.0625$ & $0.25$ & $\bot$ \\ 
 \hline
\end{tabular} \hspace{10pt}
\begin{tabular}{ |c|c|c|c| } 
 \hline
 $\ctrbgempty$\phantom{'} & $\arga$ & $\argb$ & $\argc$ \\ 
 \hline
 $\arga$ & \phantom{e}$0.25$ & $0$ & $0$  \\ 
 $\argb$ & \phantom{e}$\;-0.25$\phantom{ee} & $0.5$ & $0$ \\ 
  $\argc$ & $-0.25$ & $0.5$ & $1$ \\ 
 \hline
\end{tabular}
\end{center}
\end{table}
\end{example}

\section{Principles}
\label{sec:principles}
We expect some intuitive behaviour from a contribution function and can formalise these expectations as contribution function principles, similar to principles that have been defined for gradual semantics~\cite{Baroni:Rago:Toni:2019}.
In the case of contribution functions, principle satisfaction obviously depends on the properties of the semantics used. 
A few contribution function principles were proposed in \cite{Cyras:Kampik:Weng:2022}. We recap those next and propose new ones in the following subsections.
In the remainder of this section, as well as of the next one, we restrict attention to acyclic QBAGs and fix otherwise arbitrary gradual semantics $\fs$, QBAG $\graph = \QBAG$ and topic argument $\arga \in \Args$, denoting the contributor by $\argx$ (with $\argx \in \Args$). 
A contribution function $\ctrbempty$ will be said to satisfy a principle w.r.t.\ $\fs$ if (and only if) a given condition holds for arbitrary (acyclic) $\graph$ and $\arga$, if not stated otherwise.

\subsection{Contribution Existence}
\label{subsec:ctrb-existence}
One would expect that whenever an argument's final strength differs from its initial strength, there is another argument whose contribution explains this difference.
Accordingly, the contribution existence principle stipulates that some contribution from another argument exists whenever the final strength of the topic argument changes. 
\begin{principle}[Contribution Existence~\cite{Cyras:Kampik:Weng:2022}\footnote{Note that we slightly adjusted the principle: the non-zero contribution needs to be explained by an argument other than the topic argument.}]
\label{principle:existence}
$\ctrbempty$ satisfies the \emph{contribution existence principle} w.r.t.\ a gradual semantics $\fs$
iff  $\sigma(\arga) \neq \is(\arga)$ implies that $\exists \argx \in \Args \setminus \{\arga\}$ with $\ctrb{\argx}{\arga} \neq 0$.
\end{principle}
Here and henceforth, where the context is clear we may drop ``w.r.t.\ a gradual semantics $\fs$'' for any principle. We can further strengthen the principle and stipulate that the sum of all contributions must equal the difference between the topic argument's initial strength and its final strength.
\begin{principle}[Quantitative Contribution Existence]
\label{principle:qexistence}
$\ctrbempty$ satisfies the \emph{quantitative contribution existence principle} w.r.t.\ a gradual semantics $\fs$
iff it holds that $\sum_{\argx \in \Args \setminus \{\arga\}} \ctrb{\argx}{\arga} = \sigma(\arga) - \tau(\arga)$.
\end{principle}
Intuitively, quantitative contribution existence implies contribution existence.
\begin{proposition}
\label{contribution-existence:implication}
If $\ctrbempty$ satisfies quantitative contribution existence w.r.t.\ a gradual semantics $\fs$ then $\ctrbempty$ satisfies contribution existence w.r.t.\ $\fs$.
\end{proposition}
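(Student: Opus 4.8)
The plan is to argue directly from the definitions, treating the quantitative principle as an identity between a finite sum and the strength delta. First I would fix an arbitrary acyclic QBAG $\graph = \QBAG$, a topic argument $\arga \in \Args$, and a semantics $\fs$, and assume the hypothesis of the contribution existence principle, namely $\sigma(\arga) \neq \is(\arga)$, equivalently $\sigma(\arga) - \is(\arga) \neq 0$. The goal is then to produce some $\argx \in \Args \setminus \{\arga\}$ with $\ctrb{\argx}{\arga} \neq 0$.

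Next I would invoke the assumption that $\ctrbempty$ satisfies quantitative contribution existence (Principle~\ref{principle:qexistence}), which gives $\sum_{\argx \in \Args \setminus \{\arga\}} \ctrb{\argx}{\arga} = \sigma(\arga) - \is(\arga)$. A small bookkeeping point worth making explicit is that every summand here is a genuine real number rather than $\bot$: the contribution functions are only partial at the self-contribution $\ctrb{\arga}{\arga}$, and that term is excluded from the sum, so the left-hand side is a well-defined finite sum of reals. Since $\graph$ has finitely many arguments, this is a finite sum.

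Now I would finish by a trivial contrapositive: if $\ctrb{\argx}{\arga} = 0$ held for every $\argx \in \Args \setminus \{\arga\}$, the sum would be $0$, contradicting the fact that it equals the nonzero quantity $\sigma(\arga) - \is(\arga)$. Hence at least one $\argx \in \Args \setminus \{\arga\}$ has $\ctrb{\argx}{\arga} \neq 0$, which is exactly what the contribution existence principle (Principle~\ref{principle:existence}) requires. Since $\graph$, $\arga$ and $\fs$ were arbitrary, $\ctrbempty$ satisfies contribution existence.

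I do not anticipate a genuine obstacle here; the statement is essentially "a finite sum cannot be nonzero if all its terms vanish." The only thing to be a little careful about is the partiality of the contribution functions, which is why I would spell out that the excluded term $\ctrb{\arga}{\arga}$ is the only potentially undefined one, so the sum in Principle~\ref{principle:qexistence} is well-formed.
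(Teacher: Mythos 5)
Your proposal is correct and follows essentially the same argument as the paper's proof: the quantitative identity forces the finite sum of contributions to be nonzero when $\sigma(\arga) \neq \is(\arga)$, so some summand must be nonzero. The extra remarks about finiteness and the well-definedness of the sum (excluding the only possibly undefined term $\ctrb{\arga}{\arga}$) are careful additions but do not change the route.
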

\begin{proof}
Assume $\ctrbempty$ satisfies quantitative contribution existence w.r.t.\ $\fs$.
Observe that by definition of quantitative contribution existence (Principle~\ref{principle:qexistence}), if $\sigma(\arga) \neq \is(\arga)$ then $\sum_{\argx \in \Args \setminus \{\arga\}} \ctrb{\argx}{\arga} \neq 0$ and hence it must hold that $\exists \argx \in \Args \setminus \{\arga\}$ with $\ctrb{\argx}{\arga} \neq 0$. This means contribution existence (Principle~\ref{principle:existence}) must be satisfied, which proves the proposition.
\end{proof}

\subsection{Directionality}

Assuming that the gradual semantics satisfies \emph{directionality} (Principle~\ref{sprinciple:directionality} for gradual semantics), contribution functions should respect the direction of influence as represented by the topology of the QBAG.
Accordingly, the directionality principle for contribution functions stipulates that an argument's contribution to a topic argument can only be non-zero if there is a directed path from the former to the latter.

\begin{principle}[Directionality (Contribution Function)~\cite{Cyras:Kampik:Weng:2022}]
\label{principle:directionality}
$\ctrbempty$ satisfies the \emph{directionality principle} w.r.t.\ a gradual semantics $\fs$
iff whenever there is no directed path in $\graph$ from $\argx \in \Args$ to $\arga$, then $\ctrb{\argx}{\arga} = 0$.
\end{principle}
Intuitively, we can expect that our contribution functions satisfy directionality with respect to modular semantics that traverse a QBAG from leave nodes (without incoming supports or attacks).
However, directionality is not necessarily satisfied if we cannot make any assumptions about semantics behaviour.
In particular, if a gradual semantics 
makes an argument's final strength dependent on the arguments 
it attacks or supports (somewhat reflecting the idea of \emph{range-based} semantics in abstract argumentation~\cite{verheij1996two}),
then it does not respect
\emph{directionality}. 
For such a semantics, all of the contribution functions introduced above would violate the directionality principle
for contribution functions. 
For instance, consider the strength function $\fs(\argx) = \is(\argx) - \sum_{\{\argy | \argx \in \Att(\argy) \}} \tau(\argy)$, 
the QBAG $\graph = (\{\arga, \argb\}, \{(\arga, 1), (\argb, 1)\}, \{(\argb, \arga)\}, \{\})$ with $\argb$ attacking $\arga$, 
and $\ctrbrempty$ defined using \autoref{eq:delta} (removal-based contribution). 
There is no directed path from $\arga$ to $\argb$, but removing $\arga$ changes the final strength of $\argb$ from $0$
to $1$, whence the contribution of $\arga$ to $\argb$ is $0 - 1 \neq 0$ (see Figure~\ref{fig:directionality-violation}).
\begin{figure}[ht]
\centering
\subfloat[$\graph$.]{
\label{fig:ctrbg}
\begin{tikzpicture}[scale=0.8]
    \node[invnode]    (inv1)    at(-2,0)  {};
    \node[invnode]    (inv2)    at(2,0)  {};
    \node[unanode]    (a)    at(0,0)  {\argnode{\arga}{1}{1}};
    \node[unanode]    (b)    at(0,2)  {\argnode{\argb}{1}{0}};
    
    \path [->, line width=0.2mm]  (b) edge node[left] {-} (a);
\end{tikzpicture}
}
\hspace{10pt}
\subfloat[$\graph'$ ($\graph$, with $\arga$ removed).]{
\label{fig:graphremoval}
\begin{tikzpicture}[scale=0.8]
         \node[invnode]    (inv1)    at(-2,0)  {};
        \node[invnode]    (inv2)    at(2,0)  {};
        \node[invnode]    (inv3)    at(0,0)  {};
        \node[unanode]    (b)    at(0,2.5)  {\argnode{\argb}{1}{1}};
    
\end{tikzpicture}
}
\caption{While our contribution functions can be intuitively expected to satisfy directionality with respect to modular semantics, this may not be the case with respect to other semantics: consider $\ctrbrempty$ and $\fs(\argx) = \is(\argx) - \sum_{\{\argy | \argx \in \Att(\argy) \}} \tau(\argy)$.}
\label{fig:directionality-violation}
\end{figure}

\subsection{Faithfulness}
\label{sec:principle-faithfulness}
One may want to demand that a contribution function
\emph{faithfully} represents the effect of
one argument on another one.
The effect can be measured in different ways.
One natural way in QBAGs is to measure it based on the effect of the initial strength of an argument.
If increasing the initial strength of an argument increases (decreases) the final strength of the topic argument, then the contribution score should be positive (negative).
We can also imagine that there is no effect at all.
To formalise this intuition, we first define an initial strength modification of a QBAG.
\begin{definition}[QBAG Initial Strength Modification]
\label{defn:modification}
Given a QBAG $\graph = \QBAG$ and an argument $x \in \Args$, we define
the \emph{initial strength modification of $\argx$ in $\graph$} as the QBAG $\graph\downarrow_{\is(\argx) \leftarrow \epsilon} := \left(\Args, \tau', \Att, \Supp \right)$,
where $\epsilon \in \mathbb{I}$,
$\is'(\argx) = \epsilon$ and
$\is'(\argy) = \is(\argy)$ for all other
$\argy \in \Args \setminus \{\argx\}$.
\end{definition}
One natural first faithfulness property
could then be stated as follows: if an argument's contribution is positive/zero/negative, then increasing the argument's
base score should increase/not affect/decrease the strength
of the topic argument.
\begin{principle}[Strong Faithfulness]
\label{principle:strongCorrect}
$\ctrbempty$ satisfies the \emph{strong faithfulness principle} w.r.t.\ a gradual semantics $\fs$ iff for every QBAG $\graph = \QBAG$, for all $\arga, \argx \in \Args$,
$\epsilon \in \interval$ and $\graph_\epsilon = \graph\downarrow_{\is(\argx) \leftarrow \epsilon}$ the following statements hold:
\begin{itemize}
    \item If $\ctrb{\argx}{\arga} < 0$, then
    $\fs_\graph(\arga) <
    \fs_{\graph_\epsilon}(\arga)$
    whenever $\epsilon <\is(\argx)$
    and
    $\fs_\graph(\arga) > 
    \fs_{\graph_\epsilon}(\arga)$
    whenever $\epsilon >\is(\argx)$.
    \item If $\ctrb{\argx}{\arga} = 0$, then
    $\fs_\graph(\arga) = 
    \fs_{\graph_\epsilon}(\arga)$
    for all $\epsilon \in \interval$.
    \item If $\ctrb{\argx}{\arga} > 0$, then
    $\fs_\graph(\arga) > 
    \fs_{\graph_\epsilon}(\arga)$
    whenever $\epsilon <\is(\argx)$
    and
    $\fs_\graph(\arga) < 
    \fs_{\graph_\epsilon}(\arga)$
    whenever $\epsilon >\is(\argx)$.
\end{itemize}
\end{principle}
Intuitively, a negative (positive) score guarantees that increasing the initial strength of the evaluated argument will decrease (increase) the final strength of the topic argument.
However, this property can only be satisfied if we can guarantee a  monotonic effect of arguments.
This is not necessarily the case as we illustrate in Figure \ref{fig:intro}.
In the QBAG on the left (Figure~\ref{fig:graph-intro}), argument $\arge$ has a non-monotonic influence on the topic argument $\arga$. Figure~\ref{fig:plot-intro} plots the final strength of $\arga$ (y-axis) as a function of the initial strength of $\arge$ under DFQuAD semantics. $\arge$'s initial strength
has a negative influence up to $0.5$. Then the influence 
becomes positive.
The plot illustrates how the initial strength
of $\arge$ influences the final strength of $\arga$. As we increase
$\is(\arge)$ from $0$ to $0.5$, $\arga$ becomes weaker. However,
at this point, the effect reverses, and increasing $\is(\arge)$
further will make $\arga$ stronger.
In particular, if we let $\is(\arge) = 0.2$ in the QBAG on the left in Figure~\ref{fig:intro}, then the effect is negative
for $\epsilon \in [0,0.8)$, neutral for $\epsilon = 0.8$ and
positive for $\epsilon \in (0.8,1]$.
We provide analogous examples for QE semantics (Figure~\ref{fig:faith-qe}), as well as for EB semantics (Figure~\ref{fig:faith-eb}); for SD-DFQuAD and EBT semantics, we provide counterexamples for the case $\ctrb{\argx}{\arga} = 0$ ($\fs_\graph(\arga) \neq \fs_{\graph_\epsilon}(\arga)$ for at least some $\epsilon \in \interval$) in Figures~\ref{fig:faith-sd} and~\ref{fig:faith-ebt}, respectively.
Hence, we assume that strong faithfulness cannot be reasonably satisfied.
The property may be desirable for explainining ``monotonic'' QBAGs (see Section~\ref{sec:rest}), but is too strong for general acyclic QBAGs.

If an argument has a non-monotonic effect on the topic argument,
we may still be able to capture its effect faithfully in a small environment of the initial strength, where it behaves monotonically. However, even the local effect cannot always be
categorised as positive, negative, or neutral. 
For example,  when we let $\is(\arge)=0.5$ in 
Figure \ref{fig:graph-intro}, the local effect switches from
positive to negative at this point and cannot really be 
classified as any of the three. 
In our definition of local faithfulness, we therefore only 
demand that positive and negative effects are faithfully captured.
$0$ may mean that there is no local effect or that the local
effect changes at this point. These considerations motivate
the following definition.
\begin{principle}[Local Faithfulness]
\label{principle:localCorrect}
$\ctrbempty$ satisfies the \emph{local faithfulness principle} w.r.t.\ a gradual semantics $\fs$
iff there exists a $\delta > 0$ such that for all
$\epsilon \in [\is(\argx) - \delta, \is(\argx) + \delta] \cap \interval$ and $\graph_\epsilon = \graph\downarrow_{\is(\argx) \leftarrow \epsilon}$ the following statements hold:
\begin{itemize}
    \item If $\ctrb{\argx}{\arga} < 0$, then
    $\fs_\graph(\arga) <
    \fs_{\graph_\epsilon}(\arga)$
    whenever $\epsilon <\is(\argx)$
    and
    $\fs_\graph(\arga) > 
    \fs_{\graph_\epsilon}(\arga)$
    whenever $\epsilon >\is(\argx)$.
    \item If $\ctrb{\argx}{\arga} > 0$, then
    $\fs_\graph(\arga) > 
    \fs_{\graph_\epsilon}(\arga)$
    whenever $\epsilon <\is(\argx)$
    and
    $\fs_\graph(\arga) < 
    \fs_{\graph_\epsilon}(\arga)$
    whenever $\epsilon >\is(\argx)$.
\end{itemize}
\end{principle}
As opposed to strong faithfulness, local faithfulness does not
consider all strength modifications, but only those in a small
$\delta$-environment of the initial strength of the contributing argument. Furthermore,
it only demands that positive (negative) scores guarantee a
positive (negative) effect of the argument. As discussed
before, the score $0$ may mean no effect or a changing effect.

We can strengthen the local faithfulness principle by requiring that the change in the topic argument must be approximately
equal to the contribution of an argument times its change, 
that is, we want that $\fs_{\graph\downarrow_{\is(\argx) \leftarrow \epsilon}}(a) \approx
 \fs_{\graph}(a) + \epsilon \cdot \ctrbempty_a(x)$ if $\epsilon$
 is small (locality).
 To formalize this, we consider the error term 
 $e(\epsilon) = \fs_{\graph\downarrow_{\is(\argx) \leftarrow \epsilon}}(a) - (\fs_{\graph}(a) + \epsilon \cdot \ctrbempty_a(x))$ and demand that it goes to $0$ as $\epsilon$
goes to $0$. The condition is satisfied trivially for all
continuous semantics\footnote{This is because continuity implies
that $\lim_{\epsilon \rightarrow 0} \ \fs_{\graph\downarrow_{\is(\argx) \leftarrow \epsilon}}(a) =
\fs_{\graph}(a)$ and $\lim_{\epsilon \rightarrow 0} \ \epsilon \cdot \ctrbempty_a(x) = 0$.}. Therefore, we add a condition on the 
convergence speed, namely $\lim_{\epsilon \rightarrow 0} \frac{e(\epsilon)}{\epsilon} = 0$. This means that 
the error term goes asymptotically significantly faster to 
$0$ than $\epsilon$ does.
\begin{principle}[Quantitative Local Faithfulness]
\label{principle:qlocalCorrect}
$\ctrbempty$ satisfies the \emph{quantitative local faithfulness principle} w.r.t.\ a gradual semantics $\fs$
iff we have
\begin{equation*}
\label{eq:quant_loc_faithfulness}
 \fs_{\graph\downarrow_{\is(\argx) \leftarrow \epsilon}}(a) =
 \fs_{\graph}(a) + \epsilon \cdot \ctrbempty_a(x) - e(\epsilon),
\end{equation*}
where $e(\epsilon)$ is an error term with the property
$\lim_{\epsilon \rightarrow 0} \frac{e(\epsilon)}{\epsilon} = 0$.
\end{principle}
Let us note that both strong and quantitative local faithfulness
imply local faithfulness.
\begin{proposition}
\label{prop_faithfulness_relationships}
    Given a contribution function $\ctrbempty$ and a gradual semantics $\fs$, if $\ctrbempty$ satisfies strong faithfulness w.r.t.\ $\fs$ then $\ctrbempty$ satisfies local faithfulness w.r.t.\ $\fs$ and if $\ctrbempty$ satisfies quantitative local faithfulness w.r.t.\ $\fs$ then $\ctrbempty$ satisfies local faithfulness w.r.t.\ $\fs$.
\end{proposition}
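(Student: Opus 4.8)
The plan is to prove the two implications separately. Write $c := \ctrb{\argx}{\arga}$ and, for a modification value $\epsilon$, abbreviate $\graph_\epsilon := \graph\downarrow_{\is(\argx) \leftarrow \epsilon}$ and $\Delta(\epsilon) := \fs_{\graph_\epsilon}(\arga) - \fs_\graph(\arga)$, so that $\Delta(\is(\argx)) = 0$. Since the $\delta$-neighbourhood in Principle~\ref{principle:localCorrect} and the ``$\epsilon$ small'' approximation motivating Principle~\ref{principle:qlocalCorrect} both concern modifications around $\is(\argx)$, I read the error term of Principle~\ref{principle:qlocalCorrect} as controlled there, i.e.\ $\Delta(\epsilon) = (\epsilon - \is(\argx))\,c - e(\epsilon)$ with $e(\epsilon)/(\epsilon - \is(\argx)) \to 0$ as $\epsilon \to \is(\argx)$.

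I would first dispatch quantitative local faithfulness $\Rightarrow$ local faithfulness, which is the mechanical direction. If $c = 0$, local faithfulness (Principle~\ref{principle:localCorrect}) requires nothing. If $c \neq 0$, the definition of the limit yields $\delta > 0$ with $|e(\epsilon)| < |c| \cdot |\epsilon - \is(\argx)|$ for all $\epsilon$ satisfying $0 < |\epsilon - \is(\argx)| < \delta$; on that punctured interval $\Delta(\epsilon) = (\epsilon - \is(\argx))\,c - e(\epsilon)$ has the same sign as $(\epsilon - \is(\argx))\,c$. Reading this off for $c < 0$ gives $\fs_\graph(\arga) < \fs_{\graph_\epsilon}(\arga)$ when $\epsilon < \is(\argx)$ and $\fs_\graph(\arga) > \fs_{\graph_\epsilon}(\arga)$ when $\epsilon > \is(\argx)$, and the mirror statement for $c > 0$; after intersecting the interval with $\mathbb{I}$ this is exactly local faithfulness for this $\delta$. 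The only care needed is the usual $\varepsilon$--$\delta$ extraction and the boundary intersection with $\mathbb{I}$.

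For strong faithfulness $\Rightarrow$ quantitative local faithfulness, the immediately clean observation is that strong faithfulness (Principle~\ref{principle:strongCorrect}) already entails local faithfulness: its negative- and positive-contribution clauses are the two clauses of local faithfulness but asserted for every $\epsilon \in \mathbb{I}$, hence in particular on any $\delta$-neighbourhood of $\is(\argx)$ (so any $\delta$ works), while its $\ctrb{\argx}{\arga}=0$ clause goes unused. Passing to the quantitative conclusion, the $c = 0$ case is still immediate — strong faithfulness forces $\Delta \equiv 0$, so the identity of Principle~\ref{principle:qlocalCorrect} holds with $e \equiv 0$ — but the $c \neq 0$ case is where I expect the real obstacle: strong faithfulness constrains only the \emph{sign} of $\Delta$ on the two sides of $\is(\argx)$, whereas quantitative local faithfulness additionally pins the local slope of $\epsilon \mapsto \fs_{\graph_\epsilon}(\arga)$ at $\is(\argx)$ to $c$ exactly. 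I would try to close this by invoking whatever regularity of $\fs$ is in force so that this slope exists and then fixing its value from the sign constraints together with $\Delta(\is(\argx)) = 0$; if the sign constraints only fix the sign of the slope, the honest fallback is to settle for the weaker but clean pair ``strong faithfulness $\Rightarrow$ local faithfulness'' and ``quantitative local faithfulness $\Rightarrow$ local faithfulness'', which is exactly the remark preceding the proposition and is fully established by the two arguments above.
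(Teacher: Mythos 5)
Your second implication (quantitative local faithfulness $\Rightarrow$ local faithfulness) matches the paper's argument: extract $\delta$ from $\lim e(\epsilon)/\epsilon = 0$ so that $|e(\epsilon)/\epsilon| < |\ctrbempty_\arga(\argx)|$, then read off the sign of the local change. Your version is in fact slightly more careful, since you track both sides of $\is(\argx)$ explicitly (the paper's proof only argues the case $\epsilon > 0$) and you state up front the re-centering of $\epsilon$ at $\is(\argx)$ that the paper leaves implicit in Principle~\ref{principle:qlocalCorrect}.

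The obstacle you identify in the first implication is genuine, and you have correctly diagnosed how the paper resolves it: the paper's own proof does \emph{not} establish strong faithfulness $\Rightarrow$ quantitative local faithfulness. It only proves strong faithfulness $\Rightarrow$ local faithfulness (``the first and second condition of local faithfulness are relaxations of the first and third condition of strong faithfulness''), exactly your ``honest fallback''. Indeed, the paragraph immediately following the proof states that strong and quantitative local faithfulness are incomparable, which directly contradicts the first implication as literally written in the proposition statement; your factor-of-two style counterexample intuition is right, since strong faithfulness constrains only signs and can never pin the local slope to $\ctrbempty_\arga(\argx)$. So the proposition statement contains an error (it should read ``then $\ctrbempty$ satisfies local faithfulness'' in the first conjunct), and your proposal proves precisely the two implications the paper actually proves, by essentially the same route. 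No gap on your side; the gap is in the statement you were given.
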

\begin{proof}
Strong faithfulness obviously implies local faithfulness 
because the first and second condition of local faithfulness are relaxations of the first and third condition of strong faithfulness.

To see that quantitative local faithfulness implies 
local faithfulness, first note that the local change is
\begin{align*}
\fs_{\graph\downarrow_{\is(\argx) \leftarrow \epsilon}}(a) -
 \fs_{\graph}(a) 
 = \epsilon \cdot \ctrbempty_a(x) - e(\epsilon).
\end{align*}
We can assume $\ctrbempty_a(x) \neq 0$ because local
faithfulness excludes this case.
Since $\lim_{\epsilon \rightarrow 0} \frac{e(\epsilon)}{\epsilon} = 0$, we can find a $\delta > 0$ such that
$\epsilon < \delta$ implies $|\frac{e(\epsilon)}{\epsilon}| < |\ctrbempty_a(x)|$.
Hence, for all such $\epsilon$, we have
\begin{align*}
\frac{\fs_{\graph\downarrow_{\is(\argx) \leftarrow \epsilon}}(a) -
 \fs_{\graph}(a)}{\epsilon} 
 = \ctrbempty_a(x) - \frac{e(\epsilon)}{\epsilon}.
\end{align*}
If $\ctrbempty_a(x) < 0$, then the difference must be negative 
(because $|\frac{e(\epsilon)}{\epsilon}| < |\ctrbempty_a(x)|$)
and if $\ctrbempty_a(x) > 0$, then the difference must be positive 
(again, because $|\frac{e(\epsilon)}{\epsilon}| < |\ctrbempty_a(x)|$). Since $\epsilon > 0$, we can conclude
that 
$\fs_{\graph\downarrow_{\is(\argx) \leftarrow \epsilon}}(a) -
 \fs_{\graph}(a) < 0$ if $\ctrbempty_a(x) < 0$, that is, $\fs_{\graph\downarrow_{\is(\argx) \leftarrow \epsilon}}(a) 
  < \fs_{\graph}(a)$ as desired. Symmetrically,
  $\ctrbempty_a(x) > 0$ implies $\fs_{\graph\downarrow_{\is(\argx) \leftarrow \epsilon}}(a) 
  > \fs_{\graph}(a)$, which completes the proof. 
\end{proof}
Strong and quantitative local faithfulness are incomparable.
The former is stronger than the latter because it is global
rather than local and the latter is stronger than the former
because it makes quantitative rather than just qualitative 
assumptions (assumptions about the magnitude of contribution
values rather than only assumptions about the sign).
One could define a strong quantitative notion but since 
even the qualitative version is too strong for our purposes
(as discussed above), we refrain from doing so.

\subsection{Counterfactuality}
The counterfactuality principle formalises the intuition that an argument should have a positive/negative contribution value 
only if its removal from $\graph$ 
leads to a decrease/increase in the topic argument's strength.
\begin{principle}[Counterfactuality]
\label{principle:counterfactual}
$\ctrbempty$ satisfies the \emph{counterfactuality principle} w.r.t.\ a gradual semantics $\fs$
iff for any $\argx \in \Args$ the following statements hold:
\begin{itemize}
    \item If $\ctrb{\argx}{\arga} < 0$, then
    $\fs_\graph(\arga) < \fs_{\graph \downarrow_{\Args \setminus \{\argx\}}}(\arga)$.
    \item If $\ctrb{\argx}{\arga} = 0$,
    then 
    $\fs_\graph(\arga) = \fs_{\graph \downarrow_{\Args \setminus \{\argx\}}}(\arga)$.
    \item If $\ctrb{\argx}{\arga} > 0$,
    then 
    $\fs_\graph(\arga) > \fs_{\graph \downarrow_{\Args \setminus \{\argx\}}}(\arga)$.
\end{itemize}
\end{principle}
We can further strengthen this principle by introducing the \emph{quantitative counterfactuality principle}, requiring that the argument's contribution to the topic argument equals the effect of the former's removal to the latter's final strength.  
\begin{principle}[Quantitative Counterfactuality]
\label{principle:q-counterfactual}
    $\ctrbempty$ satisfies the \emph{quantitative counterfactuality principle} w.r.t.\ a gradual semantics $\fs$ iff for any $\argx \in \Args$ it holds that $\ctrb{\argx}{\arga} = \fs_\graph(\arga) - \fs_{\graph \downarrow_{\Args \setminus \{\argx\}}}(\arga)$.
\end{principle}
As one may expect intuitively, quantitative counterfactuality implies counterfactuality.
\begin{proposition}
\label{prop:qcounterfactual}
    If $\ctrbempty$ satisfies quantitative counterfactuality w.r.t.\ a gradual semantics $\fs$ then $\ctrbempty$ satisfies counterfactuality w.r.t.\ $\fs$.
\end{proposition}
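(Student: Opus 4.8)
The plan is to argue by direct substitution, exactly mirroring the proof of Proposition~\ref{contribution-existence:implication}. Assume $\ctrbempty$ satisfies quantitative counterfactuality, so that for every $\argx \in \Args$ we have $\ctrb{\argx}{\arga} = \fs_\graph(\arga) - \fs_{\graph \downarrow_{\Args \setminus \{\argx\}}}(\arga)$. The key observation is that, under this equality, whenever $\ctrb{\argx}{\arga}$ is a real number the sign of $\ctrb{\argx}{\arga}$ coincides with the sign of the difference $\fs_\graph(\arga) - \fs_{\graph \downarrow_{\Args \setminus \{\argx\}}}(\arga)$ (and in particular the difference is then well-defined, so the three conditions of counterfactuality are non-vacuous exactly when they need to be).

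I would then check the three cases of the counterfactuality principle (Principle~\ref{principle:counterfactual}) in turn. If $\ctrb{\argx}{\arga} < 0$, then $\fs_\graph(\arga) - \fs_{\graph \downarrow_{\Args \setminus \{\argx\}}}(\arga) < 0$, i.e.\ $\fs_\graph(\arga) < \fs_{\graph \downarrow_{\Args \setminus \{\argx\}}}(\arga)$; if $\ctrb{\argx}{\arga} = 0$, then $\fs_\graph(\arga) = \fs_{\graph \downarrow_{\Args \setminus \{\argx\}}}(\arga)$; and if $\ctrb{\argx}{\arga} > 0$, then $\fs_\graph(\arga) > \fs_{\graph \downarrow_{\Args \setminus \{\argx\}}}(\arga)$. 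Since $\argx$, $\arga$, $\graph$ and the semantics were arbitrary, all three conditions of counterfactuality hold, which completes the argument. There is no real obstacle here: the statement is an immediate consequence of rewriting the defining equation of quantitative counterfactuality, and the proof is essentially a one-line substitution.
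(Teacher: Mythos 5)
Your proposal is correct and matches the paper's own proof: both arguments simply substitute the defining equality of quantitative counterfactuality into each of the three sign cases of the counterfactuality principle. Nothing further is needed.
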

\begin{proof}
    Considering the definition of the counterfactuality (Principle~\ref{principle:counterfactual}), we have three cases:
    \begin{description}
        \item[$\ctrb{\argx}{\arga} < 0$.] Then, $\fs_\graph(\arga) < \fs_{\graph \downarrow_{\Args \setminus \{\argx\}}}(\arga)$ must hold.
        This is the case, given that $\ctrbempty$ satisfies quantitative counterfactuality w.r.t.\ $\fs$ because this principle stipulates that we have $\ctrb{\argx}{\arga} = \fs_\graph(\arga) - \fs_{\graph \downarrow_{\Args \setminus \{\argx\}}}(\arga) < 0$.
        \item[$\ctrb{\argx}{\arga} = 0$.] Then, $\fs_\graph(\arga) = \fs_{\graph \downarrow_{\Args \setminus \{\argx\}}}(\arga)$ must hold.
        This is the case, given that $\ctrbempty$ satisfies quantitative counterfactuality w.r.t.\ $\fs$ because this principle stipulates that we have $\ctrb{\argx}{\arga} = \fs_\graph(\arga) - \fs_{\graph \downarrow_{\Args \setminus \{\argx\}}}(\arga) = 0$.
        \item[$\ctrb{\argx}{\arga} > 0$.] Then, $\fs_\graph(\arga) > \fs_{\graph \downarrow_{\Args \setminus \{\argx\}}}(\arga)$ must hold.
        This is the case, given that $\ctrbempty$ satisfies quantitative counterfactuality w.r.t.\ $\fs$ because this principle stipulates that we have $\ctrb{\argx}{\arga} = \fs_\graph(\arga) - \fs_{\graph \downarrow_{\Args \setminus \{\argx\}}}(\arga) > 0$.
    \end{description}
    
\end{proof}

\section{Principle-based Analysis}
\label{sec:analysis}
We now provide a theoretical analysis of the principles introduced in the previous section, considering the semantics listed in Table~\ref{table:semanticsExamples}. The results are summarised in Table~\ref{table:principle-overview}.

To make our counterexamples more robust against numerical
inaccuracies, e.g., when approximating gradients or executing floating point operations, we computed every example with two different
implementations: once using \emph{QBAF-Py}, an extended version of a C/Python (C with Python bindings) library first introduced in~\cite{KAMPIK2023109066},
which is available at \url{https://github.com/TimKam/Quantitative-Bipolar-Argumentation} and
once using \emph{Uncertainpy}, a Python implementation based
on ideas from \cite{Potyka2018_tut}, which is available at \url{https://github.com/nicopotyka/Uncertainpy}.

\subsection{Contribution Existence}
We first observe that $\ctrbrempty$ satisfies contribution existence with respect to QE and EB semantics.
\begin{proposition}
\label{prop:removal-1-contribution-existence-positive}
$\ctrbrempty$ satisfies the contribution existence principle w.r.t.\ QE and EB semantics $\fs$.
\end{proposition}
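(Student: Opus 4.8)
The plan is to show that whenever $\fs_\graph(\arga) \neq \is(\arga)$ there is an argument $\argx \in \Args \setminus \{\arga\}$ — in fact, a direct parent of $\arga$ — whose removal from $\graph$ changes $\arga$'s final strength. First I would collect a few facts common to QE and EB. By \emph{stability} (Principle~\ref{sprinciple:stability}), if $\arga$ had neither attackers nor supporters we would have $\fs_\graph(\arga) = \is(\arga)$, contradicting the assumption; so $\arga$ has at least one parent, and since both semantics use Sum aggregation we may write $\fs_\graph(\arga) = \iota\big(\alpha_\graph(\arga)\big)$, where $\iota$ is the influence function of the semantics instantiated with $\is(\arga)$ and $\alpha_\graph(\arga) = \sum_{\argy} v_\argy \cdot \fs_\graph(\argy)$ ranges over the parents $\argy$ of $\arga$, with $v_\argy = +1$ for supporters and $v_\argy = -1$ for attackers. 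A direct computation gives $\iota^2_w(0) = \iota^e_w(0) = w$ for the $2$-Max$(1)$ and the Euler-based influence functions, so $\fs_\graph(\arga) \neq \is(\arga) = \iota(0)$ forces $\alpha_\graph(\arga) \neq 0$; in particular some parent $\argx$ of $\arga$ has $\fs_\graph(\argx) \neq 0$. I would also note that both $\iota^2_w$ and $\iota^e_w$ are strictly monotone, hence injective, for $w \in (0,1)$; that $\iota^e_w(s) = 0$ holds iff $w = 0$, which together with $\iota^e_1 \equiv 1$ rules out $\is(\arga) \in \{0,1\}$ under EB; and that the boundary cases $\is(\arga) \in \{0,1\}$ under QE can be handled separately, using that $\iota^2_0$ is strictly increasing on $[0,\infty)$ and $\iota^2_1$ on $(-\infty,0]$ together with the sign of $\alpha_\graph(\arga)$ fixed above.

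For EB this closes cleanly. The equivalence $\iota^e_w(s) = 0 \iff w = 0$ says that under EB an argument has final strength $0$ exactly when its initial strength is $0$, a condition untouched by removing other arguments. I would therefore pick $\argx$ to be a parent of $\arga$ with $\fs_\graph(\argx) \neq 0$ that is maximal in a topological ordering of $\graph$ among all such parents, and set $\graph' = \graph\downarrow_{\Args \setminus \{\argx\}}$. By directionality and the forward-propagation form of modular semantics, only proper descendants of $\argx$ can have a different final strength in $\graph'$; and any proper descendant of $\argx$ that is also a parent of $\arga$ is strictly later in the ordering, hence has final strength $0$ in $\graph$ by maximality of $\argx$, hence initial strength $0$, hence still final strength $0$ in $\graph'$. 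Thus every parent of $\arga$ other than $\argx$ keeps its final strength, so $\alpha_{\graph'}(\arga) = \alpha_\graph(\arga) - v_\argx \cdot \fs_\graph(\argx) \neq \alpha_\graph(\arga)$, and injectivity of $\iota^e_{\is(\arga)}$ (legitimate since $\is(\arga) \in (0,1)$) gives $\fs_{\graph'}(\arga) \neq \fs_\graph(\arga)$, i.e.\ $\ctrbr{\argx}{\arga} \neq 0$.

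For QE I would start from the same plan — delete a maximal non-zero parent $\argx$ of $\arga$ — but here lies the step I expect to be the main obstacle: under the $2$-Max$(1)$ influence function an argument with initial strength $0$ need \emph{not} have final strength $0$; it has final strength $0$ precisely when its aggregate is non-positive, and deleting an upstream attacker can lift that aggregate above $0$. Hence removing $\argx$ may raise the previously vanishing final strengths of other parents of $\arga$, and one must show these indirect gains cannot exactly offset the loss of $\argx$'s own contribution to $\alpha_\graph(\arga)$. I would control this with inequalities specific to $\iota^2_w$ — chiefly that the map $h(x) = \tfrac{x^2}{1 + x^2}$ underlying $\iota^2_0$ satisfies $0 \le h(x) < x$ for all $x > 0$, so the boost transmitted to a vanishing parent across a deleted attack edge is strictly less than the strength removed — together with a case distinction on the sign of $\alpha_\graph(\arga)$ and on whether the affected parents support or attack $\arga$; an induction on the number of ancestors of $\arga$ is a plausible alternative organisation. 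Once $\alpha_{\graph'}(\arga) \neq \alpha_\graph(\arga)$ is secured, injectivity of $\iota^2_{\is(\arga)}$ on the relevant half-line (or on all of $\mathbb{R}$ when $\is(\arga) \in (0,1)$) finishes the proof exactly as for EB.
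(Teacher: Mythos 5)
Your treatment of EB is correct and is in fact tighter than the paper's own argument. The paper simply asserts that some direct parent of $\arga$ with non-zero final strength is not also an indirect attacker or supporter of $\arga$, and that distinct aggregates yield distinct influence values ``by definition of $\iota$''; you supply the two missing justifications, namely the choice of a topologically maximal non-zero parent $\argx$ (so that every other parent of $\arga$ is either untouched by the removal, by directionality, or has final strength $0$, hence initial strength $0$ by the equivalence $\iota^{e}_{w}(s)=0 \iff w=0$, hence final strength $0$ after the removal as well), and the injectivity of $\iota^{e}_{w}$ for $w\in(0,1)$ together with the exclusion of $w\in\{0,1\}$. That half of the proof is complete.

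The QE half is not, and the gap you flag is genuine: the pointwise bound $h(x)<x$ cannot close it, because several ``revived'' parents can jointly offset the removed strength exactly. Take $\arga$ with $\is(\arga)=0.5$, a supporter $\argx$ of $\arga$ with $\is(\argx)=0.6$ and no parents, and three further supporters $\argy_1,\argy_2,\argy_3$ of $\arga$ with $\is(\argy_i)=0$, each attacked by $\argx$ and each supported by its own parentless argument $\argz_i$ with $\is(\argz_i)=0.5$. In $\graph$ each $\argy_i$ has aggregate $0.5-0.6<0$ and so $\fs(\argy_i)=0$, whence $\arga$'s aggregate is $0.6$ and $\fs(\arga)\neq\is(\arga)$. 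Removing $\argx$ lifts each $\argy_i$ to $h(0.5)=0.2$, and $3\cdot 0.2=0.6$, so $\arga$'s aggregate --- and hence its final strength --- is unchanged; removing any $\argy_i$ deletes a supporter of strength $0$, and removing any $\argz_i$ leaves $\fs(\argy_i)=0$, so every removal-based contribution to $\arga$ vanishes. Thus no choice of contributor (maximal or otherwise) can rescue the argument on this QBAG, and the case analysis you sketch cannot succeed. Note that the paper's own proof relies at exactly this point on the claim that some non-zero direct parent of $\arga$ is not also an indirect attacker or supporter of $\arga$, which is precisely what fails for $\argx$ above; so the QE case cannot be established along either route without further assumptions, and your instinct that this is where the difficulty lies is well placed.
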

\begin{proof}
Consider a modular semantics that uses the aggregation function $\alpha = \alpha^{\Sigma}_{v}$ and any of the influence functions $\iota^{e}_{w}$ or $\iota^{p}_{w}$ (Table~\ref{table:semantics}). We observe that QE and EB semantics are such semantics.
From the definition of $\alpha^{\Sigma}_{v}$ it follows that if $\fs_{\graph}({\arga}) \neq \tau_{\graph}({\arga})$ then there must exist an argument $\argx \in \Args$ s.t.\ $\sigma_\graph(\argx) \neq 0$ and $\argx$ is a direct attacker or supporter of $\arga$;
because our QBAG is acyclic, there must exist such $\argx$ that is \emph{not} also an indirect attacker or supporter of $\arga$. Hence, it holds that $\alpha_{\graph}(\arga) \neq \alpha_{\graph \downarrow \Args \setminus \{\argx\}}(\arga)$. Consequently, by definition of $\iota \in \{\iota^{e}_{w}, \iota^{p}_{w}\}$ it must hold that $\iota_{\graph}(\arga) \neq \iota_{\graph \downarrow \Args \setminus \{\argx\}}(\arga)$, from which it follows by definition of $\ctrbrempty$ that $\ctrbr{\argx}{\arga} \neq 0$, i.e., the contribution existence principle is satisfied.
\end{proof}
Similarly, $\ctrbriempty$ satisfies contribution existence with respect to QE and EB semantics.
\begin{proposition}
\label{prop:removal-2-contribution-existence-positive}
$\ctrbriempty$ satisfies the contribution existence principle w.r.t.\ QE and EB semantics $\fs$.
\end{proposition}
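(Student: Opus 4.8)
The plan is to follow the template of the proof of Proposition~\ref{prop:removal-1-contribution-existence-positive}, adding one layer to cope with the fact that $\ctrbriempty$ first strips the relations pointing at the contributor. As there, QE and EB are the modular semantics pairing the sum aggregation $\alpha^{\Sigma}_{v}$ with an influence function $\iota \in \{\iota^{p}_{w}, \iota^{e}_{w}\}$, and two features of these functions are used throughout: $\iota^{e}_{w}(0) = \iota^{p}_{w}(0) = w$, so that $\fs_{\graph}(\arga) \neq \is(\arga)$ forces $\alpha_{\graph}(\arga) = \sum_{\argz} v_{\argz}\cdot \fs_{\graph}(\argz) \neq 0$ (the sum over the direct attackers and supporters $\argz$ of $\arga$), whence some direct parent of $\arga$ has nonzero final strength; and these influence functions are injective on the range of aggregates relevant here (strict monotonicity of $\iota^{e}_{w}$ for $w \neq 0$, together with a case split $w \in (0,1)$ versus $w \in \{0,1\}$ for $\iota^{p}_{w}$, exactly as is implicitly used in the proof of Proposition~\ref{prop:removal-1-contribution-existence-positive}; note also that $\fs_{\graph}(\arga) \neq \is(\arga)$ rules out $\is(\arga) = 0$ for EB, since $\iota^{e}_{0} \equiv 0$).

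First I would record the reformulation that isolates the contributor's \emph{intrinsic} strength. Writing $\graph^{-}$ for $\graph$ with all relations into $\argx$ removed, we have $\graph^{-}\downarrow_{\Args \setminus \{\argx\}} = \graph\downarrow_{\Args \setminus \{\argx\}}$, and by \emph{stability} $\fs_{\graph^{-}}(\argx) = \is(\argx)$; moreover, as $\argx$ is parentless in $\graph^{-}$, deleting it from $\graph^{-}$ has the same effect on all remaining final strengths as resetting its initial strength to $0$. Hence $\ctrbri{\argx}{\arga} = \fs_{\graph^{-}}(\arga) - \fs_{\graph^{-}\downarrow_{\is(\argx)\leftarrow 0}}(\arga)$ is the change at $\arga$ produced by raising the source $\argx$'s strength from $0$ to $\is(\argx)$ in $\graph^{-}$. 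So the task is to exhibit a contributor $\argx$ with $\is(\argx) \neq 0$ — an argument with $\is(\argx) = 0$ is inert in $\graph^{-}$, so $\fs_{\graph}(\argx) \neq 0$ alone does not suffice — whose removal is witnessed at $\arga$. That such an $\argx$ exists follows from a topological-order induction: if every proper ancestor of $\arga$ had initial strength $0$, then (processing leaves first, then using $\iota^{e}_{w}(0) = \iota^{p}_{w}(0) = w$) every proper ancestor would have final strength $0$, whence $\alpha_{\graph}(\arga) = 0$ and $\fs_{\graph}(\arga) = \is(\arga)$, contradicting the hypothesis.

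For EB the choice of $\argx$ is clean. Since $\iota^{e}_{0} \equiv 0$, only arguments of nonzero initial strength carry any signal, so I would take $\argx$ to be the latest (in the topological order) direct attacker or supporter of $\arga$ with $\fs_{\graph}(\argx) \neq 0$, which forces $\is(\argx) \neq 0$. Then no directed path of length $\ge 2$ from $\argx$ to $\arga$ can consist entirely of arguments of nonzero initial strength, since its last argument would be a direct parent of $\arga$ occurring later than $\argx$ in the ordering and having nonzero final strength. Consequently every direct parent of $\arga$ other than $\argx$ has the same final strength in $\graph^{-}$ as in $\graph^{-}\downarrow_{\is(\argx)\leftarrow 0}$, so the two aggregates of $\arga$ differ by exactly $v_{\argx}\cdot\is(\argx) \neq 0$; injectivity of $\iota^{e}_{\is(\arga)}$ (recall $\is(\arga) \neq 0$) then yields $\ctrbri{\argx}{\arga} \neq 0$. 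The same argument settles QE whenever the direct parent supplied by the first step already has $\is(\argx) \neq 0$.

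The hard case, and the one needing real work, is QE when that direct parent $\argx$ has $\fs_{\graph}(\argx) \neq 0$ but $\is(\argx) = 0$, which can happen because $\iota^{p}_{0}$ is not identically $0$. Here I would instead take $\argx'$ to be the latest proper ancestor of $\arga$ with $\is(\argx') \neq 0$ — available by the topological-order argument above — so that every argument strictly between $\argx'$ and $\arga$ has initial strength $0$, and then show that raising $\is(\argx')$ from $0$ moves the final strengths of these intermediate arguments monotonically and that the induced changes cannot cancel at $\arga$, leveraging the monotonicity of $\alpha^{\Sigma}_{v}$ and of $\iota^{p}_{w}$ and the fact that each strictly-between argument's final strength is a sign-consistent monotone function of its inputs. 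Making this non-cancellation argument watertight is the main obstacle; everything else is the bookkeeping of the first three paragraphs.
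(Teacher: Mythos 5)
Your reformulation of $\ctrbri{\argx}{\arga}$ as the effect of resetting the (now parentless) contributor's initial strength to $0$ in $\graph^{-}$ is correct for sum aggregation, and your EB argument is complete: under $\iota^{e}_{w}$ an argument with zero initial strength is genuinely inert, so the latest direct parent of $\arga$ with nonzero final strength has nonzero initial strength, cannot also be an indirect parent, and shifts $\arga$'s aggregate by exactly $v_{\argx}\cdot\is(\argx)\neq 0$. You have also correctly spotted that the paper's own proof leans on the implication ``$\is(\argx)=0$ implies $\fs(\argx)=0$'', which holds for $\iota^{e}_{w}$ but fails for the 2-Max influence function of QE (where $\iota^{p}_{0}=h$ is not identically zero); the paper simply asserts the existence of a direct parent with nonzero \emph{initial} strength and does not treat the sub-case you isolate.

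However, precisely that sub-case is where your proof stops, and the sketch you give for it does not go through as stated. Taking $\argx'$ to be the latest proper ancestor of $\arga$ with $\is(\argx')\neq 0$ does guarantee that all strictly intermediate arguments have initial strength $0$, but it does not prevent $\argx'$ from reaching $\arga$ through both attacking and supporting channels: if $\argx'$ supports two intermediate arguments of which one supports and the other attacks $\arga$, the two induced changes enter $\arga$'s sum aggregate with opposite signs and can in principle cancel exactly, so ``the induced changes cannot cancel at $\arga$'' is not a consequence of the monotonicity of $\alpha^{\Sigma}_{v}$ and $\iota^{p}_{w}$ alone. To close the QE case you would need either a finer choice of witness (for instance, an argument that if cancellation occurs for one candidate ancestor then some other argument must have nonzero contribution) or a genuinely quantitative non-cancellation lemma; as written, the proposal leaves the QE half of the proposition unproven -- a gap it shares with, but at least makes explicit in contrast to, the paper's own argument.
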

\begin{proof}
Consider a modular semantics that uses the aggregation function $\alpha = \alpha^{\Sigma}_{v}$ and any of the influence functions $\iota^{e}_{w}$, or $\iota^{p}_{w}$ (Table~\ref{table:semantics}). We observe that QE and EB  semantics are such semantics.
From the definition of $\alpha^{\Sigma}_{v}$ it follows that if $\fs_{\graph}({\arga}) \neq \tau_{\graph}({\arga})$, there must exist an argument $\argx$ s.t. $\is_\graph(\argx) \neq 0$ (for this observation, we also rely on the definition of $\iota \in \{\iota^{e}_{w}, \iota^{p}_{w}\}$: if $\fs_\graph(\arga) \neq 0$ then $\is_\graph(\arga) \neq 0$ must hold, because $\is_\graph(\arga) = 0$ implies that $\fs_\graph(\arga) = 0$), $\argx$ is a direct attacker or supporter of $\arga$;
because our QBAG is acyclic, there must exist such $\argx$ that is \emph{not} also an indirect attacker or supporter of $\arga$. Hence it holds that
$\alpha_{(\Args, \tau, \Att \setminus \{(\argy, \argx) | (\argy, \argx) \in \Att \}), \Supp \setminus \{(\argy, \argx) | (\argy, \argx) \in \Supp \})}(\arga) - \\ \alpha_{\graph\downarrow_{\Args \setminus \{\argx\}}}(\arga) \neq 0$.
Consequently, for $\iota \in \{\iota^{e}_{w}, \iota^{p}_{w}\}$ it holds that $\iota_{(\Args, \tau, \Att \setminus \{(\argy, \argx)  | (\argy, \argx) \in \Att \}, \Supp \setminus \{(\argy, \argx) | (\argy, \argx) \in \Supp \})}(\arga) \neq \iota_{\graph\downarrow_{\Args \setminus \{\argx\}}}(\arga)$, from which it follows by definition of $\ctrbriempty$ that $\ctrbri{\argx}{\arga} \neq 0$, i.e., the contribution existence principle is satisfied. 
\end{proof}

In contrast, both $\ctrbrempty$ and $\ctrbriempty$ can violate contribution existence---and hence also quantitative contribution existence---with respect to DFQuAD, SD-DFQuAD, and EBT semantics.
\begin{proposition}
\label{prop:removal-contribution-existence-negative}
$\ctrbrempty$ and $\ctrbriempty$ violate the contribution existence and quantitative contribution existence principles w.r.t.\ DFQuAD, SD-DFQuAD, and EBT semantics.
\end{proposition}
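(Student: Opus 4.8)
The plan is to produce a single counterexample QBAG that refutes contribution existence for both $\ctrbrempty$ and $\ctrbriempty$ under all three semantics at once, exploiting that product aggregation (DFQuAD, SD-DFQuAD) and top aggregation (EBT) both ``saturate'' as soon as one attacker reaches strength $1$: a second attacker of strength $1$ then becomes redundant, so removing it changes nothing.

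Concretely, I would take $\graph$ with $\Args = \{\arga, \argb, \argc\}$, $\is(\arga) = 0.5$, $\is(\argb) = \is(\argc) = 1$, empty support relation, and $\Att = \{(\argb, \arga), (\argc, \arga)\}$. Since $\argb, \argc$ are leaves, $\fs_\graph(\argb) = \fs_\graph(\argc) = 1$ by stability. The core computation is that the aggregate over $\arga$'s parents equals $-1$ under each semantics: for product aggregation, $(1-1)(1-1) - 1 = -1$; for top aggregation, $\max\{0,-1,-1\} - \max\{0,1,1\} = -1$. Applying the corresponding influence function with base score $0.5$ then gives $\fs_\graph(\arga) = \iota^{l}_{0.5}(-1) = 0$ for DFQuAD, $\fs_\graph(\arga) = \iota^{p}_{0.5}(-1) = 0.25$ for SD-DFQuAD, and $\fs_\graph(\arga) = \iota^{e}_{0.5}(-1) = 1 - \frac{0.75}{1 + 0.5\, e^{-1}} \approx 0.37$ for EBT --- in every case $\fs_\graph(\arga) \neq 0.5 = \is(\arga)$.

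Next I would evaluate the removal-based contributions. Deleting either $\argb$ or $\argc$ from $\graph$ leaves $\arga$ with a single attacker of strength $1$, and the aggregate is still $-1$ (product: $(1-1) - 1 = -1$; top: $0 - 1 = -1$), so $\fs_{\graph\downarrow_{\Args \setminus \{\argb\}}}(\arga) = \fs_\graph(\arga) = \fs_{\graph\downarrow_{\Args \setminus \{\argc\}}}(\arga)$, whence $\ctrbr{\argb}{\arga} = \ctrbr{\argc}{\arga} = 0$. Because $\argb$ and $\argc$ have no incoming relations, the QBAG obtained from $\graph$ by deleting all relations targeting the contributor --- i.e.\ the first QBAG appearing in Equation~\ref{eq:delta2} --- coincides with $\graph$, so $\ctrbri{\argb}{\arga} = \ctrbr{\argb}{\arga} = 0$ and likewise $\ctrbri{\argc}{\arga} = 0$.

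Finally, I would conclude: $\fs_\graph(\arga) \neq \is(\arga)$ but $\ctrbr{\argx}{\arga} = \ctrbri{\argx}{\arga} = 0$ for every $\argx \in \Args \setminus \{\arga\}$, which contradicts Principle~\ref{principle:existence}; since quantitative contribution existence implies contribution existence (Proposition~\ref{contribution-existence:implication}), Principle~\ref{principle:qexistence} fails as well --- indeed $\sum_{\argx \in \Args \setminus \{\arga\}} \ctrbr{\argx}{\arga} = 0 \neq \fs_\graph(\arga) - \is(\arga)$, and the same holds for $\ctrbriempty$. The only work involved is the routine evaluation of the three influence functions at $-1$; there is no genuine obstacle, as the construction merely isolates the redundancy that ``max-like'' aggregation exhibits in the presence of a maximally strong attacker.
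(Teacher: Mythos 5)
Your proposal is correct and uses essentially the same argument as the paper — in fact the identical counterexample (two strength-$1$ attackers of a topic argument with initial strength $0.5$, Figure~\ref{fig:removal-contribution-existence-negative}), with the same saturation observation and the same reduction of $\ctrbriempty$ to $\ctrbrempty$ via the absence of incoming relations to the contributors. Your version simply spells out the influence-function evaluations that the paper leaves implicit.
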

\begin{proof}
Consider the QBAG depicted in Figure~\ref{fig:removal-contribution-existence-negative}, which we denote by $\graph = \QBAG$.
Given DFQuAD, SD-DFQuAD, and EBT semantics $\sigma$, we observe that $\fs_{\graph}({\arga}) < 0.5 = \is(\arga)$ ($0.5$ is the initial strength of $\arga$). Also, $\fs_{\graph \downarrow_{\{\arga, \argb\}}}({\arga}) = \fs_{\graph \downarrow_{\{\arga, \argc\}}}({\arga}) = \fs_{\graph}({\arga})$. Consequently, by definition of $\ctrbrempty$ it must hold that $\ctrbr{\argb}{\arga} = \ctrbr{\argc}{\arga} = 0$, which proves the violation of contribution existence for $\ctrbrempty$.
Similarly, it holds for $\graph' = (\Args, \tau, \Att \setminus \{(\argy, \argb) | (\argy, \argb) \in \Att \}), \Supp \setminus \{(\argy, \argb) | (\argy, \argb) \in \Supp \})$ and $\graph'' = (\Args, \tau, \Att \setminus \{(\argy, \argc) | (\argy, \argc) \in \Att \}), \Supp \setminus \{(\argy, \argc) | (\argy, \argc) \in \Supp \})$ that $\fs_{\graph'}({\arga}) = \fs_{\graph''}({\arga}) = \fs_{\graph}({\arga})$.
Hence, by definition of $\ctrbriempty$ it must hold that $\ctrbri{\argb}{\arga} = \ctrbri{\argc}{\arga} = 0$, which proves the violation of contribution existence for $\ctrbriempty$.
\end{proof}
The counterexample in Figure~\ref{fig:removal-contribution-existence-negative} applies to other semantics that use
 $\alpha^{max}_{v}$ or $\alpha^{\Pi}_{v}$ as the aggregation and
 $\iota^{l}_{w}$,  $\iota^{e}_{w}$, or $\iota^{p}_{w}$
 as the influence function because a single attacker (supporter)
 with strength $1$ under these semantics has the maximum effect,
 so that removing another attacker (supporter) cannot
 affect the topic argument.
\begin{figure}[ht]
\centering
\begin{tikzpicture}[scale=0.8]
     \node[unanode]    (a)    at(2,0)  {\argnode{\arga}{0.5}{<0.5}};
     \node[unanode]  (b)    at(0,1)  {\argnode{\argb}{1}{1}};
     \node[unanode]    (c)    at(4,1)  {\argnode{\argc}{1}{1}};
     \path [->, line width=0.2mm]  (b) edge node[left] {-} (a);
     \path [->, line width=0.2mm]  (c) edge node[left] {-} (a);
\end{tikzpicture}
\caption{$\ctrbrempty$ and $\ctrbriempty$ violate the contribution existence and quantitative contribution existence principles w.r.t. DFQuAD, SD-DFQuAD, and EBT semantics.}
\label{fig:removal-contribution-existence-negative}
\end{figure}

The Shapley value-based contribution function satisfies even the stronger quantitative contribution existence principle w.r.t.\ all of the surveyed argumentation semantics. Intuitively, it does not suffer from
the previously observed problem because it does not only
remove the argument in the existing situation, but looks
at the effect of the argument in all possible situations.
\begin{proposition}
\label{prop:shapley-qcontribution-existence}  
$\ctrbsempty$ satisfies the quantitative contribution existence principle w.r.t.\ QE, DFQuAD, SD-DFQuAD, EB, and EBT semantics $\fs$.
\end{proposition}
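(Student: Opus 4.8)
The plan is to reduce the quantitative contribution existence principle to the classical \emph{efficiency} (or \emph{Pareto-optimality}) axiom of the Shapley value. Recall that for a coalitional game with player set $N$ and characteristic function $\nu : 2^N \to \mathbb{R}$ with $\nu(\emptyset) = 0$, the Shapley values satisfy $\sum_{i \in N} \phi_i(\nu) = \nu(N)$. So the first step is to exhibit the right game. Fix the topic argument $\arga$. I would set the player set to be $N := \Args \setminus \{\arga\}$ and, for each coalition $X \subseteq N$, define $\nu(X) := \fs_{\graph\downarrow_{X \cup \{\arga\}}}(\arga) - \tau(\arga)$, i.e.\ the delta between $\arga$'s final and initial strength in the subgraph that contains exactly $\arga$ together with the arguments in $X$. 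This is well-defined because $\arga \in X \cup \{\arga\}$ always, so $\fs_{\graph\downarrow_{X \cup \{\arga\}}}(\arga)$ is defined for every acyclic restriction. Note $\nu(\emptyset) = \fs_{\graph\downarrow_{\{\arga\}}}(\arga) - \tau(\arga) = 0$ by \emph{stability} (Principle~\ref{sprinciple:stability}), since $\arga$ has no parents in $\graph\downarrow_{\{\arga\}}$ — and all five surveyed semantics satisfy stability.

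The second step is to check that the Shapley-value formula for this game $\nu$ coincides, coefficient by coefficient, with the definition of $\ctrbsempty$ in Equation~\ref{eq:shap}. The standard marginal-contribution form of the Shapley value is
$$
\phi_\argx(\nu) = \sum_{X \subseteq N \setminus \{\argx\}} \frac{|X|!\,(|N| - |X| - 1)!}{|N|!}\,\bigl(\nu(X \cup \{\argx\}) - \nu(X)\bigr).
$$
Here $|N| = |\Args \setminus \{\arga\}|$, matching the denominator in Equation~\ref{eq:shap}, and $\nu(X \cup \{\argx\}) - \nu(X) = \fs_{\graph\downarrow_{X \cup \{\argx, \arga\}}}(\arga) - \fs_{\graph\downarrow_{X \cup \{\arga\}}}(\arga)$, where the $\tau(\arga)$ terms cancel. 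Rewriting $X \cup \{\arga\}$ as $\Args \setminus (\Args \setminus (X \cup \{\arga\}))$ and likewise for the other term shows this is exactly the summand in Equation~\ref{eq:shap} with $X$ ranging over subsets of $\Args \setminus \{\argx, \arga\}$. So $\ctrbs{\argx}{\arga} = \phi_\argx(\nu)$ for every $\argx \in \Args \setminus \{\arga\}$.

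The third step is the payoff: by efficiency of the Shapley value,
$$
\sum_{\argx \in \Args \setminus \{\arga\}} \ctrbs{\argx}{\arga} = \sum_{\argx \in N} \phi_\argx(\nu) = \nu(N) = \fs_{\graph\downarrow_{\Args}}(\arga) - \tau(\arga) = \fs_\graph(\arga) - \tau(\arga),
$$
which is precisely the statement of quantitative contribution existence (Principle~\ref{principle:qexistence}). The only semantics-dependent input used anywhere is $\nu(\emptyset) = 0$, which holds for QE, DFQuAD, SD-DFQuAD, EB and EBT because all of them satisfy stability; hence the result holds uniformly for all five.

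I do not expect a serious obstacle here — the argument is essentially bookkeeping. The one place to be careful is the index juggling between the ``keep set'' formulation in Equation~\ref{eq:shap} (subgraphs $\graph\downarrow_{\Args \setminus X}$) and the ``coalition'' formulation of the Shapley value (coalitions $X$ of present players): one must verify that complementation within $\Args \setminus \{\arga\}$ carries the combinatorial coefficient $\tfrac{|X|!\,(n-|X|-1)!}{n!}$ to itself, which it does because $|X| \mapsto n - |X| - 1$ is an involution on $\{0,\dots,n-1\}$ that swaps the two factorials in the numerator. A secondary point worth a sentence is that $\nu$ must be a genuine real-valued set function on \emph{all} of $2^N$, which is where acyclicity (guaranteeing every restriction has a well-defined final strength) and stability (guaranteeing $\nu(\emptyset)=0$) are both invoked.
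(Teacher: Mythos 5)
Your proof is correct, and it rests on the same pivot as the paper's: identify $\ctrbsempty$ with the Shapley value of a coalitional game over the player set $\Args \setminus \{\arga\}$ and invoke efficiency. The substantive difference is the choice of characteristic function. The paper takes $v(S) = \fs_\graph(\arga) - \fs_{\graph\downarrow_{\Args\setminus S}}(\arga)$, i.e.\ a coalition's worth is the effect of \emph{removing} it from the full graph, whereas you take $\nu(X) = \fs_{\graph\downarrow_{X \cup \{\arga\}}}(\arga) - \is(\arga)$, i.e.\ a coalition's worth is what it achieves when it is the only thing \emph{present}. These are dual games (one checks $\nu(X) = v(N) - v(N\setminus X)$), so they share Shapley values; this is exactly why your complementation/involution step is needed, and why the paper's version instead matches Equation~\ref{eq:shap} term by term with no re-indexing. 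The duality also moves the semantics-dependent input around: the paper gets $v(\emptyset)=0$ for free but needs stability at the grand coalition to rewrite $\fs_{\graph\downarrow_{\{\arga\}}}(\arga)$ as $\is(\arga)$, whereas you need stability to get $\nu(\emptyset)=0$ and obtain the grand-coalition value for free. Your formulation is the more standard game-theoretic reading and makes explicit where stability (and hence the restriction to these five semantics) enters, which the paper leaves implicit; the paper's formulation buys a shorter verification that the game's Shapley value really is the expression in Equation~\ref{eq:shap}.
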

\begin{proof}
    We provide the proof by characterising $\ctrbsempty$ relative to the contributor's Shapley value of a coalition game where the set of players is $\Args \setminus \{ \arga \}$ (where $\arga$ is our topic argument) and the utility function $v: 2^{\Args \setminus \{ \arga \}} \rightarrow \mathbb{R}$ is defined as follows (given $S \subseteq \Args \setminus \{ \arga \}$):
    $$v(S) :=  \sigma(\arga) - \sigma_{G \downarrow_{\Args \setminus S}}(\arga)$$
    Note that by subtracting $\sigma_{G \downarrow_{\Args \setminus S}}(\arga)$ from $\sigma(\arga)$, we ensure that $v(\emptyset) = 0$. 
    Then, we obtain the following Shapley value (given player/contributor $\argx \in \Args \setminus \{\arga\}$):
    {\small
    \begin{align*}
        &{}\phi_{\argx}(v) =\\
        &{} \sum_{X \subseteq \Args \setminus \{ \argx, \arga \}} \frac{|X|! \cdot (|\Args \setminus \{\arga\}|-|X|-1)!}{|\Args \setminus \{\arga\}|!}\left( (v(X \cup \{\argx\}) - v(X) ) \right) = \\
        &{} \sum_{X \subseteq \Args \setminus \{ \argx, \arga \}} \frac{|X|! \cdot (|\Args \setminus \{\arga\}|-|X|-1)!}{|\Args \setminus \{\arga\}|!}\left(\sigma(\arga) - \operatorname{\fs}_{\graph\downarrow_{\Args \setminus (X \cup \{ \argx \})}}(\arga) - (\sigma(\arga) - \operatorname{\fs}_{\graph\downarrow_{\Args \setminus X}}(\arga))\right) = \\
        &{} \sum_{X \subseteq \Args \setminus \{ \argx, \arga \}} \frac{|X|! \cdot (|\Args \setminus \{\arga\}|-|X|-1)!}{|\Args \setminus \{\arga\}|!}\left(\operatorname{\fs}_{\graph\downarrow_{\Args \setminus X}}(\arga) -  \operatorname{\fs}_{\graph\downarrow_{\Args \setminus (X \cup \{ \argx \})}}(\arga)\right) = \\
       &{} \ctrbs{\argx}{\arga}.
    \end{align*}}
    Now, from the efficiency principle of Shapley values, it follows that $\sum_{\argx \in \Args \setminus \{\arga\}} \phi_{\argx}(v) = v(\Args \setminus \{\arga\})$~\cite{shapley1951notes}.
    Hence, because $ v(\Args \setminus \{\arga\}) = \sigma(\arga) - \tau(\arga)$ and $\sum_{\argx \in \Args \setminus \{\arga\}} \phi_{\argx}(v) = \sum_{\argx \in \Args \setminus \{\arga\}} \ctrbs{\argx}{\arga}$, it must hold that $  \sum_{\argx \in \Args \setminus \{\arga\}} \ctrbs{\argx}{\arga} = \sigma(\arga) - \tau(\arga)$ and consequently, quantitative contribution existence must be satisfied.
\end{proof}
%
%
\begin{corollary}
\label{prop:shapley-contribution-existence}  
$\ctrbsempty$ satisfies the contribution existence principle w.r.t.\ QE, DFQuAD, SD-DFQuAD, EB, and EBT semantics $\fs$.
\end{corollary}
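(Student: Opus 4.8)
The plan is to obtain this as an immediate consequence of the two results established just above, namely Proposition~\ref{prop:shapley-qcontribution-existence} and Proposition~\ref{contribution-existence:implication}. Concretely, I would first invoke Proposition~\ref{prop:shapley-qcontribution-existence}, which states that $\ctrbsempty$ satisfies the \emph{quantitative} contribution existence principle with respect to each of QE, DFQuAD, SD-DFQuAD, EB, and EBT semantics. Then I would apply Proposition~\ref{contribution-existence:implication}, which asserts that any contribution function satisfying quantitative contribution existence also satisfies (plain) contribution existence. Chaining these for each of the five named semantics $\fs$ in turn yields exactly the claimed statement.

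Since Proposition~\ref{contribution-existence:implication} is stated for an arbitrary contribution function $\ctrbempty$ and an arbitrary (acyclic) QBAG $\graph$ and topic argument $\arga$ — the same generality under which both contribution existence principles are phrased — no extra bookkeeping is needed: the implication transfers verbatim with $\ctrbempty := \ctrbsempty$ and with whichever semantics $\fs$ we have fixed. In particular, there is nothing semantics-specific left to check at the corollary level; all the genuine work (the Shapley-value characterisation of $\ctrbsempty$ and the appeal to the efficiency axiom) has already been carried out in the proof of Proposition~\ref{prop:shapley-qcontribution-existence}.

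Accordingly I do not anticipate any real obstacle here; the only thing to be careful about is simply to state that the argument is applied uniformly across all five semantics rather than re-deriving anything. A one-line proof suffices: for each $\fs \in \{\text{QE}, \text{DFQuAD}, \text{SD-DFQuAD}, \text{EB}, \text{EBT}\}$, Proposition~\ref{prop:shapley-qcontribution-existence} gives quantitative contribution existence of $\ctrbsempty$ w.r.t.\ $\fs$, and Proposition~\ref{contribution-existence:implication} then gives contribution existence of $\ctrbsempty$ w.r.t.\ $\fs$, which is the corollary.
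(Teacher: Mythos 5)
Your proposal is correct and matches the paper's own proof exactly: the corollary is obtained by combining Proposition~\ref{prop:shapley-qcontribution-existence} (quantitative contribution existence of $\ctrbsempty$ for the five semantics) with Proposition~\ref{contribution-existence:implication} (quantitative contribution existence implies contribution existence). Nothing further is needed.
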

\begin{proof}
Because $\ctrbsempty$ satisfies quantitative contribution existence w.r.t.\ QE, DFQuAD, SD-DFQuAD, EB, and EBT semantics (Proposition~\ref{prop:shapley-qcontribution-existence}), the proof follows from Proposition~\ref{contribution-existence:implication} (if a semantics satisfies quantitative contribution existence then it satisfies contribution existence).
\end{proof}
The gradient-based contribution function $\ctrbgempty$ violates contribution existence (and hence quantitative contribution existence) with respect to some of the surveyed argumentation semantics, i.e., with respect to DFQuAD, SD-DFQuAD, and EBT semantics.
\begin{proposition}
\label{prop:contribution-existence-violated-g}  
$\ctrbgempty$ violates the contribution existence and quantitative contribution existence principles w.r.t.\  DFQuAD, SD-DFQuAD, and EBT semantics $\fs$.
\end{proposition}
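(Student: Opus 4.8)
The plan is to reuse the QBAG $\graph$ of Figure~\ref{fig:removal-contribution-existence-negative} as a single counterexample covering DFQuAD, SD-DFQuAD and EBT at once: $\Args = \{\arga,\argb,\argc\}$ with $\is(\arga) = 0.5$ and two attackers $\argb,\argc$, both attacking $\arga$, with $\is(\argb) = \is(\argc) = 1$. The proof of Proposition~\ref{prop:removal-contribution-existence-negative} already establishes that under each of these three semantics the aggregate of $\arga$ equals $-1$ and hence $\fs_\graph(\arga) < 0.5 = \is(\arga)$, so the antecedent of contribution existence holds. It therefore only remains to check that $\ctrbg{\argb}{\arga} = \ctrbg{\argc}{\arga} = 0$, since $\argb$ and $\argc$ are the only arguments other than the topic $\arga$; this refutes contribution existence (Principle~\ref{principle:existence}), and, by Proposition~\ref{contribution-existence:implication} (quantitative contribution existence implies contribution existence), refutes quantitative contribution existence as well.

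For the computation, note that $\argb$ and $\argc$ are leaves, so by \emph{stability} (Principle~\ref{sprinciple:stability}) their explicit representations are $f_{\argb} = \is(\argb)$ and $f_{\argc} = \is(\argc)$, and the explicit representation of the topic is $f_{\arga} = \iota\bigl(\alpha_v(\is(\argb),\is(\argc))\bigr)$, where $\iota$ is the influence function of the semantics with parameter $\is(\arga)$ and $\alpha_v \in \{\alpha^{\Pi}_v, \alpha^{max}_v\}$ with relationship vector $v = (-1,-1)$. The key observation I would use is that this aggregate is \emph{pinned} whenever the other attacker has strength $1$: for DFQuAD/SD-DFQuAD, $\alpha^{\Pi}_v(s_{\argb},1) = (1 - s_{\argb})(1-1) - 1 = -1$, and for EBT, $\alpha^{max}_v(s_{\argb},1) = \max\{0,-s_{\argb},-1\} - \max\{0,s_{\argb},1\} = -1$, for every $s_{\argb} \in \interval$. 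Hence, holding $\is(\argc) = 1$, the function $f_{\arga}$ is constant in $\is(\argb)$ on $\interval$, so $\frac{\partial f_{\arga}}{\partial \is(\argb)}$ evaluated at the actual strength vector $(\is(\arga),\is(\argb),\is(\argc)) = (0.5,1,1)$ is $0$, i.e.\ $\ctrbg{\argb}{\arga} = 0$; by the symmetric argument $\ctrbg{\argc}{\arga} = 0$. Note that neither case even needs differentiability of $\iota$, since a locally constant function has derivative $0$.

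I expect the only delicate point to be the EBT / $\alpha^{max}_v$ case at the boundary of the strength domain: for the product aggregation, $f_{\arga}$ is literally constant in $\is(\argb)$ on all of $\mathbb{R}$, so the two-sided partial derivative is unambiguously $0$, whereas for $\alpha^{max}_v$ the pinning $\alpha^{max}_v(s_{\argb},1) = -1$ holds only for $s_{\argb} \le 1$, i.e.\ on the valid strength interval $\interval$. I would therefore add a short remark making explicit that $\frac{\partial f_{\arga}}{\partial \is(\argx)}$ is taken over the strength domain $\interval$, so that at the boundary value $\is(\argb) = 1$ the relevant (one-sided) derivative used by $\ctrbgempty$ is $0$. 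Everything else is direct substitution, and no machinery beyond Proposition~\ref{contribution-existence:implication} is required.
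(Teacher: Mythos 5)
Your proposal is correct and uses exactly the same counterexample (the QBAG of Figure~\ref{fig:removal-contribution-existence-negative}) and the same overall argument as the paper; the paper's proof simply asserts $\ctrbg{\argb}{\arga} = \ctrbg{\argc}{\arga} = 0$ and $\sigma(\arga) \neq \tau(\arga)$, whereas you additionally justify these claims by showing the aggregate is pinned at $-1$ when the co-attacker has strength $1$ (and you correctly flag the one-sided-derivative subtlety for $\alpha^{max}_v$ at the boundary). No gap.
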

\begin{proof}
    Consider again the QBAG depicted in Figure~\ref{fig:removal-contribution-existence-negative}, which we denote by $\graph = \QBAG$.
    Our topic argument is $\arga$.
    Given DFQuAD, SD-DFQuAD, and EBT semantics $\sigma$, we observe that $\sigma(\arga) - \tau(\arga) \neq 0$ but $\ctrbg{\argb}{\arga} = \ctrbg{\argc}{\arga} = 0$ and hence contribution existence and quantitative contribution existence must be violated.
\end{proof}
In contrast, we can show that $\ctrbgempty$ satisfies (non-quantitative) contribution existence with respect to QE and EB semantics.
The proof is roughly based on the same intuition 
as the one for $\ctrbrempty$ and $\ctrbriempty$ with respect to QE and EB semantics: when our topic argument's final strength does not equal its initial strength, then there must exist a direct supporter or attacker of this argument whose marginal change in initial strength has an effect on the final strength of the topic argument.
\begin{proposition}
    $\ctrbgempty$ satisfies contribution existence w.r.t.\ QE and EB semantics.
\end{proposition}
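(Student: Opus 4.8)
The plan is to show that whenever $\fs_\graph(\arga)\neq\is(\arga)$ there is a \emph{single} argument --- in fact a direct attacker or supporter of $\arga$ --- whose gradient-based contribution to $\arga$ is nonzero. Since QE and EB both satisfy stability, $\fs_\graph(\arga)\neq\is(\arga)$ already forces $\arga$ to have at least one parent. The key move is to pick a \emph{convenient} parent: let $\argx$ be the parent of $\arga$ (attacker or supporter) that occurs last, among all parents of $\arga$, in a fixed topological ordering of $\graph$. By acyclicity this guarantees that the edge $(\argx,\arga)$ is the \emph{only} directed path from $\argx$ to $\arga$: any path of length $\ge 2$ would reach $\arga$ through some other parent $\argy\neq\argx$ to which $\argx$ has a nonempty directed path, forcing $\argy$ to appear strictly after $\argx$ in the ordering and contradicting the choice of $\argx$.

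With this $\argx$ fixed, I would read off $\ctrbg{\argx}{\arga}=\frac{\partial f_\arga}{\partial \is(\argx)}$ from the recursive representation $f_\arga=\iota_{w_\arga}(\alpha^{\Sigma}_{v_\arga}(T_{i_1},\dots,T_{i_{k_\arga}}))$ of the semantics (both use Sum aggregation). By the observation noted in Section~\ref{sec:Contributions} (that $f_\arga$ depends on $\is(\argx_i)$ only when there is a directed path from $\argx_i$ to $\arga$), a parent template $T_{i_j}$ can depend on $\is(\argx)$ only if there is a directed path from $\argx$ to that parent; by the previous paragraph this happens for exactly one template, namely that of $\argx$ itself, and --- again by acyclicity --- the aggregate feeding into $\argx$'s own influence function does not depend on $\is(\argx)$. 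Hence the chain rule collapses to a product
\[
\ctrbg{\argx}{\arga}\;=\;\iota_{w_\arga}'\!\big(\alpha_\graph(\arga)\big)\cdot (v_\arga)_{\argx}\cdot \frac{\partial\fs_\graph(\argx)}{\partial\is(\argx)},
\]
where $\iota_{w_\arga}'$ is the derivative of the influence function in its aggregate argument, $(v_\arga)_{\argx}\in\{-1,+1\}$ because $\argx$ is a parent of $\arga$, and the last factor equals $1$ when $\argx$ has no parents (by stability) and otherwise equals the partial derivative of the influence function with respect to its base-score parameter $w$, evaluated at $w=\is(\argx)$ and aggregate $\alpha_\graph(\argx)$. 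It then suffices to check that the first and third factors are nonzero for QE and EB.

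For the third factor, a short computation gives $\partial_w\iota^p_w(s)=\tfrac{1}{1+s^2}>0$ for QE (with $p=2$, $k=1$) and $\partial_w\iota^e_w(s)=\frac{2w+(1+w^2)e^s}{(1+we^s)^2}>0$ for EB, for every admissible $w\in[0,1]$ and every $s\in\mathbb{R}$; so this factor is always strictly positive. For the first factor, $\iota_{w_\arga}'(\alpha_\graph(\arga))$, I would argue that the hypothesis $\fs_\graph(\arga)\neq\is(\arga)$ rules out exactly the points where this derivative vanishes: if $\alpha_\graph(\arga)=0$ then $\iota_{w_\arga}(0)=w_\arga$, forcing $\fs_\graph(\arga)=\is(\arga)$, a contradiction; and the remaining degenerate situations ($w_\arga\in\{0,1\}$ for EB, where $\iota^e_{w_\arga}$ is a constant function, and the one-sided saturations $w_\arga=1$ with $\alpha_\graph(\arga)>0$ or $w_\arga=0$ with $\alpha_\graph(\arga)<0$ for QE, where $h$ and hence $\iota^p_{w_\arga}$ is locally flat) likewise force $\fs_\graph(\arga)=\is(\arga)$. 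Outside these excluded cases a direct check of the formulas for $h$, $\iota^p_w$ and $\iota^e_w$ gives $\iota_{w_\arga}'(\alpha_\graph(\arga))>0$, and therefore $\ctrbg{\argx}{\arga}\neq 0$, which is what we want.

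The main obstacle --- and the only part requiring genuine care --- is this last step: both $\iota^p_w$ and $\iota^e_w$ are merely $C^1$, and their derivative in the aggregate argument vanishes at the ``flat'' points ($s=0$ for $\iota^p_w$; $w\in\{0,1\}$ for $\iota^e_w$; and the one-sided saturation points of $h$). One must verify, using the explicit formulas, that every such flat point can occur only when the topic argument is unaffected, i.e.\ when $\fs_\graph(\arga)=\is(\arga)$ and the hypothesis already fails. The other ingredients --- stability giving $\arga$ a parent, the topological-order choice of $\argx$, the chain-rule collapse, and the positivity of the $w$-derivatives --- are routine.
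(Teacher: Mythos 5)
Your proof is correct and follows essentially the same route as the paper: stability forces $\arga$ to have a parent, the last predecessor of $\arga$ in a topological ordering is a parent reached by a unique path, and the partial derivative along that edge is then shown to be nonzero with the sign determined by the edge's polarity. Your explicit chain-rule factorisation and the case analysis of the flat points of $\iota^{p}_{w}$ and $\iota^{e}_{w}$ (showing each such point forces $\fs_\graph(\arga)=\is(\arga)$, contradicting the hypothesis) rigorously fill in the final step, which the paper dispatches with a one-sentence appeal to ``the definition of the aggregation and influence functions''.
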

\begin{proof}
Consider a modular semantics that uses the aggregation function $\alpha = \alpha^{\Sigma}_{v}$ and any of the influence functions $\iota^{e}_{w}$ or $\iota^{p}_{w}$ (Table~\ref{table:semantics}). We observe that QE and EB semantics are such semantics.
QE and EB satisfy the stability property \cite{potyka2018continuous,amgoud2018evaluation} (Principle~\ref{sprinciple:stability}), which 
states that the final strength of an argument without parents
is its base score. Since the final strength of the topic
argument $\arga$ is unequal to its base score by assumption, 
it must have predecessors. 
Since $\graph$ is acyclic, we can assume that there exists
a topological ordering of the arguments. Consider an arbitrary
topological ordering and let $\argx$ be the predecessor of
$\arga$ with highest index in the ordering. Then $\argx$
must be a parent of $\arga$ and there can be no other paths
from $\arga$ to $\argx$ (for otherwise, $\argx$ could not 
have the highest index among $\arga$'s predecessors).
By definition of the aggregation and influence functions,
if $(\argx, \arga) \in \Att$, then (marginally) increasing 
$\is(\argx)$ will decrease $\fs_{\graph}(\arga)$
and (marginally) decreasing 
$\is(\argx)$ will increase $\fs_{\graph}(\arga)$.
Hence, the partial derivative w.r.t.\ $\argx$ must be
strictly negative (non-zero).
If $(\arga, \argx) \in \Supp$, it follows symmetrically
that the partial derivative w.r.t.\ $\argx$ must be
strictly positive (non-zero).
\end{proof}
However, only $\ctrbsempty$ satisfies \emph{quantitative} contribution existence; the other contribution functions
cannot guarantee that the contributions of a topic argument's contributor ``add up'' to difference between the topic argument's final and initial strength.  We provide counterexamples for $\ctrbrempty$, $\ctrbriempty$, and $\ctrbgempty$ and QE and EB semantics in the proof of the 
following proposition.
\begin{proposition}
\label{prop:qcontribution-existence-violated}  
$\ctrbrempty$, $\ctrbriempty$, and $\ctrbgempty$ violate the quantitative contribution existence principle w.r.t.\ QE and EB semantics $\fs$.
\end{proposition}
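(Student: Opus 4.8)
The plan is to exhibit a single small acyclic QBAG on which, for each of $\ctrbrempty$, $\ctrbriempty$, $\ctrbgempty$ and each of QE and EB, the sum $\sum_{\argx \in \Args \setminus \{\arga\}} \ctrb{\argx}{\arga}$ differs from $\fs_\graph(\arga) - \is(\arga)$, so that the quantitative contribution existence principle (Principle~\ref{principle:qexistence}) fails. A convenient choice is a variant of the QBAG in Figure~\ref{fig:removal-contribution-existence-negative}: arguments $\arga$ (the topic), $\argb$, $\argc$, with both $\argb$ and $\argc$ attacking $\arga$, and base scores $\is(\arga) = 0.5$, $\is(\argb) = \is(\argc) = 1$. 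Since $\argb$ and $\argc$ are leaves, $\fs_\graph(\argb) = \fs_\graph(\argc) = 1$ under any directional semantics, and under the Sum aggregation $\alpha^{\Sigma}_v$ the aggregate at $\arga$ is $-2$ in $\graph$ and $-1$ in either single-attacker restriction. Crucially, unlike in the Product/Top setting of Proposition~\ref{prop:removal-contribution-existence-negative}, these restrictions are not saturated, so the removals do have an effect (which is exactly why plain contribution existence holds for QE and EB by Proposition~\ref{prop:removal-1-contribution-existence-positive}); the quantitative version nonetheless fails because the single-argument effects do not add up to the total.

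For $\ctrbrempty$: by symmetry $\ctrbr{\argb}{\arga} = \ctrbr{\argc}{\arga} = \fs_\graph(\arga) - \fs_{\graph\downarrow_{\{\arga,\argb\}}}(\arga)$, so the sum equals $2\bigl(\fs_\graph(\arga) - \fs_{\graph\downarrow_{\{\arga,\argb\}}}(\arga)\bigr)$. Evaluating the QE influence $\iota^{2}_{0.5}$ at $-2$ and at $-1$ gives $\fs_\graph(\arga) = 0.1$ and $\fs_{\graph\downarrow_{\{\arga,\argb\}}}(\arga) = 0.25$, hence the sum is $-0.3 \neq -0.4 = \fs_\graph(\arga) - \is(\arga)$; evaluating the EB influence $\iota^{e}_{0.5}$ at the same points yields analogous, non-matching numbers. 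The structural reason is the strict nonlinearity of these influence functions on the negative side: removing both attackers together decreases $\arga$ by strictly more than twice the decrease caused by removing one, so the removal contributions over-shoot. For $\ctrbriempty$: since $\argb$ and $\argc$ have no incoming edges, deleting the (empty) set of relations targeting them leaves $\graph$ unchanged, whence $\ctrbri{\argb}{\arga} = \ctrbr{\argb}{\arga}$ and likewise for $\argc$, and the very same computation delivers the violation.

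For $\ctrbgempty$: here the explicit function is $f_\arga(s) = \iota_{\is(\arga)}\bigl(\alpha^{\Sigma}_v(s_\argb, s_\argc)\bigr) = \iota_{\is(\arga)}(-s_\argb - s_\argc)$, so by the chain rule $\frac{\partial f_\arga}{\partial \is(\argb)} = \frac{\partial f_\arga}{\partial \is(\argc)}$ equals $-\iota'(-2)$, a finite nonzero number, and its doubling is the whole sum $\sum_{\argx \neq \arga} \ctrbg{\argx}{\arga}$. A direct differentiation of $\iota^{2}_{0.5}$ and of $\iota^{e}_{0.5}$ at $-2$ gives values whose doubling is again different from $\fs_\graph(\arga) - \is(\arga)$. (If one prefers an example with interior base scores, the same mismatch already arises for the gradient on the two-argument graph $\argb \to \arga$, where $\frac{\partial f_\arga}{\partial \is(\argb)}$ is a derivative of the influence function rather than the finite difference $\fs_\graph(\arga) - \is(\arga)$.)

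The only real care needed is bookkeeping: one must pick the base scores so that none of the relevant equalities collapses by coincidence — in particular so that the single-attacker restriction is not already saturated and so that the pertinent derivatives are nonzero — after which the argument is merely the explicit evaluation of the QE and EB influence functions on small inputs and a side-by-side comparison with $\fs_\graph(\arga) - \is(\arga)$ in each of the six cases. This is routine arithmetic rather than a genuine obstacle; the conceptual point is simply that removal-based and gradient-based contributions are computed at the fixed operating point and hence cannot, for nonlinear influence functions, telescope into the total strength change the way the Shapley-based contribution does via efficiency.
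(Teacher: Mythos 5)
Your proposal is correct and takes essentially the same route as the paper: the paper reuses exactly the QBAG of Figure~\ref{fig:removal-contribution-existence-negative} (which is precisely your "variant" --- $\is(\arga)=0.5$, two attackers of strength $1$) and shows the sums of contributions ($-0.3$ for removal/intrinsic removal, $\approx -0.16$ for the gradient under QE, and $\approx -0.138$, $\approx -0.089$ under EB) fail to match $\fs_\graph(\arga)-\is(\arga)$ ($-0.4$ and $\approx -0.2025$ respectively). Your explicit QE evaluations agree with the paper's figures, and your observation that $\ctrbriempty$ coincides with $\ctrbrempty$ here because $\argb,\argc$ have no incoming edges is the same implicit step the paper relies on.
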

\begin{proof}
Consider yet again the QBAG depicted in Figure~\ref{fig:removal-contribution-existence-negative}, which we denote by $\graph = \QBAG$.
Our topic argument is $\arga$.
For QE semantics, we have $\sigma(\arga) - \tau(\arga) = -0.4$, but $\ctrbr{\argb}{\arga} + \ctrbr{\argc}{\arga} = \ctrbri{\argb}{\arga} + \ctrbri{\argc}{\arga} = -0.3$ and $\ctrbg{\argb}{\arga} + \ctrbg{\argc}{\arga} \approx -0.16$.
For EB semantics, we have $\sigma(\arga) - \tau(\arga) \approx -0.2025$, but $\ctrbr{\argb}{\arga} + \ctrbr{\argc}{\arga} = \ctrbri{\argb}{\arga} + \ctrbri{\argc}{\arga} \approx -0.138$ and $\ctrbg{\argb}{\arga} + \ctrbg{\argc}{\arga} \approx -0.089$.
In these cases, we can easily see that $\sum_{\argx \in \Args \setminus \{\arga\}} \ctrb{\argx}{\arga} = \sigma(\arga) - \tau(\arga)$ does not hold and hence, quantitative contribution existence must be violated.
\end{proof}

Finally, let us introduce a simple example that highlights the advantage of the Shapley value-based contribution functions, following the intuition of quantitative contribution existence.
\begin{example}
    Consider a set of simple QBAGs, where each QBAG consists of a topic argument $\arga$ and a number of supporters $\argb_1, \dots, \argb_n$, where $n = 1$ for the smallest QBAG and $n = 20$ for the largest (Figure~\ref{fig:graph-contribution}). The arguments are not related otherwise.
    The initial strength of $\arga$ is $0.5$, whereas each argument of $\argb_1, \dots, \argb_n$ has an initial strength of $1$.
    We apply QE semantics, which satisfies contribution existence with respect to $\ctrbrempty$, $\ctrbriempty$, and $\ctrbgempty$.
    Obviously, in a given QBAG, the contribution of every contributor $\argb_i$, $1 \leq i \leq n$ is the same, given a specific contribution function.
    We plot this contribution for $\ctrbrempty$ (which equals the one of $\ctrbriempty$ in this case), $\ctrbsempty$, and $\ctrbgempty$; i.e., we plot $\ctrbr{\arga}{\argb_i}$, $\ctrbs{\arga}{\argb_i}$, and $\ctrbg{\arga}{\argb_i}$ for our set of QBAGs.
    What we can see is that the contributions of $\ctrbrempty$ and $\ctrbgempty$ converge to zero faster with increasing $n$ than the contributions of $\ctrbsempty$.
    For $n = 10$, the Shapley value-based contribution is still at around $0.05$, whereas the removal- and gradient-based contributions are, visually, already indistinguishable from $0$.
    The example highlights that contribution existence alone may not always be a strong enough principle: for larger $n$ we see in the plot (approx. for $6 \leq n \leq 20$), $\ctrbr{\arga}{\argb_i}$ and $\ctrbg{\arga}{\argb_i}$ are negligibly small, although the proportional effect of a single supporter is still substantial, which can be considered misleading.
\begin{figure}[ht]
\centering
\subfloat[Argument $\arga$ with $n$ supporters.]{
\label{fig:graph-contribution}
\begin{tikzpicture}[scale=0.8]
     \node[unanode]    (a)    at(4,0)  {\argnode{\arga}{0.5}{}};
     \node[unanode]    (b1)    at(2,4)  {\argnode{\argbb} {1}{1}};
    \node[invnode]    (bx)    at(4,4)  {\argdots};
     \node[unanode]    (bn)    at(6,4)  {\argnode{\argbn} {1}{1}};
     \path [->, line width=0.2mm]  (b1) edge node[left] {+} (a);
     \path [->, line width=0.2mm]  (bx) edge node[left] {+} (a);
     \path [->, line width=0.2mm]  (bn) edge node[left] {+} (a);
\end{tikzpicture}
}
\subfloat[Contribution of $\argb_i$, $1 \leq i \leq n$ to $\arga$, given $n$.]{
\label{fig:plot-contribution}
\includegraphics[width=0.6\columnwidth]{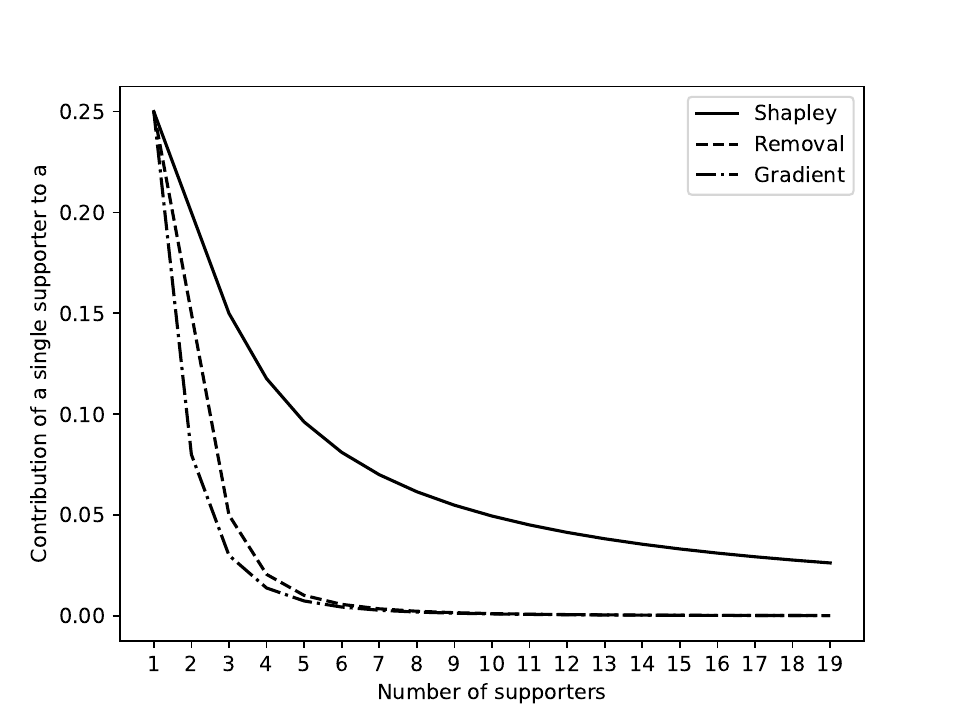}}
\caption{The contributions of $\ctrbrempty$ and $\ctrbgempty$ converge to zero faster than the contributions of $\ctrbsempty$. Here, the contribution of $\argb_i$, $1 \leq i \leq n$ to $\arga$ depends on the number of $\arga$'s supporters and we apply QE semantics.}
\label{fig:contribution-intuition}
\end{figure}
\end{example}

\subsection{Directionality}
\label{subsec:analysis-directionality}

Since modular semantics generally satisfy the 
 \emph{directionality} principle for gradual semantics~\cite{amgoud2018evaluation},
 we should expect that a faithful contribution function 
 satisfies the corresponding principle for contribution functions. As we show next, this is indeed the case
 for all contribution functions considered here. 
\begin{proposition}
\label{proposition_directionality}
$\ctrbrempty$, $\ctrbriempty$, $\ctrbsempty$, and $\ctrbgempty$ satisfy directionality w.r.t. all modular argumentation semantics $\fs$.
\end{proposition}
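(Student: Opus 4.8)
The key fact to exploit is the Observation stated just before Equation~\eqref{eq:gradient}: in any modular semantics, the final strength $\fs_\graph(\arga)$ of an argument depends only on the initial strengths of arguments $\argx_i$ from which there is a directed path to $\arga$, and symmetrically $\fs_{\graph}$ of such arguments is unaffected by arguments that cannot reach them. This is exactly the directionality principle for gradual semantics (Principle~\ref{sprinciple:directionality}), which all modular semantics satisfy. I would treat the four contribution functions one at a time, fixing an arbitrary modular semantics $\fs$, a QBAG $\graph$, a topic argument $\arga$, and a contributor $\argx$ with \emph{no} directed path from $\argx$ to $\arga$, and in each case show $\ctrb{\argx}{\arga} = 0$.

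\textbf{Removal-based ($\ctrbrempty$).} Since there is no directed path from $\argx$ to $\arga$, removing $\argx$ (and its incident edges) removes no argument or edge lying on a path into $\arga$. By directionality of $\fs$ applied iteratively to the removed edges (each edge $(\argy,\argx)$ or $(\argx,\argy)$ has target not reaching $\arga$ — for $(\argx,\argy)$ this needs care, see below), the final strength of $\arga$ is unchanged, so $\fs_\graph(\arga) = \fs_{\graph\downarrow_{\Args\setminus\{\argx\}}}(\arga)$ and hence $\ctrbr{\argx}{\arga}=0$. The subtle point: removing $\argx$ also removes outgoing edges $(\argx,\argy)$; but if $\argy$ could reach $\arga$, then so could $\argx$, contradiction — so every argument incident to $\argx$ fails to reach $\arga$, and the restriction $\graph\downarrow_{\Args\setminus\{\argx\}}$ differs from $\graph$ only in the sub-QBAG of arguments that cannot reach $\arga$; directionality then gives equality of $\arga$'s strength.

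\textbf{Intrinsic-removal ($\ctrbriempty$) and Shapley ($\ctrbsempty$).} For $\ctrbriempty$, first delete the incoming edges to $\argx$ (whose target $\argx$ does not reach $\arga$) — by directionality this does not change $\fs(\arga)$ — and then argue as for $\ctrbrempty$ that the subsequent removal of $\argx$ also leaves $\fs(\arga)$ fixed; the two terms in Equation~\eqref{eq:delta2} are therefore equal and $\ctrbri{\argx}{\arga}=0$. For $\ctrbsempty$, I would show that \emph{every} summand in Equation~\eqref{eq:shap} vanishes: for each $X \subseteq \Args\setminus\{\argx,\arga\}$, the two QBAGs $\graph\downarrow_{\Args\setminus X}$ and $\graph\downarrow_{\Args\setminus(X\cup\{\argx\})}$ differ only by the presence of $\argx$ and its incident edges, and since $\argx$ cannot reach $\arga$ in $\graph$ it cannot reach $\arga$ in any restriction either (restriction only deletes arguments/edges, never creating new paths); so by the $\ctrbrempty$ argument applied within the sub-QBAG $\graph\downarrow_{\Args\setminus X}$, the two final strengths of $\arga$ coincide, making each difference $0$ and hence the whole weighted sum $0$.

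\textbf{Gradient-based ($\ctrbgempty$).} This is the cleanest case: by the Observation, $f_\arga(s^0)$ does not depend on the coordinate $s^0_i$ corresponding to $\argx$ (there being no path $\argx \to \arga$), so $\frac{\partial f_\arga}{\partial \is(\argx)} \equiv 0$ everywhere, in particular at $(\is(\argx_1),\dots,\is(\argx_K))$; thus $\ctrbg{\argx}{\arga}=0$. I expect the \textbf{main obstacle} to be purely bookkeeping: making precise that ``removing an argument / restricting to a subset never creates a new directed path,'' and that directionality of $\fs$ — stated in Principle~\ref{sprinciple:directionality} for the removal of a \emph{single} edge whose target is not $\arga$ and cannot reach $\arga$ — can be chained over the several edges deleted when an argument is removed (one must verify each intermediate edge still has a target that cannot reach $\arga$, which holds because deleting edges only shrinks the reachability relation). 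Once this lemma-style observation is isolated, all four cases follow uniformly from it together with the Observation preceding Equation~\eqref{eq:gradient}.
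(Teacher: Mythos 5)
Your proposal is correct and follows essentially the same route as the paper's proof: a case split over the four contribution functions, each reduced to the fact that for a modular (directional) semantics the final strength of $\arga$ is unaffected by arguments with no directed path to $\arga$, so every difference term (or the partial derivative) vanishes. In fact your version is more careful than the paper's, which compresses each case into ``by definition of a modular semantics''; your explicit bookkeeping (outgoing edges of $\argx$ cannot lead to $\arga$ on pain of contradiction, restrictions never create new paths, single-edge directionality can be chained) fills in exactly the details the paper leaves implicit.
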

\begin{proof}
    Let $\graph = \QBAG$ be a QBAG.
\begin{description}
    \item[The proof for $\ctrbrempty$.] If there is no directed path from $\argx$ to $\arga$, then $\fs_{\graph}(\arga) = \fs_{\graph \downarrow_{\Args \setminus \{\argx\}}}(\arga)$ by definition of a modular semantics. Consequently, it holds that $\ctrbr{\argx}{\arga} = 0$, as required.
    \item[The proof for $\ctrbriempty$.] If there is no directed path from $\argx$ to $\arga$, then for $\graph' = (\Args, \tau, \Att \setminus \{(\argy, \argx) | (\argy, \argx) \in \Att \}), \Supp \setminus \{(\argy, \argx) | (\argy, \argx) \in \Supp \})$ it holds that $\fs_{\graph'}(\arga) = \fs_{\graph \downarrow_{\Args \setminus \{\argx\}}}(\arga)$ by definition of a modular semantics. Consequently, $\ctrbri{\argx}{\arga} = 0$, as required.
    \item[The proof for $\ctrbsempty$.] If there is no directed path from $\argx$ to $\arga$, then for $X \subseteq \Args$ it holds that $\fs_{\graph\downarrow_{\Args \setminus (X \cup \{ \argx \})}}(\arga) = \fs_{\graph\downarrow_{\Args \setminus X}}(\arga)$ by definition of a modular semantics. Consequently, $\ctrbs{\argx}{\arga} = 0$, as required.
    \item[The proof for $\ctrbgempty$.] If there is no directed path from $\argx$ to $\arga$, then $\fs(\arga) = f_{\arga}(\ldots, \is(\arga), \is(\argx), \ldots)$ is independent of $\is(\argx)$, by definition of a modular semantics. In that case, trivially, the partial derivative of $f_{\arga}$ w.r.t.\ $\is(\argx)$ is null, so that $\ctrbg{\argx}{\arga} = 0$, as required. 
\end{description}

\end{proof}

\subsection{(Quantitative) Local Faithfulness}

Intuitively, (quantitative) local faithfulness is best aligned with the gradient-based contribution function, which indeed satisfies the principle with respect to every differentiable modular argumentation semantics. Of course, we require differentiability of the semantics for the
gradient to be defined. For acyclic graphs, 
the condition is satisfied 
for all commonly considered semantics. In general,
a modular semantics is differentiable whenever both 
the aggregation and influence functions are differentiable.
Note that all aggregation functions and the linear and Euler-based influence functions in Table \ref{table:semantics} are differentiable. The $p$-Max aggregation function is differentiable for $p=2$ \cite[Proposition 1]{potyka2018continuous} but
can be pointwise non-differentiable for some other choices of $p$.
As we explain next, $\ctrbgempty$ satisfies quantitative 
local faithfulness for every differentiable modular argumentation semantics and, thus, also the weaker principle of 
local faithfulness.
\begin{proposition}
\label{proposition_quan_local_faith}
$\ctrbgempty$ satisfies quantitative local faithfulness w.r.t. every differentiable modular argumentation semantics.
\end{proposition}
\begin{proof}
Note that
\eqref{eq:quant_loc_faithfulness}
is trivially satisfied when letting
\begin{equation}
 e(\epsilon) =  \fs_{\graph\downarrow_{\is(\argx) \leftarrow \epsilon}}(a) - (\fs_{\graph}(a) + \epsilon \cdot \ctrbg{\argx}{\arga}. 
\end{equation}
It remains to show that 
$\lim_{\epsilon \rightarrow 0} \frac{e(\epsilon)}{\epsilon} = 0$. We have
\begin{align*}
    \lim_{\epsilon \rightarrow 0} \frac{e(\epsilon)}{\epsilon}
&= \lim_{\epsilon \rightarrow 0} \frac{\fs_{\graph\downarrow_{\is(\argx) \leftarrow \epsilon}}(a) - (\fs_{\graph}(a) + \epsilon \cdot \ctrbg{\argx}{\arga}}{\epsilon} \\
&= \lim_{\epsilon \rightarrow 0} \frac{\fs_{\graph\downarrow_{\is(\argx) \leftarrow \epsilon}}(a) - \fs_{\graph}(a)}{\epsilon} - \ctrbg{\argx}{\arga} \\
&=\ctrbg{\argx}{\arga} - \ctrbg{\argx}{\arga} = 0.
\end{align*}
For the third equality, note that the first term
in the second equality is just the partial derivative
of the strength of $\arga$
with respect to the initial strength of $\argx$,
which is $\ctrbg{\argx}{\arga}$ by definition.
\end{proof}
\begin{proposition}
\label{proposition_local_faith}
$\ctrbgempty$ satisfies local faithfulness w.r.t. every differentiable modular argumentation semantics.
\end{proposition}
\begin{proof}
The proof follows immediately from Proposition \ref{prop_faithfulness_relationships} (quantitative local faithfulness implies local faithfulness).    
\end{proof}

As we show next by a sequence of counterexamples, the other introduced contribution functions
do not satisfy local faithfulness and, thus, by Proposition~\ref{prop_faithfulness_relationships}, they also do not 
satisfy quantitative local faithfulness\footnote{According to Proposition~\ref{prop_faithfulness_relationships}, quantitative
local faithfulness implies local faithfulness. Hence, by contraposition, not local faithfulness implies not quantitative local faithfulness.}.
For each counter-example, we provide a plot of the topic argument's final strength dependent on the contributor's initial strength in order to facilitate a more intuitive grasp of the underlying issues (saddle points or plateaus). Since $\ctrbgempty$ satisfies local faithfulness, we can demonstrate
violation of the principle by showing that the sign under a
given contribution function differs from the sign under $\ctrbgempty$.
\begin{proposition}
$\ctrbrempty$ and $\ctrbriempty$ violate local faithfulness w.r.t.\ QE semantics.
\end{proposition}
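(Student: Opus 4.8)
The goal is to construct a single acyclic QBAG under QE semantics in which the removal-based contribution $\ctrbrempty$ (and the intrinsic-removal contribution $\ctrbriempty$) disagrees in sign with the gradient-based contribution $\ctrbgempty$ for some contributor $\argx$ and topic $\arga$. Since $\ctrbgempty$ is known to satisfy local faithfulness (by the preceding proposition), exhibiting such a sign mismatch immediately shows that $\ctrbrempty$ and $\ctrbriempty$ fail local faithfulness: the removal-based value has, say, a negative sign while the true local behaviour of $\fs_\graph(\arga)$ as a function of $\is(\argx)$ is increasing (positive gradient), or vice versa.

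The natural construction reuses the non-monotonicity phenomenon already exploited in Example~\ref{ex:intro-modular}: a contributor $\arge$ whose initial strength sits at a point where the final strength of $\arga$ has a local extremum or an inflection, so that the \emph{counterfactual} (global) effect of deleting $\arge$ points one way while the \emph{marginal} (local) derivative points the other way. Concretely, I would take an argument $\arga$ with a supporter and two attackers (or a similar small gadget), all fed by a common parent $\arge$, tune the initial strengths so that under QE the plot of $\fs_\graph(\arga)$ against $\is(\arge)$ is, e.g., decreasing then increasing with its minimum at the chosen value $\is(\arge)$; then the gradient at that point is $0$ or has one sign from the relevant side, while removing $\arge$ (which zeroes out the intermediate arguments and hence restores $\arga$ to its base score) lands on the other branch of the curve, giving $\ctrbr{\arge}{\arga}\neq 0$ with the ``wrong'' sign relative to the local slope. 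The first steps are thus: (i) fix the QBAG and the topic argument; (ii) compute $\fs_\graph(\arga)$ by forward propagation through the topological order using the Sum aggregation and the 2-Max(1) influence function of QE; (iii) compute $\fs_{\graph\downarrow_{\Args\setminus\{\arge\}}}(\arga)$ — which by stability is just a short computation since $\arge$'s children lose a parent — and read off $\ctrbr{\arge}{\arga}$ and $\ctrbri{\arge}{\arga}$; (iv) compute the partial derivative $\partial\fs(\arga)/\partial\is(\arge)$ at the operating point via the chain rule through the explicit representation $f_\arga$, obtaining $\ctrbg{\arge}{\arga}$; (v) observe the signs differ, and invoke local faithfulness of $\ctrbgempty$ to conclude.

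The main obstacle is purely a matter of \emph{tuning}: QE's 2-Max(1) influence function $\iota^p_w$ with $p=2$ has a gentler, genuinely nonlinear response than DFQuAD's piecewise-linear Linear(1), so the cancellation effects that produce non-monotonicity of $\fs(\arga)$ in $\is(\arge)$ are subtler and the numbers are messier than in the DFQuAD example of the introduction. I would therefore expect to rely on a numerical search (consistent with the paper's stated methodology of double-checking every counterexample with two independent implementations) to pin down initial-strength values at which the derivative and the finite-difference-to-deletion clearly have opposite signs, and then report those exact or rational values together with the resulting $\fs_\graph(\arga)$, $\fs_{\graph\downarrow_{\Args\setminus\{\arge\}}}(\arga)$, $\ctrbr{\arge}{\arga}$, $\ctrbri{\arge}{\arga}$, and $\ctrbg{\arge}{\arga}$, plus the strength-vs-initial-strength plot. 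The write-up itself is then short: present the figure, state the computed values, note $\operatorname{sign}(\ctrbr{\arge}{\arga})\neq\operatorname{sign}(\ctrbg{\arge}{\arga})$ (and likewise for $\ctrbriempty$, since removing the relations into $\arge$ does not change this graph if $\arge$ is a leaf), and close by citing local faithfulness of $\ctrbgempty$ to transfer the violation to $\ctrbrempty$ and $\ctrbriempty$.
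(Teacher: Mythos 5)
Your plan is exactly the paper's argument: the paper exhibits a four-node QBAG in which the leaf contributor $\argd$ attacks both a supporter $\argb$ of $\arga$ and an attacker $\argc$ of $\arga$ (with $\argc$ also attacking $\argb$), computes $\ctrbr{\argd}{\arga} = \ctrbri{\argd}{\arga} \approx -0.011 < 0$ while $\ctrbg{\argd}{\arga} \approx 0.030 > 0$, and concludes via the already-established local faithfulness of $\ctrbgempty$ — including your observation that choosing a leaf contributor makes the two removal-based contributions coincide. The only thing separating your proposal from a complete proof is actually exhibiting the tuned initial strengths and the resulting numerical values, which you correctly identify as the remaining (purely computational) step.
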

\begin{proof}
Consider the QBAG $\graph = \QBAG$ depicted in Figure~\ref{fig:graph-faith-qe}, with the topic argument $\arga$ and the contributor $\argd$. Given QE semantics $\sigma$, we have $\ctrbr{\argd}{\arga} = \ctrbri{\argd}{\arga}  \approx -0.01122$ and $\ctrbg{\argd}{\arga} \approx 0.02987$; local faithfulness must be violated because $\ctrbr{\argd}{\arga} = \ctrbri{\argd}{\arga} < 0$ but $\ctrbg{\argd}{\arga} > 0$.
\end{proof}

\begin{figure}[ht]
\centering
\subfloat[$\graph$.]{
\label{fig:graph-faith-qe}
\begin{tikzpicture}[scale=0.8]
    \node[unanode]    (a)    at(2,0)  {\argnode{\arga}{1}{0.9812}};
    \node[unanode]    (b)    at(0,2)  {\argnode{\argb}{0.7}{0.3801}};
    \node[unanode]    (c)    at(4,2)  {\argnode{\argc}{0.6}{0.5172}};
    \node[unanode]    (d)    at(2,4)  {\argnode{\argd}{0.4}{0.4}};
    
    \path [->, line width=0.2mm]  (b) edge node[left] {+} (a);
    \path [->, line width=0.2mm]  (c) edge node[left] {-} (a);
    \path [->, line width=0.2mm]  (c) edge node[left, above] {-} (b);
    \path [->, line width=0.2mm]  (d) edge node[left] {-} (b);
    \path [->, line width=0.2mm]  (d) edge node[left] {-} (c);
\end{tikzpicture}
}
\subfloat[Final strength of $\arga$, given initial strength of $\argd$.]{
\label{fig:plot-faith-qe}
\includegraphics[width=0.6\columnwidth]{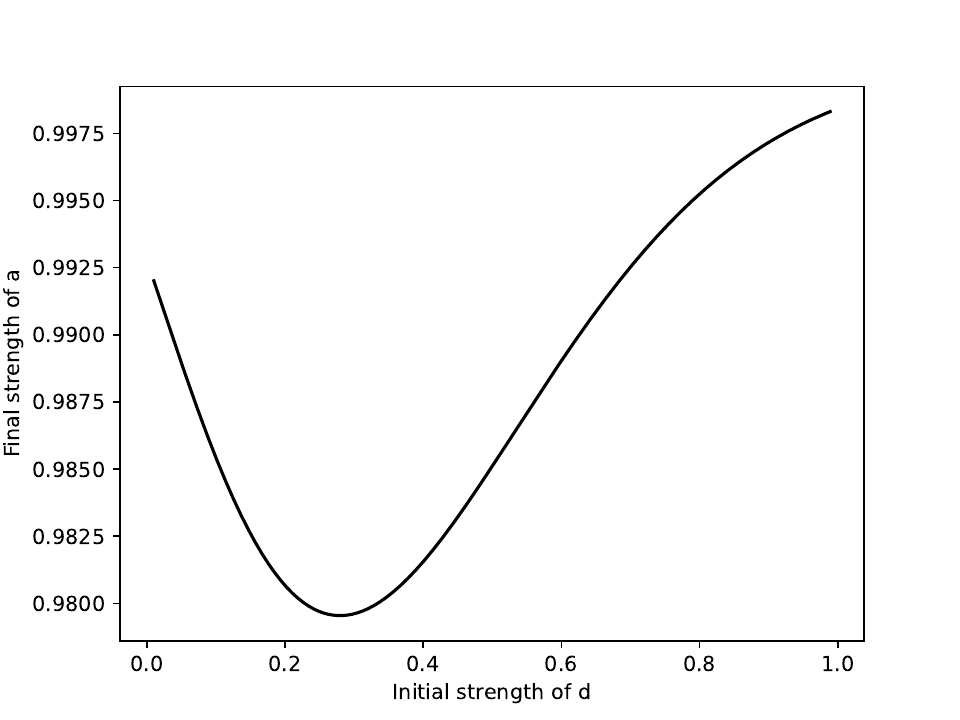}}
\caption{$\ctrbrempty$ and $\ctrbriempty$ violate local faithfulness w.r.t.\ QE semantics.}
\label{fig:faith-qe}
\end{figure}

\begin{proposition}
$\ctrbrempty$ and $\ctrbriempty$ violate local faithfulness w.r.t.\  DFQuAD semantics.
\end{proposition}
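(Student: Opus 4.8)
The plan is to follow the template of the two preceding propositions: exhibit a small acyclic QBAG on four arguments with topic argument $\arga$ and a parentless contributor $\argd$ for which the removal-based contribution $\ctrbr{\argd}{\arga}$ has the opposite sign to the gradient-based contribution $\ctrbg{\argd}{\arga}$. Since $\argd$ will have no incoming attacks or supports, we automatically have $\ctrbr{\argd}{\arga} = \ctrbri{\argd}{\arga}$, so a single example rules out local faithfulness for both $\ctrbrempty$ and $\ctrbriempty$ at once. We already know that $\ctrbgempty$ satisfies local faithfulness w.r.t.\ every differentiable modular semantics, and DFQuAD is such a semantics (its Product aggregation and its Linear(1) influence function are differentiable, the latter away from the finitely many kinks of $\max\{0,\cdot\}$); hence a sign mismatch between $\ctrbr{\argd}{\arga}$ and $\ctrbg{\argd}{\arga}$ immediately witnesses that $\ctrbrempty$ and $\ctrbriempty$ fail the principle.

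Concretely, I would reuse the ``diamond'' topology of Figures~\ref{fig:graph-faith-qe} and~\ref{fig:graph-faith-sd}: $\argb$ supports $\arga$, $\argc$ attacks $\arga$, $\argc$ attacks $\argb$, $\argd$ attacks $\argb$, and $\argd$ attacks $\argc$. This shape is chosen deliberately, since it creates several competing directed paths from $\argd$ to $\arga$ (the path $\argd \to \argc \to \arga$ pushes $\fs_\graph(\arga)$ one way, while $\argd \to \argb \to \arga$ and $\argd \to \argc \to \argb \to \arga$ push it the other), which is exactly the mechanism that produces a plateau or inflection point in the curve $\is(\argd) \mapsto \fs_\graph(\arga)$. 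I would then tune the initial strengths $\is(\arga), \is(\argb), \is(\argc), \is(\argd)$ so that, at the chosen operating value of $\is(\argd)$, the instantaneous slope $\frac{\partial \fs(\arga)}{\partial \is(\argd)}$ has one sign while the counterfactual difference $\fs_\graph(\arga) - \fs_{\graph\downarrow_{\Args \setminus \{\argd\}}}(\arga)$ has the other; as elsewhere in Section~\ref{sec:analysis}, the exact decimal values would be found numerically and cross-checked with the two independent implementations to guard against rounding artefacts.

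The verification itself is routine forward propagation along the topological order $\argd, \argc, \argb, \arga$. In $\graph$ one computes $\fs(\argd) = \is(\argd)$, then $\fs(\argc)$ by one application of the update function using $\fs(\argd)$ as its single (attacking) parent, then $\fs(\argb)$ using $\fs(\argc)$ and $\fs(\argd)$ as attacking parents, then $\fs(\arga)$ using $\fs(\argb)$ (support) and $\fs(\argc)$ (attack); in $\graph\downarrow_{\Args \setminus \{\argd\}}$ the same chain is evaluated with $\argd$ deleted, so that $\ctrbr{\argd}{\arga} = \ctrbri{\argd}{\arga}$ is the difference of the two resulting values for $\arga$. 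The gradient $\ctrbg{\argd}{\arga}$ is obtained by differentiating this composition of Product-aggregations and Linear(1)-influences via the chain rule and evaluating at the chosen initial strengths. Plugging in the tuned numbers yields the promised sign discrepancy, and a plot of $\fs_\graph(\arga)$ against $\is(\argd)$, analogous to Figure~\ref{fig:plot-faith-qe}, exhibits the saddle/plateau that makes the discrepancy geometrically transparent.

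The only real obstacle is the parameter search: one needs a QBAG in which dragging $\is(\argd)$ all the way to $0$ moves $\fs(\arga)$ in the direction opposite to its instantaneous motion at the operating point. Because Product aggregation is multilinear and Linear(1) is piecewise linear, $\is(\argd) \mapsto \fs_\graph(\arga)$ is a low-degree piecewise-polynomial curve, so this non-monotonic behaviour is attainable, but it requires the opposing indirect paths through $\argb$ and through $\argc$ to be nearly balanced near the operating point; obtaining clean values that also keep every strength inside $[0,1]$ is the fiddly part, and it is handled exactly as in the two previous propositions.
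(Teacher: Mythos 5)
Your strategy is the same as the paper's: exhibit a counterexample on the ``diamond'' QBAG ($\argb$ supports $\arga$; $\argc$ attacks $\arga$ and $\argb$; $\argd$ attacks $\argb$ and $\argc$), exploit the fact that a parentless contributor makes $\ctrbr{\argd}{\arga}=\ctrbri{\argd}{\arga}$ so one example covers both functions, and conclude a violation of local faithfulness by comparing the sign of the removal-based contribution with the actual local behaviour of $\fs(\arga)$ in $\is(\argd)$. The paper's instance is $\is(\arga)=1$, $\is(\argb)=0.7$, $\is(\argc)=0.6$, $\is(\argd)=0.8$, which gives $\ctrbr{\argd}{\arga}=\ctrbri{\argd}{\arga}\approx 0.32>0$ while $\fs(\arga)$ sits on a plateau at $1$ (the aggregate at $\arga$ is $+0.0032$ and the Linear(1) influence with $w=1$ clips it), so increasing $\is(\argd)$ does not increase $\fs(\arga)$ --- a direct violation of the second clause of Principle~\ref{principle:localCorrect} without needing the gradient at all.

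The genuine gap is that you never actually produce the counterexample: the concrete initial strengths are deferred to a ``fiddly'' numerical parameter search, and for a proposition whose entire proof \emph{is} a counterexample, asserting that suitable parameters exist is not the same as exhibiting them. The gap is fillable --- your sign-flip variant does work on this topology (e.g.\ $\is(\arga)=0.5$, $\is(\argb)=0.7$, $\is(\argc)=0.6$, $\is(\argd)=0.95$ gives $\ctrbr{\argd}{\arga}>0$ but $\ctrbg{\argd}{\arga}<0$, since writing $u=1-\fs(\argd)$ the aggregate at $\arga$ is $u(0.1-0.42u)$, which is positive yet decreasing in $\fs(\argd)$ for $u<0.119$) --- but as written the argument proves nothing until the numbers are supplied and checked. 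Note also that the paper's own choice sidesteps the delicate path-balancing you worry about: saturating $\arga$ at initial strength $1$ produces a clean plateau (gradient $0$, removal contribution $0.32$), which is an easier target to hit than a strict sign reversal.
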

\begin{proof}
    Consider the QBAG $\graph = \QBAG$ depicted in Figure~\ref{fig:graph-faith-df}, with the topic argument $\arga$ and the contributor $\argd$. Given DFQuAD semantics $\sigma$, we have $\ctrbr{\argd}{\arga} = \ctrbri{\argd}{\arga}  \approx 0.32 >0$, but as can be seen in
    Figure~\ref{fig:plot-faith-df}, increasing
    $\argd$'s initial strength does not increase the topic
    argument's strength.
\end{proof}

\begin{figure}[ht]
\centering
\subfloat[$\graph$.]{
\label{fig:graph-faith-df}
\begin{tikzpicture}[scale=0.8]
    \node[unanode]    (a)    at(2,0)  {\argnode{\arga}{1}{1}};
    \node[unanode]    (b)    at(0,2)  {\argnode{\argb}{0.7}{0.1232}};
    \node[unanode]    (c)    at(4,2)  {\argnode{\argc}{0.6}{0.12}};
    \node[unanode]    (d)    at(2,4)  {\argnode{\argd}{0.8}{0.8}};
    
    \path [->, line width=0.2mm]  (b) edge node[left] {+} (a);
    \path [->, line width=0.2mm]  (c) edge node[left] {-} (a);
    \path [->, line width=0.2mm]  (c) edge node[left, above] {-} (b);
    \path [->, line width=0.2mm]  (d) edge node[left] {-} (b);
    \path [->, line width=0.2mm]  (d) edge node[left] {-} (c);
\end{tikzpicture}
}
\subfloat[Final strength of $\arga$, given initial strength of $\argd$.]{
\label{fig:plot-faith-df}
\includegraphics[width=0.6\columnwidth]{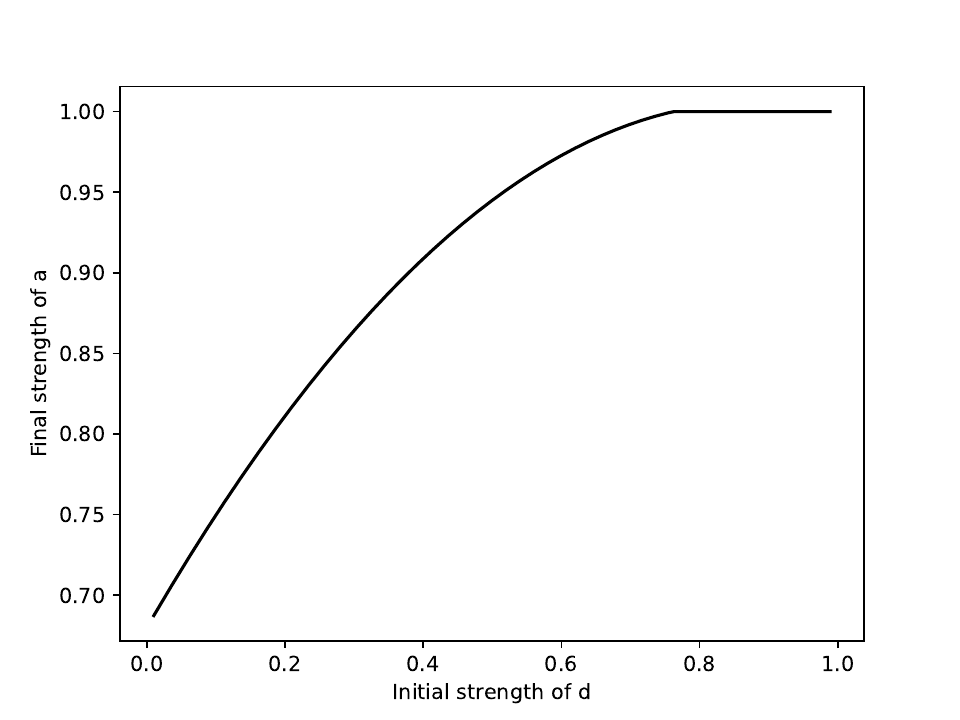}}
\caption{$\ctrbrempty$ and $\ctrbriempty$ violate local faithfulness w.r.t.\ DFQuAD semantics.}
\label{fig:faith-df}
\end{figure}

\begin{proposition}
$\ctrbrempty$ and $\ctrbriempty$ violate local faithfulness w.r.t.\ SD-DFQuAD semantics.
\end{proposition}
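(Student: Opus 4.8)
The plan is to produce a single acyclic QBAG on which, under SD-DFQuAD, the removal-based contribution of some contributor $\argx$ to the topic $\arga$ has the opposite sign to the gradient-based contribution $\ctrbg{\argx}{\arga}$, and then to invoke the already-established fact that $\ctrbgempty$ satisfies local faithfulness w.r.t.\ every differentiable modular semantics. Since $\alpha^{\Pi}_v$ and the $1$-Max influence function $\iota^{p}_{w}$ (with $p=1$) are differentiable, SD-DFQuAD is differentiable, so the sign of $\ctrbg{\argx}{\arga}$ faithfully records the direction in which $\fs_\graph(\arga)$ moves under a marginal increase of $\is(\argx)$. Consequently, any contribution function whose value at that point has the opposite sign must violate local faithfulness, and it suffices to exhibit one such QBAG and report the two contribution values (accompanied, following the methodology used for the other counterexamples, by a plot of $\is(\argx)\mapsto\fs_\graph(\arga)$ that exhibits the underlying saddle/plateau behaviour).

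For the construction I would adapt the topology used in the QE counterexample of Figure~\ref{fig:faith-qe}: a source argument $\argd$ that attacks both a supporter $\argb$ of $\arga$ and an attacker $\argc$ of $\arga$, with $\argc$ additionally attacking $\argb$, so that $\argd$ reaches $\arga$ along one route that makes it an indirect attacker and another that makes it an indirect supporter. This is the configuration that produces a non-monotone map $\is(\argd)\mapsto\fs_\graph(\arga)$ (cf.\ the introductory example), so that at a suitably chosen operating point the net effect of deleting $\argd$ and the local effect of nudging $\is(\argd)$ point in opposite directions. I would then choose base scores for $\arga,\argb,\argc,\argd$ so that $\ctrbr{\argd}{\arga}=\ctrbri{\argd}{\arga}$ is negative while $\ctrbg{\argd}{\arga}$ is positive. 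Note that $\ctrbr{\argd}{\arga}=\ctrbri{\argd}{\arga}$ holds automatically because $\argd$ is a source (removing incoming edges to $\argd$ is vacuous), and that, since an attacker or supporter of strength $0$ is neutral under product aggregation, $\fs_{\graph\downarrow_{\Args\setminus\{\argd\}}}(\arga)=\fs_{\graph\downarrow_{\is(\argd)\leftarrow 0}}(\arga)$; hence $\ctrbr{\argd}{\arga}=\fs_\graph(\arga)-\fs_{\graph\downarrow_{\is(\argd)\leftarrow 0}}(\arga)$ is exactly the secant from $0$ to the operating point, which we need to be negative while the tangent is positive. Once a witness is found, the verification is the routine linear-time forward-propagation computation of SD-DFQuAD final strengths on $\graph$ and on $\graph\downarrow_{\Args\setminus\{\argd\}}$, which I would additionally cross-check with the two independent implementations mentioned earlier.

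The main obstacle is calibration. SD-DFQuAD differs from QE both in aggregation (product rather than sum) and in the curvature of its influence function ($1$-Max rather than $2$-Max), so the initial strengths that work for QE will not in general reproduce the sign mismatch here. Concretely, one must place the operating point $\is(\argd)$ on the stretch of the non-monotone curve where the tangent slope is positive but the value still lies below $\fs_{\graph\downarrow_{\is(\argd)\leftarrow 0}}(\arga)$ — i.e.\ past the minimum of a U-shaped segment but not yet back above its left endpoint. This amounts to a short parameter search over the four base scores; the remaining argument (equal removal and intrinsic-removal values for a source, opposite sign to the gradient, appeal to Proposition~\ref{prop_faithfulness_relationships} and the faithfulness of $\ctrbgempty$) is then immediate.
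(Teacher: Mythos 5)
Your strategy is legitimate and is exactly the one the paper uses for the QE and EB cases: exhibit a witness where the removal-based score and the gradient disagree in sign, then lean on the fact that $\ctrbgempty$ is locally faithful. Your structural observations are also correct and worth keeping (for a source contributor $\argd$ the intrinsic-removal and removal contributions coincide, and under product aggregation removing $\argd$ is equivalent to setting $\is(\argd)=0$, so $\ctrbr{\argd}{\arga}$ is the secant of $t\mapsto\fs_{\graph\downarrow_{\is(\argd)\leftarrow t}}(\arga)$ from $0$ to the operating point). The problem is that for a proposition whose entire content is a counterexample, the counterexample is the proof, and you never produce one: the existence of base scores for which the secant is negative while the tangent is positive under SD-DFQuAD is deferred to a ``short parameter search'' that is not carried out and whose success is not argued. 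The QE witness does not transfer (as you note), and it is not self-evident that the required U-shaped segment with the operating point in the right position is realisable under product aggregation composed with $1$-Max; without the concrete weights and the two computed values, the claim is unproven.

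A second, smaller issue: you assert that SD-DFQuAD is differentiable because $\alpha^{\Pi}_v$ and $\iota^{p}_w$ with $p=1$ are differentiable, but the paper explicitly notes that $p$-Max can be pointwise non-differentiable for $p\neq 2$; indeed $h(x)=\max\{0,x\}/(1+\max\{0,x\})$ has one-sided derivatives $0$ and $1$ at $x=0$, so $\iota^{1}_{w}$ is non-differentiable at aggregate $0$ unless $w=1/2$. Your argument therefore needs the witness to avoid points where any relevant aggregate vanishes, which is an additional constraint on the calibration you leave open. For reference, the paper's own proof for SD-DFQuAD sidesteps the sign-reversal hunt entirely: in Figure~\ref{fig:graph-faith-sd} the topic argument has $\is(\arga)=\fs_\graph(\arga)=1$, the contribution $\ctrbr{\argd}{\arga}=\ctrbri{\argd}{\arga}\approx 0.1398$ is positive, yet increasing $\is(\argd)$ cannot increase $\fs_\graph(\arga)$ because the strength is saturated at the top of the interval --- a plateau violation of the ``$>0$'' clause of Principle~\ref{principle:localCorrect} that requires no comparison with the gradient at all.
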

\begin{proof}
    Consider the QBAG $\graph = \QBAG$ depicted in Figure~\ref{fig:graph-faith-sd}, with the topic argument $\arga$ and the contributor $\argd$. Given SD-DFQuAD semantics $\sigma$, we have $\ctrbr{\argd}{\arga} = \ctrbri{\argd}{\arga}  \approx 0.1398 > 0$. However,
    as can be seen in Figure~\ref{fig:plot-faith-sd}, marginally increasing $\argd$'s initial strength ($0.6$)
    does not increase the topic argument's strength.
\end{proof}

\begin{figure}[ht]
\centering
\subfloat[$\graph$.]{
\label{fig:graph-faith-sd}
\begin{tikzpicture}[scale=0.8]
    \node[unanode]    (a)    at(2,0)  {\argnode{\arga}{1}{1}};
    \node[unanode]    (b)    at(0,2)  {\argnode{\argb}{0.7}{0.4}};
    \node[unanode]    (c)    at(4,2)  {\argnode{\argc}{0.6}{0.375}};
    \node[unanode]    (d)    at(2,4)  {\argnode{\argd}{0.6}{0.6}};
    
    \path [->, line width=0.2mm]  (b) edge node[left] {+} (a);
    \path [->, line width=0.2mm]  (c) edge node[left] {-} (a);
    \path [->, line width=0.2mm]  (c) edge node[left, above] {-} (b);
    \path [->, line width=0.2mm]  (d) edge node[left] {-} (b);
    \path [->, line width=0.2mm]  (d) edge node[left] {-} (c);
\end{tikzpicture}
}
\subfloat[Final strength of $\arga$, given initial strength of $\argd$.]{
\label{fig:plot-faith-sd}
\includegraphics[width=0.6\columnwidth]{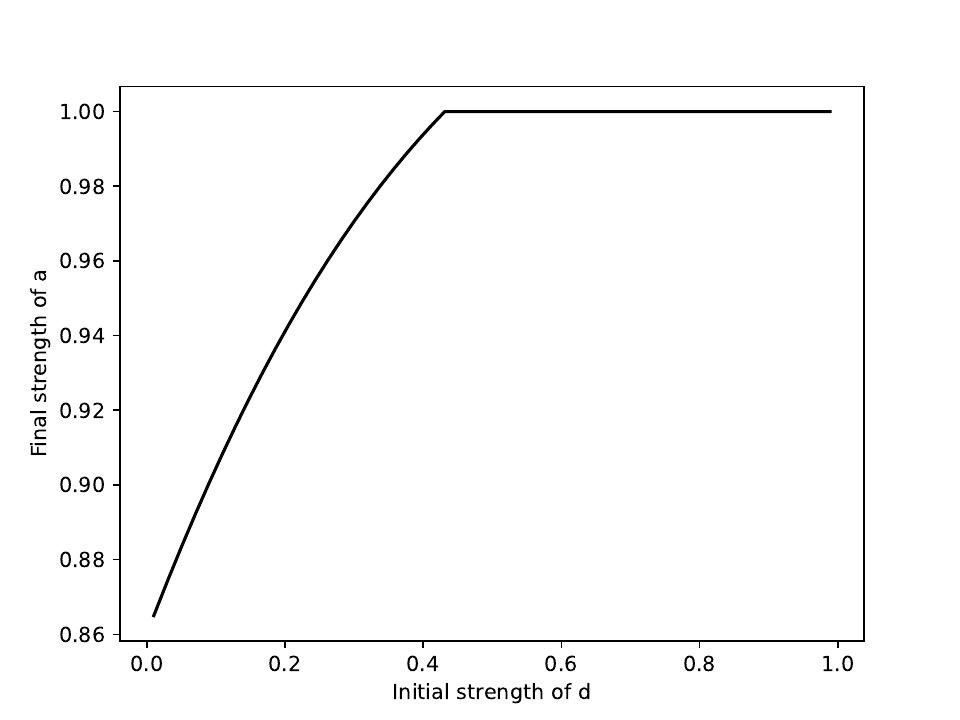}}
\caption{$\ctrbrempty$ and $\ctrbriempty$ violate local faithfulness w.r.t.\ SD-DFQuAD semantics.}
\label{fig:faith-sd}
\end{figure}

\begin{proposition}
$\ctrbrempty$ and $\ctrbriempty$ violate local faithfulness w.r.t.\  EB semantics.
\end{proposition}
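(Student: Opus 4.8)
The plan is to exhibit a single counterexample QBAG, of the same diamond shape used in the previous three propositions, on which $\ctrbrempty$ (equivalently $\ctrbriempty$) disagrees in sign with $\ctrbgempty$. Concretely, I would take the topic argument $\arga$ with a supporter $\argb$ and an attacker $\argc$, let $\argc$ also attack $\argb$, and add a further argument $\argd$ attacking both $\argb$ and $\argc$; this is the QBAG of Figure~\ref{fig:graph-faith-eb}, with suitably chosen initial strengths. Since $\argd$ has no incoming attacks or supports, the graph obtained by deleting all edges targeting $\argd$ is just $\graph$ itself, so $\ctrbr{\argd}{\arga} = \ctrbri{\argd}{\arga}$ and one computation settles both functions.

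The steps, in order: (i) fix the QBAG of Figure~\ref{fig:graph-faith-eb}; (ii) compute $\fs_\graph(\arga)$ by forward propagation along the topological order $\argd, \argc, \argb, \arga$ using the Sum aggregation and the Euler-based influence function $\iota^{e}_{w}(s) = 1 - \frac{1 - w^2}{1 + w \cdot e^s}$; (iii) compute $\fs_{\graph\downarrow_{\Args \setminus \{\argd\}}}(\arga)$ and take the difference to obtain $\ctrbr{\argd}{\arga}$; (iv) compute $\ctrbg{\argd}{\arga} = \frac{\partial \fs(\arga)}{\partial \is(\argd)}$ by differentiating the explicit composition $f_\arga$ via the chain rule; (v) observe that $\ctrbr{\argd}{\arga}$ and $\ctrbg{\argd}{\arga}$ have opposite signs. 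Because EB is a differentiable modular semantics, $\ctrbgempty$ satisfies local faithfulness w.r.t.\ EB, so the sign of $\ctrbg{\argd}{\arga}$ is the one local faithfulness forces on any faithful contribution function in a neighbourhood of $\is(\argd)$; hence $\ctrbrempty$ and $\ctrbriempty$ violate it. By Proposition~\ref{prop_faithfulness_relationships} (via contraposition) they also violate quantitative local faithfulness, and the accompanying plot in Figure~\ref{fig:faith-eb} visualises the non-monotonicity (a saddle point or plateau) responsible for the mismatch.

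The main obstacle is not conceptual but the numerical search for initial strengths that actually produce the reversal: because the Euler-based influence function grows and saturates differently from the linear and $p$-Max functions, the parameter values that worked for QE, DFQuAD and SD-DFQuAD need not carry over, so some experimentation is required, cross-checked with the two independent implementations mentioned above to guard against numerical inaccuracy. One must also verify that the chosen $\is(\argd)$ is not close to the point where $\ctrbg{\argd}{\arga} = 0$, since at such a degenerate (sign-changing) point local faithfulness imposes no constraint and the sign-mismatch argument would break down.
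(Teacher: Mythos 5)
Your proposal is correct and takes essentially the same route as the paper: the paper's proof uses the QBAG of Figure~\ref{fig:graph-faith-eb} with contributor $\argd$ and topic $\arga$, computes $\ctrbr{\argd}{\arga} = \ctrbri{\argd}{\arga} \approx 0.0016 > 0$ while $\ctrbg{\argd}{\arga} \approx -0.002 < 0$, and concludes from the sign mismatch exactly as you describe (relying on $\ctrbgempty$ satisfying local faithfulness for the differentiable modular EB semantics). One small correction: the counterexample in that figure is actually a five-argument graph (with an additional node $\arge$ attacked by $\argc$ and attacking $\arga$), not the four-node diamond reused for QE, SD-DFQuAD and DFQuAD — which is consistent with your own caveat that the diamond's parameters need not carry over to the Euler-based influence function.
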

\begin{proof}
Consider the QBAG $\graph = \QBAG$ depicted in Figure~\ref{fig:graph-faith-eb}, with the topic argument $\arga$ and the contributor $\argd$. Given EB semantics $\sigma$, we have $\ctrbr{\argd}{\arga} = \ctrbri{\argd}{\arga}  \approx 0.0016$ and $\ctrbg{\argd}{\arga} \approx -0.002$; local faithfulness must be violated because $\ctrbr{\argd}{\arga} = \ctrbri{\argd}{\arga} > 0$ but $\ctrbg{\argd}{\arga} < 0$.
\end{proof}

\begin{figure}[ht]
\centering
\subfloat[$\graph$.]{
\label{fig:graph-faith-eb}
\begin{tikzpicture}[scale=0.7]
    \node[unanode]    (a)    at(2,0)  {\argnode{\arga}{0.3}{0.4379}};
    \node[unanode]    (b)    at(0,2)  {\argnode{\argb}{0.8}{0.8721}};
    \node[unanode]    (c)    at(2,3)  {\argnode{\argc}{0.1}{0.1692}};
    \node[unanode]    (d)    at(2,5)  {\argnode{\argd}{0.65}{0.65}};
    \node[unanode]    (e)    at(4,2)  {\argnode{\arge}{0.1}{0.1478}};
    
    \path [->, line width=0.2mm]  (b) edge node[left] {+} (a);
    \path [->, line width=0.2mm]  (c) edge node[left, above] {+} (b);
    \path [->, line width=0.2mm]  (c) edge node[left, above] {-} (e);
    \path [->, line width=0.2mm]  (d) edge node[left] {+} (b);
    \path [->, line width=0.2mm]  (d) edge node[left] {+} (c);
    \path [->, line width=0.2mm]  (d) edge node[left, above] {+} (e);
    \path [->, line width=0.2mm]  (e) edge node[left] {-} (a);
\end{tikzpicture}
}
\subfloat[Final strength of $\arga$, given initial strength of $\argd$.]{
\label{fig:plot-faith-eb}
\includegraphics[width=0.6\columnwidth]{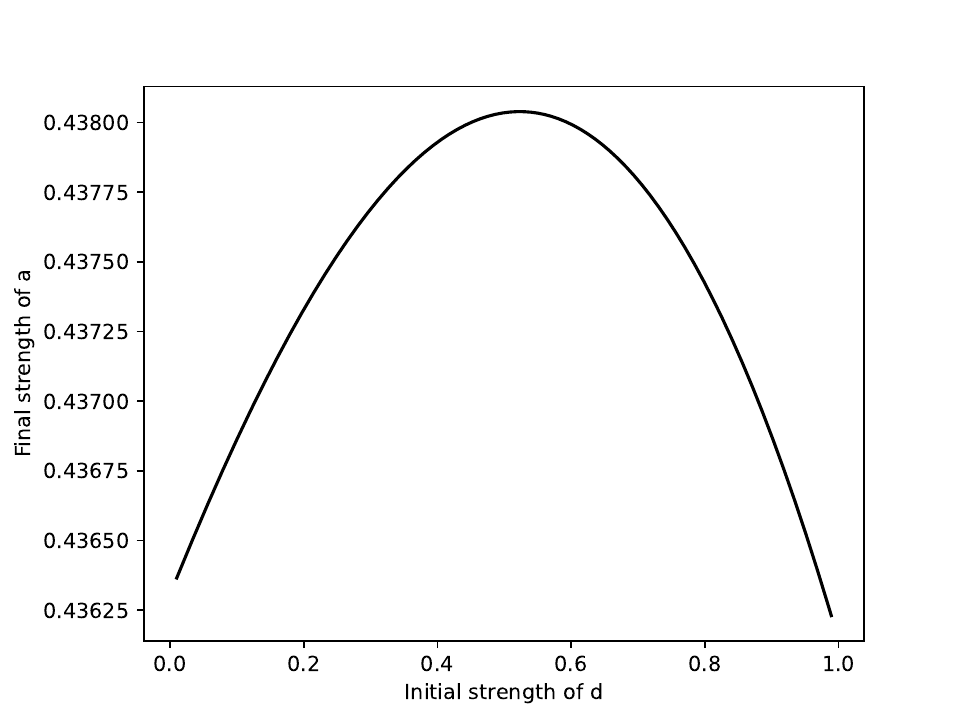}}
\caption{$\ctrbrempty$ and $\ctrbriempty$ violate local faithfulness w.r.t.\ EB semantics.}
\label{fig:faith-eb}
\end{figure}

\begin{proposition}
$\ctrbrempty$ and $\ctrbriempty$ violate local faithfulness w.r.t.\  EBT semantics.
\end{proposition}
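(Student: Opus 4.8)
The plan is to mirror the counterexamples already given for QE, SD-DFQuAD, DFQuAD, and EB semantics: exhibit a small acyclic QBAG $\graph = \QBAG$ with a designated topic argument $\arga$ and a \emph{leaf} contributor $\argd$ (so that $\ctrbri{\argd}{\arga} = \ctrbr{\argd}{\arga}$, since removing incoming relations to a leaf changes nothing), and show that the sign of $\ctrbr{\argd}{\arga}$ is incompatible with what local faithfulness demands. Because $\ctrbgempty$ has been shown to satisfy local faithfulness w.r.t.\ every differentiable modular semantics, and EBT --- Top aggregation composed with the Euler-based influence --- is differentiable on acyclic graphs away from the kinks of the $\max$, it is enough to compute $\ctrbg{\argd}{\arga}$ as well and observe a mismatch: either the two contributions have opposite signs, or $\ctrbg{\argd}{\arga} = 0$ because $\is(\argd)$ lies on a plateau of the strength function created by the $\max$ in Top, so marginal changes are invisible, while $\ctrbr{\argd}{\arga} \neq 0$ because deleting $\argd$ entirely removes one of its downstream paths and flips which parent attains the maximum somewhere on the way to $\arga$. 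In both cases the definition of local faithfulness fails for $\ctrbr{\argd}{\arga}$ and, by the same value, for $\ctrbri{\argd}{\arga}$.

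Concretely, I would reuse a layered graph in the spirit of Figure~\ref{fig:faith-eb}: the topic $\arga$ with one supporter and one attacker, each of these fed by one or two intermediate arguments, and $\argd$ placed above the intermediates so that it reaches $\arga$ only through two paths of opposite polarity. Since Top aggregation reacts only to the currently dominant parent of each node, by tuning the leaves' initial strengths one can arrange that a small perturbation of $\is(\argd)$ does not change the dominant parent anywhere along the way to $\arga$, whereas deleting $\argd$ does. I would then report the numerical values $\ctrbr{\argd}{\arga} = \ctrbri{\argd}{\arga}$ and $\ctrbg{\argd}{\arga}$ for the chosen graph under EBT, cross-checked with the two independent implementations used throughout, and include a plot of $\fs(\arga)$ as a function of $\is(\argd)$ that exhibits the relevant plateau (or the sign reversal): the discrepancy between the sign of the removal-based contribution and the local behaviour of $\fs(\arga)$ around $\is(\argd)$ is exactly a violation of local faithfulness, and hence, by Proposition~\ref{prop_faithfulness_relationships} (contrapositive), also of quantitative local faithfulness.

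The main obstacle is the search for a robust counterexample rather than any subsequent calculation. Because the Top aggregation is only piecewise linear, $\fs(\arga)$ is merely piecewise smooth in $\is(\argd)$, so one must ensure that $\is(\argd)$ sits in the \emph{interior} of a linear piece --- a genuine flat stretch, or a monotone stretch whose direction contradicts the removal sign --- and not at a kink, where the left and right one-sided behaviours might both happen to agree with $\ctrbr{\argd}{\arga}$ and thus fail to witness a violation. Guarding against this, and against numerical inaccuracies near the kinks, is the delicate part; once suitable strengths are fixed, evaluating the EBT update rule along a topological order is routine.
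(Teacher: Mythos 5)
Your approach is essentially the paper's: it exhibits a concrete counterexample (Figure~\ref{fig:graph-faith-ebt}) with a leaf contributor $\argd$ attacking the non-dominant attacker of $\arga$, so that $\ctrbr{\argd}{\arga} = \ctrbri{\argd}{\arga} \approx 0.013 > 0$ while the Top aggregation puts $\is(\argd)$ on a plateau where marginal increases leave $\fs(\arga)$ unchanged --- exactly the mechanism you describe (removal flips which parent attains the maximum, a small perturbation does not). The paper's graph is even simpler than the two-opposite-polarity-paths construction you sketch, needing only two attackers of $\arga$ with one of them weakened by $\argd$, but the underlying idea is the same.
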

\begin{proof}
    Consider the QBAG $\graph = \QBAG$ depicted in Figure~\ref{fig:graph-faith-ebt}, with the topic argument $\arga$ and the contributor $\argd$. Given EBT semantics $\sigma$, we have $\ctrbr{\argd}{\arga} = \ctrbri{\argd}{\arga}  \approx 0.013 > 0$, but as can be seen in
    Figure~\ref{fig:plot-faith-ebt}, increasing
    $\argd$'s initial strength does not increase the topic
    argument's strength. 
\end{proof}

\begin{figure}[ht]
\centering
\subfloat[$\graph$.]{
\label{fig:graph-faith-ebt}
\begin{tikzpicture}[scale=0.8]
    \node[unanode]    (a)    at(2,0)  {\argnode{\arga}{0.5}{0.4245}};
    \node[unanode]    (b)    at(0,2)  {\argnode{\argb}{0.5}{0.5}};
    \node[unanode]    (c)    at(4,2)  {\argnode{\argc}{0.6}{0.4959}};
    \node[unanode]    (d)    at(4,4)  {\argnode{\argd}{0.8}{0.8}};
    
    \path [->, line width=0.2mm]  (b) edge node[left] {-} (a);
    \path [->, line width=0.2mm]  (c) edge node[left] {-} (a);
    \path [->, line width=0.2mm]  (d) edge node[left] {-} (c);
\end{tikzpicture}
}
\subfloat[Final strength of $\arga$, given initial strength of $\argd$.]{
\label{fig:plot-faith-ebt}
\includegraphics[width=0.6\columnwidth]{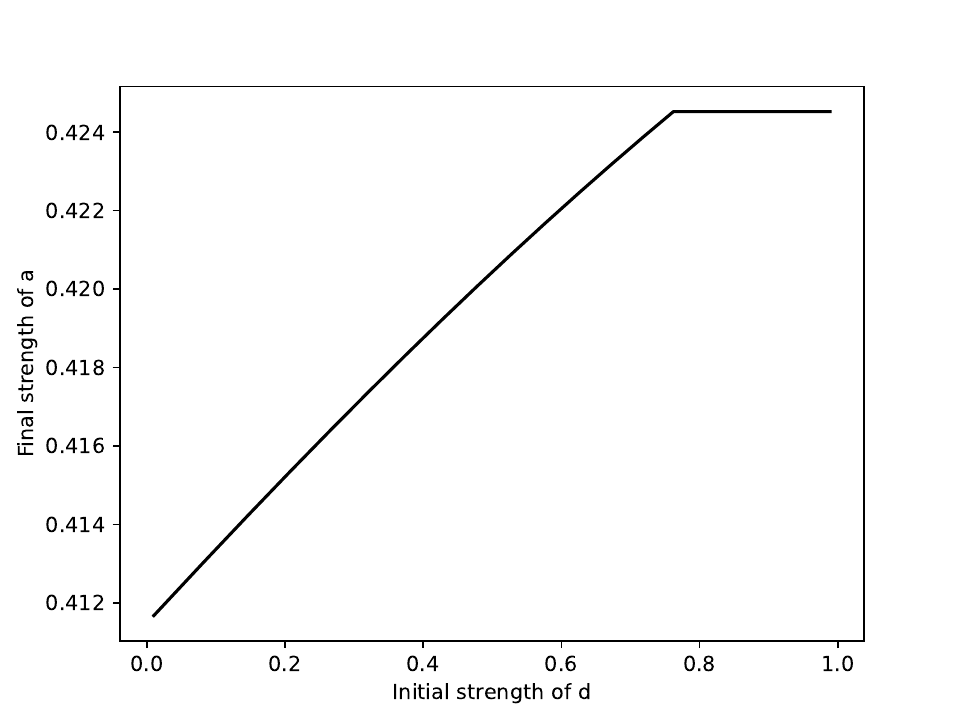}}
\caption{$\ctrbrempty$ and $\ctrbriempty$ violate local faithfulness w.r.t.\ EBT semantics.}
\label{fig:faith-ebt}
\end{figure}

\begin{proposition}
$\ctrbsempty$ violates local faithfulness w.r.t.\ QE semantics.
\end{proposition}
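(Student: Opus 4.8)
The plan is to exhibit an explicit acyclic QBAG $\graph = \QBAG$ with a topic argument $\arga$ and a contributor $\argx$ for which $\ctrbs{\argx}{\arga}$ and $\ctrbg{\argx}{\arga}$ have opposite (non-zero) signs under QE semantics. This suffices: we have already shown that $\ctrbgempty$ satisfies local faithfulness w.r.t.\ every differentiable modular semantics, and QE is such a semantics, so the sign of $\ctrbg{\argx}{\arga}$ coincides with the direction in which $\fs_\graph(\arga)$ moves as $\is(\argx)$ is perturbed in a small neighbourhood of its current value. If $\ctrbsempty$ also satisfied local faithfulness, the sign of $\ctrbs{\argx}{\arga}$ would have to agree with that same direction, contradicting the opposite sign. (A saddle-like point where $\ctrbg{\argx}{\arga} = 0$ but the strength of $\arga$ is non-monotonic in $\is(\argx)$, combined with $\ctrbs{\argx}{\arga} \neq 0$, would work equally well.)

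To find such a graph I would start from the family used for the removal-based QE counterexample (Figure~\ref{fig:graph-faith-qe}), or a minor enlargement of it with one extra argument on a path from $\argx$ to $\arga$ to introduce enough curvature, and tune the initial strengths so that $\epsilon \mapsto \fs_{\graph\downarrow_{\is(\argx)\leftarrow\epsilon}}(\arga)$ is visibly non-monotonic near $\is(\argx)$, while the Shapley average of removal effects over the induced subgraphs still lands on the ``wrong'' side of $0$. I would then plot this strength curve, as done for the other counterexamples, to make the saddle/plateau phenomenon transparent.

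The remaining work is purely computational. Evaluating $\ctrbs{\argx}{\arga}$ from Equation~\ref{eq:shap} means computing $\fs_{\graph\downarrow_{\Args\setminus X}}(\arga)$ for each of the at most $2^{|\Args|-2}$ subsets $X \subseteq \Args \setminus \{\argx,\arga\}$, each by a single forward pass along a topological ordering using the Sum aggregation and the 2-Max(1) influence function, and then forming the prescribed weighted sum; $\ctrbg{\argx}{\arga}$ is obtained by differentiating the explicit composition $f_{\arga}$ via the chain rule. The main obstacle is robustness rather than difficulty: since $\ctrbs{\argx}{\arga}$ is a small weighted average of strength differences, one has to verify that it is bounded away from $0$ and genuinely of opposite sign to $\ctrbg{\argx}{\arga}$ rather than a rounding artefact, which — as elsewhere in the paper — I would confirm by recomputing the example with the two independent implementations.
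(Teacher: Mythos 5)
Your approach is exactly the paper's: it argues that since $\ctrbgempty$ satisfies local faithfulness under QE, a sign disagreement between $\ctrbsempty$ and $\ctrbgempty$ suffices, and its counterexample (Figure~\ref{fig:graph-faith-sqe}) is precisely the topology of Figure~\ref{fig:graph-faith-qe} with retuned initial strengths, yielding $\ctrbs{\argd}{\arga} \approx -0.0016$ versus $\ctrbg{\argd}{\arga} \approx 0.0302$. The only thing missing from your proposal is the concrete instance itself, but the construction strategy, the sign-disagreement argument, and the double-implementation robustness check all match the paper.
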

\begin{proof}
    Consider the QBAG $\graph = \QBAG$ depicted in Figure~\ref{fig:graph-faith-sqe}, with the topic argument $\arga$ and the contributor $\argd$. Given QE semantics $\sigma$, we have $\ctrbs{\argd}{\arga} = \approx -0.0016$ and $\ctrbg{\argd}{\arga} \approx 0.0302$; local faithfulness must be violated because $\ctrbs{\argd}{\arga} < 0$ but $\ctrbg{\argd}{\arga} > 0$. 
\end{proof}

\begin{figure}[ht]
\centering
\subfloat[$\graph$.]{
\label{fig:graph-faith-sqe}
\begin{tikzpicture}[scale=0.8]
    \node[unanode]    (a)    at(2,0)  {\argnode{\arga}{1}{0.9289}};
    \node[unanode]    (b)    at(0,2)  {\argnode{\argb}{0.65}{0.3602}};
    \node[unanode]    (c)    at(4,2)  {\argnode{\argc}{0.68}{0.6394}};
    \node[unanode]    (d)    at(2,4)  {\argnode{\argd}{0.26}{0.26}};
    
    \path [->, line width=0.2mm]  (b) edge node[left] {+} (a);
    \path [->, line width=0.2mm]  (c) edge node[left] {-} (a);
    \path [->, line width=0.2mm]  (c) edge node[left, above] {-} (b);
    \path [->, line width=0.2mm]  (d) edge node[left] {-} (b);
    \path [->, line width=0.2mm]  (d) edge node[left] {-} (c);
\end{tikzpicture}
}
\subfloat[Final strength of $\arga$, given initial strength of $\argd$.]{
\label{fig:plot-faith-sqe}
\includegraphics[width=0.6\columnwidth]{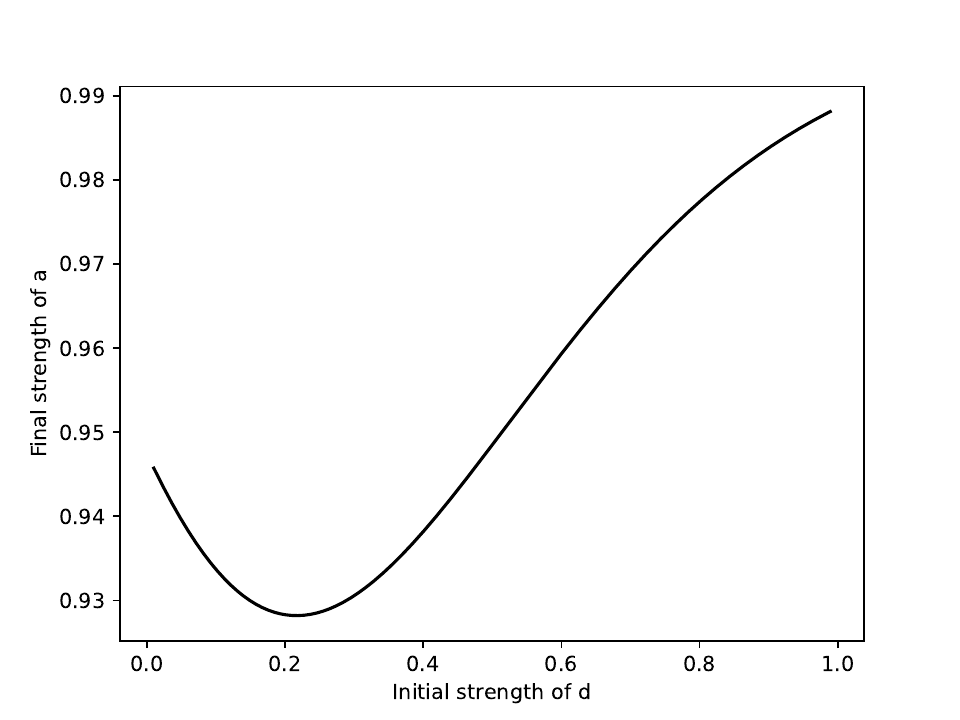}}
\caption{$\ctrbsempty$ violates local faithfulness w.r.t.\ QE semantics.}
\label{fig:faith-sqe}
\end{figure}

\begin{proposition}
$\ctrbsempty$ violates local faithfulness w.r.t.\ DFQuAD semantics.
\end{proposition}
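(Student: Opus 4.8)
The plan is to follow the template of the preceding propositions and settle the claim with a single counterexample QBAG. Since the gradient-based contribution function $\ctrbgempty$ satisfies local faithfulness w.r.t.\ every differentiable modular semantics, and DFQuAD is a differentiable modular semantics, it suffices to exhibit a QBAG $\graph = \QBAG$ together with a topic argument $\arga$ and a contributor $\argd$ for which the sign of $\ctrbs{\argd}{\arga}$ disagrees with the sign of $\ctrbg{\argd}{\arga}$: any such mismatch immediately forces a violation of local faithfulness for $\ctrbsempty$, because the gradient already records the true local direction of influence correctly.

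Concretely, I would reuse (or lightly perturb) the ``diamond'' gadget already employed for the removal-based DFQuAD counterexample in Figure~\ref{fig:graph-faith-df}: a topic argument $\arga$ supported by $\argb$ and attacked by $\argc$, with the contributor $\argd$ attacking both $\argb$ and $\argc$. The point of this shape is that $\argd$ reaches $\arga$ along two opposing routes, so that the map $\epsilon \mapsto \fs_{\graph\downarrow_{\is(\argd)\leftarrow\epsilon}}(\arga)$ can be made to bend back on itself inside $[0,1]$, producing a local extremum or plateau near $\is(\argd)$. The computational core is then finite and mechanical: (i) compute $\fs_\graph(\arga)$ by forward propagation along the topological order $\argd,\argb,\argc,\arga$ using Product aggregation $\alpha^{\Pi}_{v}$ and Linear$(1)$ influence $\iota^{l}_{w}$; (ii) enumerate the (few) restrictions $\graph\downarrow_{\Args\setminus X}$ appearing in \eqref{eq:shap} and assemble $\ctrbs{\argd}{\arga}$ from them; (iii) compute $\ctrbg{\argd}{\arga}$ by differentiating the explicit representation $f_{\arga} = \iota^{l}_{\is(\arga)}(\alpha^{\Pi}_{v}(\cdots))$ via the chain rule at the point $(\is(\argx_1),\dots,\is(\argx_K))$, taking care of the kink of $\iota^{l}_{w}$ at aggregate value $0$ and choosing the example so the evaluation point lies strictly inside one linear piece.

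I expect the real effort to lie in the search for suitable initial strengths rather than in the verification: one must tune $\is(\arga),\is(\argb),\is(\argc),\is(\argd)$ so that (a) the gradient at $\is(\argd)$ is non-zero with a definite sign, while (b) the weighted average of counterfactual removals underlying the Shapley value comes out with the opposite sign — i.e.\ the ``global'' removal-style behaviour of the strength curve must point one way while its local slope points the other. Once the values are pinned down, the remaining arithmetic is routine, and as in the other propositions I would cross-check it with the two independent implementations cited earlier. Finally, I would note that, by Proposition~\ref{prop_faithfulness_relationships} (quantitative local faithfulness implies local faithfulness, hence its contrapositive), the same counterexample shows that $\ctrbsempty$ also violates quantitative local faithfulness w.r.t.\ DFQuAD.
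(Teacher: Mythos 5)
Your proposal follows essentially the same route as the paper: a single concrete QBAG in which the sign of $\ctrbs{\argx}{\arga}$ disagrees with the actual local behaviour of $\epsilon \mapsto \fs_{\graph\downarrow_{\is(\argx)\leftarrow\epsilon}}(\arga)$. The paper's own proof simply reuses the introductory QBAG of Figure~\ref{fig:graph-intro} with contributor $\arge$, where $\ctrbs{\arge}{\arga} \approx -0.0833 < 0$ yet $\is(\arge)=0.5$ sits at a local minimum of the strength curve, so increasing $\is(\arge)$ \emph{increases} $\fs(\arga)$; your diamond gadget from Figure~\ref{fig:graph-faith-df} also works (there $\ctrbs{\argd}{\arga}>0$ while $\fs(\arga)$ plateaus at $1$). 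One caveat: your opening claim that a sign mismatch with $\ctrbgempty$ ``immediately forces a violation'' is only sound when $\ctrbg{\argx}{\arga}$ is strictly nonzero; at a plateau or extremum the gradient is $0$, local faithfulness imposes no constraint on $\ctrbgempty$ there, and one must (as the paper does, and as you do implicitly in your tuning conditions) inspect the strength curve directly rather than lean on the gradient's satisfaction of the principle.
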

\begin{proof}
    Consider the QBAG $\graph = \QBAG$ depicted in Figure~\ref{fig:graph-intro}, with the topic argument $\arga$ and the contributor $\arge$. Given DFQuAD semantics $\sigma$, we have $\ctrbs{\arge}{\arga} \approx -0.0833 <0$,
    but as can be seen in
    Figure~\ref{fig:plot-intro}, increasing
    $\arge$'s initial strength does actually increase the topic
    argument's strength. 
\end{proof}

\begin{proposition}
$\ctrbsempty$ violates local faithfulness w.r.t.\ SD-DFQuAD semantics.
\end{proposition}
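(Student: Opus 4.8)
The plan is to exhibit a single acyclic QBAG $\graph = \QBAG$ together with a topic argument $\arga$ and a contributor $\argd$ such that, under SD-DFQuAD semantics, the sign of $\ctrbs{\argd}{\arga}$ disagrees with the direction in which $\fs_\graph(\arga)$ moves as $\is(\argd)$ is perturbed in a small neighbourhood. Following the structure of the preceding counterexamples (cf.\ Figures~\ref{fig:faith-qe} and~\ref{fig:faith-sd}), I would reuse the same four-argument ``diamond'' topology: a topic argument $\arga$ with a supporter $\argb$ and an attacker $\argc$, where $\argc$ additionally attacks $\argb$, and a further argument $\argd$ attacking both $\argb$ and $\argc$. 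This topology is already known (from the plots in this section) to induce non-monotone dependence of $\fs(\arga)$ on $\is(\argd)$, which is precisely the phenomenon that creates a gap between an averaged quantity like the Shapley value and the local behaviour of the semantics.

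The key steps, in order, are: (1) fix the SD-DFQuAD update rule --- product aggregation $\alpha^{\Pi}_{v}$ composed with the 1-Max($1$) influence function --- and, by forward propagation along the topological order $\argd, \argc, \argb, \arga$, write down the explicit closed form of $\fs_\graph(\arga)$ as a function of $\is(\argd)$; (2) tune the initial strengths of $\arga, \argb, \argc, \argd$ so that, at the chosen value of $\is(\argd)$, the derivative $\frac{\partial \fs(\arga)}{\partial \is(\argd)}$ is strictly positive (equivalently, $\ctrbg{\argd}{\arga} > 0$), while the Shapley average $\ctrbs{\argd}{\arga}$, obtained by summing the weighted removal-marginals over all $2^{2} = 4$ subsets $X \subseteq \{\argb, \argc\}$, comes out strictly negative; (3) invoke the already-established fact that $\ctrbgempty$ satisfies local faithfulness w.r.t.\ every differentiable modular semantics, together with the differentiability of SD-DFQuAD in the relevant region, so that $\ctrbg{\argd}{\arga} > 0$ certifies that $\fs_\graph(\arga)$ is strictly increasing in $\is(\argd)$ near the evaluation point; (4) conclude that since $\ctrbs{\argd}{\arga} < 0$ but the true local effect is positive, the first bullet of local faithfulness (Principle~\ref{principle:localCorrect}) fails, so $\ctrbsempty$ violates the principle. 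As in the rest of Section~\ref{sec:analysis}, I would report the concrete numerical values (ideally displaying the four subset terms), include a strength plot of $\fs(\arga)$ against $\is(\argd)$ to make the sign mismatch visually evident, and cross-check with the two independent implementations.

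The main obstacle is the search for suitable initial strengths. The Shapley value is an average of removal-marginals over subgraphs, and in several of those subgraphs the removal of $\argd$ has the ``expected'' monotone effect; one must therefore arrange that the subgraphs in which the effect reverses sign dominate the weighted sum, all while keeping the local derivative at the evaluation point pointing the other way. Moreover, the 1-Max influence function creates plateaus (its output is flat once the aggregate leaves the active region), so the perturbation analysis must be done at a point where $\fs(\arga)$ genuinely varies with $\is(\argd)$, ensuring the gradient is nonzero and not merely on a plateau. This delicacy is why a numerically located example, rather than a clean symbolic one, is the natural route here, mirroring the other counterexamples in this section.
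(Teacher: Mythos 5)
Your overall strategy is the paper's: reuse the diamond topology of Figure~\ref{fig:graph-faith-sd} (topic $\arga$ supported by $\argb$, attacked by $\argc$, with $\argc$ attacking $\argb$ and $\argd$ attacking both $\argb$ and $\argc$) and exhibit a mismatch between the sign of $\ctrbs{\argd}{\arga}$ and the local behaviour of $\fs_\graph(\arga)$ in $\is(\argd)$. However, for a violation claim of this kind the witness \emph{is} the proof, and you never produce one: step (2) of your plan is ``tune the initial strengths so that $\ctrbg{\argd}{\arga}>0$ while $\ctrbs{\argd}{\arga}<0$,'' and you give no values for which this actually happens under SD-DFQuAD, nor an argument that such values exist. (The paper does find exactly such a strict sign flip for QE, but for SD-DFQuAD it does not even attempt one.) As it stands, the proposal is a search plan, not a proof.

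More substantively, you have inverted the role of the plateau. You list the flatness of the 1-Max($1$) influence function as an obstacle to be avoided (``the perturbation analysis must be done at a point where $\fs(\arga)$ genuinely varies with $\is(\argd)$''), but the paper's counterexample uses the plateau as the very mechanism of the violation: with $\is(\arga)=1$ the topic argument saturates at $\fs_\graph(\arga)=1$, so $\fs_{\graph_\epsilon}(\arga)$ is locally constant in $\epsilon$, while $\ctrbs{\argd}{\arga}\approx 0.0636>0$. The positive clause of Principle~\ref{principle:localCorrect} demands the \emph{strict} inequality $\fs_\graph(\arga)<\fs_{\graph_\epsilon}(\arga)$ for $\epsilon>\is(\argd)$, which already fails on a flat region --- no sign reversal of the derivative is needed. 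By ruling out plateaus you discard the easiest (and, in the paper, the actual) route to the counterexample, and commit yourself to a strictly harder numerical search whose feasibility you have not established. Note also that your step (3) certification via $\ctrbgempty$ would need care here: SD-DFQuAD uses the $1$-Max influence function, whose pointwise differentiability is exactly what is in question at the boundary of the active region, so ``$\ctrbg{\argd}{\arga}>0$ implies a strictly positive local effect'' should be checked at the specific evaluation point rather than invoked wholesale.
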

\begin{proof}
    Consider the QBAG $\graph = \QBAG$ depicted in Figure~\ref{fig:graph-faith-sd}, with the topic argument $\arga$ and the contributor $\argd$. Given SD-DFQuAD semantics $\sigma$, we have $\ctrbs{\argd}{\arga} \approx 0.0636 > 0$, but as can be seen in
    Figure~\ref{fig:plot-faith-sd}, increasing
    $\argd$'s initial strength does not increase the topic
    argument's strength. 
\end{proof}

\begin{proposition}
$\ctrbsempty$ violates local faithfulness w.r.t.\ EB semantics.
\end{proposition}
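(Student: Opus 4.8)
The plan is to exhibit a single small acyclic QBAG as a counterexample, mirroring the structure of the counterexamples already used for $\ctrbsempty$ under QE, SD-DFQuAD, and DFQuAD semantics. First I would fix a topic argument $\arga$ and a contributor $\argd$, and engineer a graph in which $\argd$ influences $\arga$ only through several intermediate paths, so that the aggregated, ``removal-averaged'' effect captured by the Shapley value can disagree with the purely local effect at the actual initial strength of $\argd$. Concretely, I would look for a graph on the order of four or five arguments — with $\argd$ attacking/supporting two intermediaries that in turn attack/support $\arga$ — and search over initial strengths until the curve mapping $\is(\argd)$ to $\fs_{\graph}(\arga)$ is non-monotone, with a local slope at $\is(\argd)$ of one sign while the subcoalition-averaged behaviour is of the opposite sign.

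Once such a graph is found, the second step is to compute, under EB semantics, both $\ctrbs{\argd}{\arga}$ via Equation~\ref{eq:shap} (enumerating the subsets $X \subseteq \Args \setminus \{\argd, \arga\}$ and the forward-propagation final strengths on each restriction $\graph\downarrow_{\Args \setminus X}$ and $\graph\downarrow_{\Args \setminus (X \cup \{\argd\})}$) and $\ctrbg{\argd}{\arga}$ via Equation~\ref{eq:gradient} (the partial derivative of the composed influence/aggregation representation $f_\arga$ at the vector of initial strengths). Since EB uses the Sum aggregation and the differentiable Euler-based influence function, $\ctrbgempty$ is well-defined and, by the proposition already established, satisfies local faithfulness. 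It therefore suffices to verify that $\ctrbs{\argd}{\arga}$ and $\ctrbg{\argd}{\arga}$ have opposite signs: if, say, $\ctrbs{\argd}{\arga} < 0$ while $\ctrbg{\argd}{\arga} > 0$, then no $\delta$-environment can witness the negativity condition of local faithfulness for $\ctrbsempty$, since in that environment $\fs_{\graph\downarrow_{\is(\argd) \leftarrow \epsilon}}(\arga)$ is strictly increasing in $\epsilon$ near $\is(\argd)$ — exactly what local faithfulness of a negative-score contributor forbids. Alternatively, one may read the sign of the local effect directly off the strength plot of $\arga$ against $\is(\argd)$, as was done in the DFQuAD and SD-DFQuAD cases.

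I expect the main obstacle to be the counterexample construction itself: the Euler-based influence function is smoother and ``gentler'' than the Linear or $1$-Max influence functions, so the non-monotonicities it induces tend to be subtle, and the regime where the Shapley average and the local gradient disagree in sign is numerically delicate. To make the example robust I would, as the paper does elsewhere, recompute all final strengths, Shapley contributions, and gradients with both independent implementations and confirm that the sign disagreement is not a floating-point artefact, seeking a comfortable margin rather than values that are only marginally non-zero.
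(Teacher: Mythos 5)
Your strategy is exactly the paper's: exhibit a small acyclic QBAG in which $\ctrbs{\argd}{\arga}$ and $\ctrbg{\argd}{\arga}$ have opposite signs, and conclude from the already-established local faithfulness of $\ctrbgempty$ that $\ctrbsempty$ cannot be locally faithful there. Your justification for why a sign disagreement suffices is also correct and is the same reduction the paper states explicitly before its sequence of counterexamples. The anticipated shape of the witness is right too: the paper's example (Figure~\ref{fig:graph-faith-seb}) is a five-argument graph in which the contributor $\argd$ reaches the topic $\arga$ only through intermediaries ($\argd$ supports $\argb$, $\argc$ and $\arge$; $\argb$ supports $\arga$ while $\arge$ attacks $\arga$ and $\argc$ both supports $\argb$ and attacks $\arge$), with initial strengths tuned so that $\ctrbs{\argd}{\arga} \approx 0.0007 > 0$ while $\ctrbg{\argd}{\arga} \approx -0.0037 < 0$.

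The gap is that you never actually produce the witness. For an existence-of-counterexample claim, the concrete graph together with the two computed contribution values \emph{is} the proof; a description of how one would search for such a graph, however accurate, does not establish that the search succeeds. You acknowledge this yourself when you identify the construction as the main obstacle, and your worry is not idle: under the Euler-based influence function the sign disagreement is numerically small (the Shapley value in the paper's example is on the order of $10^{-3}$), so one cannot wave at the general phenomenon and assert that EB behaves like DFQuAD or QE. To close the proof you must either reproduce the paper's graph (or an equivalent one) with its initial strengths and report the two computed values, or give an analytic argument that such a configuration must exist for EB specifically --- the latter being considerably harder than the former.
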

\begin{proof}
Consider the QBAG $\graph = \QBAG$ depicted in Figure~\ref{fig:graph-faith-seb}, with the topic argument $\arga$ and the contributor $\argd$. Given EB semantics $\sigma$, we have $\ctrbs{\argd}{\arga}  \approx 0.0007$ and $\ctrbg{\argd}{\arga} \approx -0.0037$; local faithfulness must be violated because $\ctrbs{\argd}{\arga} > 0$ but $\ctrbg{\argd}{\arga} < 0$.
\end{proof}

\begin{figure}[ht]
\centering
\subfloat[$\graph$.]{
\label{fig:graph-faith-seb}
\begin{tikzpicture}[scale=0.7]
    \node[unanode]    (a)    at(2,0)  {\argnode{\arga}{0.3}{0.4376}};
    \node[unanode]    (b)    at(0,2)  {\argnode{\argb}{0.8}{0.8813}};
    \node[unanode]    (c)    at(2,3)  {\argnode{\argc}{0.1}{0.183}};
    \node[unanode]    (d)    at(2,5)  {\argnode{\argd}{0.75}{0.75}};
    \node[unanode]    (e)    at(4,2)  {\argnode{\arge}{0.1}{0.1583}};
    
    \path [->, line width=0.2mm]  (b) edge node[left] {+} (a);
    \path [->, line width=0.2mm]  (c) edge node[left, above] {+} (b);
    \path [->, line width=0.2mm]  (c) edge node[left, above] {-} (e);
    \path [->, line width=0.2mm]  (d) edge node[left] {+} (b);
    \path [->, line width=0.2mm]  (d) edge node[left] {+} (c);
    \path [->, line width=0.2mm]  (d) edge node[left, above] {+} (e);
    \path [->, line width=0.2mm]  (e) edge node[left] {-} (a);
\end{tikzpicture}
}
\subfloat[Final strength of $\arga$, given initial strength of $\argd$.]{
\label{fig:plot-faith-seb}
\includegraphics[width=0.6\columnwidth]{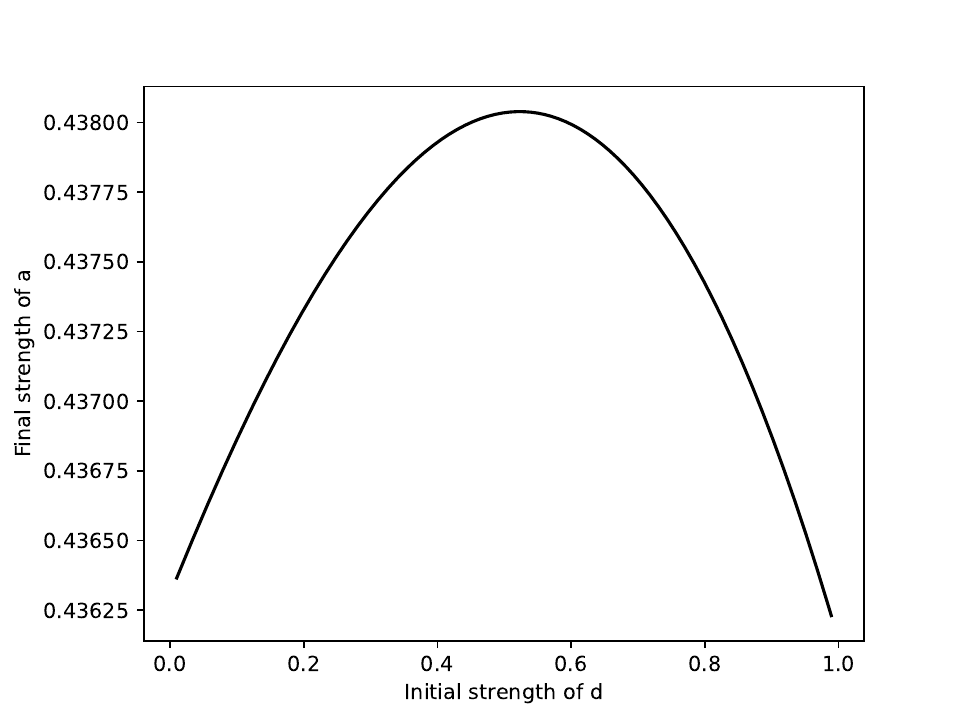}}
\caption{$\ctrbsempty$ violates local faithfulness w.r.t.\ EB semantics.}
\label{fig:faith-seb}
\end{figure}

\begin{proposition}
$\ctrbsempty$ violates local faithfulness w.r.t.\ EBT semantics.
\end{proposition}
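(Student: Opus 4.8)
The plan is to follow the same template as the preceding propositions on $\ctrbsempty$: exhibit a single concrete acyclic QBAG under EBT semantics on which the Shapley-based contribution of some argument $\argd$ to a topic argument $\arga$ carries a sign opposite to the true local effect of $\is(\argd)$ on $\fs_{\graph}(\arga)$. Since $\ctrbgempty$ satisfies local faithfulness w.r.t.\ every differentiable modular semantics (as established above), and EBT --- Top aggregation composed with the Euler-based influence --- is differentiable away from the kinks of $\max$, the gradient-based contribution reliably certifies the sign of the local change. Hence it suffices to build $\graph$ so that $\ctrbs{\argd}{\arga}$ and $\ctrbg{\argd}{\arga}$ have different signs; the mismatch then shows that $\ctrbsempty$ violates local faithfulness, and therefore --- by the contrapositive of Proposition~\ref{prop_faithfulness_relationships} --- also quantitative local faithfulness, w.r.t.\ EBT.

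Concretely, I would take a small layered QBAG over a handful of arguments, structurally similar to the one in Figure~\ref{fig:graph-faith-ebt}, with initial strengths tuned so that at the original profile the Top aggregate of $\arga$ is attained by one specific parent --- fixing the local direction in which moving $\is(\argd)$ drags $\fs_{\graph}(\arga)$ through that dominant parent --- while in enough of the restrictions $\graph\downarrow_{\Args \setminus X}$ appearing in Equation~\ref{eq:shap} a \emph{different} parent attains the maximum, so that the weighted average of removal deltas defining $\ctrbs{\argd}{\arga}$ comes out with the opposite sign. Verifying the claim is then a finite computation: enumerate the subsets $X \subseteq \Args \setminus \{\argd, \arga\}$ to evaluate $\ctrbs{\argd}{\arga}$, evaluate the single partial derivative giving $\ctrbg{\argd}{\arga}$, observe the sign clash, report approximate values, and plot $\fs_{\graph}(\arga)$ against $\is(\argd)$ to make the responsible plateau/saddle visually explicit. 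As elsewhere in this section, the numbers would be cross-checked with the two independent implementations mentioned above.

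The main obstacle is the search for the witness graph rather than any step of the verification. Because EBT aggregates with the $\max$-based $\alpha^{max}_{v}$, the map $\is(\argd) \mapsto \fs_{\graph}(\arga)$ is piecewise smooth with breakpoints wherever the identity of the dominant attacker or supporter changes; one must place the original initial strengths on a branch whose slope disagrees with the averaged secant behaviour that the coalitional removals in the Shapley sum probe. Once such a configuration is located, the conclusion is immediate: the sign of $\ctrbg{\argd}{\arga}$ pins down the actual local direction, it contradicts the sign predicted by $\ctrbs{\argd}{\arga}$, and so $\ctrbsempty$ cannot satisfy local faithfulness w.r.t.\ EBT semantics.
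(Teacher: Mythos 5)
Your proposal matches the paper's proof in essence: the paper exhibits exactly such a witness (Figure~\ref{fig:graph-faith-sebt}, three arguments where $\argb$ and $\argc$ attack $\arga$ and $\is(\argc)=1$ saturates the Top aggregate), for which $\ctrbs{\argb}{\arga}\approx-0.0377<0$ while perturbing $\is(\argb)$ leaves $\fs_\graph(\arga)$ unchanged. The only caveat is that a zero (or opposite-sign) gradient alone does not by itself certify a violation of local faithfulness; you need the genuine plateau that your max-saturation construction produces, which is precisely what the paper relies on when it notes that changing $\argb$'s initial strength does not affect the topic argument at all.
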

\begin{proof}
Consider the QBAG $\graph = \QBAG$ depicted in Figure~\ref{fig:graph-faith-sebt}, with the topic argument $\arga$ and the contributor $\argb$. Given EBT semantics $\sigma$, we have $\ctrbs{\argb}{\arga}  \approx -0.0377 < 0$,
but changing $\argb$'s initial strength actually does not
affect the strength of the topic argument.
\end{proof}

\begin{figure}[ht]
\centering
\subfloat[$\graph$.]{
\label{fig:graph-faith-sebt}
\begin{tikzpicture}[scale=0.8]
    \node[unanode]    (a)    at(2,0)  {\argnode{\arga}{0.5}{0.3665}};
    \node[unanode]    (b)    at(0,2)  {\argnode{\argb}{0.5}{0.5}};
    \node[unanode]    (c)    at(4,2)  {\argnode{\argc}{1}{1}};
    \path [->, line width=0.2mm]  (b) edge node[left] {-} (a);
    \path [->, line width=0.2mm]  (c) edge node[left, below] {-} (a);
\end{tikzpicture}
}
\subfloat[Final strength of $\arga$, given initial strength of $\argb$.]{
\label{fig:plot-faith-sebt}
\includegraphics[width=0.6\columnwidth]{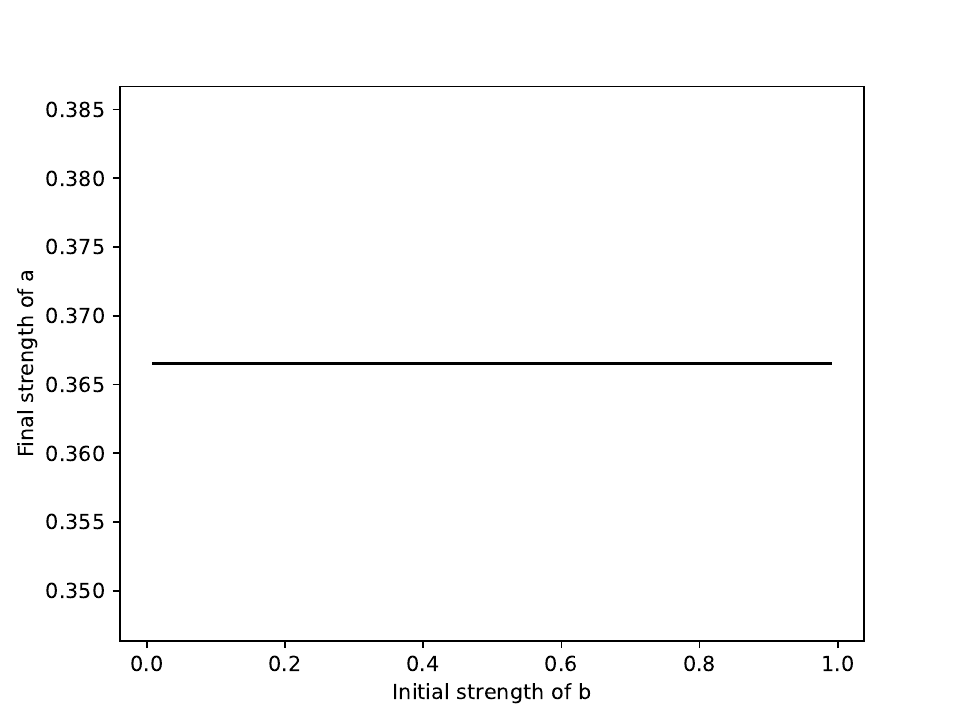}}
\caption{$\ctrbsempty$ violates local faithfulness w.r.t.\ EBT semantics.}
\label{fig:faith-sebt}
\end{figure}

\subsection{(Quantitative) Counterfactuality}
Let us first show that the contribution function $\ctrbrempty$ satisfies quantitative counterfactuality w.r.t.\ all argumentation semantics.
\begin{proposition}
\label{prop:removal-counterfactual}
$\ctrbrempty$ satisfies quantitative counterfactuality w.r.t.\ all argumentation semantics $\fs$.
\end{proposition}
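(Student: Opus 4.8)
The plan is essentially immediate: the defining equation of the removal-based contribution function, Equation~\ref{eq:delta}, is syntactically identical to the equality demanded by the quantitative counterfactuality principle (Principle~\ref{principle:q-counterfactual}). So the proof is a one-line unfolding of definitions. Concretely, I would fix an arbitrary acyclic QBAG $\graph = \QBAG$, a topic argument $\arga \in \Args$ and a contributor $\argx \in \Args$, and observe that, by definition, $\ctrbr{\argx}{\arga} = \fs_\graph(\arga) - \fs_{\graph\downarrow_{\Args \setminus \{\argx\}}}(\arga)$, which is exactly the condition $\ctrb{\argx}{\arga} = \fs_\graph(\arga) - \fs_{\graph \downarrow_{\Args \setminus \{\argx\}}}(\arga)$ required by the principle.

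Note that no property whatsoever of the gradual semantics $\fs$ is used --- the equality is built into the definition of $\ctrbrempty$ --- which is why the claim holds uniformly for QE, DFQuAD, SD-DFQuAD, EB, EBT, and indeed any (acyclic) gradual semantics. The only point requiring a word of care is the self-contribution case $\argx = \arga$: there $\ctrbr{\arga}{\arga}$ is undefined, but so is the right-hand side, since $\fs_{\graph\downarrow_{\Args \setminus \{\arga\}}}(\arga)$ is undefined when $\arga$ is not among the arguments of $\graph\downarrow_{\Args \setminus \{\arga\}}$; hence the two partial expressions agree on their common domain $\Args \setminus \{\arga\}$, which is all the principle asks of a partial contribution function.

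There is no real obstacle here --- the statement is true essentially by inspection of the definitions. If useful downstream, I would also remark that combining this proposition with Proposition~\ref{prop:qcounterfactual} (quantitative counterfactuality implies counterfactuality) immediately yields that $\ctrbrempty$ satisfies (non-quantitative) counterfactuality with respect to all argumentation semantics as well, matching the corresponding row of Table~\ref{table:principle-overview}.
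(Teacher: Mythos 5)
Your proof is correct and takes exactly the same route as the paper's: both simply unfold the definition of $\ctrbrempty$ (Equation~\ref{eq:delta}) and observe it coincides verbatim with the equality required by quantitative counterfactuality, independently of the semantics. Your additional remarks on the undefined self-contribution case and the corollary for (non-quantitative) counterfactuality are consistent with how the paper handles these points.
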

\begin{proof}
    By definition of $\ctrbrempty$, it holds for every $\graph = \QBAG$, for every arguments $\arga, \argx \in \Args$ that $\ctrbr{\argx}{\arga} =  \fs_\graph(\arga) - \fs_{\graph\downarrow_{\Args \setminus \{\argx\}}}(\arga)$. Hence, quantitative counterfactuality (Definition~\ref{principle:q-counterfactual}) is trivially satisfied.
\end{proof}
%
%
%
\begin{corollary}
$\ctrbrempty$ satisfies counterfactuality w.r.t.\ all argumentation semantics $\fs$.
\end{corollary}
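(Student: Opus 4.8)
The plan is to derive this corollary immediately from the two facts already established just above: Proposition~\ref{prop:removal-counterfactual}, which states that $\ctrbrempty$ satisfies \emph{quantitative} counterfactuality with respect to every argumentation semantics $\fs$, and Proposition~\ref{prop:qcounterfactual}, which states that any contribution function satisfying quantitative counterfactuality also satisfies (plain) counterfactuality. Chaining these two gives the claim directly, so there is essentially no real work to do and no obstacle to speak of; the corollary is a one-line consequence.

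Concretely, I would proceed as follows. First, fix an arbitrary argumentation semantics $\fs$, an arbitrary acyclic QBAG $\graph = \QBAG$, and an arbitrary topic argument $\arga \in \Args$, as in the statement of the counterfactuality principle. By Proposition~\ref{prop:removal-counterfactual}, $\ctrbrempty$ satisfies quantitative counterfactuality w.r.t.\ $\fs$. Then invoke Proposition~\ref{prop:qcounterfactual}: since $\ctrbrempty$ satisfies quantitative counterfactuality, it satisfies counterfactuality. As $\fs$ was arbitrary, $\ctrbrempty$ satisfies counterfactuality w.r.t.\ all argumentation semantics, which is exactly the corollary.

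If one prefers a self-contained argument instead of citing Proposition~\ref{prop:qcounterfactual}, one can unfold it inline: by the definition of $\ctrbrempty$ (Equation~\ref{eq:delta}) we have $\ctrbr{\argx}{\arga} = \fs_\graph(\arga) - \fs_{\graph\downarrow_{\Args \setminus \{\argx\}}}(\arga)$ for every $\argx \in \Args$, so the sign of $\ctrbr{\argx}{\arga}$ coincides with the sign of $\fs_\graph(\arga) - \fs_{\graph\downarrow_{\Args \setminus \{\argx\}}}(\arga)$; hence each of the three implications in Principle~\ref{principle:counterfactual} ($<0$, $=0$, $>0$) holds trivially. Either way, the result is a routine corollary and I would simply state it as following from Propositions~\ref{prop:removal-counterfactual} and~\ref{prop:qcounterfactual}.
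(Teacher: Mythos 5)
Your proposal is correct and follows exactly the paper's own route: the paper derives the corollary by chaining Proposition~\ref{prop:removal-counterfactual} (quantitative counterfactuality holds for $\ctrbrempty$) with Proposition~\ref{prop:qcounterfactual} (quantitative counterfactuality implies counterfactuality). Your optional inline unfolding is also fine but adds nothing beyond the two cited propositions.
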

\begin{proof}
    Counterfactuality is satisfied because quantitative counterfactuality is satisfied (Proposition~\ref{prop:removal-counterfactual}) and the latter implies the former (Proposition~\ref{prop:qcounterfactual}).
\end{proof}
All other contribution functions can violate counterfactuality and quantitative counterfactuality w.r.t.\ all of the surveyed semantics.
\begin{proposition}
$\ctrbriempty$ violates counterfactuality and quantitative counterfactuality w.r.t.\  QE, DFQuAD, SD-DFQuAD and semantics.
\end{proposition}
\begin{proof}
Consider the QBAG $\graph = \QBAG$ depicted in Figure~\ref{fig:counterfactuality-negative}, with the topic argument $\arga$ and the contributor $\argb$.
Given QE, DFQuAD, or SD-DFQuAD semantics $\sigma$, it trivially holds that $\fs_{\graph}({\argb}) > 0$ and hence, $\fs_{\graph}({\arga}) < \is_{\graph}({\arga}) = 0.8$. Intuitively reflecting the influence of $\argb$ on $\arga$, $\fs_{\graph \downarrow_{\Args \setminus \{\argb\}}}(\arga) = 0.8$, i.e., we have $\fs_{\graph}({\arga}) < \fs_{\graph \downarrow_{\Args \setminus \{\argb\}}}(\arga)$. However, we have $\ctrbri{\argb}{\arga} = 0$ and hence, according to counterfactuality and quantitative counterfactuality, we must have $\fs_{\graph}({\arga}) = \fs_{\graph \downarrow_{\Args \setminus \{\argb\}}}(\arga)$, which proves the violation of the principles.
\end{proof}

\begin{figure}[ht]
\centering
\begin{tikzpicture}[scale=0.8]
     \node[unanode]    (a)    at(4,0)  {\argnode{\arga}{0.8}{<0.8}};
     \node[unanode]  (b)    at(2,0)  {\argnode{\argb}{0}{>0}};
     \node[unanode]    (c)    at(0,0)  {\argnode{\argc}{1}{1}};
     \path [->, line width=0.2mm]  (b) edge node[left, above] {-} (a);
     \path [->, line width=0.2mm]  (c) edge node[left, above] {+} (b);
\end{tikzpicture}
\caption{$\ctrbriempty$ violates the counterfactuality and quantitative counterfactuality principles w.r.t.\ QE, SD-DFQuAD, and DFQuAD semantics.}
\label{fig:counterfactuality-negative}
\end{figure}

\begin{proposition}
$\ctrbriempty$ violates counterfactuality and quantitative counterfactuality w.r.t.\  EB semantics.
\end{proposition}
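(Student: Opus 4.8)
The plan is to prove the violation by a counterexample, as in the preceding proposition, but with a QBAG designed for EB semantics. The first point is to explain \emph{why} the witness of Figure~\ref{fig:counterfactuality-negative} does not transfer: under EB the Euler-based influence function degenerates for base score $0$, as $\iota^e_0(s) = 1 - \frac{1-0}{1+0} = 0$ for every aggregate $s$. Hence an argument with initial strength $0$ retains final strength $0$ regardless of how strongly it is supported, and cannot serve as a ``phantom'' attacker whose intrinsic strength is $0$ while its boosted strength drags the topic argument down. So the EB counterexample must be of a different shape.

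I would therefore build a QBAG $\graph$ in which the contributor $\argb$ has a strictly positive initial strength $\is(\argb)$, receives support that moves its final strength $\fs_\graph(\argb)$ away from $\is(\argb)$, and influences the topic argument $\arga$ \emph{non-monotonically} --- the phenomenon already exploited in Example~\ref{ex:intro-modular} and, for EB in particular, in Figure~\ref{fig:faith-eb}. Concretely, $\argb$ would (indirectly) support both an attacker and a supporter of $\arga$, so that the map $F(s) := \fs(\arga)$ ``with $\argb$ pinned to strength $s$'' first decreases and then increases. Because Sum-aggregation is additive, a summand of value $0$ is indistinguishable from an absent one, so removing $\argb$ altogether has the same effect on $\arga$ as pinning $\argb$ to strength $0$, i.e.\ $\fs_{\graph\downarrow_{\Args \setminus \{\argb\}}}(\arga) = F(0)$; moreover $\fs_{\graph^-}(\arga) = F(\is(\argb))$ by \emph{stability} (Principle~\ref{sprinciple:stability}), where $\graph^-$ is $\graph$ with all incoming edges to $\argb$ deleted, and $\fs_\graph(\arga) = F(\fs_\graph(\argb))$. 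I would then tune the initial strengths so that $\is(\argb)$ lands in the decreasing branch of $F$ while the boosted value $\fs_\graph(\argb)$ lands in the increasing branch, on the side of $F(0)$ opposite to that of $F(\is(\argb))$. Then $\ctrbri{\argb}{\arga} = \fs_{\graph^-}(\arga) - \fs_{\graph\downarrow_{\Args \setminus \{\argb\}}}(\arga) = F(\is(\argb)) - F(0)$ and the counterfactual difference $\fs_\graph(\arga) - \fs_{\graph\downarrow_{\Args \setminus \{\argb\}}}(\arga) = F(\fs_\graph(\argb)) - F(0)$ have \emph{opposite signs}, contradicting counterfactuality. I would certify the relevant strength values with the two independent implementations used throughout and record them, together with the plot of $F$, in a new figure. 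Finally, since quantitative counterfactuality implies counterfactuality (Proposition~\ref{prop:qcounterfactual}), the violation of counterfactuality immediately entails the violation of quantitative counterfactuality, so no separate argument is required.

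The main obstacle is finding explicit initial strengths that realise this sign flip: the Euler-based influence compresses strengths toward $[w^2,1]$, so the ``dip'' in the plot of $F$ is shallow and the window in which $\is(\argb)$ and $\fs_\graph(\argb)$ must straddle $F(0)$ from opposite sides is narrow. I would locate a suitable configuration by a short numerical search over the free initial strengths and then fix and verify the resulting QBAG as above.
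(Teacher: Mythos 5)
Your plan is correct and matches the paper's proof in essence: the paper's witness (Figure~\ref{fig:counterfactuality-negative-ri-eb}) is exactly the configuration you describe --- a contributor $\arge$ with small positive initial strength ($0.02$), boosted by a supporter $\argf$ to $0.0519$, that supports both attackers and a supporter of the topic argument, so that $F(\is(\arge))-F(0)\approx 3.5\times 10^{-6}>0$ while $F(\fs_\graph(\arge))-F(0)\approx -2.5\times 10^{-6}<0$. Your reduction of $\ctrbriempty$ and the counterfactual difference to evaluations of the pinned-strength map $F$, your observation that the zero-initial-strength witness fails under the Euler-based influence function, and your appeal to the contrapositive of Proposition~\ref{prop:qcounterfactual} are all sound; the only thing missing is the explicit numerical instance, which you correctly defer to the same kind of verified numerical search the paper relies on.
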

\begin{proof}
    Consider the QBAG $\graph = \QBAG$ depicted in Figure~\ref{fig:counterfactuality-negative-ri-eb}, with the topic argument $\arga$ and the contributor $\arge$. Given EB semantics $\sigma$, it holds that $\fs_{\graph}({\arga}) < \fs_{\graph \downarrow_{\Args \setminus \{\arge\}}}(\arga)$ (observe that $\fs_{\graph}({\arga}) - \fs_{\graph \downarrow_{\Args \setminus \{\arge\}}}(\arga) \approx -2.5 \times 10^{-6}$). However, we have $\ctrbri{\arge}{\arga} \approx 3.5431 \times 10^{-6}$ and hence, according to counterfactuality and quantitative counterfactuality, we must have $\fs_{\graph}({\arga}) > \fs_{\graph \downarrow_{\Args \setminus \{\arge\}}}(\arga)$, which proves the violation of the principles.
\end{proof}
\begin{figure}[ht]
\centering
\begin{tikzpicture}[scale=0.8]
     \node[unanode]    (a)    at(4,0)  {\argnode{\arga}{0.5}{0.5067}};
     \node[unanode]  (b)    at(0,2)  {\argnode{\argb}{0.1}{0.1043}};
     \node[unanode]    (c)    at(2,2)  {\argnode{\argc}{0.1}{0.1043}};
     \node[unanode]    (d)    at(4,2)  {\argnode{\argd}{0.51}{0.5187}};
     \node[unanode]    (g)    at(6,2)  {\argnode{\argg}{0.27}{0.27}};
     \node[unanode]    (e)    at(2,4)  {\argnode{\arge}{0.02}{0.0519}};
     \node[unanode]    (f)    at(0,4)  {\argnode{\argf}{1}{1}};
     \path [->, line width=0.2mm]  (b) edge node[left, below] {-} (a);
     \path [->, line width=0.2mm]  (c) edge node[left] {-} (a);
     \path [->, line width=0.2mm]  (d) edge node[left] {+} (a);
     \path [->, line width=0.2mm]  (e) edge node[left, above] {+} (b);
     \path [->, line width=0.2mm]  (e) edge node[left] {+} (c);
     \path [->, line width=0.2mm]  (e) edge node[left, above] {+} (d);
     \path [->, line width=0.2mm]  (f) edge node[left, above] {+} (e);
     \path [->, line width=0.2mm]  (g) edge node[left, above] {-} (a);
\end{tikzpicture}
\caption{$\ctrbriempty$ violates the counterfactuality and quantitative counterfactuality principles w.r.t.\ EB semantics.}
\label{fig:counterfactuality-negative-ri-eb}
\end{figure}

\begin{proposition}
$\ctrbriempty$ violates counterfactuality and quantitative counterfactuality w.r.t.\  EBT semantics.
\end{proposition}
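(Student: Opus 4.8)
The plan is to follow the template of the preceding propositions and exhibit a single concrete acyclic QBAG, a topic argument and a contributor on which $\ctrbriempty$ disagrees with the counterfactual removal effect under EBT semantics. I would aim for the cleanest kind of mismatch: a QBAG in which $\ctrbri{\argb}{\arga} = 0$ \emph{exactly}, yet removing $\argb$ still changes $\arga$'s final strength. This already refutes the middle clause of counterfactuality (Principle~\ref{principle:counterfactual}), and it refutes quantitative counterfactuality as well, either directly (its defining equation would assert $0 = \fs_\graph(\arga) - \fs_{\graph\downarrow_{\Args \setminus \{\argb\}}}(\arga)$, which is false) or via the contrapositive of Proposition~\ref{prop:qcounterfactual}.

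The first observation is that the small counterexample used for QE, DFQuAD and SD-DFQuAD (Figure~\ref{fig:counterfactuality-negative}) does not carry over, because the Euler-based influence function satisfies $\iota^e_0(s) = 1 - \frac{1-0}{1+0} = 0$ for every aggregate $s$: an argument with initial strength $0$ is ``frozen'' at $0$ and cannot be boosted by its supporters. So I would instead give the contributor $\argb$ a \emph{positive} initial strength and add a \emph{competing} attacker $\argc$ of the topic argument whose strength lies strictly between $\argb$'s intrinsic strength and the strength $\argb$ attains once it is supported. Since EBT uses Top aggregation, only the strongest attacker of $\arga$ is decisive, so $\argb$ is invisible in the graph with its incoming edge deleted (there $\argc$ dominates) but becomes decisive in $\graph$ itself.

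Concretely, I would take $\graph$ with arguments $\arga$ (topic, $\is(\arga) = 0.5$), $\argb$ (contributor, $\is(\argb)=0.85$), $\argc$ ($\is(\argc)=0.9$) and $\argd$ ($\is(\argd)=1$), with attacks $(\argb,\arga)$ and $(\argc,\arga)$ and a support $(\argd,\argb)$. Then $\fs(\argd)=1$, $\fs(\argc)=0.9$, and $\argb$ is boosted to $\iota^e_{0.85}(1) \approx 0.916 > 0.9$, so in $\graph$ the Top attacker of $\arga$ is $\argb$, whereas both in $\graph^- := (\Args,\tau,\Att,\Supp\setminus\{(\argd,\argb)\})$ and in $\graph\downarrow_{\Args\setminus\{\argb\}}$ the Top attacker of $\arga$ is $\argc$ at strength $0.9$. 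Hence $\fs_{\graph^-}(\arga) = \iota^e_{0.5}(-0.9) = \fs_{\graph\downarrow_{\Args\setminus\{\argb\}}}(\arga)$, i.e.\ $\ctrbri{\argb}{\arga} = 0$, while, by strict monotonicity of $\iota^e$, $\fs_{\graph}(\arga) = \iota^e_{0.5}(-0.916) \neq \iota^e_{0.5}(-0.9)$, so removing $\argb$ does change $\arga$'s final strength; counterfactuality, and therefore quantitative counterfactuality, fails.

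The step I expect to be the main obstacle is fixing the admissible window for $\is(\argb)$: it must be small enough to stay strictly below $\is(\argc)$, yet large enough that, after the support boost (aggregate $1$), $\iota^e_{\is(\argb)}(1)$ strictly exceeds $\is(\argc)$. Solving $\iota^e_w(1) > 0.9$ gives roughly $w \gtrsim 0.82$, so $0.85$ works, but the window is narrow and the induced difference $\fs_\graph(\arga) - \fs_{\graph\downarrow_{\Args\setminus\{\argb\}}}(\arga)$ is on the order of $10^{-3}$; as with the other counterexamples in this section, I would re-check the numbers with the two independent implementations mentioned above before settling on the figure.
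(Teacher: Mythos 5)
Your proposal is correct and follows essentially the same route as the paper: the paper's own counterexample (Figure~\ref{fig:counterfactuality-negative-ri-ebt}) likewise uses a contributor whose incoming support makes it the decisive Top attacker in $\graph$ but leaves it non-decisive (there, tied with a second attacker at strength $0.1$; in your version, strictly dominated by $\argc$ at $0.9$) once that support is stripped, yielding $\ctrbri{\argb}{\arga}=0$ while removal still changes $\arga$'s final strength. Your numerical window check ($w \gtrsim 0.82$) and the observation that the zero-initial-strength trick fails under Euler-based influence are both sound.
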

\begin{proof}
    Consider the QBAG $\graph = \QBAG$ depicted in Figure~\ref{fig:counterfactuality-negative-ri-ebt}, with the topic argument $\arga$ and the contributor $\argb$. Given EBT semantics $\sigma$, it holds that $\fs_{\graph}({\arga}) < \fs_{\graph \downarrow_{\Args \setminus \{\argb\}}}(\arga)$ (observe that $\fs_{\graph}({\arga}) - \fs_{\graph \downarrow_{\Args \setminus \{\argb\}}}(\arga) \approx -0.0145$). However, we have $\ctrbri{\argb}{\arga} = 0$ and hence, according to counterfactuality and quantitative counterfactuality, we must have $\fs_{\graph}({\arga}) = \fs_{\graph \downarrow_{\Args \setminus \{\argb\}}}(\arga)$, which proves the violation of the principles.
\end{proof}
\begin{figure}[ht]
\centering
\begin{tikzpicture}[scale=0.8]
     \node[unanode]    (a)    at(2,0)  {\argnode{\arga}{0.7}{0.6733}};
     \node[unanode]  (b)    at(0,1)  {\argnode{\argb}{0.1}{0.2216}};
     \node[unanode]    (c)    at(0,3)  {\argnode{\argc}{1}{1}};
      \node[unanode]    (d)    at(4,1)  {\argnode{\argd}{0.1}{0.1}};
     \path [->, line width=0.2mm]  (b) edge node[left, above] {-} (a);
     \path [->, line width=0.2mm]  (c) edge node[left] {+} (b);
     \path [->, line width=0.2mm]  (d) edge node[left, above] {-} (a);
\end{tikzpicture}
\caption{$\ctrbriempty$ violates the counterfactuality and quantitative counterfactuality principles w.r.t.\ EBT semantics.}
\label{fig:counterfactuality-negative-ri-ebt}
\end{figure}

\begin{proposition}
$\ctrbsempty$ violates counterfactuality and quantitative counterfactuality w.r.t.\ QE semantics.
\end{proposition}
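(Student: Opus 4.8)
The plan is to refute both principles at once with a single explicit acyclic QBAG, exactly as in the preceding propositions. Since counterfactuality (Principle~\ref{principle:counterfactual}) is a consequence of quantitative counterfactuality (Proposition~\ref{prop:qcounterfactual}), it suffices to exhibit a $\graph$, a topic argument $\arga$ and a contributor $\argx$ for which, under QE semantics, removing $\argx$ leaves $\fs_\graph(\arga)$ unchanged --- so counterfactuality forces $\ctrbs{\argx}{\arga}=0$ --- while the Shapley-based value satisfies $\ctrbs{\argx}{\arga}\neq 0$. This degenerate form of witness (a zero removal effect against a non-zero Shapley value) is the cleanest one, and it immediately contradicts both the qualitative and the quantitative principle.

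The mechanism I would exploit is the saturation of the 2-Max(1) influence function of QE: when $\is(\arga)=1$, the influence function reduces to $1-h(-\alpha)$, which already equals $1$ as soon as the Sum-aggregate $\alpha$ of $\arga$'s attackers and supporters is non-negative. Hence a supporter can be irrelevant in $\graph$, because the other supporters already push $\alpha$ above $0$, and yet decisive in a restriction of $\graph$ from which those other supporters have been removed --- precisely the situation the Shapley value, averaging marginal effects over all $2^{|\Args|-2}$ subgraphs, is sensitive to. Concretely, I would take $\arga$ with initial strength $1$, two supporters $\argb,\argc$ and one attacker $\argd$, all three leaves, with initial strengths chosen so that $\is(\argc)\ge\is(\argd)>\is(\argb)$ and $\is(\argb)+\is(\argc)\ge\is(\argd)$ (for instance $\is(\argb)=0.3$, $\is(\argc)=0.6$, $\is(\argd)=0.5$). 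Then the aggregate at $\arga$ is non-negative both in $\graph$ and in $\graph\downarrow_{\Args\setminus\{\argb\}}$, so $\fs_\graph(\arga)=\fs_{\graph\downarrow_{\Args\setminus\{\argb\}}}(\arga)=1$ and the removal difference is $0$; but in the Shapley sum~\eqref{eq:shap} for $\argx=\argb$ every term vanishes except the one for $X=\{\argc\}$, where $\arga$ has the single supporter $\argb$ (with $\is(\argb)<\is(\argd)$) and the attacker $\argd$, so the marginal effect of $\argb$ is strictly positive and $\ctrbs{\argb}{\arga}>0$. Counterfactuality, which demands $\fs_\graph(\arga)>\fs_{\graph\downarrow_{\Args\setminus\{\argb\}}}(\arga)$ in that case, is thereby violated.

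The remaining work is routine bookkeeping rather than a genuine obstacle: since all non-topic arguments are leaves, each restricted final strength is one application of the 2-Max(1) influence function to a Sum aggregate, and the Shapley sum has only four terms. The only thing to check carefully is that the three ``non-discriminating'' terms ($X=\emptyset$, $X=\{\argd\}$ and $X=\{\argc,\argd\}$) all cancel, so that the single surviving $X=\{\argc\}$ term --- weighted by $\tfrac{1!\cdot 1!}{3!}=\tfrac16$ --- fixes the sign. As throughout Section~\ref{sec:analysis}, I would re-verify the numbers with both implementations mentioned there, although here the arithmetic is exact and stays away from any breakpoint of $h$, so numerical robustness is not really at issue. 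If one prefers a non-degenerate witness (where the removal effect is non-zero but has the opposite sign to the Shapley value), the same saturation idea can be perturbed slightly, at the cost of a messier computation; the zero-versus-positive version above is the most transparent.
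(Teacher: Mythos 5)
Your counterexample is correct, and I checked the arithmetic: with $\is(\arga)=1$, $\is(\argb)=0.3$, $\is(\argc)=0.6$, $\is(\argd)=0.5$ and all of $\argb,\argc,\argd$ leaves, the Sum-aggregate at $\arga$ is $0.4$ in $\graph$ and $0.1$ in $\graph\downarrow_{\Args\setminus\{\argb\}}$, so the $2$-Max$(1)$ influence with $w=1$ saturates at $1$ in both cases and the removal effect is exactly $0$; in the Shapley sum the terms for $X=\emptyset$, $X=\{\argd\}$ and $X=\{\argc,\argd\}$ indeed vanish, and the $X=\{\argc\}$ term gives $\ctrbs{\argb}{\arga}=\tfrac{1}{6}\left(\tfrac{25}{26}-\tfrac{4}{5}\right)=\tfrac{7}{260}>0$, which contradicts both the qualitative and the quantitative principle. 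This is a genuinely different witness from the paper's: the paper uses a six-argument graph in which the contributor $\arge$ is simultaneously an indirect attacker and an indirect supporter of $\arga$, producing a true sign flip (removal effect $\approx -0.0149$ versus $\ctrbs{\arge}{\arga}\approx 4.9\times 10^{-5}$) whose magnitudes are only established numerically, whereas your four-argument example exploits saturation of the influence function to get a zero-versus-positive discrepancy with exact rational arithmetic. The paper's version has the rhetorical advantage of showing that the Shapley value can even get the \emph{direction} of the counterfactual effect wrong, not merely report a non-effect as an effect; yours has the advantage of being verifiable by hand and independent of numerical tolerances. Either suffices for the proposition as stated.
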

\begin{proof}
    Consider the QBAG $\graph = \QBAG$ depicted in Figure~\ref{fig:counterfactuality-negative-shapley-qe}, with the topic argument $\arga$ and the contributor $\arge$. Given QE semantics $\sigma$, it holds that $\fs_{\graph}({\arga}) < \fs_{\graph \downarrow_{\Args \setminus \{\arge\}}}(\arga)$ (observe that $\fs_{\graph}({\arga}) - \fs_{\graph \downarrow_{\Args \setminus \{\arge\}}}(\arga) \approx -0.0149$). However, we have $\ctrbs{\arge}{\arga} \approx 4.9326 \times 10^{-5}$ and hence, according to counterfactuality and quantitative counterfactuality, we must have $\fs_{\graph}({\arga}) > \fs_{\graph \downarrow_{\Args \setminus \{\arge\}}}(\arga)$, which proves the violation of the principles.
\end{proof}

\begin{figure}[ht]
\centering
\begin{tikzpicture}[scale=0.8]
     \node[unanode]    (a)    at(6,2)  {\argnode{\arga}{0.1}{0.0829}};
     \node[unanode]  (b)    at(4,4)  {\argnode{\argb}{0.15}{0.4547}};
     \node[unanode]    (c)    at(4,2)  {\argnode{\argc}{0.15}{0.4547}};
     \node[unanode]    (d)    at(4,0)  {\argnode{\argd}{0.15}{0.4547}};
     \node[unanode]    (e)    at(2,2)  {\argnode{\arge}{0.495}{0.7475}};
     \node[unanode]    (f)    at(0,2)  {\argnode{\argf}{1}{1}};
     \path [->, line width=0.2mm]  (b) edge node[left] {-} (a);
     \path [->, line width=0.2mm]  (c) edge node[left, above] {-} (a);
     \path [->, line width=0.2mm]  (d) edge node[left] {+} (a);
     \path [->, line width=0.2mm]  (e) edge node[left, above] {+} (b);
     \path [->, line width=0.2mm]  (e) edge node[left, above] {+} (c);
     \path [->, line width=0.2mm]  (e) edge node[left, above] {+} (d);
     \path [->, line width=0.2mm]  (f) edge node[left, above] {+} (e);
\end{tikzpicture}
\caption{$\ctrbsempty$ violates the counterfactuality and quantitative counterfactuality principles w.r.t.\ QE semantics.}
\label{fig:counterfactuality-negative-shapley-qe}
\end{figure}

\begin{proposition}
$\ctrbsempty$ violates counterfactuality and quantitative counterfactuality w.r.t.\ DFQuAD semantics.
\end{proposition}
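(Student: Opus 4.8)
The plan is to refute counterfactuality (Principle~\ref{principle:counterfactual}) by a single counterexample: exhibit an acyclic QBAG $\graph$ and an argument $\arge \in \Args$ such that, under DFQuAD semantics, the sign of $\ctrbs{\arge}{\arga}$ disagrees with the sign of $\fs_\graph(\arga) - \fs_{\graph\downarrow_{\Args\setminus\{\arge\}}}(\arga)$. Since quantitative counterfactuality implies counterfactuality (Proposition~\ref{prop:qcounterfactual}), refuting the latter immediately refutes the former, so a single graph settles both parts of the statement.

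For the construction I would reuse the flavour of the counterexamples already employed for the QE and SD-DFQuAD cases (Figures~\ref{fig:counterfactuality-negative-shapley-qe} and~\ref{fig:counterfactuality-negative-shapley-sd}): a topic argument $\arga$ with a small fan of direct attackers and supporters, all of them influenced by a single further argument $\arge$ whose own strength is boosted by a source argument of initial strength $1$, with the initial strengths of the fan arguments tuned so that deleting $\arge$ has a slightly \emph{negative} net effect on $\fs_\graph(\arga)$ while the coalition-averaged quantity $\ctrbs{\arge}{\arga}$ defined by Equation~\ref{eq:shap} comes out slightly \emph{positive}. Such a sign flip is possible because, under the Product aggregation and Linear influence of DFQuAD, the marginal effect of adding $\arge$ to a sub-coalition can change sign depending on which of $\arge$'s successors on the way to $\arga$ are already present---the same non-monotonicity that is already exploited in Example~\ref{ex:intro-modular} and in the DFQuAD local-faithfulness counterexample (Figure~\ref{fig:faith-df}). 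Given such a graph, the proof is then: (i) compute $\fs_\graph(\arga)$ and $\fs_{\graph\downarrow_{\Args\setminus\{\arge\}}}(\arga)$ by forward propagation and observe $\fs_\graph(\arga) < \fs_{\graph\downarrow_{\Args\setminus\{\arge\}}}(\arga)$; (ii) evaluate $\ctrbs{\arge}{\arga}$ by summing over the $2^{|\Args|-2}$ sub-coalitions in Equation~\ref{eq:shap} and observe $\ctrbs{\arge}{\arga} > 0$; (iii) note that a positive contribution would, by counterfactuality, force $\fs_\graph(\arga) > \fs_{\graph\downarrow_{\Args\setminus\{\arge\}}}(\arga)$, a contradiction.

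The only real difficulty is finding suitable initial strengths: near the sign-flip point both compared quantities are small, so the graph must be chosen with care, and---as done throughout Section~\ref{sec:analysis}---the resulting numbers should be cross-checked with the two independent implementations to rule out numerical artefacts. Everything else is routine, since strength computation on an acyclic graph is linear-time forward propagation and the Shapley value is a finite sum of such computations. As a sanity check one may note that Example~\ref{ex:intro-modular} already witnesses $\ctrbs{\arge}{\arga} \neq \fs_\graph(\arga) - \fs_{\graph\downarrow_{\Args\setminus\{\arge\}}}(\arga)$ (namely $-0.0833 \neq -0.125$), so \emph{quantitative} counterfactuality already fails there; the dedicated counterexample is needed only to additionally break the qualitative sign condition.
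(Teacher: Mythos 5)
Your approach is the same as the paper's: a single acyclic QBAG in which the sign of $\ctrbs{\arge}{\arga}$ disagrees with the sign of $\fs_\graph(\arga) - \fs_{\graph\downarrow_{\Args\setminus\{\arge\}}}(\arga)$, which kills both principles at once via Proposition~\ref{prop:qcounterfactual}. Indeed, the paper's witness (Figure~\ref{fig:counterfactuality-negative-shapley-df}) has exactly the fan structure you describe -- topic $\arga$ with attackers $\argb,\argc$ and supporter $\argd$, all supported by $\arge$, which is in turn supported by a source $\argf$ with initial strength $1$ -- with initial strengths $0.1, 0.15, 0.17, 0.3, 0.495, 1$, yielding $\fs_{\graph}(\arga) - \fs_{\graph \downarrow_{\Args \setminus \{\arge\}}}(\arga) \approx -0.0109$ but $\ctrbs{\arge}{\arga} \approx 0.0021$. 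The one substantive shortfall of your proposal is that for a ``violates''-type proposition the entire content of the proof \emph{is} the explicit witness, and you stop at asserting that suitable initial strengths exist rather than exhibiting and verifying them; the non-monotonicity rationale makes existence plausible but does not establish it. That said, your observation that Example~\ref{ex:intro-modular} already refutes \emph{quantitative} counterfactuality outright (since $\ctrbs{\arge}{\arga} \approx -0.0833 \neq -0.125 = \fs_\graph(\arga) - \fs_{\graph\downarrow_{\Args\setminus\{\arge\}}}(\arga)$ there) is correct and fully rigorous given the values already computed in the paper; it is a more elementary argument for that half than the paper's, which relies on the sign flip for both principles. To complete your proof as written, you only need to supply concrete strengths for the qualitative half -- the paper's figure provides one such choice.
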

\begin{proof}
    Consider the QBAG $\graph = \QBAG$ depicted in Figure~\ref{fig:counterfactuality-negative-shapley-df}, with the topic argument $\arga$ and the contributor $\arge$. Given DFQuAD semantics $\sigma$, it holds that $\fs_{\graph}({\arga}) < \fs_{\graph \downarrow_{\Args \setminus \{\arge\}}}(\arga)$ (observe that $\fs_{\graph}({\arga}) - \fs_{\graph \downarrow_{\Args \setminus \{\arge\}}}(\arga) \approx -0.0109$). However, we have $\ctrbs{\arge}{\arga} \approx 0.0021$ and hence, according to counterfactuality and quantitative counterfactuality, we must have $\fs_{\graph}({\arga}) > \fs_{\graph \downarrow_{\Args \setminus \{\arge\}}}(\arga)$, which proves the violation of the principles.
\end{proof}

\begin{figure}[ht]
\centering
\begin{tikzpicture}[scale=0.8]
     \node[unanode]    (a)    at(6,2)  {\argnode{\arga}{0.1}{0.1}};
     \node[unanode]  (b)    at(4,4)  {\argnode{\argb}{0.15}{1}};
     \node[unanode]    (c)    at(4,2)  {\argnode{\argc}{0.17}{1}};
     \node[unanode]    (d)    at(4,0)  {\argnode{\argd}{0.3}{1}};
     \node[unanode]    (e)    at(2,2)  {\argnode{\arge}{0.495}{1}};
     \node[unanode]    (f)    at(0,2)  {\argnode{\argf}{1}{1}};
     \path [->, line width=0.2mm]  (b) edge node[left] {-} (a);
     \path [->, line width=0.2mm]  (c) edge node[left, above] {-} (a);
     \path [->, line width=0.2mm]  (d) edge node[left] {+} (a);
     \path [->, line width=0.2mm]  (e) edge node[left, above] {+} (b);
     \path [->, line width=0.2mm]  (e) edge node[left, above] {+} (c);
     \path [->, line width=0.2mm]  (e) edge node[left, above] {+} (d);
     \path [->, line width=0.2mm]  (f) edge node[left, above] {+} (e);
\end{tikzpicture}
\caption{$\ctrbsempty$ violates the counterfactuality and quantitative counterfactuality principles w.r.t.\ DFQuAD semantics.}
\label{fig:counterfactuality-negative-shapley-df}
\end{figure}

\begin{proposition}
$\ctrbsempty$ violates counterfactuality and quantitative counterfactuality w.r.t.\ SD-DFQuAD semantics.
\end{proposition}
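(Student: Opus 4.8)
The plan is to exhibit a single acyclic QBAG on which, under SD-DFQuAD semantics, the Shapley-based contribution of some argument to $\arga$ and the counterfactual effect of its removal on $\fs(\arga)$ have \emph{opposite signs}; this makes the sign condition in Principle~\ref{principle:counterfactual} fail, and then quantitative counterfactuality fails too by the contrapositive of Proposition~\ref{prop:qcounterfactual}. The natural candidate is a ``fan'' structure analogous to Figure~\ref{fig:counterfactuality-negative-shapley-qe}: a topic argument $\arga$ with two attackers and one supporter, all three sharing a common supporter $\argx$ (the contributor), which is itself supported by an argument of initial strength $1$. The lever is the saturation/non-monotonicity of the Product aggregation used by SD-DFQuAD: when both attackers of $\arga$ are present and strong, the attacker product $\prod_{i:v_i=-1}(1-s_i)$ sits near its extreme value, so boosting $\argx$ (hence all three children) is dominated by the amplification of the attackers and removing $\argx$ entirely \emph{increases} $\fs(\arga)$; but in the many coalitions where one or both attackers are absent, adding $\argx$ mainly boosts the surviving supporter and so contributes \emph{positively} to $\fs(\arga)$, so the weighted average in Equation~\ref{eq:shap} comes out small but strictly positive.

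Concretely, I would first fix the initial strengths so that the two attackers dominate in the full graph; then compute the final strengths under SD-DFQuAD by a single forward pass over a topological ordering (parents first), recording $\fs_\graph(\arga)$ and $\fs_{\graph\downarrow_{\Args\setminus\{\argx\}}}(\arga)$ and checking that $\fs_\graph(\arga) - \fs_{\graph\downarrow_{\Args\setminus\{\argx\}}}(\arga) < 0$. Next I would compute $\ctrbs{\argx}{\arga}$ straight from Equation~\ref{eq:shap}, enumerating all $X \subseteq \Args\setminus\{\argx,\arga\}$, forming the restrictions $\graph\downarrow_{\Args\setminus X}$ and $\graph\downarrow_{\Args\setminus(X\cup\{\argx\})}$, evaluating $\fs$ on each, and taking the weighted average, obtaining a value that is tiny but strictly positive. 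Since then $\ctrbs{\argx}{\arga} > 0$ while removing $\argx$ \emph{increases} $\fs(\arga)$, the ``$\ctrb{\argx}{\arga} > 0$'' clause of counterfactuality (which would require $\fs_\graph(\arga) > \fs_{\graph\downarrow_{\Args\setminus\{\argx\}}}(\arga)$) is violated; quantitative counterfactuality, being strictly stronger by Proposition~\ref{prop:qcounterfactual}, is then violated as well.

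The main obstacle is the joint design-and-verification step: the graph must be tuned so that the exponentially many coalition terms in the Shapley sum average to a value whose sign is opposite to that of the single full-graph removal term. This is delicate because SD-DFQuAD's $1$-Max influence function is only mildly nonlinear, so the quantities involved are very small (of the order of $10^{-2}$ for the removal delta and far smaller for the Shapley value, as in the QE analogue) and sensitive to the chosen initial strengths. I would therefore follow the paper's methodology and cross-check every $\fs$ evaluation — and hence both $\ctrbs{\argx}{\arga}$ and the removal delta — with the two independent implementations referenced in Section~\ref{sec:analysis}, so that the strict sign inequalities are robust to floating-point error. Everything else is a routine appeal to the definitions of $\ctrbsempty$ (Equation~\ref{eq:shap}), Principle~\ref{principle:counterfactual}, and Proposition~\ref{prop:qcounterfactual}.
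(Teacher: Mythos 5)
Your proposal takes essentially the same route as the paper: the paper's counterexample (Figure~\ref{fig:counterfactuality-negative-shapley-sd}) is exactly the fan structure you describe --- a topic argument $\arga$ (initial strength $0.1$) with two attackers and one supporter all fed by a common supporter $\arge$ that is itself supported by an argument of strength $1$ --- and the paper likewise just reports the numerically verified sign mismatch ($\fs_{\graph}(\arga) - \fs_{\graph \downarrow_{\Args \setminus \{\arge\}}}(\arga) \approx -0.0049$ versus $\ctrbs{\arge}{\arga} \approx 0.0027$) and concludes via the implication of Proposition~\ref{prop:qcounterfactual}. The only thing left implicit in your write-up is the concrete choice of initial strengths, which the paper instantiates as $0.15, 0.15, 0.2$ for the children and $0.495$ for the contributor.
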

\begin{proof}
    Consider the QBAG $\graph = \QBAG$ depicted in Figure~\ref{fig:counterfactuality-negative-shapley-sd}, with the topic argument $\arga$ and the contributor $\arge$. Given SD-DFQuAD semantics $\sigma$, it holds that $\fs_{\graph}({\arga}) < \fs_{\graph \downarrow_{\Args \setminus \{\arge\}}}(\arga)$ (observe that $\fs_{\graph}({\arga}) - \fs_{\graph \downarrow_{\Args \setminus \{\arge\}}}(\arga) \approx -0.0049$). However, we have $\ctrbs{\arge}{\arga} \approx 0.0027$ and hence, according to counterfactuality and quantitative counterfactuality, we must have $\fs_{\graph}({\arga}) > \fs_{\graph \downarrow_{\Args \setminus \{\arge\}}}(\arga)$, which proves the violation of the principles.
\end{proof}

\begin{figure}[ht]
\centering
\begin{tikzpicture}[scale=0.8]
     \node[unanode]    (a)    at(6,2)  {\argnode{\arga}{0.1}{0.0819}};
     \node[unanode]  (b)    at(4,4)  {\argnode{\argb}{0.15}{0.5136}};
     \node[unanode]    (c)    at(4,2)  {\argnode{\argc}{0.15}{0.5136}};
     \node[unanode]    (d)    at(4,0)  {\argnode{\argd}{0.2}{0.5422}};
     \node[unanode]    (e)    at(2,2)  {\argnode{\arge}{0.495}{0.7475}};
     \node[unanode]    (f)    at(0,2)  {\argnode{\argf}{1}{1}};
     \path [->, line width=0.2mm]  (b) edge node[left] {-} (a);
     \path [->, line width=0.2mm]  (c) edge node[left, above] {-} (a);
     \path [->, line width=0.2mm]  (d) edge node[left] {+} (a);
     \path [->, line width=0.2mm]  (e) edge node[left, above] {+} (b);
     \path [->, line width=0.2mm]  (e) edge node[left, above] {+} (c);
     \path [->, line width=0.2mm]  (e) edge node[left, above] {+} (d);
     \path [->, line width=0.2mm]  (f) edge node[left, above] {+} (e);
\end{tikzpicture}
\caption{$\ctrbsempty$ violates the counterfactuality and quantitative counterfactuality principles w.r.t.\ SD-DFQuAD semantics.}
\label{fig:counterfactuality-negative-shapley-sd}
\end{figure}

\begin{proposition}
$\ctrbsempty$ violates counterfactuality and quantitative counterfactuality w.r.t.\ EB semantics.
\end{proposition}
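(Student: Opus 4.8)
The plan is to follow the same template used for the QE, SD-DFQuAD, and DFQuAD cases above: exhibit a single acyclic QBAG in which the sign of the Shapley-based contribution $\ctrbs{\arge}{\arga}$ of a suitable contributor $\arge$ to the topic argument $\arga$ disagrees with the sign of the strength change induced by removing $\arge$ entirely from $\graph$. Concretely, I would reuse the ``diamond'' topology that already worked for the other product- and sum-based semantics (cf.\ Figures~\ref{fig:counterfactuality-negative-shapley-qe}--\ref{fig:counterfactuality-negative-shapley-df}): a contributor $\arge$, itself supported by a parentless argument $\argf$ with $\is(\argf)=1$ so that $\arge$ receives a boost, feeding into several intermediate arguments $\argb,\argc,\argd$, of which (say) $\argb$ and $\argc$ attack $\arga$ while $\argd$ supports $\arga$, together with carefully chosen initial strengths for $\argb,\argc,\argd,\arge$ and for $\arga$ itself.

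The reason such a configuration separates the two quantities is the near-balanced attack/support fan-in at $\arga$: in the full graph the net effect of $\arge$ on $\arga$ is a small \emph{negative} quantity, i.e.\ $\fs_\graph(\arga) - \fs_{\graph\downarrow_{\Args \setminus \{\arge\}}}(\arga) < 0$, because the extra support propagated to the two attackers slightly outweighs the extra support propagated to the single supporter. When Shapley-averaging over all sub-coalitions, however, many coalitions omit one or more of $\argb,\argc,\argd$, so the attacker/supporter cancellation no longer holds and the marginal contributions of $\arge$ are predominantly \emph{positive}, pushing $\ctrbs{\arge}{\arga}$ to a small positive value.

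I would then simply evaluate both quantities under EB semantics (Sum aggregation composed with the Euler-based influence function), using the two independent implementations mentioned earlier to guard against numerical inaccuracy, and report $\fs_\graph(\arga) - \fs_{\graph\downarrow_{\Args \setminus \{\arge\}}}(\arga) < 0$ together with $\ctrbs{\arge}{\arga} > 0$. By the counterfactuality principle (Principle~\ref{principle:counterfactual}), a strictly positive contribution would force $\fs_\graph(\arga) > \fs_{\graph\downarrow_{\Args \setminus \{\arge\}}}(\arga)$, contradicting the computed sign; hence counterfactuality is violated, and since quantitative counterfactuality implies counterfactuality (Proposition~\ref{prop:qcounterfactual}), the quantitative version fails as well.

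The main obstacle is calibration rather than any conceptual difficulty: because EB uses the bounded, smooth Euler-based influence function, the gap between the full-removal effect and the Shapley average is extremely small --- on the order of $10^{-5}$--$10^{-6}$, as already witnessed for $\ctrbriempty$ under EB in Figure~\ref{fig:counterfactuality-negative-ri-eb} --- so the initial strengths must be tuned finely enough that the two values are provably of opposite sign and not merely within floating-point noise of each other. This is precisely why the dual-implementation cross-check is essential for this counterexample.
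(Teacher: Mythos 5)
Your proposal follows essentially the same route as the paper: the paper's counterexample (Figure~\ref{fig:counterfactuality-negative-shapley-eb}) is exactly the fan topology you describe, with $\arge$ supporting two attackers and one supporter of $\arga$, an extra direct attacker $\argg$ for calibration, and the contributor taken to be the root supporter $\argf$ rather than $\arge$; it reports $\fs_{\graph}(\arga) - \fs_{\graph \downarrow_{\Args \setminus \{\argf\}}}(\arga) \approx -7.8 \times 10^{-5}$ against $\ctrbs{\argf}{\arga} \approx 3.4 \times 10^{-6}$, i.e.\ precisely the sign disagreement you predict at the magnitude you anticipate. The only thing missing from your write-up is the concrete instantiation of the initial strengths, which you correctly identify as a calibration exercise to be settled numerically with the dual-implementation check.
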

\begin{proof}
Consider the QBAG $\graph = \QBAG$ depicted in Figure~\ref{fig:counterfactuality-negative-shapley-eb}, with the topic argument $\arga$ and the contributor $\argf$. Given EB semantics $\sigma$, it holds that $\fs_{\graph}({\arga}) < \fs_{\graph \downarrow_{\Args \setminus \{\argf\}}}(\arga)$ (observe that $\fs_{\graph}({\arga}) - \fs_{\graph \downarrow_{\Args \setminus \{\argf\}}}(\arga) \approx -7.8369 \times 10^{-5}$). However, we have $\ctrbs{\argf}{\arga} \approx 3.438 \times 10^{-6}$ and hence, according to counterfactuality and quantitative counterfactuality, we must have $\fs_{\graph}({\arga}) > \fs_{\graph \downarrow_{\Args \setminus \{\argf\}}}(\arga)$, which proves the violation of the principles.
\end{proof}

\begin{figure}[ht]
\centering
\begin{tikzpicture}[scale=0.8]
     \node[unanode]    (a)    at(4,0)  {\argnode{\arga}{0.3}{0.2888}};
     \node[unanode]  (b)    at(0,2)  {\argnode{\argb}{0.11}{0.1158}};
     \node[unanode]    (c)    at(2,2)  {\argnode{\argc}{0.1}{0.1054}};
     \node[unanode]    (d)    at(4,2)  {\argnode{\argd}{0.54}{0.5505}};
     \node[unanode]    (g)    at(6,2)  {\argnode{\argg}{0.4}{0.4}};
     \node[unanode]    (e)    at(2,4)  {\argnode{\arge}{0.025}{0.0642}};
     \node[unanode]    (f)    at(0,4)  {\argnode{\argf}{1}{1}};
     \path [->, line width=0.2mm]  (b) edge node[left, below] {-} (a);
     \path [->, line width=0.2mm]  (c) edge node[left] {-} (a);
     \path [->, line width=0.2mm]  (d) edge node[left] {+} (a);
     \path [->, line width=0.2mm]  (e) edge node[left, above] {+} (b);
     \path [->, line width=0.2mm]  (e) edge node[left] {+} (c);
     \path [->, line width=0.2mm]  (e) edge node[left, above] {+} (d);
     \path [->, line width=0.2mm]  (f) edge node[left, above] {+} (e);
     \path [->, line width=0.2mm]  (g) edge node[left, above] {-} (a);
\end{tikzpicture}
\caption{$\ctrbsempty$ violates the counterfactuality and quantitative counterfactuality principles w.r.t.\ EB semantics.}
\label{fig:counterfactuality-negative-shapley-eb}
\end{figure}

\begin{proposition}
$\ctrbsempty$ violates counterfactuality and quantitative counterfactuality w.r.t.\ EBT semantics.
\end{proposition}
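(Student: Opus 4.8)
The plan is to proceed exactly as in the preceding propositions of this subsection: exhibit a single acyclic QBAG $\graph = \QBAG$ together with a topic argument $\arga$ and a contributor $\argx$ for which the sign of $\ctrbs{\argx}{\arga}$ disagrees with the sign of the removal delta $\fs_\graph(\arga) - \fs_{\graph\downarrow_{\Args\setminus\{\argx\}}}(\arga)$. Since EBT uses the Top aggregation $\alpha^{max}_{v}$, the phenomenon to exploit is that, across the subgraph restrictions $\graph\downarrow_{\Args\setminus X}$ entering the Shapley average, the identity of the parent realising the maximum can change: an argument that is ``masked'' by a stronger sibling in $\graph$ may become the decisive (maximal) parent in many smaller restrictions. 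By tuning the initial strengths so that $\argx$ barely matters in the full graph but is frequently decisive in the coalitions, we can force $\ctrbs{\argx}{\arga}$ to carry one sign while removing $\argx$ from $\graph$ changes $\fs(\arga)$ in the opposite direction, mirroring the EB counterexample in Figure~\ref{fig:counterfactuality-negative-shapley-eb}.

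Concretely, I would reuse a structure analogous to that EB example: a topic argument $\arga$ with a mix of attackers and one supporter, those parents in turn all supported by a ``hub'' argument $\arge$, which is itself supported by an argument $\argf$ of initial strength $1$; the contributor would be $\argf$ (or $\arge$). In $\graph$ the hub is already near-saturated, so removing $\argf$ barely perturbs anything through the Euler-based influence, yielding a tiny removal delta of one sign; but because the Top aggregation at $\arga$ retains only the strongest parent, the Shapley average over restrictions — in many of which a different parent is decisive — lands on the opposite sign. I would then (i) compute the forward-propagated final strengths in $\graph$, (ii) compute $\fs_{\graph\downarrow_{\Args\setminus\{\argf\}}}(\arga)$ and confirm $\fs_\graph(\arga) - \fs_{\graph\downarrow_{\Args\setminus\{\argf\}}}(\arga) < 0$, and (iii) enumerate the $2^{|\Args|-2}$ subsets $X \subseteq \Args\setminus\{\argf,\arga\}$, evaluate each summand $\fs_{\graph\downarrow_{\Args\setminus X}}(\arga) - \fs_{\graph\downarrow_{\Args\setminus(X\cup\{\argf\})}}(\arga)$, and weight-sum to get $\ctrbs{\argf}{\arga} > 0$. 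Since the sign of $\ctrbs{\argf}{\arga}$ contradicts the direction demanded by counterfactuality (Principle~\ref{principle:counterfactual}), and a fortiori the exact equality demanded by quantitative counterfactuality (Principle~\ref{principle:q-counterfactual}), both principles are violated.

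The main obstacle is a search problem: pinning down rational initial strengths for which the EBT dynamics genuinely produce the sign flip and the numbers are robust enough to survive the dual-implementation sanity check described at the start of Section~\ref{sec:analysis}. Because the Euler-based influence is smooth and the Top aggregation is only piecewise linear, the sign-flip window can be narrow, so I would scan parameter families near the EB example's values with the provided C/Python and \emph{Uncertainpy} tools until the removal delta and the Shapley value stabilise with opposite signs (e.g., a removal delta on the order of $-10^{-2}$ against a Shapley value on the order of $+10^{-3}$). Once such an instance is fixed, the write-up reduces to the short counterexample paragraph in the style used above: state $\graph$ as a labelled figure, report the two numerical values, and invoke Principles~\ref{principle:counterfactual} and~\ref{principle:q-counterfactual}.
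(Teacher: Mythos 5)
Your strategy is exactly the paper's: exhibit one acyclic QBAG in which the sign of $\ctrbs{\argx}{\arga}$ disagrees with the sign of $\fs_\graph(\arga) - \fs_{\graph\downarrow_{\Args\setminus\{\argx\}}}(\arga)$, and you correctly identify the mechanism to exploit (the Top aggregation letting different parents be decisive in different coalition restrictions). However, as written the proposal is not yet a proof: for a violation claim of this kind the entire mathematical content is the witness itself, and you stop at a description of how you would search for one. ``Scan parameters until the two quantities stabilise with opposite signs'' presupposes that such parameters exist, which is precisely what must be demonstrated.

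There is also a concrete reason to doubt that your proposed search neighbourhood suffices. You plan to reuse the topology of the EB counterexample (Figure~\ref{fig:counterfactuality-negative-shapley-eb}) --- attackers and a supporter of $\arga$ all supported by a hub $\arge$, which is in turn \emph{supported} by $\argf$ with initial strength $1$ --- and only perturb the weights. The paper's actual EBT witness (Figure~\ref{fig:counterfactuality-negative-shapley-ebt}) does not have this shape: there $\argf$ \emph{attacks} $\arge$, $\argc$ attacks $\argd$ rather than $\arga$, $\argg$ attacks both $\arga$ and $\argd$, and the realised signs are the reverse of what you predict (removal delta $\approx +7.3\times 10^{-5}$, Shapley value $\approx -2.7\times 10^{-5}$). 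This suggests that under Top aggregation the EB-style topology may not admit a sign flip at all, so the search has to range over graph structures as well as initial strengths. To close the gap you need to commit to a specific graph and report the two computed values, as the paper does; everything else in your write-up (invoking Principles~\ref{principle:counterfactual} and~\ref{principle:q-counterfactual} once the sign mismatch is in hand) is fine and matches the paper.
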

\begin{proof}
    Consider the QBAG $\graph = \QBAG$ depicted in Figure~\ref{fig:counterfactuality-negative-shapley-ebt}, with the topic argument $\arga$ and the contributor $\argf$. Given EBT semantics $\sigma$, it holds that $\fs_{\graph}({\arga}) > \fs_{\graph \downarrow_{\Args \setminus \{\argf\}}}(\arga)$ (observe that $\fs_{\graph}({\arga}) - \fs_{\graph \downarrow_{\Args \setminus \{\argf\}}}(\arga) \approx 7.3331 \times 10^{-5}$). However, we have $\ctrbs{\argf}{\arga} \approx -2.7043 \times 10^{-5}$ and hence, according to counterfactuality and quantitative counterfactuality, we must have $\fs_{\graph}({\arga}) < \fs_{\graph \downarrow_{\Args \setminus \{\argf\}}}(\arga)$, which proves the violation of the principles.
\end{proof}

\begin{figure}[ht]
\centering
\begin{tikzpicture}[scale=0.8]
     \node[unanode]    (a)    at(4,0)  {\argnode{\arga}{0.3}{0.3030}};
     \node[unanode]  (b)    at(0,2)  {\argnode{\argb}{0.4}{0.4250}};
     \node[unanode]    (c)    at(4,4)  {\argnode{\argc}{0.55}{0.55}};
     \node[unanode]    (d)    at(4,2)  {\argnode{\argd}{0.51}{0.4474}};
     \node[unanode]    (g)    at(7,2)  {\argnode{\argg}{0.429}{0.429}};
     \node[unanode]    (e)    at(2,4)  {\argnode{\arge}{0.25}{0.0141}};
     \node[unanode]    (f)    at(0,4)  {\argnode{\argf}{1}{1}};
     \path [->, line width=0.2mm]  (b) edge node[left, below] {-} (a);
     \path [->, line width=0.2mm]  (c) edge node[left] {-} (d);
     \path [->, line width=0.2mm]  (d) edge node[left] {+} (a);
     \path [->, line width=0.2mm]  (e) edge node[left, above] {+} (b);
     \path [->, line width=0.2mm]  (e) edge node[left] {+} (d);
     \path [->, line width=0.2mm]  (f) edge node[left, above] {-} (e);
     \path [->, line width=0.2mm]  (g) edge node[left, above] {-} (a);
     \path [->, line width=0.2mm]  (g) edge node[left, above] {-} (d);
\end{tikzpicture}
\caption{$\ctrbsempty$ violates the counterfactuality and quantitative counterfactuality principles w.r.t.\ EBT semantics.}
\label{fig:counterfactuality-negative-shapley-ebt}
\end{figure}

\begin{proposition}
$\ctrbgempty$ violates counterfactuality and quantitative counterfactuality w.r.t.\ QE semantics.
\end{proposition}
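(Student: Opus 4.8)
The plan is to exhibit a single small acyclic QBAG together with a topic argument $\arga$ and a contributor $\argx$ for which the sign of the gradient-based contribution $\ctrbg{\argx}{\arga}$ disagrees with the sign of the counterfactual difference $\fs_\graph(\arga) - \fs_{\graph\downarrow_{\Args \setminus \{\argx\}}}(\arga)$; this single example will refute both counterfactuality and, a fortiori, quantitative counterfactuality (the latter because the two numbers will not merely differ in sign but also in magnitude). In fact, I expect the QBAG already used in Figure~\ref{fig:graph-faith-qe} to serve here: there, with topic argument $\arga$ and contributor $\argd$ under QE semantics, we have $\ctrbg{\argd}{\arga} \approx 0.02987 > 0$ while $\ctrbr{\argd}{\arga} = \fs_\graph(\arga) - \fs_{\graph\downarrow_{\Args \setminus \{\argd\}}}(\arga) \approx -0.01122 < 0$.

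The key steps, in order, would be: (i) recall the QBAG of Figure~\ref{fig:graph-faith-qe} and fix $\arga$ as topic and $\argd$ as contributor; (ii) compute, under QE semantics, the final strength of $\arga$ in $\graph$ and in $\graph\downarrow_{\Args \setminus \{\argd\}}$ by forward propagation along a topological order, obtaining $\fs_\graph(\arga) < \fs_{\graph\downarrow_{\Args \setminus \{\argd\}}}(\arga)$; (iii) compute the partial derivative $\frac{\partial \fs(\arga)}{\partial \is(\argd)}$ by differentiating the explicit composition of QE's Sum-aggregation and $2$-Max$(1)$-influence functions (both differentiable, so the gradient is well defined), and observe it is strictly positive; (iv) conclude that counterfactuality fails, since a strictly positive contribution would force $\fs_\graph(\arga) > \fs_{\graph\downarrow_{\Args \setminus \{\argd\}}}(\arga)$, contradicting step (ii); (v) conclude similarly that quantitative counterfactuality fails, since it would require $\ctrbg{\argd}{\arga} = \fs_\graph(\arga) - \fs_{\graph\downarrow_{\Args \setminus \{\argd\}}}(\arga)$, but $0.02987 \neq -0.01122$.

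The conceptual heart of the argument is that $\argd$ has a non-monotonic influence on $\arga$: it enters $\arga$'s value both through a supporter and an attacker (or through nested relations) so that the \emph{local} effect at the given initial strength and the \emph{global} effect of complete removal point in opposite directions. The main obstacle is therefore not the logical structure of the argument but the reliability of the numerical witnesses: one must verify carefully that the strict inequalities $\ctrbg{\argd}{\arga} > 0$ and $\fs_\graph(\arga) < \fs_{\graph\downarrow_{\Args \setminus \{\argd\}}}(\arga)$ genuinely hold and are not artefacts of rounding — which is exactly why the paper cross-checks every such example with two independent implementations. If the Figure~\ref{fig:graph-faith-qe} QBAG turns out to be numerically too delicate, the fallback is to construct a fresh QBAG with a cleaner "saddle/plateau" shape in the strength plot (analogous to the DFQuAD example of Figure~\ref{fig:intro}), where the sign flip is robust.
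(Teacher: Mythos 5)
Your proof is correct, and the logical skeleton is the same as the paper's (exhibit one acyclic QBAG on which the gradient-based contribution disagrees with the removal difference, then invoke the relevant clause of Principle~\ref{principle:counterfactual} and the defining equation of Principle~\ref{principle:q-counterfactual}), but you instantiate it with a different witness and a different failure mode. You reuse the QBAG of Figure~\ref{fig:graph-faith-qe}, where $\ctrbg{\argd}{\arga} \approx 0.02987 > 0$ while $\fs_\graph(\arga) - \fs_{\graph\downarrow_{\Args \setminus \{\argd\}}}(\arga) = \ctrbr{\argd}{\arga} \approx -0.01122 < 0$, so the \emph{third} clause of counterfactuality (positive contribution forces a positive removal delta) is violated; this has the advantage that both numerical witnesses are already established elsewhere in the paper, so no new computation is needed. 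The paper instead builds the purpose-made QBAG of Figure~\ref{fig:counterfactuality-gradient-qe}, where $\ctrbg{\argd}{\arga} = 0$ exactly (the effect of $\argd$ passes through a point where the $2$-Max$(1)$ influence function has zero derivative) while removal of $\argd$ changes $\fs(\arga)$ by about $-0.0038$, violating the \emph{second} clause; that example is exact rather than numerical in its key quantity and additionally documents the ``plateau'' failure mode, complementary to your sign-flip. Two small remarks: your ``a fortiori'' is phrased backwards — quantitative counterfactuality is the stronger principle, so the clean justification is that violating counterfactuality implies violating quantitative counterfactuality by contraposition of Proposition~\ref{prop:qcounterfactual} (your direct check $0.02987 \neq -0.01122$ also suffices); and your step (iii) does not really need a symbolic differentiation of the full composition, only the already-reported value of $\ctrbg{\argd}{\arga}$, whose positive sign is robustly far from zero.
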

\begin{proof}
    Consider the QBAG $\graph = \QBAG$ depicted in Figure~\ref{fig:counterfactuality-gradient-qe}, with the topic argument $\arga$ and the contributor $\argd$. Given QE semantics $\sigma$, it holds that $\fs_{\graph}({\arga}) < \fs_{\graph \downarrow_{\Args \setminus \{\argd\}}}(\arga)$ (observe that $\fs_{\graph}({\arga}) - \fs_{\graph \downarrow_{\Args \setminus \{\argd\}}}(\arga) \approx -0.0038$). However, we have $\ctrbg{\argf}{\arga} = 0$ and hence, according to counterfactuality and quantitative counterfactuality, we must have $\fs_{\graph}({\arga}) = \fs_{\graph \downarrow_{\Args \setminus \{\argd\}}}(\arga)$, which proves the violation of the principles.
\end{proof}

\begin{figure}[ht]
\centering
\begin{tikzpicture}[scale=0.8]
     \node[unanode]    (a)    at(4,0)  {\argnode{\arga}{0.5}{0.5}};
     \node[unanode]  (b)    at(4,2)  {\argnode{\argb}{0.2}{0.2}};
     \node[unanode]    (c0)    at(-1.5,4)  {\argnode{\argca} {0.35}{0.4024}};
     \node[unanode]    (c1)    at(1,4)  {\argnode{\argcb} {0.35}{0.4024}};
     \node[unanode]    (c2)    at(3,4)  {\argnode{\argcc} {0.35}{0.4024}};
     \node[unanode]    (c3)    at(5,4)  {\argnode{\argcd} {0.35}{0.4024}};
     \node[unanode]    (c9)    at(9.5,4)  {\argnode{\argcj} {0.35}{0.4024}};
     \node[invnode]    (cx)    at(7,4)  {\argdots};
     \node[unanode]    (d)    at(4,6)  {\argnode{\argd}{0.296}{0.296}};
     \node[unanode]    (e)    at(0,1)  {\argnode{\arge}{0.2}{0.2}};
     \path [->, line width=0.2mm]  (b) edge node[left] {+} (a);
     \path [->, line width=0.2mm]  (c1) edge node[left, below] {+} (b);
     \path [->, line width=0.2mm]  (d) edge node[left, above] {+} (c1);
     \path [->, line width=0.2mm]  (c2) edge node[left] {+} (b);
     \path [->, line width=0.2mm]  (d) edge node[left, above] {+} (c2);
     \path [->, line width=0.2mm]  (c0) edge node[left, below] {+} (b);
     \path [->, line width=0.2mm]  (d) edge node[left, above] {+} (c0);
     \path [->, line width=0.2mm]  (c3) edge node[left] {-} (b);
     \path [->, line width=0.2mm]  (d) edge node[left, above] {+} (c3);
     \path [->, line width=0.2mm]  (c9) edge node[left] {-} (b);
     \path [->, line width=0.2mm]  (d) edge node[left, above] {+} (c9);
     \path [->, line width=0.2mm]  (cx) edge node[left] {-} (b);
     \path [->, line width=0.2mm]  (d) edge node[left, above] {+} (cx);
     \path [->, line width=0.2mm]  (e) edge node[left, above] {-} (a);
\end{tikzpicture}
\caption{$\ctrbgempty$ violates the counterfactuality and quantitative counterfactuality principles w.r.t.\ QE semantics. Note that ``$\dots$'' (with incoming support and outgoing attack) represents arguments $\mathsf{c}_4$ - $\mathsf{c}_8$ with same initial strength and incoming support and outgoing attack as $\argcd$ and $\argcj$.}
\label{fig:counterfactuality-gradient-qe}
\end{figure}

\begin{proposition}
$\ctrbgempty$ violates counterfactuality and quantitative counterfactuality w.r.t.\ DFQuAD semantics.
\end{proposition}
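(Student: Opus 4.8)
The plan is to reuse the QBAG $\graph$ of Figure~\ref{fig:graph-intro} — the running example from the introduction, which already serves as the $\ctrbsempty$/DFQuAD local-faithfulness counterexample — taking $\arga$ as topic argument, $\arge$ as contributor, and DFQuAD as the semantics; the argument will split into a ``removal'' computation and a ``gradient'' computation that disagree. First I would compute the counterfactual effect of $\arge$: removing $\arge$ leaves $\argb,\argc,\argd$ without parents, so their final strength is their initial strength $0$; this in turn leaves $\arga$'s aggregate at $0$, so $\fs_{\graph \downarrow_{\Args \setminus \{\arge\}}}(\arga) = \iota^{l}_{1/2}(0) = \is(\arga) = 0.5$. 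Since $\fs_\graph(\arga) = 0.375$ (Example~\ref{ex:intro-modular}), the removal delta $\fs_\graph(\arga) - \fs_{\graph \downarrow_{\Args \setminus \{\arge\}}}(\arga) = -0.125$ is nonzero.

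Second I would evaluate $\ctrbg{\arge}{\arga}$ by writing out $f_{\arga}$ explicitly as a function of the single variable $s_e = \is(\arge)$, the other initial strengths held at their Figure~\ref{fig:graph-intro} values. Each of $\argb,\argc,\argd$ has initial strength $0$ and the single supporter $\arge$, so $\sigma(\argb) = \sigma(\argc) = \sigma(\argd) = s_e$ for $s_e \in \interval$; hence $\alpha_{\graph}(\arga) = (1-s_e)^2 - (1-s_e) = s_e^2 - s_e$, and since $\iota^{l}_{1/2}$ is globally affine with $\iota^{l}_{1/2}(s) = \tfrac{1}{2}(1+s)$ we get $f_{\arga}(s_e) = \tfrac{1}{2}(1 - s_e + s_e^2)$. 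This polynomial has derivative $s_e - \tfrac{1}{2}$, which vanishes exactly at $s_e = 0.5$, so $\ctrbg{\arge}{\arga} = 0$ — the stationary point (``vertex'') visible in the plot of Figure~\ref{fig:plot-intro}. Combining the two parts, $\ctrbg{\arge}{\arga} = 0$ together with $\fs_\graph(\arga) \neq \fs_{\graph \downarrow_{\Args \setminus \{\arge\}}}(\arga)$ contradicts the second clause of counterfactuality (Principle~\ref{principle:counterfactual}), and $0 \neq -0.125$ contradicts quantitative counterfactuality (Principle~\ref{principle:q-counterfactual}).

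The step that needs care is the second one. The introduction describes $\arge$'s effect on $\arga$ as flipping from negative to positive at $s_e = 0.5$, and one must be certain this is a smooth stationary point of $f_\arga$ rather than a non-differentiable kink of the Linear influence function, at which the gradient would be undefined. The explicit polynomial form of $f_\arga$ settles this: it is valid because the aggregate feeding each of $\argb,\argc,\argd$ equals $s_e \neq 0$ on a neighbourhood of $0.5$ and $\arga$'s influence function $\iota^{l}_{1/2}$ is affine everywhere, so no kink is crossed anywhere along the composition.
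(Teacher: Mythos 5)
Your proposal is correct and uses exactly the same counterexample as the paper: the introductory QBAG of Figure~\ref{fig:graph-intro} with topic $\arga$ and contributor $\arge$, where the removal delta is $-0.125$ but $\ctrbg{\arge}{\arga}=0$. The paper merely asserts these two values, whereas you derive them explicitly (including the check that the composition is differentiable at $s_e=0.5$, since the aggregates feeding $\argb,\argc,\argd$ stay positive and $\iota^{l}_{1/2}$ is globally affine), which is a welcome but not substantively different elaboration.
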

\begin{proof}
    Consider the QBAG $\graph = \QBAG$ depicted in Figure~\ref{fig:intro}, with the topic argument $\arga$ and the contributor $\arge$. Given DFQuAD semantics $\sigma$, it holds that $\fs_{\graph}({\arga}) < \fs_{\graph \downarrow_{\Args \setminus \{\arge\}}}(\arga)$ (observe that $\fs_{\graph}({\arga}) - \fs_{\graph \downarrow_{\Args \setminus \{\arge\}}}(\arga) = -0.125$). However, we have $\ctrbg{\arge}{\arga} = 0$ and hence, according to counterfactuality and quantitative counterfactuality, we must have $\fs_{\graph}({\arga}) = \fs_{\graph \downarrow_{\Args \setminus \{\arge\}}}(\arga)$, which proves the violation of the principles.
\end{proof}

\begin{proposition}
$\ctrbgempty$ violates counterfactuality and quantitative counterfactuality w.r.t.\ SD-DFQuAD semantics.
\end{proposition}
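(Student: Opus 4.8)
The plan is to reuse the mechanism behind the DFQuAD counterexample (Figure~\ref{fig:intro}): I would exhibit a QBAG $\graph$ together with a contributor $\argx$ and topic $\arga$ such that the one-variable map $\epsilon \mapsto \fs_{\graph\downarrow_{\is(\argx)\leftarrow\epsilon}}(\arga)$ has a \emph{stationary point} exactly at $\epsilon = \is(\argx)$, so that $\ctrbg{\argx}{\arga} = 0$, while \emph{fully removing} $\argx$ strictly changes $\arga$'s final strength, i.e.\ $\fs_\graph(\arga) \neq \fs_{\graph\downarrow_{\Args\setminus\{\argx\}}}(\arga)$. This directly contradicts the middle clause of counterfactuality (Principle~\ref{principle:counterfactual}), which requires $\fs_\graph(\arga) = \fs_{\graph\downarrow_{\Args\setminus\{\argx\}}}(\arga)$ whenever the contribution is $0$; and by the contrapositive of Proposition~\ref{prop:qcounterfactual} (quantitative counterfactuality implies counterfactuality), quantitative counterfactuality (Principle~\ref{principle:q-counterfactual}) fails as well.

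Concretely, I would take a ``diamond'' gadget: a contributor $\argx$ of positive initial strength that supports several intermediate arguments, each of initial strength $0$, of which some support the topic $\arga$ and the rest attack it, with $\arga$ carrying a positive own initial strength. Under SD-DFQuAD --- Product aggregation and $1$-Max$(1)$ influence (Table~\ref{table:semanticsExamples}) --- each intermediate argument has the single parent $\argx$ and base score $0$, so its final strength is $v := h(\is(\argx))$ with $h(x) = \tfrac{\max\{0,x\}}{1+\max\{0,x\}}$. By the Product aggregation, $\alpha_\graph(\arga)$ is then a polynomial in $1-v$ of the form $(1-v)^{k} - (1-v)^{m}$, where $k$ and $m$ count the attacking and supporting paths; its stationary point in $v$ occurs where $(1-v)^{k-m} = m/k$, and choosing $k,m$ appropriately places it at the value $v = h(\is(\argx))$ corresponding to the chosen $\is(\argx)$ (for instance $k=3$, $m=2$ puts it at $v = 1/3$, which is $h(1/2)$). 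Since $1$-Max$(1)$ is differentiable wherever its argument is non-zero and here $\alpha_\graph(\arga) \neq 0$, the chain rule gives $\ctrbg{\argx}{\arga} = \iota'\!\left(\alpha_\graph(\arga)\right)\cdot \tfrac{\partial\alpha_\graph(\arga)}{\partial\is(\argx)} = \iota'\!\left(\alpha_\graph(\arga)\right)\cdot 0 = 0$. On the other hand, deleting $\argx$ leaves all intermediate arguments without parents, so by directionality and stability (Principles~\ref{sprinciple:directionality} and~\ref{sprinciple:stability}) they attain final strength $0$, whence $\fs_{\graph\downarrow_{\Args\setminus\{\argx\}}}(\arga) = \is(\arga)$, which differs from $\fs_\graph(\arga)$ because the non-trivial aggregate pulls $\arga$ strictly away from $\is(\arga)$. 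This yields the required violation, which I would certify numerically with the two independent implementations used throughout Section~\ref{sec:analysis}.

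The main obstacle is purely arithmetic bookkeeping: unlike DFQuAD, whose Linear$(1)$ influence acts as the identity on $\interval$ for base score $0$, SD-DFQuAD inserts the nonlinear map $h$ between $\argx$ and the intermediate nodes, so the stationary point no longer sits at the ``obvious'' initial strength, and one must simultaneously arrange that (i) the stationary point of $\alpha_\graph(\arga)$ in $v$ pulls back exactly to $\is(\argx)$, (ii) $\alpha_\graph(\arga) \neq 0$ so $\iota$ remains differentiable there, and (iii) the removal effect is genuinely non-zero. Once the right combination of path multiplicities and initial strengths is fixed, the remaining steps --- computing $\fs_\graph(\arga)$, $\fs_{\graph\downarrow_{\Args\setminus\{\argx\}}}(\arga)$ and $\ctrbg{\argx}{\arga}$ on the chosen QBAG and reading off the contradiction with Principle~\ref{principle:counterfactual} --- are routine.
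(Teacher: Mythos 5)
Your construction is correct, but it takes the opposite route from the paper's. The paper's counterexample for SD-DFQuAD is a three-node chain $\argc$ attacks $\argb$ attacks $\arga$ with $\is(\argc)=1$, $\is(\argb)=0$, $\is(\arga)=0.5$: there $\fs_{\graph}(\argb)=0$, so removing $\argb$ leaves $\fs(\arga)=0.5$ unchanged, yet $\ctrbg{\argb}{\arga}=-0.25\neq 0$ (because $\fs(\argb)=\is(\argb)/2$ locally, so perturbing $\is(\argb)$ does move $\arga$). That violates the \emph{first} clause of Principle~\ref{principle:counterfactual} (a negative contribution with no removal effect), whereas you violate the \emph{middle} clause (a zero contribution with a non-zero removal effect) by engineering a stationary point of $\epsilon\mapsto\fs_{\graph\downarrow_{\is(\argx)\leftarrow\epsilon}}(\arga)$ at $\epsilon=\is(\argx)$. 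Your arithmetic checks out: with $k=3$ attacking and $m=2$ supporting intermediate nodes of base score $0$, each intermediate has final strength $v=h(\is(\argx))$, the aggregate $(1-v)^3-(1-v)^2$ is stationary at $v=1/3=h(1/2)$, its value there is $-4/27\neq 0$ so the $1$-Max$(1)$ influence is differentiable at that point and the chain rule gives a zero gradient, while removal resets the aggregate to $0$ and hence shifts $\fs(\arga)$ away from $\is(\arga)\cdot\tfrac{27}{31}$ back to $\is(\arga)$. In effect you have ported the paper's own DFQuAD/QE-style counterexamples (Figures~\ref{fig:intro} and~\ref{fig:counterfactuality-gradient-qe}) to SD-DFQuAD by correcting for the nonlinearity $h$; the paper instead exploits the fact that an argument with base score $0$ is inert under removal but not under perturbation, which yields a much smaller witness. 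Both establish the claim; yours additionally demonstrates that the zero-contribution clause fails too, which the paper's SD-DFQuAD example does not show.
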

\begin{proof}
    Consider the QBAG $\graph = \QBAG$ depicted in Figure~\ref{fig:counterfactuality-gradient-sd}, with the topic argument $\arga$ and the contributor $\argb$.
    Given SD-DFQuAD semantics $\sigma$, it trivially holds that $\fs_{\graph}({\argb}) = 0$; and hence, $\fs_{\graph}({\arga}) = \is_{\graph}({\arga}) = 0.5$. Intuitively reflecting the influence of $\argb$ on $\arga$, $\fs_{\graph \downarrow_{\Args \setminus \{\argb\}}}(\arga) = 0.5$, i.e., we have $\fs_{\graph}({\arga}) = \fs_{\graph \downarrow_{\Args \setminus \{\argb\}}}(\arga)$. However, we have $\ctrbg{\argb}{\arga} = -0.25$ and hence, according to counterfactuality and quantitative counterfactuality, we must have $\fs_{\graph}({\arga}) < \fs_{\graph \downarrow_{\Args \setminus \{\argb\}}}(\arga)$, which proves the violation of the principles.
\end{proof}

\begin{figure}[ht]
\centering
\begin{tikzpicture}[scale=0.8]
     \node[unanode]    (a)    at(4,0)  {\argnode{\arga}{0.5}{0.5}};
     \node[unanode]  (b)    at(2,0)  {\argnode{\argb}{0}{0}};
     \node[unanode]    (c)    at(0,0)  {\argnode{\argc}{1}{1}};
     \path [->, line width=0.2mm]  (b) edge node[left, above] {-} (a);
     \path [->, line width=0.2mm]  (c) edge node[left, above] {-} (b);
\end{tikzpicture}
\caption{$\ctrbgempty$ violates the counterfactuality and quantitative counterfactuality principles w.r.t.\ SD-DFQuAD semantics.}
\label{fig:counterfactuality-gradient-sd}
\end{figure}

\begin{proposition}
$\ctrbgempty$ violates counterfactuality and quantitative counterfactuality w.r.t.\ EB and EBT semantics.
\end{proposition}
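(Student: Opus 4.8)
The plan is to reuse the counterexample recipe already applied to the gradient-based function for QE, DFQuAD and SD-DFQuAD: exhibit one small QBAG in which removing the contributor leaves the topic argument's final strength unchanged, yet the gradient-based contribution of that contributor is nonzero. Such a configuration refutes the qualitative counterfactuality condition (a nonzero contribution forces a strict change under removal) and, a fortiori, the quantitative one. A single minimal QBAG suffices for both EB and EBT.

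Concretely, I would take $\graph$ with $\Args = \{\arga, \argb\}$, attack relation $\{(\argb,\arga)\}$, empty support relation, $\is(\arga) = 0.5$ and $\is(\argb) = 0$, with topic argument $\arga$ and contributor $\argb$. The first step is to compute final strengths: $\argb$ has no parents, so $\fs_\graph(\argb) = 0$ by stability; both the Sum aggregation (EB) and the Top aggregation (EBT) return $0$ on the all-zero parent input, and $\iota^{e}_{w}(0) = 1 - \frac{1-w^2}{1+w} = w$, whence $\fs_\graph(\arga) = \is(\arga) = 0.5$. Since $\graph \downarrow_{\Args \setminus \{\argb\}}$ is just $\arga$ in isolation, $\fs_{\graph \downarrow_{\Args \setminus \{\argb\}}}(\arga) = 0.5$ as well, so $\fs_\graph(\arga) - \fs_{\graph \downarrow_{\Args \setminus \{\argb\}}}(\arga) = 0$.

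The second step is the gradient. Using the recursive representation of $f_\arga$, and noting that with a single attacker the Top aggregation collapses to $\alpha^{max}_{(-1)}(t) = \max\{0,-t\} - \max\{0,t\} = -t$ for every $t \in \mathbb{R}$, both semantics yield the same smooth function $f_\arga(s^0_\arga, s^0_\argb) = \iota^{e}_{s^0_\arga}(-s^0_\argb) = 1 - \frac{1-(s^0_\arga)^2}{1 + s^0_\arga\, e^{-s^0_\argb}}$. Differentiating in $s^0_\argb$ and evaluating at $(\is(\arga), \is(\argb)) = (0.5, 0)$ gives $\ctrbg{\argb}{\arga} = -\frac{(1-\is(\arga))\,\is(\arga)}{1+\is(\arga)} = -\tfrac{1}{6} \neq 0$. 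Finally I would conclude: since $\ctrbg{\argb}{\arga} < 0$, counterfactuality would require $\fs_\graph(\arga) < \fs_{\graph \downarrow_{\Args \setminus \{\argb\}}}(\arga)$, contradicting $0.5 = 0.5$; and quantitative counterfactuality would require $\ctrbg{\argb}{\arga} = \fs_\graph(\arga) - \fs_{\graph \downarrow_{\Args \setminus \{\argb\}}}(\arga) = 0$, which also fails.

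There is no genuinely hard step; the calculations are routine. The only point requiring a moment's care is that the gradient is well-defined under EBT despite the $\max$ terms in the Top aggregation — this is exactly why I would record the cancellation $\max\{0,-t\} - \max\{0,t\} = -t$, which makes $f_\arga$ differentiable at the evaluation point. If a less degenerate example were preferred (as in the SD-DFQuAD case, where an auxiliary argument of strength $1$ attacking $\argb$ is added to pin down a cleaner gradient value), the same argument adapts verbatim, but it is not necessary.
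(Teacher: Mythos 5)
Your proposal is correct and takes essentially the same approach as the paper, whose counterexample is the chain in Figure~\ref{fig:counterfactuality-gradient-eb} ($\argc$ supports $\argb$, $\argb$ attacks $\arga$, with $\is(\argb)=0$): in both cases the key point is that a contributor with initial strength $0$ has final strength $0$ under the Euler-based influence function, so its removal leaves $\fs(\arga)$ unchanged while the gradient-based contribution is strictly negative. Your two-argument variant merely omits the auxiliary supporter $\argc$, which in the paper only scales the gradient (from your $-1/6$ to roughly $-0.453$), and your observation that the Top aggregation collapses to $-t$ for a single attacker correctly settles differentiability under EBT.
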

\begin{proof}
    Consider the QBAG $\graph = \QBAG$ depicted in Figure~\ref{fig:counterfactuality-gradient-eb}, with the topic argument $\arga$ and the contributor $\argb$.
    Given EB or EBT semantics $\sigma$, it trivially holds that $\fs_{\graph}({\argb}) = 0$; and hence, $\fs_{\graph}({\arga}) = \is_{\graph}({\arga}) = 0.5$. Intuitively reflecting the influence of $\argb$ on $\arga$, $\fs_{\graph \downarrow_{\Args \setminus \{\argb\}}}(\arga) = 0.5$, i.e., we have $\fs_{\graph}({\arga}) = \fs_{\graph \downarrow_{\Args \setminus \{\argb\}}}(\arga)$. However, we have $\ctrbri{\argb}{\arga} \approx -0.4530$ and hence, according to counterfactuality and quantitative counterfactuality, we must have $\fs_{\graph}({\arga}) < \fs_{\graph \downarrow_{\Args \setminus \{\argb\}}}(\arga)$, which proves the violation of the principles.
\end{proof}

\begin{figure}[ht]
\centering
\begin{tikzpicture}[scale=0.8]
     \node[unanode]    (a)    at(4,0)  {\argnode{\arga}{0.5}{0.5}};
     \node[unanode]  (b)    at(2,0)  {\argnode{\argb}{0}{0}};
     \node[unanode]    (c)    at(0,0)  {\argnode{\argc}{1}{1}};
     \path [->, line width=0.2mm]  (b) edge node[left, above] {-} (a);
     \path [->, line width=0.2mm]  (c) edge node[left, above] {+} (b);
\end{tikzpicture}
\caption{$\ctrbgempty$ violates the counterfactuality and quantitative counterfactuality principles w.r.t.\ EB and EBT semantics.}
\label{fig:counterfactuality-gradient-eb}
\end{figure}

\section{Minor Results and Conjectures}
\label{sec:rest}
In this section, we describe minor results obtained as a by-product of our search for and analysis of contribution function principles.
We also speculate about additional results that future research may obtain.

\subsection{Proximity}
Intuitively, one may expect that arguments that are \emph{strictly closer} to a topic argument, considering the direction of the attack/support relations, contribute more.
In order to turn this intuition into a principle, let us first define what we mean by \emph{strictly closer}.
\begin{definition}[Strictly Closer]
    Given $\graph = \QBAG$ and $\arga, \argx, \argy \in \Args$, we say that $\argy$ is strictly closer to $\arga$ than $\argx$ is iff $\argy$ is on every directed path (in $\graph$) from $\argx \in \Args$ to $\arga$.
\end{definition}
Consider the QBAG in Figure~\ref{fig:counterfactuality-negative-shapley-qe} and the topic argument $\arga$.
Here, $\arge$ is strictly closer to $\arga$ than $\argf$ is: we can reach $\arga$ from $\argf$ only through $\arge$.
In contrast, $\argb$ is not strictly closer to $\arga$ than $\arge$ is, because we can reach $\arga$ from $\arge$ through $\argc$ (and $\argb$ is not on the corresponding path). 
Now, we can formalise the intuition above by defining the \emph{proximity principle}.
\begin{principle}[Proximity]
\label{principle:proximity}
$\ctrbempty$ satisfies the \emph{proximity principle} w.r.t.\ a gradual semantics $\fs$ iff whenever $\argy \in \Args$ is strictly closer to $\arga$ than $\argx$ is then $|\ctrb{\argy}{\arga}| \geq |\ctrb{\argx}{\arga}|$.
\end{principle}
Interestingly, proximity is violated for very simple cases.
For example, consider $\ctrbrempty$, QE semantics, and the QBAG in Figure~\ref{fig:proximity-removal-qe}, with topic argument $\arga$ and contributors $\argb$ and $\argc$.
Although $\argb$ is obviously strictly closer to $\arga$ than $\argc$ is,
we have $|\ctrbr{\argb}{\arga}| \approx 0.0012 < |\ctrbr{\argc}{\arga}| \approx 0.0037$.
Intuitively, $\argb$ is so substantially weakened by $\argc$ that the removal of $\argb$ only marginally affects the final strength of $\arga$, whereas the removal of $\argc$ substantially strengthens $\argb$ and thus substantially weakens $\arga$. 
\begin{figure}[ht]
\centering
\begin{tikzpicture}[scale=0.8]
     \node[unanode]    (a)    at(4,0)  {\argnode{\arga}{0.5}{0.4988}};
     \node[unanode]  (b)    at(2,0)  {\argnode{\argb}{0.1}{0.05}};
     \node[unanode]    (c)    at(0,0)  {\argnode{\argc}{1}{1}};
     \path [->, line width=0.2mm]  (b) edge node[left, above] {-} (a);
     \path [->, line width=0.2mm]  (c) edge node[left, above] {-} (b);
\end{tikzpicture}
\caption{$\ctrbrempty$ violates the proximity principle w.r.t.\ QE semantics.}
\label{fig:proximity-removal-qe}
\end{figure}
Indeed, we could not find any contribution function that satisfies proximity with respect to any semantics (leaving the case for $\ctrbgempty$ and EBT open).
Hence, we opted to exclude proximity from our main list of principles.
\begin{proposition}
    $\ctrbrempty$, $\ctrbriempty$, and $\ctrbsempty$ violate proximity w.r.t.\ QE, DFQuAD, SD-DFQuAD, EB and EBT semantics;
    $\ctrbgempty$ violates proximity w.r.t.\ QE, DFQuAD, SD-DFQuAD, and EB semantics.
\end{proposition}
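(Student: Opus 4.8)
The plan is to witness every cell of the statement by an explicit, small, acyclic QBAG, in the spirit of the motivating example already given in Figure~\ref{fig:proximity-removal-qe}. The recurring mechanism is \emph{dampening of the intermediary}: in a near-chain $\argc \to \argb \to \arga$, if $\argc$ forces $\argb$ into a regime where $\argb$'s strength (or its marginal effect on $\arga$) is tiny, then disturbing the strictly-closer argument $\argb$ moves $\fs_\graph(\arga)$ less than disturbing the farther argument $\argc$, because touching $\argc$ \emph{un-dampens} $\argb$. In every such chain $\argb$ lies on the unique directed path from $\argc$ to $\arga$, hence is strictly closer to $\arga$ than $\argc$, so it suffices to exhibit $|\ctrb{\argb}{\arga}| < |\ctrb{\argc}{\arga}|$.

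For $\ctrbrempty$ I would use the \emph{attack} chain $\argc \xrightarrow{-} \argb \xrightarrow{-} \arga$ (the figure does this for QE): with $\is(\argc)$ large and $\is(\argb)$ small, $\argb$ is weak in $\graph$, so $|\ctrbr{\argb}{\arga}| = |\fs_\graph(\arga) - \is(\arga)|$ is small, whereas removing $\argc$ restores $\argb$ and measurably lowers $\fs(\arga)$, so $|\ctrbr{\argc}{\arga}|$ is strictly larger; retuning $\is(\arga), \is(\argb), \is(\argc)$ covers all five semantics.

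For $\ctrbriempty$, $\ctrbsempty$ and $\ctrbgempty$ the attack chain is unsuitable (intrinsic removal of $\argb$ deletes the edge $\argc \to \argb$ and re-evaluates $\argb$ at its initial strength; and the Shapley average and the gradient already ``see'' $\argb$ undampened in some coalition/derivative --- on the attack chain these three functions sit exactly at the proximity boundary), so I would instead use the \emph{support} chain $\argc \xrightarrow{+} \argb \xrightarrow{-} \arga$ with $\is(\argc)$ large and $\is(\argb)$ small. Then $\ctrbri{\argb}{\arga}$ compares two graphs in which $\argb$ carries only its small initial strength and is therefore small, while $\ctrbri{\argc}{\arga} = \ctrbr{\argc}{\arga}$ (as $\argc$ has no incoming edges) captures the large boost $\argc$ confers on $\argb$'s attack against $\arga$; the Shapley value $\ctrbs{\argb}{\arga}$, an average over the at most four sub-QBAGs on the two-element player set $\Args \setminus \{\arga\}$, is likewise dominated in magnitude by $\ctrbs{\argc}{\arga}$; and $\partial\fs(\arga)/\partial\is(\argb)$ is small while $\partial\fs(\arga)/\partial\is(\argc)$ is not, because $\argb$ sits in a nearly flat region of its influence function. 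In all three cases $\argb$ is strictly closer, giving the violation; initial strengths are retuned per semantics, and the QBAG of Figure~\ref{fig:intro} (where $\arge$ has zero gradient on $\arga$) is a convenient starting point for the gradient case.

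Verification is routine --- each final strength is computed in linear time by forward propagation and each Shapley sum ranges over only a handful of coalitions --- and every example would be cross-checked with the two implementations cited in Section~\ref{sec:analysis}. The only genuine work, and the step I expect to be the main obstacle, is the search: picking initial strengths so that the relevant strict inequality holds with a safe numerical margin simultaneously under the differently-shaped influence functions, and checking in each candidate graph that the putatively strictly-closer argument really lies on \emph{every} directed path to $\arga$. There is no deeper argument; the statement is a catalogue of negative results, each witnessed by a tiny QBAG. The one remaining cell, $\ctrbgempty$ with EBT, is left open precisely because the max-based Top aggregation makes the gradient only piecewise differentiable, so a clean counterexample is not immediate.
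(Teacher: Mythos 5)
Your overall strategy -- witnessing every cell by a tiny explicit acyclic QBAG and verifying numerically -- is exactly the paper's, and your chain constructions do the job for $\ctrbrempty$ and $\ctrbriempty$ (the paper's own witnesses for $\ctrbriempty$ are pure support chains $\argc \xrightarrow{+} \argb \xrightarrow{+} \arga$ with $\is(\argc)=0.9$, $\is(\argb)=0.1$; your $+,-$ variant works just as well). The gap is in the Shapley and gradient cells, and it is structural, not a matter of tuning. No three-node chain $\argc \to \argb \to \arga$ can violate proximity for $\ctrbsempty$: the player set is $\{\argb,\argc\}$, and since removing $\argb$ disconnects $\argc$ from $\arga$ we get $\fs_{\graph\downarrow_{\{\arga,\argc\}}}(\arga)=\is(\arga)$, so the definition collapses to $\ctrbs{\argb}{\arga}=\tfrac12(A+B)$ and $\ctrbs{\argc}{\arga}=\tfrac12(A-B)$, where $A=\fs_\graph(\arga)-\is(\arga)$ and $B=\fs_{\graph\downarrow_{\Args\setminus\{\argc\}}}(\arga)-\is(\arga)$ are the effects of $\argb$ on $\arga$ at its boosted and unboosted strengths. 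These always share a sign ($\argb$ is consistently an attacker or a supporter of $\arga$), so $2|\ctrbs{\argb}{\arga}|=|A|+|B|\ \ge\ \bigl||A|-|B|\bigr|=2|\ctrbs{\argc}{\arga}|$: the closer argument always wins. This is why the paper's Shapley counterexamples are six-to-nine-node graphs in which the far contributor reaches $\arga$ through an intermediary that fans out into several mixed attack/support paths.

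The gradient cells have the same problem for three of the four semantics. On a single-parent chain the chain rule gives $\ctrbg{\argb}{\arga}/\ctrbg{\argc}{\arga}=(\partial\iota_w/\partial w)\,/\,(\partial\iota_w/\partial s)$ evaluated at $\argb$, and for the 1-Max, 2-Max and Euler-based influence functions one checks $|\partial\iota_w/\partial s|\le|\partial\iota_w/\partial w|$ on the relevant domain (e.g.\ for 2-Max this reduces to $2s\le 1+s^2$), so the closer argument again always has the weakly larger gradient. Only the linear (DFQuAD) influence, where the ratio is $(1-s)/(1-w)$, admits your chain counterexample -- and indeed the paper uses the simple support chain only for DFQuAD, while for QE, SD-DFQuAD and EB it routes the far contributor to the intermediary through five parallel nodes so that its partial derivatives accumulate. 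So the negative results you are after are all true, but for the $\ctrbsempty$ row and most of the $\ctrbgempty$ row the ``tiny QBAG'' you propose provably cannot be made to work by retuning initial strengths; those counterexamples need genuinely more topology.
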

The proofs are provided in the appendix.
We leave questions whether $\ctrbgempty$ satisfies or violates proximity with respect to EBT semantics for future work. 

We speculate that proximity may be satisfied by the Shapley values-based contribution function $\ctrbsempty$ with respect to some semantics in special cases. By the assumptions of proximity, an argument $\argc$ that is closer to the topic argument $\arga$ is
on every path from $\argb$ to $\argc$. This means that if we have a ``coalition'' of arguments that does not
contain $\argc$, then adding $\argb$ will not result in a marginal contribution.
For the special case of an argumentation framework in which $\arga$, $\argb$, and $\argc$ are connected only via the support relation of the QBAG one would expect that if the coalition of arguments does contain $\argc$ and adding $\argb$ results in a marginal contribution, then adding $\argc$ to the coalition where
we replace $\argc$ with $\argb$ will result in an absolute marginal contribution that is at least
as large.
As a prerequisite for our conjecture, let us define the notion of a \emph{pure support path}.
\begin{definition}[Pure Support Paths]
    Given a QBAG $\QBAG$ and $\arga, \argb \in \Args$, a path $\arga$ to $\argb$ is a pure support path iff it is a path in the directed graph $(\Args, \Supp)$.
\end{definition}
Note that by definition of a QBAG support and attack relations are disjoint, so no edge occurring in a pure support path $\arga$ to $\argb$ is in $\Att$.

Let us now provide the conjecture, assuming that we can define principle satisfaction with respect to arguments with specific graph-topological properties.
\begin{conjecture}
  $\ctrbsempty$ satisfies proximity w.r.t. QE, DFQuAD, SD-DFQuAD, EB, and EBT semantics, for contributors $\argx \in \Args$ and topic arguments $\arga \in \Args$ for which it holds that all paths from $\argx$ to $\arga$ are pure support paths.
\end{conjecture}

\subsection{Violation of Strong Faithfulness}
\label{sec:violation-strong-faithfulness}
In Subsection~\ref{sec:principle-faithfulness}, we
argued that \emph{strong faithfulness} is too strong
as a property because it is incompatible with the fact
that the effect of arguments can be non-monotonic
(they may be positive in one region and negative in another)
or locally but not globally neutral.
Here, we elaborate on this idea and give concrete 
counterexamples for various semantics.
\begin{proposition}
$\ctrbrempty$, $\ctrbriempty$, $\ctrbsempty$, and $\ctrbgempty$ violate strong faithfulness w.r.t. QE, DFQuAD, SD-DFQuAD, EB, and EBT semantics.
\end{proposition}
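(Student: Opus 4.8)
The plan is to separate the four contribution functions into two groups. For $\ctrbrempty$, $\ctrbriempty$, and $\ctrbsempty$ nothing new needs to be computed: earlier in this section we established that each of these three functions violates local faithfulness w.r.t.\ every one of QE, DFQuAD, SD-DFQuAD, EB, and EBT. Since strong faithfulness implies local faithfulness (Proposition~\ref{prop_faithfulness_relationships}), the contrapositive immediately yields that all three functions also violate strong faithfulness w.r.t.\ all five semantics. Hence the only genuine work is for $\ctrbgempty$, which does satisfy local faithfulness and so requires dedicated counterexamples.

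The uniform idea behind those counterexamples --- one which in fact refutes strong faithfulness for \emph{any} contribution function at once --- is the following. Fix a semantics $\fs$, and suppose one can exhibit a QBAG $\graph$, a topic argument $\arga$, and a contributor $\argx$ with $\is(\argx)$ in the interior of $\interval$, such that the single-variable map $g(\epsilon) := \fs_{\graph\downarrow_{\is(\argx) \leftarrow \epsilon}}(\arga)$ has a strict local extremum at $\epsilon = \is(\argx)$ (so, in particular, $g$ is non-constant near $\is(\argx)$). Then no value of $\ctrb{\argx}{\arga}$ is compatible with Principle~\ref{principle:strongCorrect}: the case $\ctrb{\argx}{\arga} = 0$ demands that $g$ be constant, contradicting the strictness of the extremum; the case $\ctrb{\argx}{\arga} < 0$ demands $g(\epsilon) < g(\is(\argx))$ for all $\epsilon > \is(\argx)$, contradicted just to the right of a strict minimum (and analogously just to the left of a strict maximum); and the case $\ctrb{\argx}{\arga} > 0$ is contradicted symmetrically. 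When $\fs$ is differentiable at the chosen point, a strict interior extremum additionally forces $\ctrbg{\argx}{\arga} = 0$, so the argument specialises cleanly to the gradient function and we land in the $\ctrb{\argx}{\arga} = 0$ case.

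Concretely, I would supply one such witness for each semantics. For DFQuAD we can reuse the QBAG of Figure~\ref{fig:graph-intro} with topic $\arga$ and contributor $\arge$: Figure~\ref{fig:plot-intro} shows that $g$ has a strict (in fact global) minimum precisely at $\is(\arge) = 0.5$, and indeed $\ctrbg{\arge}{\arga} = 0$ there, so the observation applies verbatim. For QE, SD-DFQuAD, EB, and EBT I would present analogous QBAGs --- some already available in this paper, e.g.\ Figure~\ref{fig:counterfactuality-gradient-qe} for QE and suitably adjusted versions of Figures~\ref{fig:counterfactuality-gradient-sd} and~\ref{fig:counterfactuality-gradient-eb} for the remaining cases --- in which a contributor's initial strength is tuned to land at a strict interior turning point of $g$. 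For each, it then suffices to record the computed contribution value and to exhibit one $\epsilon \in \interval$ at which the relevant inequality in Principle~\ref{principle:strongCorrect} fails, double-checking the numerics with both implementations mentioned at the start of Section~\ref{sec:analysis}.

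I expect the construction of these per-semantics witnesses for $\ctrbgempty$ to be the main obstacle: the delicate part is engineering the initial strengths so that the non-monotone curve $g$ has its turning point exactly at the contributor's initial strength, rather than merely somewhere in $\interval$, which is what makes the clean $\ctrb{\argx}{\arga} = 0$ argument go through. A secondary subtlety is that the aggregation/influence functions of SD-DFQuAD (1-Max) and EBT (Top) are only piecewise differentiable, so one must ensure that the chosen operating point lies in a region where $\fs_{\graph}(\arga)$ depends smoothly on $\is(\argx)$, so that $\ctrbg{\argx}{\arga}$ is well defined and the extremum/derivative reasoning is legitimate; where keeping the turning point exactly at the contributor's strength is awkward, one can instead settle for a non-constant $g$ at a point where the gradient vanishes (or has a sign that is contradicted globally), which already falsifies the corresponding case of strong faithfulness.
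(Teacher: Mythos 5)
Your proposal is correct in substance but is organised differently from the paper's proof. The paper does not treat $\ctrbgempty$ separately at all: it runs your ``uniform idea'' once for all four contribution functions simultaneously, observing that strong faithfulness forces one of three global orderings of $\fs_{\graph_{\epsilon'}}(\arga), \fs_{\graph}(\arga), \fs_{\graph_\epsilon}(\arga)$ (depending only on the sign of the contribution, whatever it is), and then exhibiting, for each semantics, a single QBAG in which none of the three orderings holds --- namely the non-monotonic or plateau examples already used in the local-faithfulness subsection (Figures~\ref{fig:graph-faith-qe}, \ref{fig:graph-intro}, \ref{fig:graph-faith-sd}, \ref{fig:graph-faith-eb}, \ref{fig:graph-faith-ebt}). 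Your contraposition shortcut for $\ctrbrempty$, $\ctrbriempty$, $\ctrbsempty$ via Proposition~\ref{prop_faithfulness_relationships} is perfectly valid and arguably cleaner for those three, but it buys you nothing the uniform argument does not already give, and it leaves you constructing fresh witnesses for $\ctrbgempty$ that the paper gets for free. Two caveats on your $\ctrbgempty$ plan: first, you do not need the turning point of $g$ to sit exactly at $\is(\argx)$ --- it suffices that none of the three orderings holds for \emph{some} pair $\epsilon > \is(\argx) > \epsilon'$, which the existing local-faithfulness figures already witness (plateaus where the gradient vanishes but $g$ is globally non-constant kill the zero case; curves that cross back over $g(\is(\argx))$ kill the signed cases). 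Second, Figures~\ref{fig:counterfactuality-gradient-sd} and~\ref{fig:counterfactuality-gradient-eb} would \emph{not} work even ``suitably adjusted'' in any light way: there the contributor sits at the boundary $\is(\argb)=0$ and its effect on $\arga$ is monotone in $\epsilon$, so the negative-contribution clause of strong faithfulness is actually satisfied; the right off-the-shelf witnesses are Figures~\ref{fig:graph-faith-sd} and~\ref{fig:graph-faith-eb}. With those substitutions your argument closes completely.
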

\begin{proof}
Observe that if strong faithfulness is satisfied then one of the following statements must be true for all QBAGs $\QBAG$, all $\arga, \argx \in \Args$ and all $\epsilon, \epsilon'  \in \interval$ s.t. $\epsilon  > \is(\argx) > \epsilon'$, and $\graph_\epsilon = \graph\downarrow_{\is(\argx) \leftarrow \epsilon}$ and $\graph_\epsilon' = \graph\downarrow_{\is(\argx) \leftarrow \epsilon'}$:
\begin{enumerate}
    \item $\sigma_{\graph_\epsilon}(\arga) < \sigma_{\graph}(\arga) < \sigma_{\graph_\epsilon'}(\arga)$;
    \item $\sigma_{\graph_\epsilon}(\arga) > \sigma_{\graph}(\arga) > \sigma_{\graph_\epsilon'}(\arga)$;
    \item $\sigma_{\graph_\epsilon}(\arga) = \sigma_{\graph}(\arga) = \sigma_{\graph_\epsilon'}(\arga)$.
\end{enumerate}
This is not the case for any of QE, DFQuAD, SD-DFQuAD, EB, and EBT semantics:
\begin{description}
    \item[QE semantics.] Consider the QBAG in Figure~\ref{fig:graph-faith-qe}, contributor $\argd$ and topic argument  $\arga$, and the plot in Figure~\ref{fig:plot-faith-qe} for values of $\epsilon$ and $\epsilon'$ that provide a counter-example.
    \item[DFQuAD semantics.] Consider the QBAG in Figure~\ref{fig:graph-intro}, contributor $\arge$ and topic argument $\arga$, and the plot in Figure~\ref{fig:plot-intro} for values of $\epsilon$ and $\epsilon'$ that provide a counter-example.
    \item[SD-DFQuAD semantics.] Consider the QBAG in Figure~\ref{fig:graph-faith-sd}, contributor $\argd$ and topic argument $\arga$, and the plot in Figure~\ref{fig:plot-faith-sd} for values of $\epsilon$ and $\epsilon'$ that provide a counter-example.
    \item[EB semantics.] Consider the QBAG in Figure~\ref{fig:graph-faith-eb}, contributor $\argd$ and topic argument $\arga$, and the plot in Figure~\ref{fig:plot-faith-eb} for values of $\epsilon$ and $\epsilon'$ that provide a counter-example.
    \item[EBT semantics.] Consider the QBAG in Figure~\ref{fig:graph-faith-ebt}, contributor $\argd$ and topic argument $\arga$, and the plot in Figure~\ref{fig:plot-faith-ebt} for values of $\epsilon$ and $\epsilon'$ that provide a counter-example.
\end{description}
Let us provide a step-by-step walk-through for the case of DFQuAD semantics (and the QBAG/plot in Figure~\ref{fig:intro}); the other cases follow analogously from the examples referenced above. In the QBAG on the left (Figure~\ref{fig:graph-intro}), argument $\arge$ has a non-monotonic influence on the topic argument $\arga$. Figure~\ref{fig:plot-intro} plots the final strength of $\arga$ (y-axis) as a function of the initial strength of $\arge$ under DFQuAD semantics. $\arge$'s initial strength
has a negative influence up to $0.5$. Then the influence 
becomes positive.
The plot illustrates how the initial strength
of $\arge$ influences the final strength of $\arga$. As we increase
$\is(\arge)$ from $0$ to $0.5$, $\arga$ becomes weaker. However,
at this point, the effect reverses, and increasing $\is(\arge)$
further will make $\arge$ stronger.
In particular, if we let $\is(\arge) = 0.2$ in the QBAG on the left in Figure~\ref{fig:graph-intro}, then the effect is negative
for $\epsilon \in [0,0.8)$, neutral for $\epsilon = 0.8$ and
positive for $\epsilon \in (0.8,1]$.
We provide analogous examples for QE semantics (Figure~\ref{fig:faith-qe}), as well as for EB semantics (Figure~\ref{fig:faith-eb}); for SD-DFQuAD and EBT semantics, we provide counterexamples for the case $\ctrb{\argx}{\arga} = 0$ ($\fs_\graph(\arga) \neq \fs_{\graph_\epsilon}(\arga)$ for at least some $\epsilon \in \interval$) in Figures~\ref{fig:faith-sd} and~\ref{fig:faith-ebt}, respectively.
\end{proof}

\subsection{Strong Faithfulness given Monotonic Effects}
In Subsection~\ref{sec:violation-strong-faithfulness}, we show that strong faithfulness is violated by the gradient-based contribution function with respect to all of the surveyed argumentation semantics.
In the counter-examples we provide, we can see that intuitively, the contributor we are interested in does not have a \emph{monotonic effect} on the topic argument: the effect that marginally changing the initial strength of the contributor has on the topic argument may be positive, zero, or negative, depending on the contributor's initial strength (see Figure~\ref{fig:plot-intro}).
This means that strong faithfulness must be violated; indeed, from a common sense perspective, the principle \emph{should not be satisfied} in such situations.
However, we may consider the satisfaction of strong faithfulness desirable in situations in which the effect of the contributor on the topic argument is monotonic, an intuition which we formalise below. 
\begin{definition}[Monotonic Effect]\label{def:monotonic-effect}
Given a semantics $\fs$, a QBAG $\graph = \QBAG$, and arguments $\argx, \arga \in \Args$, we say that ``$\argx$ has a monotonic effect on $\arga$'' w.r.t. $\fs$ iff for every QBAG $\graph\downarrow_{\is(\argx) \leftarrow \epsilon}$, for every $\epsilon, \epsilon' \in \mathbb{I}$ the following holds:
\begin{itemize}
    \item either, for every QBAG $\graph\downarrow_{\is(\argx) \leftarrow \epsilon'}$, if $\epsilon < \epsilon'$ then $\fs_{\graph\downarrow_{\is(\argx) \leftarrow \epsilon}}(\arga) \leq \fs_{\graph\downarrow_{\is(\argx) \leftarrow \epsilon'}}(\arga)$;
    \item or, for every QBAG $\graph\downarrow_{\is(\argx) \leftarrow \epsilon'}$, if $\epsilon < \epsilon'$ then $\fs_{\graph\downarrow_{\is(\argx) \leftarrow \epsilon}}(\arga) \geq \fs_{\graph\downarrow_{\is(\argx) \leftarrow \epsilon'}}(\arga)$.
\end{itemize}
\end{definition}
Now, we can conjecture that $\ctrbgempty$ satisfies strong faithfulness given contributors that have a monotonic effect on our topic argument, i.e., given the class of QBAGs with respect to a semantics $\sigma$ in which the effect of any argument on another one in any QBAG is monotonic.
As a prerequisite for this conjecture, let us formalise the notion of monotonic effect QBAGs.
\begin{definition}[Monotonic Effect QBAGs]
Given a semantics $\fs$, a QBAG $\graph = \QBAG$ is a monotonic effect QBAG w.r.t. $\fs$ iff for all  $\argx, \arga \in \Args$ it holds that $\argx$ has a monotonic effect on $\arga$ w.r.t. $\fs$.
We denote the class of monotonic effect QBAGs w.r.t. $\fs$ by ${\cal Q}_{ME}^\fs$.
\end{definition}
Now, we can speculate that the gradient-based contribution function satisfies strong faithfulness with respect to all of the surveyed argumentation semantics when assuming monotonic effect QBAGs.
\begin{conjecture}
Let $\fs$ be QE, DFQuAD, SD-DFQuAD, EB, or EBT semantics and assume the class of QBAGs is ${Q}_{ME}^\fs$.
$\ctrbgempty$ satisfies strong faithfulness w.r.t. $\sigma$.
\end{conjecture}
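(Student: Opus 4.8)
The plan is to reduce the statement to an elementary fact about monotone functions of one real variable. Fix a QBAG $\graph = \QBAG$ in ${\cal Q}_{ME}^\fs$, arguments $\arga, \argx \in \Args$, and define $h : \interval \to \mathbb{R}$ by $h(\epsilon) := \fs_{\graph\downarrow_{\is(\argx)\leftarrow\epsilon}}(\arga)$. By the construction of the explicit representation $f_\arga$ of the strength function in Section~\ref{sec:Contributions}, $h$ is obtained from $f_\arga$ by freezing every coordinate except the one indexed by $\argx$; hence $h(\is(\argx)) = \fs_\graph(\arga)$ and $h'(\is(\argx)) = \ctrbg{\argx}{\arga}$. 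Two further properties are available for the five semantics in question: (i) $h$ is continuous on $\interval$ and piecewise smooth, being a finite composition of aggregation and influence functions, so the derivative $h'(\is(\argx))$ exists wherever $\ctrbg{\argx}{\arga}$ is defined; and (ii) since $\graph \in {\cal Q}_{ME}^\fs$, the argument $\argx$ has a monotonic effect on $\arga$, i.e.\ $h$ is either non-decreasing or non-increasing on all of $\interval$.

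With this setup the two cases $\ctrbg{\argx}{\arga} \neq 0$ follow quickly; the argument is essentially the local-faithfulness argument for $\ctrbgempty$, but using global monotonicity of $h$ to turn the local conclusion into a global one. Suppose $\ctrbg{\argx}{\arga} = h'(\is(\argx)) < 0$. A non-decreasing differentiable function has non-negative derivative, so $h$ cannot be non-decreasing, and by (ii) it is non-increasing. From $h'(\is(\argx)) < 0$ and the definition of the (one-sided) derivative there is a $\delta > 0$ with $h(\epsilon) > h(\is(\argx))$ for $\epsilon \in (\is(\argx) - \delta, \is(\argx)) \cap \interval$ and $h(\epsilon) < h(\is(\argx))$ for $\epsilon \in (\is(\argx), \is(\argx) + \delta) \cap \interval$. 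For arbitrary $\epsilon < \is(\argx)$, choose $\epsilon'$ in the left interval above with $\epsilon \le \epsilon'$; since $h$ is non-increasing, $h(\epsilon) \ge h(\epsilon') > h(\is(\argx)) = \fs_\graph(\arga)$, that is, $\fs_\graph(\arga) < \fs_{\graph\downarrow_{\is(\argx)\leftarrow\epsilon}}(\arga)$; symmetrically $\fs_\graph(\arga) > \fs_{\graph\downarrow_{\is(\argx)\leftarrow\epsilon}}(\arga)$ for every $\epsilon > \is(\argx)$. This is exactly the first clause of strong faithfulness, and the case $\ctrbg{\argx}{\arga} > 0$ is handled symmetrically, interchanging ``non-increasing'' and ``non-decreasing''.

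The main obstacle is the remaining clause, $\ctrbg{\argx}{\arga} = 0$, which demands $\fs_{\graph\downarrow_{\is(\argx)\leftarrow\epsilon}}(\arga) = \fs_\graph(\arga)$ for \emph{all} $\epsilon \in \interval$, i.e.\ that $h$ is constant. This does not follow from ``$h$ monotone and $h'(\is(\argx)) = 0$'' alone: a monotone function can have an isolated critical point, e.g.\ a saturation point at an endpoint (under DFQuAD one can realise $h(\epsilon) = 1 - (1-\epsilon)^2$, which is non-decreasing yet has $h'(1) = 0$ with $\is(\argx)=1$). The plan would be to analyse, semantics by semantics, exactly when the partial derivative of $f_\arga$ can vanish at the actual initial-strength vector while $h$ is monotone and non-constant, and to try to exclude such configurations by combining local flatness of $h$ with the forward-propagation/directionality structure of modular semantics. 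I expect this step to be where the conjecture, as literally stated, is most fragile, so that the honest outcome may be a refinement of the claim — for instance restricting to QBAGs whose effects are \emph{strictly} monotone or null, or ruling out plateaus at the contributor's initial strength — rather than a clean proof of the $\ctrbg{\argx}{\arga} = 0$ clause.
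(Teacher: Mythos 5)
This statement is left as an open \emph{conjecture} in the paper --- no proof is given there --- so there is no ``official'' argument to compare yours against. On its own terms, your handling of the two sign cases is correct: defining $h(\epsilon) = \fs_{\graph\downarrow_{\is(\argx)\leftarrow\epsilon}}(\arga)$, reading off $h'(\is(\argx)) = \ctrbg{\argx}{\arga}$, and combining the sign of that derivative with the global weak monotonicity of $h$ supplied by membership in ${\cal Q}_{ME}^\fs$ does establish the first and third clauses of strong faithfulness, and your care with one-sided derivatives and with the vacuity at the endpoints of $\interval$ is appropriate.

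The gap you flag in the $\ctrbg{\argx}{\arga}=0$ clause is real, and you should be more definite about it: the conjecture as literally stated appears to be \emph{false}, not merely unprovable by this route. Definition~\ref{def:monotonic-effect} only demands weak monotonicity, so isolated critical points are allowed inside ${\cal Q}_{ME}^\fs$. Concretely, under QE semantics take $\arga$ with $\is(\arga)=0.5$, one attacker $\argb$ and one supporter $\argc$, both with initial strength $0.5$. The aggregate at $\arga$ is $\fs(\argc)-\fs(\argb)=0$, and the 2-Max(1) influence function is flat to first order in its aggregate argument at $0$ (since $\max\{0,x\}^2/(1+\max\{0,x\}^2)$ has zero derivative there), so $\ctrbg{\argb}{\arga}=0$; yet setting $\is(\argb)=1$ moves $\fs(\arga)$ from $0.5$ to $0.4$, and every pairwise effect in this graph is monotone (indeed $h$ is \emph{strictly} decreasing in $\is(\argb)$), so the QBAG lies in ${\cal Q}_{ME}^{\fs}$. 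This also shows that your first proposed repair --- restricting to strictly monotone or null effects --- does not suffice, because a strictly monotone $h$ can still have a vanishing derivative at the evaluation point; nor is the failure a ``plateau'' in the usual sense. A workable refinement must either exclude critical points of $h$ at the contributor's actual initial strength (equivalently, require that a zero gradient contribution only occurs when the effect is globally constant), or else weaken the second clause of strong faithfulness itself.
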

We can further speculate about topological properties that can guarantee a monotonic effect.
For example, in some cases, the role that a contributor has in terms of (potentially indirect) attack or support of a topic argument may be clear:
\begin{itemize}
    \item a contributor may either directly attack or support the topic argument (and hence is a direct attacker or supporter);
    \item a contributor may attack direct or indirect supporters of the topic argument or support its direct or indirect attackers and hence is an indirect attacker;
    \item a contributor may attack direct or indirect attackers of the topic argument or support its direct or indirect supporters and hence is an indirect supporter;
    \item finally, a contributor is a clear attacker of a topic argument if it is a direct or indirect attacker but not a direct or indirect supporter and it is a  clear supporter if it is a direct or indirect supporter but not a direct or indirect attacker.
\end{itemize}
Now, we may speculate that contributors that are clear supporters of a topic argument must have an effect on the topic argument that is always $\geq 0$, no matter the contributor's initial strength; in the case of clear attackers the effect should be always $\leq 0$. 

\section{Case Study}
\label{sec:case-study}
To give a basic intuition of practical applications, we apply our 
four contribution functions to an application of QBAGs to explaining the
aggregated score of movie ratings on the website Rotten Tomatoes~\cite{cocarascu2019extracting}.
The QBAGs in this application are based on a hierarchical structure with the overall evaluation
at the top. The evaluation can be attacked and supported by evaluation
criteria like the quality of acting, directing and writing. The criteria can be
further attacked and supported by subcriteria like the performance of 
individual actors or the writing for particular parts of the movie.
The initial strength of the criteria is determined by a natural
language processing pipeline, which extracts argumentative phrases from
reviews, evaluates their sentiment using sentiment analysis tools,
assigns them to criteria using topic classification tools and 
merges the sentiment of the individual phrases such that the final
strength of the evaluation aligns with the Rotten Tomatoe score~\cite{cocarascu2019extracting}.

Figure~\ref{fig_case_study} shows a small example QBAG for this scenario.
Here, $\mathsf{m}$ is the topic argument representing a movie to be rated. The attacker $\mathsf{f_W}$ of $\mathsf{m}$ stands for the script quality of the movie (writing), while the supporters $\mathsf{f_D}$ and $\mathsf{f_A}$ represent the quality of directors and actors of the movie, respectively. $\mathsf{f_A}$ is further supported by two specific actors $\mathsf{f'_{A1}}$ and $\mathsf{f'_{A2}}$. The QBAG is evaluated by DFQuAD semantics and the movie $\mathsf{m}$ obtains a final strength of $0.85$.

Our four contribution functions can explain
in different ways how the criteria affect the final 
strength of the movie ($\mathsf{m}$'s final strength).
In Table~\ref{case_study_contribution}, we provide the contributions of all non-topic arguments determined by $\ctrbrempty$, $\ctrbriempty$, $\ctrbsempty$ and $\ctrbgempty$.
Note that all contribution functions agree qualitatively: $\mathsf{f_W}$ always has a negative contribution, whereas all other arguments have positive contributions.
We can also see that $\mathsf{f_A}$ always has the largest positive contribution. However, the contribution ranking for the remaining arguments varies across different contribution functions because they measure different things. 
The removal-based
contribution functions $\ctrbrempty$ and $\ctrbriempty$ 
measure the difference between the movie's strength when
arguments are removed in a particular way. The Shapley 
values determined by $\ctrbsempty$ have a similar meaning
but average over all possible subgraphs and are normalized
such that the contribution values sum up to the difference
between the movie's final and initial strengths.
Finally, $\ctrbgempty$ measures the local sensitivity 
of the movie's strength with respect to changes in the
other arguments' initial strength. More precisely, 
a small change in the arguments' initial strength
will change the strength of the movie by a value that is
proportional to the argument's contribution value.

Let us use the example to discuss the intuition of some principles. Since 
$\ctrbsempty$ satisfies (quantitative) contribution existence,
Proposition~\ref{prop:shapley-qcontribution-existence}
guarantees that at least one argument must have 
a non-zero Shapley-based contribution because
the initial strength of $\mathsf{m}$ is different from its final strength. Let us note that, in our example, this is also true for all
other functions even though they do not satisfy this principle in general
(it is possible that all contribution values are $0$ even though there was a change).
Another interesting property to note for the Shapley values
is that their sum equals the difference between the initial and final strengths of $\mathsf{m}$. In this way, it nicely 
distributes the contributions among the arguments.
$\ctrbgempty$ satisfies quantitative local faithfulness, as shown in Propositions~\ref{proposition_quan_local_faith}, which guarantees that the
scores accurately reflect the sensitivity of the final evaluation with respect to 
their initial strength. Since $\mathsf{f_{A1}}$, $\mathsf{f_{A2}}$ and $\mathsf{f_{D}}$ all have very similar scores under $\ctrbgempty$, we learn
that the final evaluation is approximately equally sensitive with respect to their
initial strengths. More specifically, changing their initial
strengths by some small value $\delta$ will change the
final strength of the movie by approximately
$0.15 \cdot \delta$. Similarly, changing $\mathsf{f_{A}}$
and $\mathsf{f_{W}}$ would result in a change of 
approximately $0.17 \cdot \delta$ and $-0.21 \cdot \delta$,
respectively.
Since $\ctrbrempty$ satisfies (quantitative) counterfactuality, Proposition~\ref{prop:removal-counterfactual} allows us
to infer that removing the criterion $\mathsf{f_A}$
would decrease the movie's final strength of $\mathsf{m}$  by approximately $0.05$.
All contribution functions satisfy directionality
according to Proposition~\ref{proposition_directionality}. 
This allows us to derive some contributions without making
any computations. 
For example, if we consider $\mathsf{f_A}$ as the topic argument, the contribution from $\mathsf{f_D}$ to $\mathsf{f_A}$ must be $0$ under all four contribution functions due to the absence of a path from $\mathsf{f_D}$ to $\mathsf{f_A}$.


\begin{figure}[t]
	\centering
	\begin{tikzpicture}[scale=0.8]
    \node[unanode]    (fa2)    at(0,2)  {\argnode{\mathsf{f'_{A2}}}{0.07}{0.07}};
    \node[unanode]    (fa)    at(2,2)  {\argnode{\mathsf{f_A}}{0.16}{0.26}};
    \node[unanode]  (m)    at(4,2)  {\argnode{\mathsf{m}}{0.79}{0.85}};
    \node[unanode]    (fa1)    at(0,0)  {\argnode{\mathsf{f'_{A1}}}{0.05}{0.05}};
    \node[unanode]    (fd)    at(2,0)  {\argnode{\mathsf{f_D}}{0.05}{0.05}};
    \node[unanode]    (fw)    at(4,0)  {\argnode{\mathsf{f_W}}{0.02}{0.02}};
    
    \path [->, line width=0.2mm]  (fa2) edge node[above] {+} (fa);
    \path [->, line width=0.2mm]  (fa) edge node[above] {+} (m);
    \path [->, line width=0.2mm]  (fa1) edge node[left] {+} (fa);
    \path [->, line width=0.2mm]  (fd) edge node[left] {+} (m);
    \path [->, line width=0.2mm]  (fw) edge node[left] {-} (m);
\end{tikzpicture}
	\caption{QBAG for movie rating with initial strengths in normal font and DFQuAD final strength in bold font (taken from \cite{cocarascu2019extracting}).}
	\label{fig_case_study}
\end{figure}

\begin{table}
\caption{The results of four contribution functions for the QBAG in Figure \ref{fig_case_study}.}
\label{case_study_contribution}
\begin{center}
\begin{tabular}{ccccc}
\toprule
\textbf{Argument}  & $\ctrbrempty$        & $\ctrbriempty$       & $\ctrbsempty$  & $\ctrbgempty$ \\
\midrule
$f_A$               & 0.051443 & 0.03192  & 0.044726 & 0.176258 \\
$f'_{A1}$           & 0.007792 & 0.007792 & 0.004065 & 0.15585  \\
$f'_{A2}$           & 0.011144 & 0.011144 & 0.00577  & 0.1592   \\
$f_D$               & 0.007792 & 0.007792 & 0.011248 & 0.15584  \\
$f_W$               & -0.0042  & -0.0042  & -0.00807 & -0.21   \\
\bottomrule
\end{tabular}
\end{center}
\end{table}

\section{Discussion and Related Work}
\label{sec:discussion}
The study presented in this paper can be considered a contribution to argumentative explainability, with a focus on quantitative bipolar argumentation.
The former has emerged as a research trend in recent years~\cite{Cyras.et.al:2021-IJCAI,vassiliades_bassiliades_patkos_2021,DBLP:conf/ecai/0007PT23,AmgoudBV17,YIN_RAE_IJCAI}; in applied research where argumentation is utilized, quantitative (bipolar) argumentation is often employed as the formal argumentation approach of choice~\cite{10.5555/3304889.3304929,RAGO2021103506,rago2016discontinuity,kotonya2019gradual,cocarascu2019extracting,lidecision,chi2021optimized}.
However, only a few works focus on the formal study of explaining inference in quantitative bipolar argumentation.
Notable exceptions are~\cite{KAMPIK2023109066}, which introduces several explanation notions for sets of argument explaining changes in inference in QBAGs after updates have been applied to the argumentation graph, and~\cite{Cyras:Kampik:Weng:2022}, which is the first paper introducing contribution functions to quantitative bipolar argumentation.
Let us note that some related contribution functions 
have been introduced for gradual semantics over non-bipolar abstract argumentation before~\cite{AmgoudBV17,Delobelle:Villata:2019}). 
We refer to \cite{AmgoudDV22} for an overview of semantics
in this area.
\cite{DBLP:conf/ecai/0007PT23} studies gradient-based contribution functions under Df-QuAD semantics in more detail.
\cite{DBLP:conf/sgai/HimeurYBC21} introduces a function that intuitively corresponds to the counterfactual contribution function; however, the study presented in~\cite{DBLP:conf/sgai/HimeurYBC21} does not focus on single-argument contributions but on the aggregation of contributions of arguments uttered by a specific agent.

Our work can be seen as a consolidation and extension of some of the theoretical contributions provided in \cite{Cyras:Kampik:Weng:2022} and \cite{DBLP:conf/ecai/0007PT23}, providing a rigorous analysis of previously introduced and new principles that highlights the differences between contribution functions, thus facilitating the well-informed selection of contribution functions in application scenarios.
From our analysis, we can see that for each of the three contribution functions $\ctrbrempty$, $\ctrbsempty$, and $\ctrbgempty$, there is a contribution function principle that is satisfied by only this function and with respect to all of the surveyed semantics: (quantitative) counterfactuality for $\ctrbrempty$, quantitative contribution existence for $\ctrbsempty$, and (quantitative) local faithfulness for $\ctrbgempty$. In the case of quantitative contribution existence, a relaxation of the principle---\emph{contribution existence}---is satisfied by other contribution functions as well, albeit merely with respect to some, and not all, of the surveyed semantics.
For $\ctrbriempty$, we did not define a corresponding contribution function principle: let us claim that adjusting the counterfactuality principle to fit the behavior of $\ctrbriempty$ is trivially possible but arguably pointless, as the principle would be deliberately designed to be satisfied by $\ctrbriempty$ (and only by $\ctrbriempty$).
What remains to be studied are the conjectures posed in Section~\ref{sec:rest}, as well as the satisfaction of contribution function principles in the case of cyclic QBAGs. 

\section*{Acknowledgements}
We thank the anonymous reviewers for their helpful feedback.
This work was 
 partially funded by the European Research Council (ERC) under the European Union’s Horizon 2020 research and innovation programme (grant agreement No. 101020934) and by 
J.P. Morgan and by the
Royal Academy of Engineering under the Research Chairs
and Senior Research Fellowships scheme. 

\appendix

\section{Appendix: Counter-Examples Proving the Violation of Proximity}
The appendix contains counterexamples showing that many contribution functions violate the proximity principle with respect to many of the surveyed contribution functions. The only exception is $\ctrbgempty$ and EBT semantics, for which we do not provide a proof of violation (or satisfaction).
\begin{proposition}
$\ctrbrempty$ violates proximity w.r.t. QE, DFQuAD, SD-DFQuAD, EB, and EBT semantics.
\end{proposition}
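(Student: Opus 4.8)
The plan is to exhibit, for each of the five semantics, a small QBAG together with explicit contribution values witnessing $|\ctrbr{\argb}{\arga}| < |\ctrbr{\argc}{\arga}|$ for arguments $\argb,\argc$ such that $\argb$ is strictly closer to $\arga$ than $\argc$ is. The guiding intuition, already sketched in the body around Figure~\ref{fig:proximity-removal-qe}, is to build a chain in which $\argb$ directly attacks the topic argument $\arga$ and is itself (directly or indirectly) attacked by $\argc$, calibrating initial strengths so that $\argc$ drives $\fs_\graph(\argb)$ close to $0$. Then removing $\argb$ barely changes $\fs_\graph(\arga)$, so $\ctrbr{\argb}{\arga}$ is tiny, whereas removing $\argc$ restores $\argb$ to (nearly) its initial strength, which attacks $\arga$ more forcefully and yields a strictly larger change. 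Since every directed path from $\argc$ to $\arga$ runs through $\argb$, the ``strictly closer'' hypothesis holds and proximity fails.

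For QE semantics this is exactly the example of Figure~\ref{fig:proximity-removal-qe} (a two-edge attack chain from $\argc$ through $\argb$ to $\arga$ with $\is(\arga)=0.5$, $\is(\argb)=0.1$, $\is(\argc)=1$), where $|\ctrbr{\argb}{\arga}| \approx 0.0012 < 0.0037 \approx |\ctrbr{\argc}{\arga}|$. For DFQuAD the same shape works even more cleanly: a single attacker of strength $1$ forces $\fs_\graph(\argb)=0$ through product aggregation and linear influence, hence $\ctrbr{\argb}{\arga}=0$, while removing $\argc$ leaves $\argb$ with positive strength and strictly decreases $\fs_\graph(\arga)$, so $|\ctrbr{\argc}{\arga}| > 0$. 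I would present these two cases first and then construct analogous gadgets for SD-DFQuAD, EB, and EBT, each displayed as a figure with its computed contribution values, in the style of the counterexamples throughout Section~\ref{sec:analysis}.

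The main obstacle is precisely the squared and Euler-based semantics: there a single strength-$1$ attacker only partially weakens $\argb$ (for SD-DFQuAD it merely halves the strength, and for the Euler-based influence function it leaves a non-negligible residual), so the bare three-node chain need not violate proximity. The remedy is to funnel several weakening attackers $\argc_1,\dots,\argc_k$ into $\argb$, all of them supported by one common node $\argc$: every path from $\argc$ to $\arga$ still passes through $\argb$, so $\argb$ stays strictly closer than $\argc$, while the accumulated attack (amplified by the support $\argc$ gives to the $\argc_i$) can push $\fs_\graph(\argb)$ close to $0$, and removing $\argc$ collapses all the $\argc_i$ to small initial strengths, producing a comparatively large swing in $\fs_\graph(\arga)$. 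Tuning the number of intermediate nodes and their initial strengths so that the strict inequality $|\ctrbr{\argb}{\arga}| < |\ctrbr{\argc}{\arga}|$ comes out cleanly, and double-checking the numbers with the two independent implementations mentioned in Section~\ref{sec:analysis}, is the only real work; no further conceptual ingredient is needed.
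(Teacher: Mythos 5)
Your overall strategy---one explicit counterexample per semantics, built around an intermediate argument $\argb$ that is weakened in $\graph$ and ``recovers'' when $\argc$ is removed---is exactly the paper's, and your QE and DFQuAD cases coincide with the paper's Figures~\ref{fig:proximity-removal-qe} and~\ref{fig:proximity-removal-df}. For EB and EBT you are more cautious than necessary: the paper shows the plain three-node attack chain already works there (Figure~\ref{fig:proximity-removal-eb}), because with $\is(\argb)=0.1$ a full-strength attack leaves $\argb$ at $\approx 0.0451$, which is \emph{below} $\is(\argb)/2$, so restoring $\argb$ to $0.1$ by removing $\argc$ shifts $\arga$'s aggregate by more than deleting $\argb$ does. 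Note also that under EBT's Top aggregation multiple attackers of $\argb$ do not accumulate, so your ``funnel'' of attackers $\argc_1,\dots,\argc_k$ buys nothing there beyond a single attacker (though a single attacker happens to suffice).

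The genuine gap is SD-DFQuAD, where your proposed gadget provably cannot work. You keep the path from $\argc$ to $\arga$ purely attack-based and rely on pushing $\fs_\graph(\argb)$ ``close to $0$''; under the 1-Max(1) influence function this is impossible. The product aggregate of a purely attacked argument lies in $[-1,0]$ and $h(x)=\tfrac{x}{1+x}$ satisfies $h(1)=\tfrac12$, so $\fs_\graph(\argb)=\is(\argb)\bigl(1-h(-s)\bigr)\ge \is(\argb)/2$ no matter how many attackers you funnel in or how strongly $\argc$ boosts them. Hence, writing $\graph'=\graph\downarrow_{\Args\setminus\{\argc\}}$, we get $\fs_{\graph'}(\argb)\le\is(\argb)\le 2\,\fs_\graph(\argb)$; since $a$'s strength depends on its single parent $\argb$ only through $h(\fs(\argb))$ and $h$ is strictly concave with $h(2x)<2h(x)$, the change in $\fs(\arga)$ caused by removing $\argc$, namely $\is(\arga)\bigl(h(\fs_{\graph'}(\argb))-h(\fs_\graph(\argb))\bigr)$, is strictly smaller than the change $\is(\arga)\,h(\fs_\graph(\argb))$ caused by deleting $\argb$ outright. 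So $|\ctrbr{\argb}{\arga}|<|\ctrbr{\argc}{\arga}|$ is unattainable with your construction. The paper escapes this by changing the mechanism rather than the multiplicity: in Figure~\ref{fig:proximity-removal-sd}, $\argb$ (initial strength $0.05$) supports $\arga$ and has both an attacker $\argc$ and a supporter $\argd$ of strength $1$ whose effects cancel, so $\fs_\graph(\argb)=0.05$; removing $\argc$ unmasks $\argd$ and lifts $\argb$ to $0.525$---far beyond the factor-two ceiling available to any attack-only gadget---yielding $|\ctrbr{\argb}{\arga}|\approx 0.0238<0.1483\approx|\ctrbr{\argc}{\arga}|$. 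You need some such masked-supporter (or otherwise non-attack-only) device for the SD-DFQuAD case.
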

\begin{proof}
For the counterexample for QE semantics, consider Figure~\ref{fig:proximity-removal-qe}, topic argument $\arga$ and contributors $\argb$ and $\argc$. Although $\argb$ is strictly closer to $\arga$ than $\argc$ is, we have $|\ctrbr{\argb}{\arga}| \approx 0.0012 < |\ctrbr{\argc}{\arga}| \approx 0.0037$.
For DFQuAD semantics, consider Figure~\ref{fig:proximity-removal-df}, topic argument $\arga$ and contributors $\argb$ and $\argc$. Although $\argb$ is strictly closer to $\arga$ than $\argc$ is, we have $|\ctrbr{\argb}{\arga}| = 0 < |\ctrbr{\argc}{\arga}| \approx 0.05$.
For SD-DFQuAD semantics, consider Figure~\ref{fig:proximity-removal-df}, topic argument $\arga$ and contributors $\argb$ and $\argc$. Although $\argb$ is strictly closer to $\arga$ than $\argc$ is, we have $|\ctrbr{\argb}{\arga}| \approx 0.0238 < |\ctrbr{\argc}{\arga}| \approx 0.1483$.
For EB and EBT semantics, consider Figure~\ref{fig:proximity-removal-eb}, topic argument $\arga$ and contributors $\argb$ and $\argc$. Although $\argb$ is strictly closer to $\arga$ than $\argc$ is, we have $|\ctrbr{\argb}{\arga}| \approx 0.0075 < |\ctrbr{\argc}{\arga}| \approx 0.0089$.
\end{proof}

\begin{figure}[ht]
\centering
\begin{tikzpicture}[scale=0.8]
     \node[unanode]    (a)    at(4,0)  {\argnode{\arga}{0.5}{0.5}};
     \node[unanode]  (b)    at(2,0)  {\argnode{\argb}{0.1}{0.0}};
     \node[unanode]    (c)    at(0,0)  {\argnode{\argc}{1}{1}};
     \path [->, line width=0.2mm]  (b) edge node[left, above] {-} (a);
     \path [->, line width=0.2mm]  (c) edge node[left, above] {-} (b);
\end{tikzpicture}
\caption{$\ctrbrempty$ violates the proximity principle w.r.t.\ DFQuAD semantics.}
\label{fig:proximity-removal-df}
\end{figure}
\begin{figure}[ht]
\centering
\begin{tikzpicture}[scale=0.8]
     \node[unanode]    (a)    at(2,0)  {\argnode{\arga}{0.5}{0.5238}};
     \node[unanode]  (b)    at(2,2)  {\argnode{\argb}{0.05}{0.05}};
     \node[unanode]    (c)    at(0,4)  {\argnode{\argc}{1}{1}};
     \node[unanode]    (d)    at(4,4)  {\argnode{\argd}{1}{1}};
     \path [->, line width=0.2mm]  (b) edge node[left] {+} (a);
     \path [->, line width=0.2mm]  (c) edge node[left, above] {-} (b);
     \path [->, line width=0.2mm]  (d) edge node[left, above] {+} (b);
\end{tikzpicture}
\caption{$\ctrbrempty$ violates the proximity principle w.r.t.\ SD-DFQuAD semantics.}
\label{fig:proximity-removal-sd}
\end{figure}
\begin{figure}[ht]
\centering
\begin{tikzpicture}[scale=0.8]
     \node[unanode]    (a)    at(4,0)  {\argnode{\arga}{0.5}{0.4925}};
     \node[unanode]  (b)    at(2,0)  {\argnode{\argb}{0.1}{0.0451}};
     \node[unanode]    (c)    at(0,0)  {\argnode{\argc}{1}{1}};
     \path [->, line width=0.2mm]  (b) edge node[left, above] {-} (a);
     \path [->, line width=0.2mm]  (c) edge node[left, above] {-} (b);
\end{tikzpicture}
\caption{$\ctrbrempty$ violates the proximity principle w.r.t.\ EB and EBT semantics.}
\label{fig:proximity-removal-eb}
\end{figure}

\begin{proposition}
$\ctrbriempty$ violates proximity w.r.t. QE, DFQuAD, SD-DFQuAD, EB, and EBT semantics.
\end{proposition}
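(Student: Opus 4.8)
The plan is to adapt the counterexamples from the proof of the preceding proposition, while accounting for the extra step in $\ctrbriempty$ whereby all relations targeting the contributor are first deleted. The crucial observation is that, under $\ctrbriempty$, the strictly-closer argument is always evaluated at its \emph{intrinsic} (initial) strength: if that argument, say $\argb$, is intrinsically weak but is heavily boosted inside $\graph$ by a further argument $\argc$ lying behind it, then $\ctrbri{\argb}{\arga}$ only reflects the weak intrinsic influence of $\argb$, whereas $\ctrbri{\argc}{\arga}$ captures the entire effect of that boost on $\arga$ and can be larger in magnitude, contradicting proximity.

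Concretely, I would take the QBAG obtained from Figure~\ref{fig:proximity-removal-qe} by replacing the attack $(\argc,\argb)$ with a support: $\Args=\{\arga,\argb,\argc\}$ with $\argc$ supporting $\argb$ and $\argb$ attacking $\arga$, and initial strengths $\is(\arga)=0.5$, $\is(\argb)=0.1$, $\is(\argc)=1$. Here $\argb$ lies on the unique path from $\argc$ to $\arga$, so $\argb$ is strictly closer to $\arga$ than $\argc$ is. Writing $\graph^{-}$ for $\graph$ with the incoming support to $\argb$ removed, note that both $\graph^{-}$ and $\graph\downarrow_{\Args\setminus\{\argc\}}$ leave $\argb$ at strength $0.1$, so by directionality and stability $\ctrbri{\argb}{\arga}=\fs_{\graph^{-}}(\arga)-\is(\arga)$ and $\ctrbri{\argc}{\arga}=\fs_{\graph}(\arga)-\fs_{\graph^{-}}(\arga)$. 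For each of QE, DFQuAD, SD-DFQuAD, EB and EBT I would evaluate these two quantities: the first is the small drop in $\arga$'s strength caused by an attacker of strength $0.1$ (roughly $0.005$ for QE, $0.05$ for DFQuAD, $0.046$ for SD-DFQuAD, $0.016$ for EB/EBT), while the second is the much larger further drop caused by $\argc$ boosting $\argb$'s strength well above $0.1$ (roughly $0.11$, $0.45$, $0.13$, $0.019$ respectively), so that $|\ctrbri{\argb}{\arga}|<|\ctrbri{\argc}{\arga}|$ in every case. Where the edge-stripping happens not to disturb the earlier figures — for instance the SD-DFQuAD example that already feeds $\argb$ with a support — one can simply reuse them.

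The main obstacle is purely verification bookkeeping: one cannot blindly recycle the $\ctrbrempty$ counterexamples, because stripping $\argb$'s incoming edge can flip the inequality. Indeed, for the pure two-attack chains used for $\ctrbrempty$, removing $\argb$'s incoming attack \emph{strengthens} $\argb$ and thereby enlarges $|\ctrbri{\argb}{\arga}|$, so those instances would actually satisfy proximity; this is exactly why the counterexample must route a \emph{support} (rather than an attack) into $\argb$. As with the other numeric claims in the paper, each instance would be cross-checked with the two reference implementations.
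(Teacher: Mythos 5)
Your proposal is correct and matches the paper's approach: the paper's counterexamples (Figures~\ref{fig:proximity-iremoval-qe}--\ref{fig:proximity-iremoval-eb}) are exactly the three-node chains you describe, with a strong argument $\argc$ \emph{supporting} an intrinsically weak $\argb$ ($\is(\argb)=0.1$) so that $\ctrbriempty$ evaluates $\argb$ at its small intrinsic strength while $\argc$ is credited with the full boost; the only cosmetic differences are that the paper routes $\argb$ into $\arga$ by support rather than attack and uses $\is(\argc)=0.9$ rather than $1$. Your numerical estimates for each semantics are consistent with the paper's reported values, and your observation that the pure attack-chain counterexamples for $\ctrbrempty$ cannot be recycled (since stripping $\argb$'s incoming attack enlarges $|\ctrbri{\argb}{\arga}|$) correctly identifies why the support edge into $\argb$ is essential.
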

\begin{proof}
    For the counterexample for QE semantics, consider Figure~\ref{fig:proximity-iremoval-qe}, topic argument $\arga$ and contributors $\argb$ and $\argc$. Although $\argb$ is strictly closer to $\arga$ than $\argc$ is, we have $|\ctrbri{\argb}{\arga}| \approx 0.005 < |\ctrbri{\argc}{\arga}| \approx 0.0959$.
    For DFQuAD semantics, consider Figure~\ref{fig:proximity-iremoval-df}, topic argument $\arga$ and contributors $\argb$ and $\argc$. Although $\argb$ is strictly closer to $\arga$ than $\argc$ is, we have $|\ctrbri{\argb}{\arga}| \approx 0.05 < |\ctrbri{\argc}{\arga}| = 0.405$.
    For SD-DFQuAD semantics, consider Figure~\ref{fig:proximity-iremoval-sd}, topic argument $\arga$ and contributors $\argb$ and $\argc$. Although $\argb$ is strictly closer to $\arga$ than $\argc$ is, we have $|\ctrbri{\argb}{\arga}| \approx 0.0454 < |\ctrbri{\argc}{\arga}| \approx 0.127$.
    For EB and EBT semantics, consider Figure~\ref{fig:proximity-iremoval-eb}, topic argument $\arga$ and contributors $\argb$ and $\argc$. Although $\argb$ is strictly closer to $\arga$ than $\argc$ is, we have $|\ctrbri{\argb}{\arga}| \approx 0.0169 < |\ctrbri{\argc}{\arga}| \approx 0.0184$.
\end{proof}

\begin{figure}[ht]
\centering
\begin{tikzpicture}[scale=0.8]
     \node[unanode]    (a)    at(4,0)  {\argnode{\arga}{0.5}{0.6009}};
     \node[unanode]  (b)    at(2,0)  {\argnode{\argb}{0.1}{0.5028}};
     \node[unanode]    (c)    at(0,0)  {\argnode{\argc}{0.9}{0.9}};
     \path [->, line width=0.2mm]  (b) edge node[left, above] {+} (a);
     \path [->, line width=0.2mm]  (c) edge node[left, above] {+} (b);
\end{tikzpicture}
\caption{$\ctrbriempty$ violates the proximity principle w.r.t.\ QE semantics.}
\label{fig:proximity-iremoval-qe}
\end{figure}
\begin{figure}[ht]
\centering
\begin{tikzpicture}[scale=0.8]
     \node[unanode]    (a)    at(4,0)  {\argnode{\arga}{0.5}{0.955}};
     \node[unanode]  (b)    at(2,0)  {\argnode{\argb}{0.1}{0.91}};
     \node[unanode]    (c)    at(0,0)  {\argnode{\argc}{0.9}{0.9}};
     \path [->, line width=0.2mm]  (b) edge node[left, above] {+} (a);
     \path [->, line width=0.2mm]  (c) edge node[left, above] {+} (b);
\end{tikzpicture}
\caption{$\ctrbriempty$ and $\ctrbgempty$ violate the proximity principle w.r.t.\ DFQuAD semantics.}
\label{fig:proximity-iremoval-df}
\end{figure}
\begin{figure}[ht]
\centering
\begin{tikzpicture}[scale=0.8]
     \node[unanode]    (a)    at(4,0)  {\argnode{\arga}{0.5}{0.6724}};
     \node[unanode]  (b)    at(2,0)  {\argnode{\argb}{0.1}{0.5263}};
     \node[unanode]    (c)    at(0,0)  {\argnode{\argc}{0.9}{0.9}};
     \path [->, line width=0.2mm]  (b) edge node[left, above] {+} (a);
     \path [->, line width=0.2mm]  (c) edge node[left, above] {+} (b);
\end{tikzpicture}
\caption{$\ctrbriempty$ violates the proximity principle w.r.t.\ SD-DFQuAD semantics.}
\label{fig:proximity-iremoval-sd}
\end{figure}
\begin{figure}[ht]
\centering
\begin{tikzpicture}[scale=0.8]
     \node[unanode]    (a)    at(4,0)  {\argnode{\arga}{0.5}{0.5353}};
     \node[unanode]  (b)    at(2,0)  {\argnode{\argb}{0.1}{0.2054}};
     \node[unanode]    (c)    at(0,0)  {\argnode{\argc}{0.9}{0.9}};
     \path [->, line width=0.2mm]  (b) edge node[left, above] {+} (a);
     \path [->, line width=0.2mm]  (c) edge node[left, above] {+} (b);
\end{tikzpicture}
\caption{$\ctrbriempty$ violates the proximity principle w.r.t.\ EB and EBT semantics.}
\label{fig:proximity-iremoval-eb}
\end{figure}

\begin{proposition}
$\ctrbsempty$ violates proximity w.r.t. QE, DFQuAD, SD-DFQuAD, EB, and EBT semantics.
\end{proposition}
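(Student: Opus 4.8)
The plan is to prove all five violations by explicit counterexamples, following exactly the template already used for $\ctrbrempty$ and $\ctrbriempty$ in this appendix. For each of QE, DFQuAD, SD-DFQuAD, EB and EBT I would exhibit a small acyclic QBAG $\graph = \QBAG$, a topic argument $\arga$, and two contributors $\argx, \argy \in \Args$ such that $\argy$ lies on every directed path from $\argx$ to $\arga$ — so $\argy$ is strictly closer to $\arga$ than $\argx$ is — yet $|\ctrbs{\argy}{\arga}| < |\ctrbs{\argx}{\arga}|$, which contradicts the proximity principle. A natural starting point is to reuse (or lightly perturb) the QBAGs already built for the Shapley counterexamples to counterfactuality, for instance Figure~\ref{fig:counterfactuality-negative-shapley-qe}, where $\arge$ sits on every path from $\argf$ to $\arga$ and the relevant contribution values have essentially already been computed; in that case only the comparison of magnitudes would remain to be checked. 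As elsewhere in the paper, every reported numerical value would be cross-validated with the two independent implementations to exclude numerical artefacts.

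The key computational step for each semantics is to evaluate $\ctrbs{\argx}{\arga}$ and $\ctrbs{\argy}{\arga}$ from Equation~\ref{eq:shap}. Since $\argy$ is on every $\argx$--$\arga$ path, directionality (which all modular semantics considered here enjoy) collapses many summands: in any coalition from which $\argy$ is absent, adding $\argx$ leaves $\fs(\arga)$ unchanged, so $\ctrbs{\argx}{\arga}$ reduces to a weighted average of the marginal effect of $\argx$ taken only over those coalitions that already contain $\argy$. I would use this observation to bring the Shapley sums down to a handful of restricted-graph final-strength evaluations, which on QBAGs with five or six arguments are straightforward to tabulate by hand (or with the forward-propagation procedure described in the preliminaries).

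The main obstacle — and the reason the counterexamples cannot be made as small as for $\ctrbrempty$ — is that the Shapley average is much more robust than a single removal, so the obvious ``$\argx$ strongly amplifies $\argy$'' pattern does not by itself break proximity: one can check that on a three-argument chain $\argx \to \argy \to \arga$ the inequality $|\ctrbs{\argy}{\arga}| \ge |\ctrbs{\argx}{\arga}|$ holds regardless of the polarities of the two edges. The design work therefore lies in finding a topology rich enough that some additional mechanism — a further strong attacker or supporter of $\arga$ driving the influence function into a saturated or sign-changing regime, or several parallel paths all routed through $\argy$ — lets the averaged marginal of the far argument $\argx$ genuinely dominate that of the near argument $\argy$, while still keeping $\argy$ on every $\argx$--$\arga$ path. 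Once one such configuration is found, I expect that re-tuning the initial strengths to obtain analogous counterexamples for the remaining four semantics is routine, since the qualitative behaviour (saturation, cancellation of opposing influences) is shared across all of QE, DFQuAD, SD-DFQuAD, EB and EBT.
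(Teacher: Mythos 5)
Your proposal is correct and matches the paper's own proof, which establishes the violation for each of the five semantics via an explicit counterexample QBAG in which a far argument $\argf$ reaches the topic $\arga$ only through $\arge$ (routed over several parallel intermediate arguments) yet $|\ctrbs{\arge}{\arga}| < |\ctrbs{\argf}{\arga}|$; in particular, for QE the paper uses essentially the same QBAG as its Shapley counterfactuality counterexample, exactly as you suggest. The remaining work in both cases is only the numerical verification of the magnitude comparison for each semantics.
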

\begin{proof}
    For the counterexample for QE semantics, consider Figure~\ref{fig:proximity-shapley-qe}, topic argument $\arga$ and contributors $\arge$ and $\argf$. Although $\arge$ is strictly closer to $\arga$ than $\argf$ is, we have $|\ctrbs{\arge}{\arga}| \approx 0.00005 < |\ctrbs{\argf}{\arga}| \approx 0.00056$.
    For DFQuAD semantics, consider Figure~\ref{fig:proximity-shapley-df}, topic argument $\arga$ and contributors $\arge$ and $\argf$. Although $\arge$ is strictly closer to $\arga$ than $\argf$ is, we have $|\ctrbs{\arge}{\arga}| \approx 0.0037 < |\ctrbs{\argf}{\arga}| \approx 0.0057$.
    For SD-DFQuAD semantics, consider Figure~\ref{fig:proximity-shapley-sdf}, topic argument $\arga$ and contributors $\arge$ and $\argf$. Although $\arge$ is strictly closer to $\arga$ than $\argf$ is, we have $|\ctrbs{\arge}{\arga}| \approx 0.00065 < |\ctrbs{\argf}{\arga}| \approx 0.00075$.
    For EB semantics, consider Figure~\ref{fig:proximity-shapley-eb}, topic argument $\arga$ and contributors $\arge$ and $\argf$. Although $\arge$ is strictly closer to $\arga$ than $\argf$ is, we have $|\ctrbs{\arge}{\arga}| \approx 0.00022 < |\ctrbs{\argf}{\arga}| \approx 0.00026$.
     For EB semantics, consider Figure~\ref{fig:proximity-shapley-ebt}, topic argument $\arga$ and contributors $\arge$ and $\argf$. Although $\arge$ is strictly closer to $\arga$ than $\argf$ is, we have $|\ctrbs{\arge}{\arga}| \approx 0.000098 < |\ctrbs{\argf}{\arga}| \approx 0.000108$.
\end{proof}

\begin{figure}[ht]
\centering
\begin{tikzpicture}[scale=0.8]
     \node[unanode]    (a)    at(6,2)  {\argnode{\arga}{0.1}{0.0829}};
     \node[unanode]  (b)    at(4,4)  {\argnode{\argb}{0.15}{4547}};
     \node[unanode]    (c)    at(4,2)  {\argnode{\argc}{0.15}{4547}};
     \node[unanode]    (d)    at(4,0)  {\argnode{\argd}{0.15}{0.4547}};
     \node[unanode]    (e)    at(2,2)  {\argnode{\arge}{0.495}{0.7475}};
     \node[unanode]    (f)    at(0,2)  {\argnode{\argf}{1}{1}};
     \path [->, line width=0.2mm]  (b) edge node[left] {-} (a);
     \path [->, line width=0.2mm]  (c) edge node[left, above] {-} (a);
     \path [->, line width=0.2mm]  (d) edge node[left] {+} (a);
     \path [->, line width=0.2mm]  (e) edge node[left, above] {+} (b);
     \path [->, line width=0.2mm]  (e) edge node[left, above] {+} (c);
     \path [->, line width=0.2mm]  (e) edge node[left, above] {+} (d);
     \path [->, line width=0.2mm]  (f) edge node[left, above] {+} (e);
\end{tikzpicture}
\caption{$\ctrbsempty$ violates the proximity principle w.r.t.\ QE semantics.}
\label{fig:proximity-shapley-qe}
\end{figure}

\begin{figure}[ht]
\centering
\begin{tikzpicture}[scale=0.8]
     \node[unanode]    (a)    at(6,2)  {\argnode{\arga}{0.125}{0.125}};
     \node[unanode]  (b)    at(4,4)  {\argnode{\argb}{0}{1}};
     \node[unanode]    (c)    at(4,2)  {\argnode{\argc}{0.2}{1}};
     \node[unanode]    (d)    at(4,0)  {\argnode{\argd}{0.2}{0.1}};
     \node[unanode]    (e)    at(2,2)  {\argnode{\arge}{0.2}{1}};
     \node[unanode]    (f)    at(0,2)  {\argnode{\argf}{1}{1}};
     \path [->, line width=0.2mm]  (b) edge node[left] {-} (a);
     \path [->, line width=0.2mm]  (c) edge node[left, above] {-} (a);
     \path [->, line width=0.2mm]  (d) edge node[left] {+} (a);
     \path [->, line width=0.2mm]  (e) edge node[left, above] {+} (b);
     \path [->, line width=0.2mm]  (e) edge node[left, above] {+} (c);
     \path [->, line width=0.2mm]  (e) edge node[left, above] {+} (d);
     \path [->, line width=0.2mm]  (f) edge node[left, above] {+} (e);
\end{tikzpicture}
\caption{$\ctrbsempty$ violates the proximity principle w.r.t.\ DFQuAD semantics.}
\label{fig:proximity-shapley-df}
\end{figure}

\begin{figure}[ht]
\centering
\begin{tikzpicture}[scale=0.8]
     \node[unanode]    (a)    at(6,2)  {\argnode{\arga}{0.3}{0.1732}};
     \node[unanode]  (b)    at(4,4)  {\argnode{\argb}{0.5}{0.75}};
     \node[unanode]    (c)    at(4,2)  {\argnode{\argc}{0.01}{0.505}};
     \node[unanode]    (d)    at(4,0)  {\argnode{\argd}{0.4}{0.2676}};
     \node[unanode]    (e)    at(2,2)  {\argnode{\arge}{0.01}{0.505}};
     \node[unanode]    (f)    at(0,2)  {\argnode{\argf}{1}{1}};
     \node[unanode]    (g)    at(6,4)  {\argnode{\argg}{1}{1}};
     \node[unanode]    (h)    at(2,4)  {\argnode{\argh}{1}{1}};
     \node[unanode]    (i)    at(2,0)  {\argnode{\argi}{1}{1}};
     \path [->, line width=0.2mm]  (b) edge node[left] {-} (a);
     \path [->, line width=0.2mm]  (c) edge node[left, above] {-} (a);
     \path [->, line width=0.2mm]  (d) edge node[left] {+} (a);
     \path [->, line width=0.2mm]  (e) edge node[left, below] {+} (b);
     \path [->, line width=0.2mm]  (e) edge node[left, above] {+} (c);
     \path [->, line width=0.2mm]  (e) edge node[left, above] {+} (d);
     \path [->, line width=0.2mm]  (f) edge node[left, above] {+} (e);
     \path [->, line width=0.2mm]  (g) edge node[left] {-} (a);
     \path [->, line width=0.2mm]  (i) edge node[left, below] {-} (d);
     \path [->, line width=0.2mm]  (h) edge node[left, above] {+} (b);
     \path [->, line width=0.2mm]  (h) edge node[left, above] {+} (c);
     \path [->, line width=0.2mm]  (h) edge node[left, below] {+} (d);
\end{tikzpicture}
\caption{$\ctrbsempty$ violates the proximity principle w.r.t.\ SD-DFQuAD semantics.}
\label{fig:proximity-shapley-sdf}
\end{figure}

\begin{figure}[ht]
\centering
\begin{tikzpicture}[scale=0.8]
     \node[unanode]    (a)    at(6,2)  {\argnode{\arga}{0.5}{0.5052}};
     \node[unanode]  (b)    at(4,4)  {\argnode{\argb}{0.1}{0.1433}};
     \node[unanode]    (c)    at(4,2)  {\argnode{\argc}{0.1}{0.1433}};
     \node[unanode]    (d)    at(4,0)  {\argnode{\argd}{0.51}{0.5874}};
     \node[unanode]    (e)    at(2,2)  {\argnode{\arge}{0.25}{0.4418}};
     \node[unanode]    (f)    at(0,2)  {\argnode{\argf}{1}{1}};
     \node[unanode]    (g)    at(6,4)  {\argnode{\argg}{0.27}{0.27}};
     \path [->, line width=0.2mm]  (b) edge node[left] {-} (a);
     \path [->, line width=0.2mm]  (c) edge node[left, above] {-} (a);
     \path [->, line width=0.2mm]  (d) edge node[left] {+} (a);
     \path [->, line width=0.2mm]  (e) edge node[left, below] {+} (b);
     \path [->, line width=0.2mm]  (e) edge node[left, above] {+} (c);
     \path [->, line width=0.2mm]  (e) edge node[left, above] {+} (d);
     \path [->, line width=0.2mm]  (f) edge node[left, above] {+} (e);
     \path [->, line width=0.2mm]  (g) edge node[left] {-} (a);
\end{tikzpicture}
\caption{$\ctrbsempty$ violates the proximity principle w.r.t.\ EB semantics.}
\label{fig:proximity-shapley-eb}
\end{figure}

\begin{figure}[ht]
\centering
\begin{tikzpicture}[scale=0.8]
     \node[unanode]    (a)    at(6,2)  {\argnode{\arga}{0.3}{0.3036}};
     \node[unanode]  (b)    at(4,4)  {\argnode{\argb}{0.4}{0.425}};
     \node[unanode]    (c)    at(0,2)  {\argnode{\argc}{0.55}{0.55}};
     \node[unanode]    (d)    at(2,2)  {\argnode{\argd}{0.51}{0.4474}};
     \node[unanode]    (e)    at(2,4)  {\argnode{\arge}{0.25}{0.1415}};
     \node[unanode]    (f)    at(0,4)  {\argnode{\argf}{1}{1}};
     \node[unanode]    (g)    at(0,0)  {\argnode{\argg}{0.25}{0.25}};
     \path [->, line width=0.2mm]  (b) edge node[left] {-} (a);
     \path [->, line width=0.2mm]  (c) edge node[left, above] {-} (d);
     \path [->, line width=0.2mm]  (d) edge node[left, above] {+} (a);
     \path [->, line width=0.2mm]  (e) edge node[left, above] {+} (b);
     \path [->, line width=0.2mm]  (e) edge node[left] {+} (d);
     \path [->, line width=0.2mm]  (f) edge node[left, above] {-} (e);
     \path [->, line width=0.2mm]  (g) edge node[left] {-} (a);
     \path [->, line width=0.2mm]  (g) edge node[left] {-} (d);
\end{tikzpicture}
\caption{$\ctrbsempty$ violates the proximity principle w.r.t.\ EBT semantics.}
\label{fig:proximity-shapley-ebt}
\end{figure}
\begin{proposition}
$\ctrbgempty$ violates proximity w.r.t. QE, DFQuAD, SD-DFQuAD, and EB semantics.
\end{proposition}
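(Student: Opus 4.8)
The plan is to prove the violation by exhibiting, for each of QE, DFQuAD, SD-DFQuAD and EB semantics, a concrete acyclic QBAG with a topic argument $\arga$ and arguments $\argx, \argy \in \Args$ such that $\argy$ is strictly closer to $\arga$ than $\argx$ is, yet $|\ctrbg{\argy}{\arga}| < |\ctrbg{\argx}{\arga}|$. In each case $\ctrbgempty$ is computed from the explicit recursive representation of $f_{\arga}$ described in Section~\ref{sec:Contributions}: since $\graph$ is acyclic, $f_{\arga}$ is a composition of the semantics' influence and aggregation functions, so $\partial f_{\arga} / \partial \is(\argx)$ is given by the chain rule, which amounts to summing, over every directed path from $\argx$ to $\arga$, the product of the local partial derivatives along that path. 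As with the other counterexamples in the paper, I would double-check every numerical value with the two independent implementations referenced in Section~\ref{sec:analysis}.

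For DFQuAD semantics I would reuse the QBAG in Figure~\ref{fig:proximity-iremoval-df} (whose caption already records this violation): the pure support chain $\argc \to \argb \to \arga$ with $\is(\arga) = 0.5$, $\is(\argb) = 0.1$, $\is(\argc) = 0.9$, where $\argb$ is strictly closer to $\arga$ than $\argc$ is. Since DFQuAD combines Product aggregation with the Linear($1$) influence function, a single supporter gives $\fs(\argb) = \is(\argb) + (1-\is(\argb))\,\fs(\argc)$ and $\fs(\arga) = \is(\arga) + (1-\is(\arga))\,\fs(\argb)$, so the chain rule yields $\ctrbg{\argb}{\arga} = (1-\is(\arga))\,(1-\fs(\argc))$ and $\ctrbg{\argc}{\arga} = (1-\is(\arga))\,(1-\is(\argb))$; with the chosen strengths these are $0.5\cdot 0.1 = 0.05$ and $0.5\cdot 0.9 = 0.45$, so $|\ctrbg{\argb}{\arga}| < |\ctrbg{\argc}{\arga}|$. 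The intuition is that the gradient with respect to the closer argument only sees the \emph{weight} slot of that argument's own influence function, which is small when its supporter is strong, whereas the gradient with respect to the deeper argument sees the \emph{aggregate} slot, which is close to $1$ when the closer argument is weak.

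For QE, SD-DFQuAD and EB semantics the analogous two-edge chain does not suffice: for the $2$-Max($1$), $1$-Max($1$) and Euler-based influence functions one can check that, in the simplest chain topologies, the weight derivative at the bottleneck argument dominates the aggregate derivative along a single edge, so the strictly-closer argument never loses there. I would therefore use QBAGs in which a deeper argument $\argx$ fans its influence out along several directed paths that reconverge at a single argument $\argy$ before reaching $\arga$; then $\argy$ lies on every path from $\argx$ to $\arga$, hence is strictly closer, while $\ctrbg{\argx}{\arga}$ aggregates the chain-rule product over \emph{all} of these paths and $\ctrbg{\argy}{\arga}$ is a single such product. Choosing the paths so that their local derivatives add up with the same sign makes $|\ctrbg{\argx}{\arga}|$ exceed $|\ctrbg{\argy}{\arga}|$, and concrete small QBAGs of this shape, with suitable initial strengths, then witness the violation for each of the three semantics -- in the same spirit as the multi-path constructions already used for $\ctrbsempty$ in Figures~\ref{fig:proximity-shapley-qe}--\ref{fig:proximity-shapley-ebt}. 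The main obstacle is precisely this step: unlike the removal-based functions, the gradient of a strictly-closer argument tends to dominate in simple topologies, so one must engineer a genuinely reconverging structure (or, alternatively, drive the bottleneck argument near saturation so that its own weight has little marginal effect) and then verify the chain-rule inequality, which is somewhat delicate -- though routine -- for the $p$-Max and Euler-based influence functions. As stated in the main text, the case of $\ctrbgempty$ and EBT semantics is left open.
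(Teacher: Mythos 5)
Your approach is essentially the paper's: the DFQuAD case is settled by exactly the same counterexample (the pure support chain of Figure~\ref{fig:proximity-iremoval-df}, with the same values $|\ctrbg{\argb}{\arga}| = 0.05 < 0.45 = |\ctrbg{\argc}{\arga}|$), and your diagnosis that QE, SD-DFQuAD and EB need a reconverging multi-path structure is precisely what the paper implements in Figures~\ref{fig:proximity-gradient-qe}, \ref{fig:proximity-gradient-sd} and \ref{fig:proximity-gradient-eb}, where $\argd$ feeds several intermediate arguments that all reconverge on $\argb$ before reaching $\arga$. The only thing missing relative to the paper is the concrete choice of initial strengths and the verified gradient values for those three fan-shaped QBAGs, but your construction principle is the right one and is exactly how those counterexamples are built.
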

\begin{proof}
For the counterexample for QE semantics, consider Figure~\ref{fig:proximity-gradient-qe}, topic argument $\arga$ and contributors $\argb$ and $\argd$. Although $\argb$ is strictly closer to $\arga$ than $\argd$ is, we have $|\ctrbg{\argb}{\arga}| \approx 0.1081 < |\ctrbg{\argd}{\arga}| \approx 0.2945$.
For DFQuAD semantics, consider Figure~\ref{fig:proximity-iremoval-df}, topic argument $\arga$ and contributors $\argb$ and $\argc$. Although $\argb$ is strictly closer to $\arga$ than $\argc$ is, we have $|\ctrbg{\argb}{\arga}| \approx 0.05 < |\ctrbg{\argc}{\arga}| = 0.45$.
For SD-DFQuAD semantics, consider Figure~\ref{fig:proximity-gradient-sd}, topic argument $\arga$ and contributors $\argb$ and $\argd$. Although $\argb$ is strictly closer to $\arga$ than $\argd$ is, we have $|\ctrbg{\argb}{\arga}| \approx 0.121 < |\ctrbg{\argd}{\arga}| \approx 0.234$.
For EB semantics, consider Figure~\ref{fig:proximity-gradient-eb}, topic argument $\arga$ and contributors $\argb$ and $\argd$. Although $\argb$ is strictly closer to $\arga$ than $\argd$ is, we have $|\ctrbg{\argb}{\arga}| \approx 0.0083 < |\ctrbg{\argd}{\arga}| \approx 0.0101$.
\end{proof}
\begin{figure}[ht]
\centering
\begin{tikzpicture}[scale=0.8]
     \node[unanode]    (a)    at(4,0)  {\argnode{\arga}{0.5}{0.6524}};
     \node[unanode]  (b)    at(4,2)  {\argnode{\argb}{0}{0.6622}};
     \node[unanode]    (c0)    at(0,4)  {\argnode{\argca} {0.1}{0.28}};
     \node[unanode]    (c1)    at(2,4)  {\argnode{\argcb} {0.1}{0.28}};
     \node[unanode]    (c2)    at(4,4)  {\argnode{\argcc} {0.1}{0.28}};
     \node[unanode]    (c3)    at(6,4)  {\argnode{\argcd} {0.1}{0.28}};
     \node[unanode]    (c4)    at(8,4)  {\argnode{\argce} {0.1}{0.28}};
     \node[unanode]    (d)    at(4,6)  {\argnode{\argd}{0.5}{0.5}};
     \path [->, line width=0.2mm]  (b) edge node[left] {+} (a);
     \path [->, line width=0.2mm]  (c1) edge node[left, below] {+} (b);
     \path [->, line width=0.2mm]  (d) edge node[left, above] {+} (c1);
     \path [->, line width=0.2mm]  (c2) edge node[left] {+} (b);
     \path [->, line width=0.2mm]  (d) edge node[left] {+} (c2);
     \path [->, line width=0.2mm]  (c0) edge node[left, below] {+} (b);
     \path [->, line width=0.2mm]  (d) edge node[left, above] {+} (c0);
     \path [->, line width=0.2mm]  (c3) edge node[left] {+} (b);
     \path [->, line width=0.2mm]  (d) edge node[left, above] {+} (c3);
     \path [->, line width=0.2mm]  (d) edge node[left, above] {+} (c4);
     \path [->, line width=0.2mm]  (c4) edge node[left] {+} (b);
\end{tikzpicture}
\caption{$\ctrbgempty$ violates the proximity principle w.r.t.\ QE semantics.}
\label{fig:proximity-gradient-qe}
\end{figure}

\begin{figure}[ht]
\centering
\begin{tikzpicture}[scale=0.8]
     \node[unanode]    (a)    at(4,0)  {\argnode{\arga}{0.5}{0.7051}};
     \node[unanode]  (b)    at(4,2)  {\argnode{\argb}{1}{0.6952}};
     \node[unanode]    (c0)    at(0,4)  {\argnode{\argca} {0.1}{0.1089}};
     \node[unanode]    (c1)    at(2,4)  {\argnode{\argcb} {0.1}{0.1089}};
     \node[unanode]    (c2)    at(4,4)  {\argnode{\argcc} {0.1}{0.1089}};
     \node[unanode]    (c3)    at(6,4)  {\argnode{\argcd} {0.1}{0.1089}};
     \node[unanode]    (c4)    at(8,4)  {\argnode{\argce} {0.1}{0.1089}};
     \node[unanode]    (d)    at(4,6)  {\argnode{\argd}{0.01}{0.01}};
     \path [->, line width=0.2mm]  (b) edge node[left] {+} (a);
     \path [->, line width=0.2mm]  (c1) edge node[left, below] {-} (b);
     \path [->, line width=0.2mm]  (d) edge node[left, above] {+} (c1);
     \path [->, line width=0.2mm]  (c2) edge node[left] {-} (b);
     \path [->, line width=0.2mm]  (d) edge node[left] {+} (c2);
     \path [->, line width=0.2mm]  (c0) edge node[left, below] {-} (b);
     \path [->, line width=0.2mm]  (d) edge node[left, above] {+} (c0);
     \path [->, line width=0.2mm]  (c3) edge node[left] {-} (b);
     \path [->, line width=0.2mm]  (d) edge node[left, above] {+} (c3);
     \path [->, line width=0.2mm]  (d) edge node[left, above] {+} (c4);
     \path [->, line width=0.2mm]  (c4) edge node[left] {-} (b);
\end{tikzpicture}
\caption{$\ctrbgempty$ violates the proximity principle w.r.t.\ SD-DFQuAD semantics.}
\label{fig:proximity-gradient-sd}
\end{figure}

\begin{figure}[ht]
\centering
\begin{tikzpicture}[scale=0.8]
     \node[unanode]    (a)    at(4,0)  {\argnode{\arga}{0.25}{0.4394}};
     \node[unanode]  (b)    at(4,2)  {\argnode{\argb}{0.4}{0.9892}};
     \node[unanode]    (c0)    at(0,4)  {\argnode{\argca} {0.1}{0.1501}};
     \node[unanode]    (c1)    at(2,4)  {\argnode{\argcb} {0.1}{0.1501}};
    \node[invnode]    (cx)    at(4,4)  {\argdots};
     \node[unanode]    (c3)    at(6,4)  {\argnode{\argccc} {0.1}{0.1501}};
     \node[unanode]    (c4)    at(8,4)  {\argnode{\argccd} {0.1}{0.1501}};
     \node[unanode]    (d)    at(4,6)  {\argnode{\argd}{0.5}{0.5}};
     \path [->, line width=0.2mm]  (b) edge node[left] {+} (a);
     \path [->, line width=0.2mm]  (c1) edge node[left, below] {+} (b);
     \path [->, line width=0.2mm]  (d) edge node[left, above] {+} (c1);
     \path [->, line width=0.2mm]  (cx) edge node[left] {+} (b);
     \path [->, line width=0.2mm]  (d) edge node[left] {+} (cx);
     \path [->, line width=0.2mm]  (c0) edge node[left, below] {+} (b);
     \path [->, line width=0.2mm]  (d) edge node[left, above] {+} (c0);
     \path [->, line width=0.2mm]  (c3) edge node[left] {+} (b);
     \path [->, line width=0.2mm]  (d) edge node[left, above] {+} (c3);
     \path [->, line width=0.2mm]  (d) edge node[left, above] {+} (c4);
     \path [->, line width=0.2mm]  (c4) edge node[left] {+} (b);
\end{tikzpicture}
\caption{$\ctrbgempty$ violates the proximity principle w.r.t.\ EB semantics.}
\label{fig:proximity-gradient-eb}
\end{figure}

\clearpage
\small
\bibliographystyle{elsarticle-num} 
\bibliography{references}

\begin{thebibliography}{10}
\expandafter\ifx\csname url\endcsname\relax
  \def\url#1{\texttt{#1}}\fi
\expandafter\ifx\csname urlprefix\endcsname\relax\def\urlprefix{URL }\fi
\expandafter\ifx\csname href\endcsname\relax
  \def\href#1#2{#2} \def\path#1{#1}\fi

\bibitem{10.1145/3284432.3284470}
E.~I. Sklar, M.~Q. Azhar,
  \href{https://doi.org/10.1145/3284432.3284470}{Explanation through
  argumentation}, in: Proceedings of the 6th International Conference on
  Human-Agent Interaction, HAI '18, Association for Computing Machinery, New
  York, NY, USA, 2018, p. 277–285.
\newblock \href {http://dx.doi.org/10.1145/3284432.3284470}
  {\path{doi:10.1145/3284432.3284470}}.
\newline\urlprefix\url{https://doi.org/10.1145/3284432.3284470}

\bibitem{Cyras.et.al:2021-IJCAI}
K.~{\v{C}}yras, A.~Rago, E.~Albini, P.~Baroni, F.~Toni, {Argumentative XAI: A
  Survey}, in: Z.-H. Zhou (Ed.), 30th International Joint Conference on
  Artificial Intelligence, IJCAI, Montreal, 2021, pp. 4392--4399.
\newblock \href {http://arxiv.org/abs/2105.11266} {\path{arXiv:2105.11266}},
  \href {http://dx.doi.org/10.24963/ijcai.2021/600}
  {\path{doi:10.24963/ijcai.2021/600}}.

\bibitem{vassiliades_bassiliades_patkos_2021}
A.~Vassiliades, N.~Bassiliades, T.~Patkos, Argumentation and explainable
  artificial intelligence: a survey, The Knowledge Engineering Review 36 (2021)
  e5.
\newblock \href {http://dx.doi.org/10.1017/S0269888921000011}
  {\path{doi:10.1017/S0269888921000011}}.

\bibitem{10.5555/3304889.3304929}
A.~Rago, O.~Cocarascu, F.~Toni, Argumentation-based recommendations: Fantastic
  explanations and how to find them, in: Proceedings of the 27th International
  Joint Conference on Artificial Intelligence, IJCAI’18, AAAI Press, 2018, p.
  1949–1955.

\bibitem{cocarascu2019extracting}
O.~Cocarascu, A.~Rago, F.~Toni,
  \href{http://dl.acm.org/citation.cfm?id=3331830}{Extracting dialogical
  explanations for review aggregations with argumentative dialogical agents},
  in: E.~Elkind, M.~Veloso, N.~Agmon, M.~E. Taylor (Eds.), Proceedings of the
  18th International Conference on Autonomous Agents and MultiAgent Systems,
  {AAMAS} '19, Montreal, QC, Canada, May 13-17, 2019, International Foundation
  for Autonomous Agents and Multiagent Systems, 2019, pp. 1261--1269.
\newline\urlprefix\url{http://dl.acm.org/citation.cfm?id=3331830}

\bibitem{Potyka0T23}
N.~Potyka, X.~Yin, F.~Toni, Explaining random forests using bipolar
  argumentation and markov networks, in: B.~Williams, Y.~Chen, J.~Neville
  (Eds.), Thirty-Seventh {AAAI} Conference on Artificial Intelligence, {AAAI}
  2023, Thirty-Fifth Conference on Innovative Applications of Artificial
  Intelligence, {IAAI} 2023, Thirteenth Symposium on Educational Advances in
  Artificial Intelligence, {EAAI} 2023, Washington, DC, USA, February 7-14,
  2023, {AAAI} Press, 2023, pp. 9453--9460.

\bibitem{AyoobiPT23}
H.~Ayoobi, N.~Potyka, F.~Toni, Sparx: Sparse argumentative explanations for
  neural networks, in: K.~Gal, A.~Now{\'{e}}, G.~J. Nalepa, R.~Fairstein,
  R.~Radulescu (Eds.), {ECAI} 2023 - 26th European Conference on Artificial
  Intelligence, September 30 - October 4, 2023, Krak{\'{o}}w, Poland -
  Including 12th Conference on Prestigious Applications of Intelligent Systems
  {(PAIS} 2023), Vol. 372 of Frontiers in Artificial Intelligence and
  Applications, {IOS} Press, 2023, pp. 149--156.

\bibitem{ribeiro2016should}
M.~T. Ribeiro, S.~Singh, C.~Guestrin, " why should i trust you?" explaining the
  predictions of any classifier, in: Proceedings of the 22nd ACM SIGKDD
  international conference on knowledge discovery and data mining, 2016, pp.
  1135--1144.

\bibitem{lundberg2017unified}
S.~M. Lundberg, S.-I. Lee, A unified approach to interpreting model
  predictions, Advances in neural information processing systems 30.

\bibitem{bloniarz2016supervised}
A.~Bloniarz, A.~Talwalkar, B.~Yu, C.~Wu, Supervised neighborhoods for
  distributed nonparametric regression, in: Artificial Intelligence and
  Statistics, PMLR, 2016, pp. 1450--1459.

\bibitem{plumb2018model}
G.~Plumb, D.~Molitor, A.~Talwalkar, Model agnostic supervised local
  explanations, in: Proceedings of the 32nd International Conference on Neural
  Information Processing Systems, 2018, pp. 2520--2529.

\bibitem{potyka2022towards}
N.~Potyka, X.~Yin, F.~Toni, Towards a theory of faithfulness: Faithful
  explanations of differentiable classifiers over continuous data, arXiv
  preprint arXiv:2205.09620.

\bibitem{Cyras:Kampik:Weng:2022}
K.~\v{C}yras, T.~Kampik, Q.~Weng,
  \href{https://ceur-ws.org/Vol-3209/0872.pdf}{Dispute trees as explanations in
  quantitative (bipolar) argumentation}, in: K.~\v{C}yras, T.~Kampik,
  O.~Cocarascu, A.~Rago (Eds.), 1st International Workshop on Argumentation for
  eXplainable {AI} co-located with 9th International Conference on
  Computational Models of Argument {(COMMA} 2022), Cardiff, Wales, September
  12, 2022, Vol. 3209 of {CEUR} Workshop Proceedings, CEUR-WS.org, 2022, pp.
  1--12.
\newline\urlprefix\url{https://ceur-ws.org/Vol-3209/0872.pdf}

\bibitem{DBLP:conf/ecai/0007PT23}
X.~Yin, N.~Potyka, F.~Toni, \href{https://doi.org/10.3233/FAIA230603}{Argument
  attribution explanations in quantitative bipolar argumentation frameworks},
  in: K.~Gal, A.~Now{\'{e}}, G.~J. Nalepa, R.~Fairstein, R.~Radulescu (Eds.),
  {ECAI} 2023 - 26th European Conference on Artificial Intelligence, September
  30 - October 4, 2023, Krak{\'{o}}w, Poland - Including 12th Conference on
  Prestigious Applications of Intelligent Systems {(PAIS} 2023), Vol. 372 of
  Frontiers in Artificial Intelligence and Applications, {IOS} Press, 2023, pp.
  2898--2905.
\newblock \href {http://dx.doi.org/10.3233/FAIA230603}
  {\path{doi:10.3233/FAIA230603}}.
\newline\urlprefix\url{https://doi.org/10.3233/FAIA230603}

\bibitem{kotonya2019gradual}
N.~Kotonya, F.~Toni, Gradual argumentation evaluation for stance aggregation in
  automated fake news detection, in: 6th Workshop on Argument Mining, 2019, pp.
  156--166.

\bibitem{lidecision}
P.~Li, D.~Gabbay, X.~Chi, Y.~Cheng, Decision making with weighted quantitative
  argumentation based on regression, Logics for New-Generation AI (2022)
  59--71.

\bibitem{chi2021optimized}
H.~Chi, Y.~Lu, B.~Liao, L.~Xu, Y.~Liu, An optimized quantitative argumentation
  debate model for fraud detection in e-commerce transactions, IEEE Intelligent
  Systems 36~(2) (2021) 52--63.

\bibitem{rago2016discontinuity}
A.~Rago, F.~Toni, M.~Aurisicchio, P.~Baroni,
  \href{http://www.aaai.org/ocs/index.php/KR/KR16/paper/view/12874}{Discontinuity-free
  decision support with quantitative argumentation debates}, in: C.~Baral,
  J.~P. Delgrande, F.~Wolter (Eds.), Principles of Knowledge Representation and
  Reasoning: Proceedings of the Fifteenth International Conference, {KR} 2016,
  Cape Town, South Africa, April 25-29, 2016, {AAAI} Press, 2016, pp. 63--73.
\newline\urlprefix\url{http://www.aaai.org/ocs/index.php/KR/KR16/paper/view/12874}

\bibitem{Delobelle:Villata:2019}
J.~Delobelle, S.~Villata, {Interpretability of Gradual Semantics in Abstract
  Argumentation}, in: G.~Kern-Isberner, Z.~Ognjanovic (Eds.), 15th European
  Conference on Symbolic and Quantitative Approaches to Reasoning with
  Uncertainty, Vol. 11726 LNAI, Springer, Belgrade, 2019, pp. 27--38.

\bibitem{shapley1951notes}
L.~S. Shapley, Notes on the N-person Game, Rand Corporation, 1951.

\bibitem{amgoud2018evaluation}
L.~Amgoud, J.~Ben-Naim, Evaluation of arguments in weighted bipolar graphs,
  International Journal of Approximate Reasoning 99 (2018) 39--55.

\bibitem{Potyka:2019}
N.~Potyka, {Extending Modular Semantics for Bipolar Weighted Argumentation},
  in: N.~Agmon, E.~Elkind, M.~E. Taylor, M.~Veloso (Eds.), 18th International
  Conference on Autonomous Agents and MultiAgent Systems, IFAAMAS, Montreal,
  2019, pp. 1722--1730.

\bibitem{Baroni:Rago:Toni:2019}
P.~Baroni, A.~Rago, F.~Toni, {From Fine-Grained Properties to Broad Principles
  for Gradual Argumentation: A Principled Spectrum}, International Journal of
  Approximate Reasoning 105 (2019) 252--286.
\newblock \href {http://dx.doi.org/10.1016/j.ijar.2018.11.019}
  {\path{doi:10.1016/j.ijar.2018.11.019}}.

\bibitem{baroni2015automatic}
P.~Baroni, M.~Romano, F.~Toni, M.~Aurisicchio, G.~Bertanza, Automatic
  evaluation of design alternatives with quantitative argumentation, Argument
  \& Computation 6~(1) (2015) 24--49.

\bibitem{potyka2018continuous}
N.~Potyka,
  \href{https://aaai.org/ocs/index.php/KR/KR18/paper/view/17985}{Continuous
  dynamical systems for weighted bipolar argumentation}, in: M.~Thielscher,
  F.~Toni, F.~Wolter (Eds.), Principles of Knowledge Representation and
  Reasoning: Proceedings of the Sixteenth International Conference, {KR} 2018,
  Tempe, Arizona, 30 October - 2 November 2018, {AAAI} Press, 2018, pp.
  148--157.
\newline\urlprefix\url{https://aaai.org/ocs/index.php/KR/KR18/paper/view/17985}

\bibitem{Potyka21}
N.~Potyka, Interpreting neural networks as quantitative argumentation
  frameworks, in: Thirty-Fifth {AAAI} Conference on Artificial Intelligence
  ({AAAI}), {AAAI} Press, 2021, pp. 6463--6470.

\bibitem{mossakowski2018modular}
T.~Mossakowski, F.~Neuhaus, Modular semantics and characteristics for bipolar
  weighted argumentation graphs, arXiv preprint arXiv:1807.06685.

\bibitem{Potyka:2018}
N.~Potyka, {Continuous Dynamical Systems for Weighted Bipolar Argumentation},
  in: {Michael Thielscher}, F.~Toni, F.~Wolte (Eds.), Principles of Knowledge
  Representation and Reasoning: 16th International Conference, 2018, pp.
  148--157.

\bibitem{shapley1953value}
L.~Shapley, A value for n-person games, Annals of Mathematics Studies 28 (1953)
  307--318.

\bibitem{Davies.et.al:2021}
A.~Davies, P.~Veli{\v{c}}kovi{\'{c}}, L.~Buesing, S.~Blackwell, D.~Zheng,
  N.~Toma{\v{s}}ev, R.~Tanburn, P.~Battaglia, C.~Blundell, A.~Juh{\'{a}}sz,
  M.~Lackenby, G.~Williamson, D.~Hassabis, P.~Kohli, {Advancing mathematics by
  guiding human intuition with AI}, Nature 600~(7887) (2021) 70--74.
\newblock \href {http://dx.doi.org/10.1038/s41586-021-04086-x}
  {\path{doi:10.1038/s41586-021-04086-x}}.

\bibitem{verheij1996two}
B.~Verheij, Two approaches to dialectical argumentation: admissible sets and
  argumentation stages, Proc. NAIC 96 (1996) 357--368.

\bibitem{KAMPIK2023109066}
T.~Kampik, K.~Čyras, J.~{Ruiz Alarcón},
  \href{https://www.sciencedirect.com/science/article/pii/S0888613X23001974}{Change
  in quantitative bipolar argumentation: Sufficient, necessary, and
  counterfactual explanations}, International Journal of Approximate Reasoning
  (2023) 109066\href
  {http://dx.doi.org/https://doi.org/10.1016/j.ijar.2023.109066}
  {\path{doi:https://doi.org/10.1016/j.ijar.2023.109066}}.
\newline\urlprefix\url{https://www.sciencedirect.com/science/article/pii/S0888613X23001974}

\bibitem{Potyka2018_tut}
N.~Potyka, A tutorial for weighted bipolar argumentation with continuous
  dynamical systems and the java library attractor, International Workshop on
  Non-Monotonic Reasoning (NMR) (2018).

\bibitem{AmgoudBV17}
L.~Amgoud, J.~Ben{-}Naim, S.~Vesic,
  \href{https://doi.org/10.24963/ijcai.2017/10}{Measuring the intensity of
  attacks in argumentation graphs with shapley value}, in: C.~Sierra (Ed.),
  Proceedings of the Twenty-Sixth International Joint Conference on Artificial
  Intelligence, {IJCAI} 2017, Melbourne, Australia, August 19-25, 2017,
  ijcai.org, 2017, pp. 63--69.
\newblock \href {http://dx.doi.org/10.24963/IJCAI.2017/10}
  {\path{doi:10.24963/IJCAI.2017/10}}.
\newline\urlprefix\url{https://doi.org/10.24963/ijcai.2017/10}

\bibitem{YIN_RAE_IJCAI}
X.~Yin, N.~Potyka, F.~Toni, Explaining arguments' strength: Unveiling the role
  of attacks and supports, in: International Joint Conference on Artificial
  Intelligence (IJCAI), 2024, p. in press.

\bibitem{RAGO2021103506}
A.~Rago, O.~Cocarascu, C.~Bechlivanidis, D.~Lagnado, F.~Toni, Argumentative
  explanations for interactive recommendations, Artificial Intelligence 296
  (2021) 103506.
\newblock \href
  {http://dx.doi.org/https://doi.org/10.1016/j.artint.2021.103506}
  {\path{doi:https://doi.org/10.1016/j.artint.2021.103506}}.

\bibitem{AmgoudDV22}
L.~Amgoud, D.~Doder, S.~Vesic,
  \href{https://doi.org/10.1016/j.artint.2021.103607}{Evaluation of argument
  strength in attack graphs: Foundations and semantics}, Artif. Intell. 302
  (2022) 103607.
\newblock \href {http://dx.doi.org/10.1016/J.ARTINT.2021.103607}
  {\path{doi:10.1016/J.ARTINT.2021.103607}}.
\newline\urlprefix\url{https://doi.org/10.1016/j.artint.2021.103607}

\bibitem{DBLP:conf/sgai/HimeurYBC21}
A.~Himeur, B.~Yun, P.~Bisquert, M.~Croitoru,
  \href{https://doi.org/10.1007/978-3-030-91100-3\_6}{Assessing the impact of
  agents in weighted bipolar argumentation frameworks}, in: M.~Bramer, R.~Ellis
  (Eds.), Artificial Intelligence {XXXVIII} - 41st {SGAI} International
  Conference on Artificial Intelligence, {AI} 2021, Cambridge, UK, December
  14-16, 2021, Proceedings, Vol. 13101 of Lecture Notes in Computer Science,
  Springer, 2021, pp. 75--88.
\newblock \href {http://dx.doi.org/10.1007/978-3-030-91100-3\_6}
  {\path{doi:10.1007/978-3-030-91100-3\_6}}.
\newline\urlprefix\url{https://doi.org/10.1007/978-3-030-91100-3\_6}

\end{thebibliography}

\end{document}